\providecommand{\algorithmname}{Algorithm}
\newcommand{\R}{\mathbb{R}}
\setlist[itemize]{leftmargin=1em}
\setlist[enumerate]{leftmargin=1em}
\newcommand{\ba}{\bm{a}}
\newcommand{\bb}{\bm{b}}
\newcommand{\be}{\bm{e}}
\newcommand{\bu}{\bm{u}}
\newcommand{\bA}{\bm{A}}
\newcommand{\bB}{\bm{B}}
\newcommand{\bC}{\bm{C}}
\newcommand{\bD}{\bm{D}}
\newcommand{\bE}{\bm{E}}
\newcommand{\bF}{\bm{F}}
\newcommand{\bH}{\bm{H}}
\newcommand{\bI}{\bm{I}}
\newcommand{\bM}{\bm{M}}
\newcommand{\bQ}{\bm{Q}}
\newcommand{\bR}{\bm{R}}
\newcommand{\bU}{\bm{U}}
\newcommand{\bV}{\bm{V}}
\newcommand{\bW}{\bm{W}}
\newcommand{\bX}{\bm{X}}
\newcommand{\bY}{\bm{Y}}
\newcommand{\cO}{\mathcal{O}}
\newcommand{\cP}{\mathcal{P}}
\newcommand{\EE}{\mathbb{E}}
\newcommand{\bDelta}{\bm{\Delta}}
\newcommand{\bSigma}{\bm{\Sigma}}
\DeclareMathOperator{\ind}{\mathds{1}}  % Indicator
\newcommand{\norm}[1]{\left\|#1\right\|}
\newcommand{\inner}[2]{\left\langle #1,#2 \right\rangle}
\newcommand{\twonorm}[1]{\left\Vert#1\right\Vert_2}
\newcommand{\twotoinftynorm}[1]{\left\Vert#1\right\Vert_{2,\infty}}
\newcommand{\inftynorm}[1]{\left\Vert#1\right\Vert_\infty}
\newcommand{\Fnorm}[1]{\left\Vert#1\right\Vert_\mathrm{F}}
\newcommand{\nuclearnorm}[1]{\left\Vert#1\right\Vert_\ast}
\newcommand{\trace}[1]{\mathrm{Tr}\left(#1\right)}
\newcommand{\abs}[1]{\left|#1\right|}
\definecolor{yxc}{RGB}{255,0,0}
\definecolor{yjc}{RGB}{125,0,0}
\definecolor{cm}{RGB}{0,0,200}
\definecolor{yly}{RGB}{0,150,0}
\newcommand{\cm}[1]{\textcolor{cm}{[CM: #1]}}
\newcommand{\yxc}[1]{\textcolor{yxc}{[YXC: #1]}}
\newcommand{\yjc}[1]{\textcolor{yjc}{[YJC: #1]}}
\begin{document}
\theoremstyle{plain} \newtheorem{lemma}{\textbf{Lemma}} \newtheorem{prop}{\textbf{Proposition}}\newtheorem{theorem}{\textbf{Theorem}}\setcounter{theorem}{0}
\newtheorem{corollary}{\textbf{Corollary}} \newtheorem{example}{\textbf{Example}}
\newtheorem{definition}{\textbf{Definition}} \newtheorem{fact}{\textbf{Fact}}
\newtheorem{claim}{\textbf{Claim}}

\theoremstyle{definition}

\theoremstyle{remark}\newtheorem{remark}{\textbf{Remark}}\newtheorem{conjecture}{Conjecture}\newtheorem{condition}{\textbf{Condition}}\newtheorem{assumption}{\textbf{Assumption}}
\title{Noisy Matrix Completion: Understanding Statistical  Guarantees for  Convex Relaxation via Nonconvex Optimization\footnotetext{Author
names are sorted alphabetically.}}

\author{Yuxin Chen\thanks{Department of Electrical Engineering, Princeton University, Princeton,
NJ 08544, USA; Email: \texttt{yuxin.chen@princeton.edu}. } \and Yuejie Chi\thanks{Department of Electrical and Computer Engineering, Carnegie Mellon
University, Pittsburgh, PA 15213, USA; Email: \texttt{yuejiechi@cmu.edu}. } \and Jianqing Fan\thanks{Department of Operations Research and Financial Engineering, Princeton
University, Princeton, NJ 08544, USA; Email: \texttt{\{jqfan, congm,
yulingy\}@princeton.edu}.} \and Cong Ma\footnotemark[3] \and Yuling Yan\footnotemark[3] }

%\date{February 2019; \quad Revised September 2019}
%\date{}
\maketitle
\begin{abstract}
This paper studies noisy low-rank matrix completion: given partial
and noisy entries of a large low-rank matrix, the goal is to estimate
the underlying matrix faithfully and efficiently. Arguably one of
the most popular paradigms to tackle this problem is convex relaxation,
which achieves remarkable efficacy in practice. However, the theoretical
support of this approach is still far from optimal in the noisy setting,
falling short of explaining its empirical success.

We make progress towards demystifying the practical efficacy of convex
relaxation {vis-à-vis} random noise. When the rank and the condition number of the unknown matrix
are bounded by a constant, we demonstrate that the convex programming approach
achieves near-optimal estimation errors --- in terms of the Euclidean
loss, the entrywise loss, and the spectral norm loss --- for a wide
range of noise levels.  All of this is enabled by bridging convex
relaxation with the nonconvex Burer--Monteiro approach, a seemingly
distinct algorithmic paradigm that is provably robust against noise.
More specifically, we show that an approximate critical point of the
nonconvex formulation serves as an extremely tight approximation of
the convex solution, thus allowing us to transfer the desired statistical
guarantees of the nonconvex approach to its convex counterpart.

\end{abstract}

\medskip
\noindent\textbf{Keywords:} matrix completion, minimaxity, stability, convex relaxation, nonconvex optimization,  Burer--Monteiro approach.

\tableofcontents{}

\section{Introduction}

Suppose we are interested in a large low-rank data matrix, but only get to
observe a highly incomplete subset of its entries. Can we hope to
estimate the underlying data matrix in a reliable manner? This problem,
often dubbed as \emph{low-rank matrix completion}, spans a diverse
array of science and engineering applications (e.g.~collaborative filtering~\cite{rennie2005fast},
localization~\cite{so2007theory}, system identification~\cite{liu2009interior},
magnetic resonance parameter mapping~\cite{zhang2015accelerating}, joint alignment
\cite{chen2016projected}), and has inspired a flurry of research
activities in the past decade. In the statistics literature, matrix completion also falls under the category of  factor models with a large amount of missing data, which finds numerous statistical applications such as controlling false discovery rates for dependence data~\cite{efron2007correlation,efron2010correlated,fan2012estimating,fan2019farmtest}, factor-adjusted variable selection~\cite{kneip2011factor,fan2018factor}, principal component regression
\cite{jolliffe1982note,bai2006confidence,paul2008preconditioning, fan2017sufficient}, and large covariance matrix estimation\cite{fan2013large,fan2019robust}.  
Recent years have witnessed the development of 
many tractable algorithms that come with statistical guarantees, with convex relaxation being one of the most popular paradigms~\cite{fazel2004rank,ExactMC09,CanTao10}.
See~\cite{davenport2016overview,chen2018harnessing} for an overview
of this topic.

This paper focuses on noisy low-rank matrix completion, assuming that the revealed
entries are corrupted by a certain amount of noise. Setting the stage,
consider the task of estimating a rank-$r$ data matrix ${\bm{M}^{\star}=[M_{ij}^{\star}]_{1\leq i,j\leq n}\in\mathbb{R}^{n\times n}}$,\footnote{It is straightforward to rephrase our discussions to a general rectangular
matrix of size $n_{1}\times n_{2}$. The current paper sets $n=n_{1}=n_{2}$
throughout for simplicity of presentation.  
%Additionally, in the context of statistical factor models,  one can think of $\bM^*$ as the factor loading matrix multiplied by $r$ latent factors.
} and suppose that this needs to be performed on the basis of a subset
of noisy entries
\begin{equation}
M_{ij}=M_{ij}^{\star}+E_{ij},\qquad(i,j)\in\Omega,\label{eq:noise-model}
\end{equation}
where $\Omega\subseteq\{1,\cdots,n\}\times\{1,\cdots,n\}$ denotes
a set of indices, and $E_{ij}$ stands for the additive noise at the
location $(i,j)$. As we shall elaborate shortly, solving noisy matrix
completion via convex relaxation, while practically exhibiting excellent stability 
(in terms of the estimation errors against noise), is far less understood theoretically compared to the
noiseless setting.

\begin{comment}
Mathematically, one can pose the problem as follows. Let $\bm{M}^{\star}\in\mathbb{R}^{n_{1}\times n_{2}}$
be a rank-$r$ matrix. Suppose one observes the noisy entries $\{M_{ij}^{\star}+E_{ij}\}_{(i,j)\in\Omega}$,
where $\Omega\subset[n_{1}]\times[n_{2}]$ denotes the index set and
$\bm{E}=[E_{ij}]$ represents the additive noise (note that the values
of $\bm{E}$ outside of $\Omega$ do not matter). Throughout this
paper, we will focus on the i.i.d. Bernoulli model, i.e.~each $(i,j)\in[n_{1}]\times[n_{2}]$
belongs to $\Omega$ with probability $p$ independently. Then the
goal of matrix completion is to estimate the underlying matrix $\bm{M}^{\star}$
as accurately as possible given its partial entries $\{M_{ij}^{\star}+E_{ij}\}_{(i,j)\in\Omega}$.
include both convex relaxation (in particular, nuclear norm minimization)
\cite{ExactMC09,CanTao10,Gross2011recovering,recht2011simpler,Negahban2012restricted,MR2906869}
and nonconvex optimization~\cite{KesMonSew2010,Se2010Noisy,sun2016guaranteed,chen2015fast,ma2017implicit,zheng2016convergence}.
\end{comment}

\subsection{Convex relaxation: limitations of prior results\label{subsec:Nuclear-norm-minimization}}

Naturally, one would search for a low-rank solution that best fits
the observed entries. One choice is the regularized least-squares
formulation given by
\begin{equation}
\underset{\bm{Z}\in\mathbb{R}^{n\times n}}{\text{minimize}}\qquad\frac{1}{2}\sum_{(i,j)\in\Omega}\big(Z_{ij}-M_{ij}\big)^{2}+\lambda\,\mathsf{rank}(\bm{Z}),\label{eq:LS-rank}
\end{equation}
where $\lambda>0$ is some regularization parameter. In words,
this approach optimizes certain trade-off between the goodness of fit
(through the squared loss expressed in the first term of~\eqref{eq:LS-rank})
and the low-rank structure (through the rank function in the second
term of~\eqref{eq:LS-rank}). Due to computational intractability
of rank minimization, we often resort to convex relaxation in order
to obtain computationally feasible solutions. One notable example
is the following convex program:
\begin{equation}
\underset{\bm{Z}\in\mathbb{R}^{n\times n}}{\text{minimize}}\qquad g(\bm{Z})\triangleq\frac{1}{2}\sum_{(i,j)\in\Omega}\big(Z_{ij}-M_{ij}\big)^{2}+\lambda\left\Vert \bm{Z}\right\Vert _{*},\label{eq:convex-LS}
\end{equation}
where $\|\bm{Z}\|_{*}$ denotes the nuclear norm (i.e.~the sum of
singular values) of $\bm{Z}$ --- a convex surrogate for the rank
function. A significant portion of existing theory supports the use
of this paradigm in the noiseless setting: when $E_{ij}$ vanishes
for all $(i,j)\in\Omega$, the solution to~\eqref{eq:convex-LS} is
known to be faithful (i.e.~the estimation error becomes zero)
even under near-minimal sample complexity~\cite{ExactMC09,CanPla10,CanTao10,Gross2011recovering,recht2011simpler,chen2015incoherence}.

%By contrast, the performance of convex relaxation remains largely
%unclear when it comes to more practically relevant noisy scenarios. To begin with, the stability of an equivalent
%variant of~\eqref{eq:convex-LS} against noise was first studied by Cand\`es and
%Plan~\cite{CanPla10}.\footnote{Technically,~\cite{CanPla10} deals with the constrained version of~(\ref{eq:convex-LS}), which is equivalent to the Lagrangian form
%as in~(\ref{eq:convex-LS}) with a proper choice of the regularization
%parameter.} The estimation error $\|\bm{Z}_{\mathsf{cvx}}-\bm{M}^{\star}\|_{\mathrm{F}}$
%derived therein, of the solution $\bm{Z}_{\mathsf{cvx}}$ to~\eqref{eq:convex-LS},
%is significantly larger than the oracle lower bound. This
%does not explain well the effectiveness of~\eqref{eq:convex-LS} in
%practice.
%In fact, the numerical experiments reported in~\cite{CanPla10} already indicated that the performance of convex relaxation is far better than their theoretical bounds.
%In order to improve the statistical guarantees,
%several variants of~\eqref{eq:convex-LS} have been put forward, most
%notably by Negahban and Wainwright~\cite{Negahban2012restricted}
%and Koltchinskii et al.~\cite{MR2906869}; see Section~\ref{subsec:Models-and-main}
%for more details. Nevertheless, the stability analysis in~\cite{Negahban2012restricted,MR2906869}
%could often be suboptimal when the magnitudes of the noise are not sufficiently
%large; in fact, their estimation error bounds do not vanish as the
%size of the noise approaches zero.

By contrast, the performance of convex relaxation remains largely
unclear when it comes to noisy settings (which are often more practically relevant). Cand\`es and
Plan~\cite{CanPla10} first studied the stability of an equivalent
variant\footnote{Technically,~\cite{CanPla10} deals with the constrained version of~(\ref{eq:convex-LS}), which is equivalent to the Lagrangian form
as in~(\ref{eq:convex-LS}) with a proper choice of the regularization
parameter.} of~\eqref{eq:convex-LS} against noise. The estimation error $\|\bm{Z}_{\mathsf{cvx}}-\bm{M}^{\star}\|_{\mathrm{F}}$
derived therein, of the solution $\bm{Z}_{\mathsf{cvx}}$ to~\eqref{eq:convex-LS},
is significantly larger than the oracle lower bound. This
does not explain well the effectiveness of~\eqref{eq:convex-LS} in
practice. In fact, the numerical experiments reported in~\cite{CanPla10} already indicated that the performance of convex relaxation is far better than their theoretical bounds. This discrepancy between numerical performance and existing theoretical bounds gives rise to the following natural yet challenging questions: \emph{Where
does the convex program~(\ref{eq:convex-LS}) stand in terms of its
stability vis-à-vis additive noise? Can we establish 
statistical performance guarantees that match its practical effectiveness?}

We note in passing that several other convex relaxation formulations have been thoroughly analyzed for noisy matrix completion, most notably by Negahban and Wainwright~\cite{Negahban2012restricted} and by Koltchinskii et~al.~\cite{MR2906869}. These works have significantly advanced our understanding of the power of convex relaxation. However, the estimators studied therein, particularly the one in \cite{MR2906869}, are quite different from the one (\ref{eq:convex-LS}) considered here; as a consequence, the analysis therein does not lead to improved statistical guarantees of (\ref{eq:convex-LS}). Moreover, the performance guarantees provided for these variants are also suboptimal when restricted to the class of ``incoherent'' or ``de-localized'' matrices, unless the magnitudes of the noise are fairly large. See Section~\ref{subsec:Models-and-main} for more detailed discussions as well as numerical comparisons of these algorithms. 

%In order to improve the ,
%several variants of~\eqref{eq:convex-LS} ; see Section~\ref{subsec:Models-and-main}
%for more details. While these works significantly improve our understanding about the power of convex relaxation, the stability analysis presented therein (e.g.~\cite{Negahban2012restricted}) remains suboptimal when the magnitudes of the noise are not sufficiently
%large; in fact, their estimation error bounds often do not vanish even when the
%magnitudes of the noise approach zero. 

%Is
%it capable of accommodating a wider (and more practical) range of 
%noise levels?
\subsection{A detour: nonconvex optimization}

While the focus of the current paper is convex relaxation, we take
a moment to discuss a seemingly distinct algorithmic paradigm: nonconvex
optimization, which turns out to be remarkably helpful in understanding
convex relaxation. Inspired by the Burer--Monteiro approach~\cite{burer2003nonlinear},
the nonconvex scheme starts by representing the rank-$r$ decision matrix (or parameters) $\bm{Z}$ as $\bm{Z}=\bm{X}\bm{Y}^{\top}$ via  low-rank
factors $\bm{X},\bm{Y}\in\mathbb{R}^{n\times r}$, and proceeds by
solving the  following nonconvex (regularized) least-squares problem~\cite{KesMonSew2010}
\begin{equation}
\underset{\bm{X},\bm{Y}\in\mathbb{R}^{n\times r}}{\text{minimize}}\qquad\frac{1}{2}\sum_{(i,j)\in\Omega}\big[\big(\bm{X}\bm{Y}^{\top}\big)_{ij}-M_{ij}\big]^{2}+\mathsf{reg}(\bm{X},\bm{Y}).\label{eq:nonconvex_mc_noisy-general}
\end{equation}
Here, $\mathsf{reg}(\cdot,\cdot)$ denotes a certain regularization
term that promotes additional structural properties. 

To see its intimate connection with
the convex program~(\ref{eq:convex-LS}), we make the following observation: if the solution
to~(\ref{eq:convex-LS}) has rank $r$, then it must coincide with
the solution to %the following auxiliary nonconvex optimization problem.
\begin{equation}
\underset{\bm{X},\bm{Y}\in\mathbb{R}^{n\times r}}{\text{minimize}}\qquad\frac{1}{2}\sum_{(i,j)\in\Omega}
\big[\big(\bm{X}\bm{Y}^{\top}\big)_{ij}-M_{ij}\big]^{2}+
\underset{\mathsf{reg}(\bm{X},\bm{Y})}{\underbrace{\frac{\lambda}{2}
\|\bm{X}\|_{\mathrm{F}}^{2}+\frac{\lambda}{2}\|\bm{Y}\|_{\mathrm{F}}^{2}}}.
\label{eq:nonconvex_mc_noisy-lambda}
\end{equation}
This can be easily verified by recognizing the elementary fact that
\begin{equation}
\|\bm{Z}\|_{*}=\inf_{\bm{X},\bm{Y}\in\mathbb{R}^{n\times r}:\bm{X}\bm{Y}^{\top}=\bm{Z}}\left\{ \tfrac{1}{2}\|\bm{X}\|_{\mathrm{F}}^{2}+\tfrac{1}{2}\|\bm{Y}\|_{\mathrm{F}}^{2}\right\} \label{eq:nuclear-fro-relation}
\end{equation}
for any rank-$r$ matrix $\bm{Z}$~\cite{srebro2005rank,mazumder2010spectral}. Note, however, that it is very challenging to predict when the key
assumption in establishing this connection --- namely, the rank-$r$
assumption of the solution to the convex program (\ref{eq:convex-LS}) --- can possibly
hold (and in particular, whether it can hold under minimal sample complexity requirement). 

%The auxiliary nonconvex optimization problem~\eqref{eq:nonconvex_mc_noisy-lambda} does not mean to be solved in practice.  It is created to help us to construct a random quantity (can depend on both the data and unknown parameters) whose statistical errors can more easily be derived and is close to the unique optimizer of~\eqref{eq:convex-LS}.  Indeed, we can even run a gradient descend algorithm starting from the true values $\bX^\star$ and $\bY^\star$ to construct such a random quantity, as long as this sequence can help us establish statistical properties of the estimator~\eqref{eq:convex-LS}.

Despite the nonconvexity of~(\ref{eq:nonconvex_mc_noisy-general}),
simple first-order optimization methods, in conjunction with proper
initialization, are often effective in solving
(\ref{eq:nonconvex_mc_noisy-general}). Partial examples include gradient
descent on manifold~\cite{KesMonSew2010,Se2010Noisy,wei2016guarantees},
gradient descent~\cite{sun2016guaranteed,ma2017implicit}, and projected
gradient descent~\cite{chen2015fast,zheng2016convergence}. Apart
from their practical efficiency, the nonconvex optimization approach
is also appealing in theory. To begin with, algorithms tailored to
\eqref{eq:nonconvex_mc_noisy-general} often enable exact recovery
in the noiseless setting. Perhaps more importantly, for a wide range
of noise settings, the nonconvex approach achieves appealing
estimation accuracy~\cite{chen2015fast,ma2017implicit}, which could be significantly
better than those bounds derived for convex relaxation discussed earlier.
See~\cite{chi2018nonconvex,chen2018harnessing} for a summary of recent
results. Such intriguing statistical guarantees  motivate us to take
a closer inspection of the underlying connection between the two contrasting
algorithmic frameworks.

\subsection{Empirical evidence: convex and nonconvex solutions are often close\label{subsec:Numerics-cvx-ncvx}}

In order to obtain a better sense of the relationships between convex
and nonconvex approaches, we begin by comparing the estimates returned
by the two approaches via numerical experiments. Fix $n=1000$ and
$r=5$. We generate $\bm{M}^{\star}=\bm{X}^{\star}\bm{Y}^{\star\top}$,
where $\bm{X}^{\star},\bm{Y}^{\star}\in\mathbb{R}^{n\times r}$ are
random orthonormal matrices. Each entry $M_{ij}^{\star}$ of $\bm{M}^{\star}$
is observed with probability $p=0.2$ independently, and then corrupted
by an independent Gaussian noise $E_{ij}\sim\mathcal{N}(0,\sigma^{2})$.
Throughout the experiments, we set $\lambda=5\sigma\sqrt{np}$. The
convex program~(\ref{eq:convex-LS}) is solved by the proximal gradient
method~\cite{parikh2014proximal}, whereas we attempt solving the
nonconvex formulation~(\ref{eq:nonconvex_mc_noisy-lambda}) by gradient
descent with spectral initialization (see~\cite{chi2018nonconvex}
for details). Let $\bm{Z}_{\mathsf{cvx}}$ (resp.~$\bm{Z}_{\mathsf{ncvx}}=\bm{X}_{\mathsf{ncvx}}\bm{Y}_{\mathsf{ncvx}}^{\top}$)
be the solution returned by the convex program~(\ref{eq:convex-LS})
(resp.~the nonconvex program~(\ref{eq:nonconvex_mc_noisy-lambda})).
Figure~\ref{fig:nnm_error} displays the relative estimation errors
of both methods ($\|\bm{Z}_{\mathsf{cvx}}-\bm{M}^{\star}\|_{\mathrm{F}}/\|\bm{M}^{\star}\|_{\mathrm{F}}$
and $\|\bm{Z}_{\mathsf{ncvx}}-\bm{M}^{\star}\|_{\mathrm{F}}/\|\bm{M}^{\star}\|_{\mathrm{F}}$)
as well as the relative distance $\|\bm{Z}_{\mathsf{cvx}}-\bm{Z}_{\mathsf{ncvx}}\|_{\mathrm{F}}/\|\bm{M}^{\star}\|_{\mathrm{F}}$
between the two estimates. The results are averaged over 20 independent
trials.

\begin{figure}
\center

\includegraphics[scale=0.4]{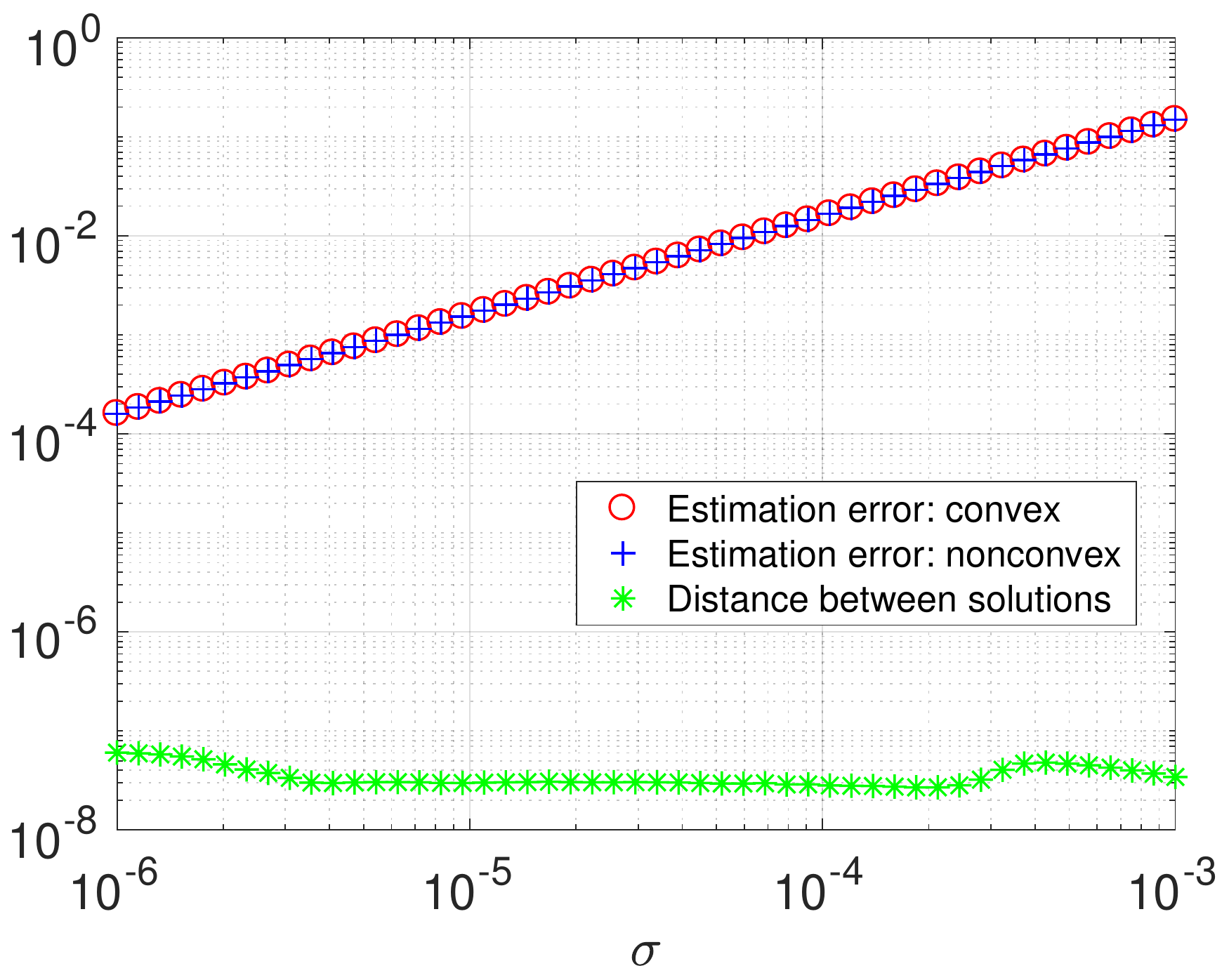}\caption{The relative estimation errors of both $\bm{Z}_{\mathsf{cvx}}$ (the
estimate of the convex program~(\ref{eq:convex-LS})) and $\bm{Z}_{\mathsf{ncvx}}$
(the estimate returned by the nonconvex approach tailored to~(\ref{eq:nonconvex_mc_noisy-lambda}))
and the relative distance between them vs.~the standard deviation
$\sigma$ of the noise. The results are reported for $n=1000$, $r=5$,
$p=0.2$, $\lambda=5\sigma\sqrt{np}$ and are averaged over 20 independent
trials. \label{fig:nnm_error}}
\end{figure}

Interestingly, the distance between the convex and the nonconvex solutions
seems extremely small (e.g.~$\|\bm{Z}_{\mathsf{cvx}}-\bm{Z}_{\mathsf{ncvx}}\|_{\mathrm{F}}/\|\bm{M}^{\star}\|_{\mathrm{F}}$
is typically below $10^{-7}$); in comparison, the relative estimation
errors of both $\bm{Z}_{\mathsf{cvx}}$ and $\bm{Z}_{\mathsf{ncvx}}$
are substantially larger. In other words, the estimate returned by
the nonconvex approach serves as a remarkably accurate approximation
of the convex solution. Given that the nonconvex approach is often
guaranteed to achieve intriguing statistical guarantees vis-à-vis random noise
\cite{ma2017implicit}, this suggests that the convex program is equally
stable --- a phenomenon that was not captured by  prior theory
\cite{CanPla10}. \emph{Can we leverage existing theory for the nonconvex
scheme to improve the statistical analysis of the convex relaxation
approach?}

Before continuing, we remark that the above numerical connection between convex relaxation~(\ref{eq:convex-LS}) and nonconvex  optimization~(\ref{eq:nonconvex_mc_noisy-lambda}) has already been observed multiple times in prior literature~\cite{fazel2002matrix, srebro2005rank, RecFazPar07, mazumder2010spectral, Se2010Noisy}. Nevertheless, all prior observations on this connection were either completely empirical, or provided in a way that does not lead to improved statistical error bounds of the convex paradigm (\ref{eq:convex-LS}). In fact, the difficulty in rigorously justifying the above numerical observations has been noted in the literature; see e.g.~\cite{Se2010Noisy}.\footnote{The seminal work~\cite{Se2010Noisy} by Keshavan, Montanari and Oh stated that ``\emph{In view of the identity (\ref{eq:nuclear-fro-relation}) it might be possible to use the results in this paper to prove stronger guarantees on the nuclear norm minimization approach. Unfortunately this
implication is not immediate $\ldots$ Trying to establish such
an implication, and clarifying the relation between the two approaches is nevertheless a
promising research direction.}''}

\subsection{Models and main results \label{subsec:Models-and-main}}

The numerical experiments reported in Section~\ref{subsec:Numerics-cvx-ncvx}
suggest an alternative route for analyzing convex relaxation for noisy
matrix completion. If one can formally justify the proximity between
the convex and the nonconvex solutions, then it is possible to propagate
the appealing stability guarantees from the nonconvex scheme to the
convex approach. As it turns out, this simple idea leads to significantly
enhanced statistical guarantees for the convex program~(\ref{eq:convex-LS}),
which we formally present in this subsection.

\subsubsection{Models and assumptions}

Before proceeding, we introduce a few model assumptions that play
a crucial role in our theory.

\smallskip
\begin{assumption}\label{assumption:sampling-noise}\quad{}

\begin{itemize}[leftmargin=2.5em]\item[(a)] \textbf{(Random sampling)}
Each index $(i,j)$ belongs to the index set $\Omega$ independently with probability
$p$.

\item[(b)] \textbf{(Random noise)} The noise matrix $\bm{E}=[E_{ij}]_{1\leq i,j\leq n}$
is composed of i.i.d.~zero-mean sub-Gaussian random variables with sub-Gaussian
norm at most $\sigma>0$, i.e.~$\|E_{ij}\|_{\psi_{2}}\leq\sigma$
(see \cite[Definition 5.7]{Vershynin2012}).

\end{itemize}

\end{assumption}
\smallskip

In addition, let $\bm{M}^{\star}=\bm{U}^{\star}\bm{\Sigma}^{\star}\bm{V}^{\star\top}$
be the singular value decomposition (SVD) of $\bm{M}^{\star}$, where
${\bm{U}^{\star},\bm{V}^{\star}\in\mathbb{R}^{n\times r}}$ consist
of orthonormal columns and $\bm{\Sigma}^{\star}=\mathsf{diag}(\sigma_{1}^{\star},\sigma_{2}^{\star},\cdots,\sigma_{r}^{\star})\in\mathbb{R}^{r\times r}$
is a diagonal matrix obeying $\sigma_{\max}\triangleq\sigma_{1}^{\star}\geq\sigma_{2}^{\star}\geq\cdots\geq\sigma_{r}^{\star}\triangleq\sigma_{\min}$.
Denote by $\kappa\triangleq\sigma_{\max}/\sigma_{\min}$ the condition
number of $\bm{M}^{\star}$. We impose the following incoherence condition
on $\bm{M}^{\star}$, which is known to be crucial for reliable recovery
of $\bm{M}^{\star}$~\cite{ExactMC09,chen2015incoherence}.

\begin{definition}\label{def:incoherence}A rank-$r$ matrix $\bm{M}^{\star}\in\mathbb{R}^{n\times n}$
with SVD $\bm{M}^{\star}=\bm{U}^{\star}\bm{\Sigma}^{\star}\bm{V}^{\star\top}$
is said to be $\mu$-incoherent if
\[
\left\Vert \bm{U}^{\star}\right\Vert _{2,\infty}\leq\sqrt{\frac{\mu}{n}}\left\Vert \bm{U}^{\star}\right\Vert _{\mathrm{F}}=\sqrt{\frac{\mu r}{n}}\qquad\text{and}\qquad\left\Vert \bm{V}^{\star}\right\Vert _{2,\infty}\leq\sqrt{\frac{\mu}{n}}\left\Vert \bm{V}^{\star}\right\Vert _{\mathrm{F}}=\sqrt{\frac{\mu r}{n}}.
\]
Here, $\| \bm{U}\|_{2,\infty}$ denotes the largest
$\ell_{2}$ norm of all rows of a matrix $\bm{U}$.

\end{definition}

\begin{remark}
It is worth noting that several other conditions on the low-rank matrix have been proposed in the noisy setting. Examples include the spikiness condition~\cite{Negahban2012restricted} and the bounded $\ell_{\infty}$ norm condition~\cite{MR2906869}. However, these conditions alone are often unable to ensure identifiability of the true matrix even in the absence of noise. 
\end{remark}

\subsubsection{Theoretical guarantees: when both the rank and the condition number are constants}
With these in place, we are positioned to present our improved statistical
guarantees for convex relaxation. 
For convenience of presentation, 
we shall begin with a simple yet fundamentally important class of settings when the rank $r$ and the condition number $\kappa$ are both fixed constants. As it turns out, this class of problems arises in  a variety of engineering applications. For example, in a fundamental problem in cryo-EM called angular synchronization \cite{singer2011angular}, one needs to deal with rank-2 or rank-3 matrices with $\kappa = 1$; in a joint shape mapping problem that arises in computer graphics \cite{huang2013consistent,chen2014matching}, the matrix under consideration has low rank and a condition number equal to 1;  and in structure from motion in computer vision \cite{tomasi1992shape}, one often seeks to estimate a matrix with $r\leq 3$ and a small condition number.
Encouragingly, our theory delivers near-optimal statistical guarantees for such practically important scenarios.

%and in sensor network localization \cite{so2007theory}, one often needs to estimate a matrix with $r\leq 3$ and a small condition number. 

%Its connection to nonconvex optimization, which is at the core of the analysis and also more complicated to describe precisely, is deferred to Section~\ref{sec:Proof-achitecture}.

\begin{theorem}\label{thm:main-convex-simplified}Let $\bm{M}^{\star}$ be rank-$r$
and $\mu$-incoherent with a condition number $\kappa$, where the rank and the condition number satisfy $r,\kappa=O(1)$. Suppose that Assumption~\ref{assumption:sampling-noise}
holds and take $\lambda=C_{\lambda}\sigma\sqrt{np}$ in~(\ref{eq:convex-LS})
for some large enough constant $C_{\lambda}>0$. Assume the sample
size obeys $n^{2}p\geq C\mu^{2}n\log^{3}n$ for some
sufficiently large constant $C>0$, and the noise satisfies $\sigma\lesssim\sqrt{\frac{np}{\mu^3 \log n}}\left\Vert \bm{M}^{\star}\right\Vert _{\infty}$
for some sufficiently small constant $c>0$. Then with probability
exceeding $1-O(n^{-3})$: 

\begin{enumerate}
\item Any minimizer $\bm{Z}_{\mathsf{cvx}}$ of~(\ref{eq:convex-LS})
obeys \begin{subequations}\label{eq:Zcvx-error-simplified}
\begin{align}
\big\|\bm{Z}_{\mathsf{cvx}}-\bm{M}^{\star}\big\|_{\mathrm{F}}\,\, & \lesssim\frac{\sigma}{\sigma_{\min}}\sqrt{\frac{n}{p}}\,\big\|\bm{M}^{\star}\big\|_{\mathrm{F}};\quad\big\|\bm{Z}_{\mathsf{cvx}}-\bm{M}^{\star}\big\| \lesssim\frac{\sigma}{\sigma_{\min}}\sqrt{\frac{n}{p}}\,\big\|\bm{M}^{\star}\big\|;\label{eq:main-fro-norm-error-simplified}\\
\big\|\bm{Z}_{\mathsf{cvx}}-\bm{M}^{\star}\big\|_{\infty} & \lesssim\frac{\sigma}{\sigma_{\min}}\sqrt{\frac{\mu n\log n}{p}}\,\big\|\bm{M}^{\star}\big\|_{\infty}. 
\label{eq:main-inf-norm-error-simplified}
%\\
%\big\|\bm{Z}_{\mathsf{cvx}}-\bm{M}^{\star}\big\|\,\,\,\,\, & \lesssim\frac{\sigma}{\sigma_{\min}}\sqrt{\frac{n}{p}}\,\big\|\bm{M}^{\star}\big\|;\label{eq:main-spectral-norm-error-simplified}
\end{align}
\end{subequations}

\item Letting $\bm{Z}_{\mathsf{cvx},r}\triangleq\mathrm{arg}\min_{\bm{Z}:\mathsf{rank}(\bm{Z})\leq r}\|\bm{Z}-\bm{Z}_{\mathsf{cvx}}\|_{\mathrm{F}}$
be the best rank-$r$ approximation of $\bm{Z}_{\mathsf{cvx}}$, we
have
\begin{equation}
\|\bm{Z}_{\mathsf{cvx},r}-\bm{Z}_{\mathsf{cvx}}\|_{\mathrm{F}}\leq\frac{1}{n^{3}}\cdot\frac{\sigma}{\sigma_{\min}}\sqrt{\frac{n}{p}} \,\big\|\bm{M}^{\star}\big\|,
 \label{eq:Zcvx-r-bound-simplified}
\end{equation}
and the error bounds in~\eqref{eq:Zcvx-error-simplified} continue to hold if $\bm{Z}_{\mathsf{cvx}}$
is replaced by $\bm{Z}_{\mathsf{cvx},r}$.
\end{enumerate}\end{theorem}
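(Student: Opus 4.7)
The strategy is to use the nonconvex Burer--Monteiro formulation~(\ref{eq:nonconvex_mc_noisy-lambda}) as an analytic proxy for the convex program~(\ref{eq:convex-LS}) and to transfer its known statistical guarantees via a quantitative proximity argument. In the first stage, I would invoke the leave-one-out analysis of gradient descent with spectral initialization (in the style of the implicit-regularization literature on noisy matrix completion) to produce an approximate critical point $(\bm{X},\bm{Y})\in\mathbb{R}^{n\times r}\times\mathbb{R}^{n\times r}$ of~(\ref{eq:nonconvex_mc_noisy-lambda}) such that the associated estimate $\bm{Z}_{\mathsf{ncvx}}:=\bm{X}\bm{Y}^{\top}$ already satisfies all three target error bounds~(\ref{eq:main-fro-norm-error-simplified})--(\ref{eq:main-inf-norm-error-simplified}) with respect to $\bm{M}^{\star}$. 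In addition, this iterate can be chosen to enjoy factor balancedness $\bm{X}^{\top}\bm{X}=\bm{Y}^{\top}\bm{Y}$, strong incoherence of its left/right singular subspaces $\bm{U},\bm{V}$, and a polynomially small stationarity residual (say $\leq n^{-10}$) by running the iteration to sufficient precision.

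The second stage promotes the near-stationarity into an approximate dual certificate for the convex problem. Setting $\bm{W}:=-\lambda^{-1}\mathcal{P}_{\Omega}(\bm{X}\bm{Y}^{\top}-\bm{M})$, the first-order conditions translate, via the SVD $\bm{Z}_{\mathsf{ncvx}}=\bm{U}\bm{\Sigma}\bm{V}^{\top}$, into $\bm{W}\bm{V}\approx\bm{U}$ and $\bm{W}^{\top}\bm{U}\approx\bm{V}$; equivalently, $\mathcal{P}_{T}(\bm{W})$ nearly equals the canonical subgradient $\bm{U}\bm{V}^{\top}$ of $\|\cdot\|_{*}$ at $\bm{Z}_{\mathsf{ncvx}}$. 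Coupling this with a spectral-norm bound $\|\mathcal{P}_{T^{\perp}}(\bm{W})\|\leq 1/2$---to be certified using sub-Gaussian concentration of $\mathcal{P}_{\Omega}(\bm{E})$ together with the propagated incoherence---yields an approximate subdifferential characterization. Writing $\bm{\Delta}:=\bm{Z}_{\mathsf{cvx}}-\bm{Z}_{\mathsf{ncvx}}$ and combining convex optimality $g(\bm{Z}_{\mathsf{cvx}})\leq g(\bm{Z}_{\mathsf{ncvx}})$ with this characterization, I expect an inequality of the schematic form
\[
0 \geq g(\bm{Z}_{\mathsf{cvx}})-g(\bm{Z}_{\mathsf{ncvx}}) \geq \tfrac{1}{2}\|\mathcal{P}_{\Omega}(\bm{\Delta})\|_{\mathrm{F}}^{2}+\tfrac{\lambda}{2}\|\mathcal{P}_{T^{\perp}}(\bm{\Delta})\|_{*}-n^{-10}\|\bm{\Delta}\|_{\mathrm{F}}.
\]
A restricted-isometry-type bound for $\mathcal{P}_{\Omega}$ on incoherent tangent directions~\cite{ExactMC09,chen2015incoherence}, combined with the nuclear-norm slack on $T^{\perp}$, then forces $\|\bm{\Delta}\|_{\mathrm{F}}\leq n^{-3}\cdot\tfrac{\sigma}{\sigma_{\min}}\sqrt{n/p}\,\|\bm{M}^{\star}\|$.

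With proximity in hand, Part~1 follows from the triangle inequality applied to the bounds on $\bm{Z}_{\mathsf{ncvx}}$, using also $\|\bm{\Delta}\|_{\infty}\leq\|\bm{\Delta}\|_{\mathrm{F}}$ for the entrywise claim. For Part~2, since $\bm{Z}_{\mathsf{ncvx}}$ has rank exactly $r$, the best rank-$r$ approximation obeys $\|\bm{Z}_{\mathsf{cvx},r}-\bm{Z}_{\mathsf{cvx}}\|_{\mathrm{F}}\leq\|\bm{Z}_{\mathsf{ncvx}}-\bm{Z}_{\mathsf{cvx}}\|_{\mathrm{F}}$, giving~(\ref{eq:Zcvx-r-bound-simplified}) directly; the bounds~(\ref{eq:Zcvx-error-simplified}) for $\bm{Z}_{\mathsf{cvx},r}$ then follow from another triangle inequality. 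The main obstacle will be controlling $\|\mathcal{P}_{T^{\perp}}(\bm{W})\|$ tightly in step two: this demands both fine-grained leave-one-out incoherence of $\bm{Z}_{\mathsf{ncvx}}$ and sharp spectral concentration of $\mathcal{P}_{\Omega}(\bm{E})$, with any slack propagating directly into the final error. A secondary subtlety is driving the nonconvex stationarity residual down to the $\mathrm{poly}(1/n)$ accuracy required by the $n^{-3}$ slack in~(\ref{eq:Zcvx-r-bound-simplified}), which may force us to iterate gradient descent well past the threshold of vanilla convergence-rate guarantees.
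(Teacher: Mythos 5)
Your overall strategy---produce an approximate critical point of the nonconvex factorization~(\ref{eq:nonconvex_mc_noisy-lambda}), promote the approximate stationarity to an approximate dual certificate for~(\ref{eq:convex-LS}), and transfer the error bounds across the resulting $n^{-\mathrm{poly}}$-proximity---is exactly the paper's route. Parts~1 and~2 of your plan (triangle inequalities, and using the rank-$r$ proxy to bound $\|\bm{Z}_{\mathsf{cvx},r}-\bm{Z}_{\mathsf{cvx}}\|_{\mathrm{F}}$) also match what the paper does. There are, however, two technical points you gloss over that the paper must treat carefully, and that would genuinely need work in a full proof.

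First, you assert the approximate critical point ``can be chosen to enjoy factor balancedness $\bm{X}^{\top}\bm{X}=\bm{Y}^{\top}\bm{Y}$.'' Exact balancedness only holds at an \emph{exact} critical point; a GD iterate with a small-but-nonzero gradient is only approximately balanced. The paper tracks this via an additional induction hypothesis (the bound on $\|\bm{X}^{t\top}\bm{X}^{t}-\bm{Y}^{t\top}\bm{Y}^{t}\|_{\mathrm{F}}$, Lemma~\ref{lemma:balancing}) and then, in the duality step (Claim~\ref{claim:balancing}), controls the deviation of the factorization from the symmetric form $\bm{U}\bm{\Sigma}^{1/2},\bm{V}\bm{\Sigma}^{1/2}$ by $\|\bm{\Sigma}_{\bm{Q}}-\bm{\Sigma}_{\bm{Q}}^{-1}\|_{\mathrm{F}}\lesssim\|\bm{X}^{\top}\bm{X}-\bm{Y}^{\top}\bm{Y}\|_{\mathrm{F}}/\sigma_{\min}(\bm{\Sigma})$. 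Dropping this and pretending $\bm{Q}$ is orthonormal would leave an uncontrolled error term in the argument that $\|\mathcal{P}_{T^{\perp}}(\bm{W})\|<1$.

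Second, and more substantively, your invocation of ``a restricted-isometry-type bound for $\mathcal{P}_{\Omega}$ on incoherent tangent directions~\cite{ExactMC09,chen2015incoherence}'' does not apply as stated, because the tangent space $T$ at $\bm{Z}_{\mathsf{ncvx}}=\bm{X}\bm{Y}^{\top}$ is \emph{not} independent of $\Omega$: the factors are themselves obtained from $\Omega$-dependent gradient iterations. The classical RIP for $\mathcal{P}_{\Omega}$ on $T$ (as in~\cite{ExactMC09,Gross2011recovering}) is a statement about a fixed $T$, and using it directly would be a statistical-dependence error. The paper circumvents this by proving a \emph{uniform} injectivity bound (Lemma~\ref{lemma:injectivity}) over the entire $\ell_{2,\infty}$-neighborhood of $(\bm{X}^{\star},\bm{Y}^{\star})$ containing the iterates; establishing it requires the same leave-one-out $\ell_{2,\infty}$ control you mention, plus a separate uniform concentration argument. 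If you want the argument to go through, you must prove (or import) that uniform version, not the fixed-$T$ version. Apart from these two points---and the cosmetic choice of spectral initialization rather than the paper's ground-truth initialization, which the paper notes is interchangeable at the cost of extra bookkeeping---your blueprint coincides with the paper's.
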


\begin{remark}Here
and throughout, $f(n)\lesssim g(n)$ or $f(n)=O(g(n))$ means $|f(n)|/|g(n)|\leq C$
for some constant $C>0$ when $n$ is sufficiently large; $f(n)\gtrsim g(n)$ means $|f(n)|/|g(n)|\geq C$
for some constant $C>0$ when $n$ is sufficiently large; and $f(n)\asymp g(n)$ if and only if $f(n)\lesssim g(n)$ and $f(n)\gtrsim g(n)$. In addition, $\|\cdot\|_{\infty}$
denotes the entrywise $\ell_{\infty}$ norm, whereas $\|\cdot\|$
is the spectral norm. \end{remark}
\begin{remark}
	The factor $1/{n^3}$ in~\eqref{eq:Zcvx-r-bound-simplified} can be replaced by $1/{n^c}$ for an arbitrarily large fixed constant $c>0$ (e.g.~$c=100$).
\end{remark}

	To explain the applicability of the above theorem, we first remark on the conditions required for this theorem to hold; for simplicity, we assume that $\mu =O(1)$. 
\begin{itemize}
	\item {\em Sample complexity.} To begin with, the sample size needs to exceed the order of $ n \mathrm{poly}\log n$, which is information-theoretically optimal up to some logarithmic term \cite{CanTao10}. 
			
	\item {\em Noise size.} We then turn attention to the noise requirement, i.e.~$\sigma\lesssim\sqrt{\frac{np}{\log n}}\left\Vert \bm{M}^{\star}\right\Vert _{\infty}$. Note that under the sample size condition $n^{2}p\geq Cn\log^{3}n$, the size of the noise in each entry is allowed to be substantially larger than the maximum entry in the matrix. In other words, the signal-to-noise ratio w.r.t.~each observed entry could be very small. 
According to prior literature (e.g.~\cite[Theorem~1.1]{Se2010Noisy} and \cite[Theorem~2]{ma2017implicit}), such noise conditions are typically required for spectral methods to perform noticeably better than random guessing. 

%the incoherence condition (cf.~Definition~\ref{def:incoherence}) guarantees that the largest entry $\|\bm{M}^{\star}\|_{\infty}$ of the matrix $\bm{M}^{\star}$ is on the order of $\sigma_{\min}/ n$ (recall that  $r,\kappa \asymp 1$). As a result, the noise condition stated in Theorem~\ref{thm:main-convex-simplified} is equivalent to 
%%
%\begin{equation*}
%\sigma\lesssim\sqrt{\frac{np}{\log n}}\left\Vert \bm{M}^{\star}\right\Vert _{\infty}. 
%\end{equation*}
%%
\end{itemize}

Further, Theorem~\ref{thm:main-convex-simplified} has several important implications about the power of convex relaxation.
The discussions below again concentrate on the case where $\mu=O(1)$. 
\begin{itemize}
\item \emph{Near-optimal stability guarantees}. Our results reveal that the Euclidean error
of any convex optimizer $\bm{Z}_{\mathsf{cvx}}$ of~(\ref{eq:convex-LS})
obeys
\begin{equation}
\big\|\bm{Z}_{\mathsf{cvx}}-\bm{M}^{\star}\big\|_{\mathrm{F}}\lesssim\sigma\sqrt{n/p},\label{eq:our-rank1}
\end{equation}
implying that the performance of convex relaxation degrades gracefully as the signal-to-noise ratio decreases. This result matches the
oracle lower bound derived in~\cite[Eq.~(III.13)]{CanPla10},
which also improves upon their statistical guarantee. Specifically, Candès and Plan~\cite{CanPla10} provided a stability guarantee in
the presence of arbitrary bounded noise. When applied to the random
noise model assumed here, their results yield $\big\|\bm{Z}_{\mathsf{cvx}}-\bm{M}^{\star}\big\|_{\mathrm{F}}\lesssim\sigma n^{3/2}$,
which could be $O(\sqrt{n^{2}p})$ times more conservative than our
bound~(\ref{eq:our-rank1}).

\item \emph{Nearly low-rank structure of the convex solution}. In light
of~(\ref{eq:Zcvx-r-bound-simplified}), the optimizer of the convex program
(\ref{eq:convex-LS}) is almost, if not exactly, rank-$r$. When the
true rank $r$ is known \emph{a priori}, it is not uncommon for practitioners to return
the rank-$r$ approximation of $\bm{Z}_{\mathsf{cvx}}$. Our theorem
formally justifies that there is no loss of statistical accuracy --- measured
in terms of either $\|\cdot\|_{\mathrm{F}}$ or $\|\cdot\|_{\infty}$
--- when performing the rank-$r$ projection operation.

\item \emph{Entrywise and spectral norm error control. }Moving beyond the
Euclidean loss, our theory  uncovers that the estimation errors
of the convex optimizer are fairly spread out across all entries,
thus implying near-optimal entrywise error control. This is a stronger
form of error bounds, as an optimal Euclidean estimation accuracy
alone does not preclude the possibility of the estimation errors being
spiky and localized. Furthermore, the spectral norm error of the convex optimizer
is also well-controlled. Figure~\ref{fig:infty_op} displays the relative
estimation errors in both the $\ell_{\infty}$ norm and the spectral
norm, under the same setting as in Figure~\ref{fig:nnm_error}. As can be seen,
both forms of estimation errors scale linearly with the noise level, corroborating our theory.
\item \emph{Implicit regularization}. As a byproduct of the entrywise error
control, this result indicates that the additional constraint $\|\bm{Z}\|_{\infty}\leq\alpha$
suggested by~\cite{Negahban2012restricted} is automatically satisfied
and is hence unnecessary. In other words, the convex approach implicitly
controls the spikiness of its entries, without resorting to explicit
regularization. This is also confirmed by the numerical experiments reported in Figure \ref{fig:comparison}, where we see that the estimation error of (\ref{eq:convex-LS}) and that of the constrained version considered in \cite{Negahban2012restricted} are nearly identical. 

\begin{figure}
\center

\includegraphics[scale=0.4]{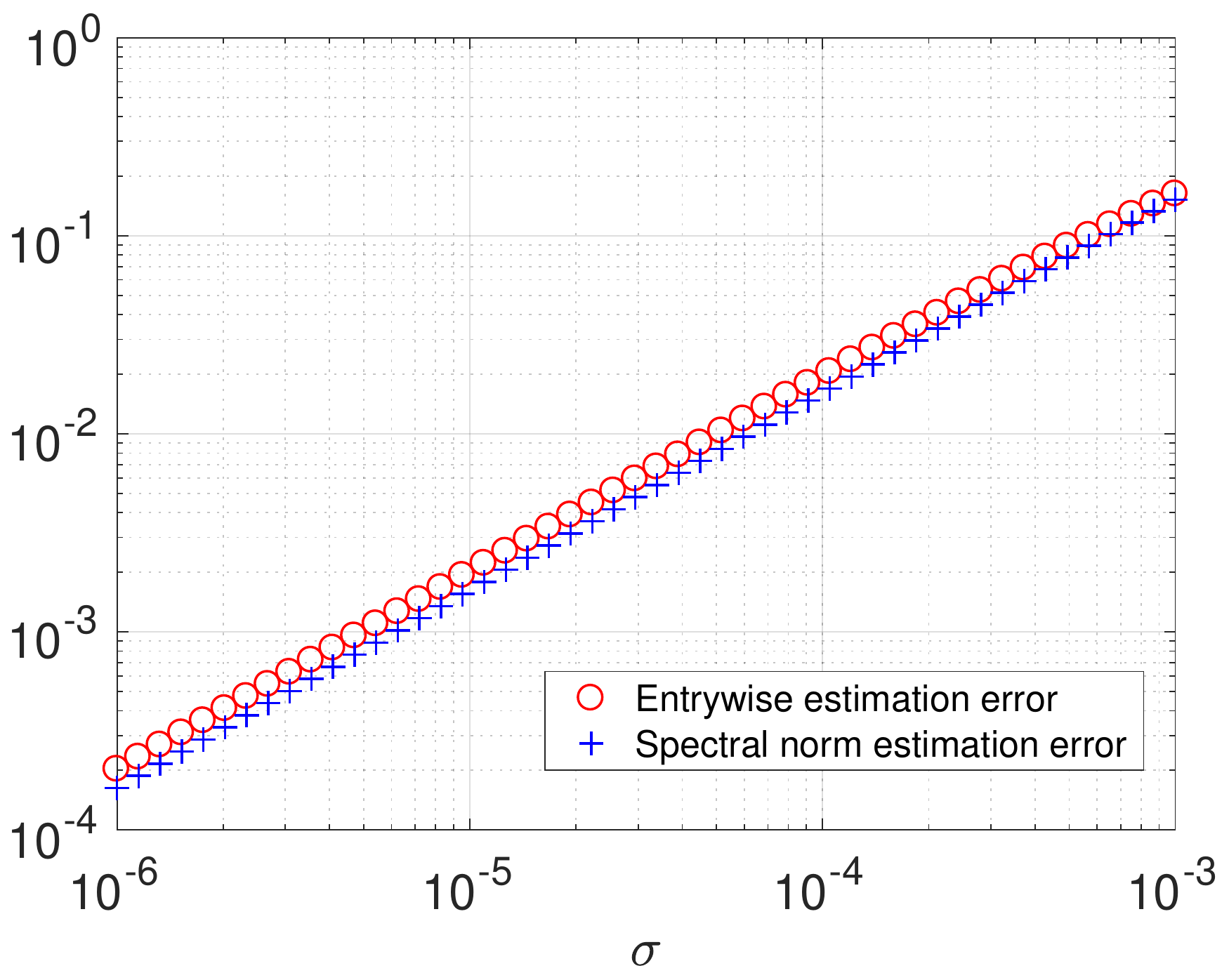}

\caption{The relative estimation error of $\bm{Z}_{\mathsf{cvx}}$ measured
by both $\|\cdot\|_{\infty}$ (i.e.~$\|\bm{Z}_{\mathsf{cvx}}-\bm{M}^{\star}\|_{\infty}/\|\bm{M}^{\star}\|_{\infty}$)
and $\|\cdot\|$ (i.e.~$\|\bm{Z}_{\mathsf{cvx}}-\bm{M}^{\star}\|/\|\bm{M}^{\star}\|$)
vs.~the standard deviation $\sigma$ of the noise. The results are
reported for $n=1000$, $r=5$, $p=0.2$, $\lambda=5\sigma\sqrt{np}$
and are averaged over 20 independent trials. \label{fig:infty_op}}
\end{figure}

\item \emph{Statistical guarantees for fast iterative optimization methods}.
Various iterative algorithms have been developed to solve the nuclear
norm regularized least-squares problem~(\ref{eq:convex-LS}) up to
an arbitrarily prescribed accuracy, examples including SVT (or proximal
gradient methods)~\cite{cai2010singular}, FPC~\cite{ma2011fixed},
SOFT--IMPUTE~\cite{mazumder2010spectral}, FISTA~\cite{beck2009fast,toh2010accelerated},
to name just a few. Our theory immediately provides statistical guarantees
for these algorithms. As we shall make precise in Section~\ref{sec:Proof-achitecture},
any point $\bm{Z}$ with $g(\bm{Z})\leq g(\bm{Z}_{\mathsf{cvx}})+\varepsilon$ (where $g(\cdot)$ is defined in~\eqref{eq:convex-LS})
enjoys the same error bounds as in~(\ref{eq:Zcvx-error-simplified}) (with $\bm{Z}_{\mathsf{cvx}}$
replaced by $\bm{Z}$ in~(\ref{eq:Zcvx-error-simplified})), provided that $\varepsilon>0$
is sufficiently small. In other words, when these convex optimization
algorithms converge w.r.t.~the objective value, they are guaranteed
to return a statistically reliable estimate.
\end{itemize}

\begin{figure}[t]
\centering

\begin{tabular}{cc}
\includegraphics[scale=0.3]{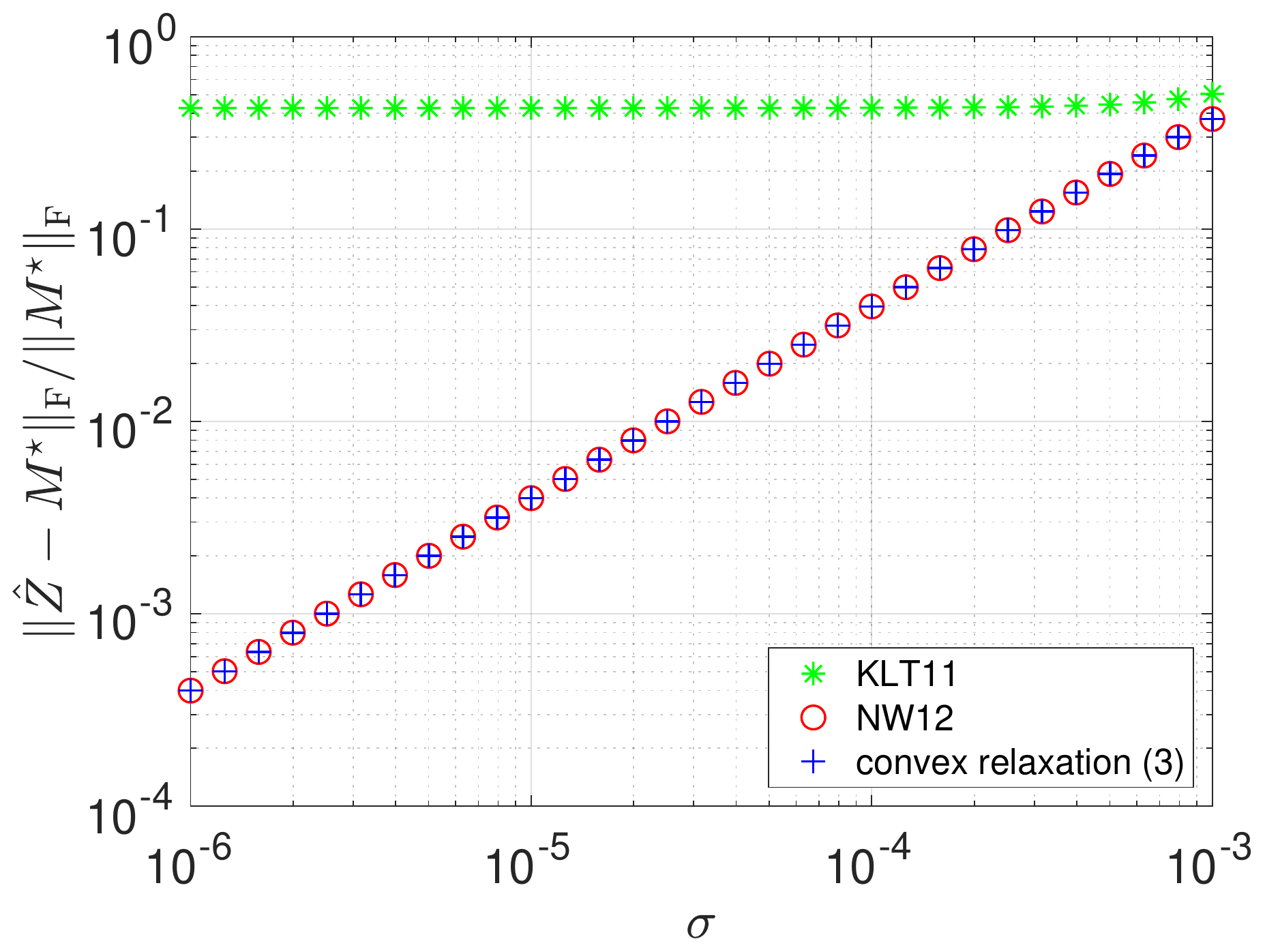} \qquad\quad & \qquad\quad \includegraphics[scale=0.3]{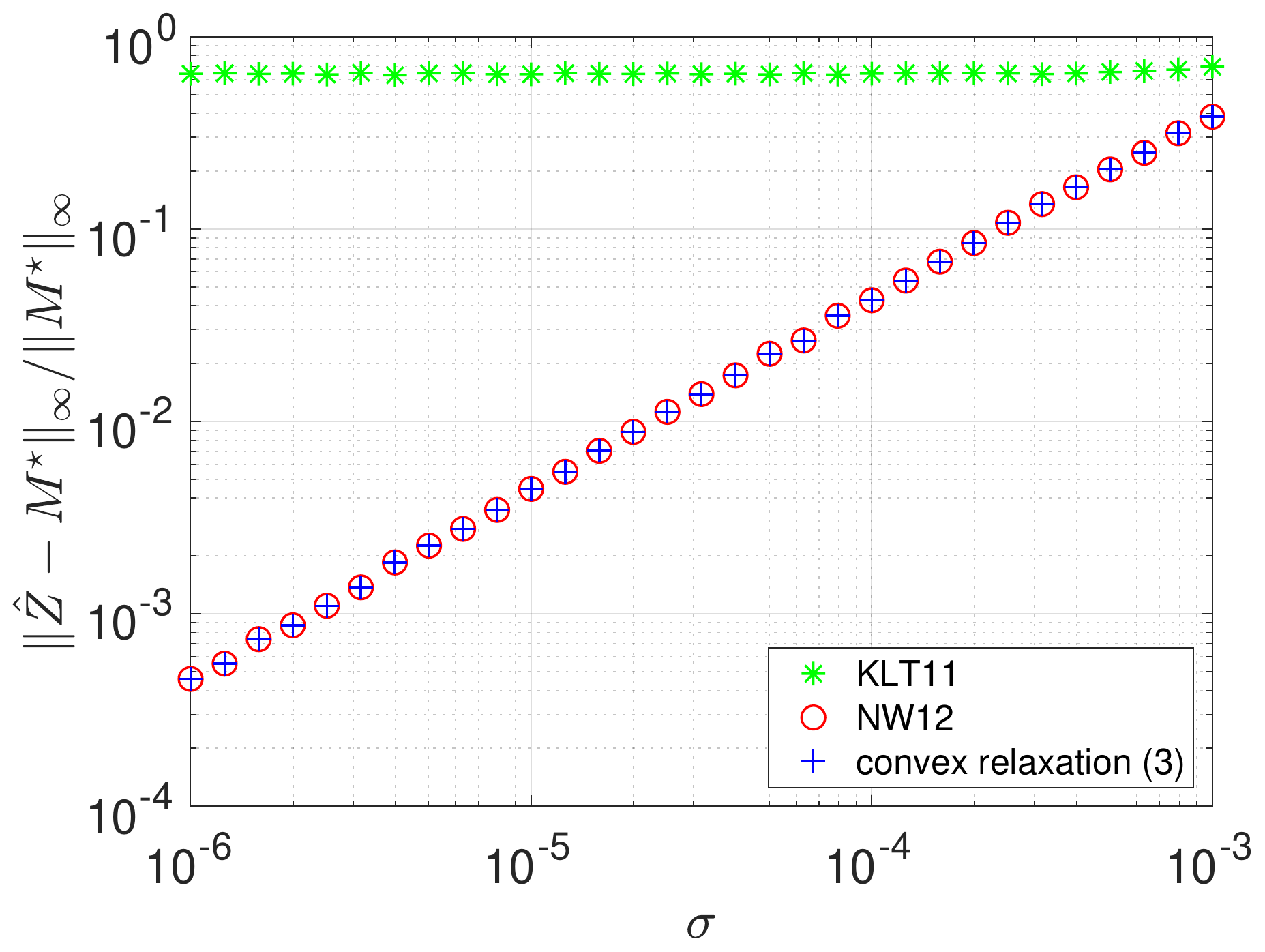} \tabularnewline
(a)  & \qquad\quad\quad(b) \tabularnewline
\end{tabular}

\caption{The relative estimation errors of $\hat{\bm{Z}}$, measured in terms of $\ell_{\mathrm{F}}$ and $\ell_{\infty}$, vs.~the standard deviation~$\sigma$ of the noise. Here $\hat{\bm{Z}}$ can be either the modified convex estimator in~\cite{MR2906869}, the constrained convex estimator in~\cite{Negahban2012restricted} or the vanilla convex estimator~(\ref{eq:convex-LS}). The results are reported for $n=1000$, $r=5$,
$p=0.2$, and are averaged over 20 Monte-Carlo
trials. For the modified convex estimator in~\cite{MR2906869}, we choose the regularization parameter $\lambda$ therein to be $1.5 \max\{ \sigma,\|\bm{M}^{\star}\|_{\infty}\} \sqrt{1/(n^3p)}$, as suggested by their theory. For the constrained one in~\cite{Negahban2012restricted}, the regularization parameter $\lambda$ is set to be $5\sigma\sqrt{np}$ and the constraint $\alpha$ is set to be $\|\bm{M}^\star\|_{\infty}$. Both choices  are recommended by \cite{Negahban2012restricted}. As for~(\ref{eq:convex-LS}), we set $\lambda=5\sigma\sqrt{np}$. 
\label{fig:comparison}}
\end{figure}

To better understand our contributions, we take a moment to discuss two important but different convex programs studied in \cite{Negahban2012restricted} and \cite{MR2906869}. To begin with, under a spikiness assumption on the low-rank matrix, Negahban and Wainwright~\cite{Negahban2012restricted} proposed to
enforce an extra entrywise constraint $\|\bm{Z}\|_{\infty}\leq\alpha$ when
solving~\eqref{eq:convex-LS}, in order to explicitly control the
spikiness of the estimate. When applied to our model with $r,\kappa,\mu\asymp 1$, their results
read (up to some logarithmic factor)
\begin{equation}
\big\|\hat{\bm{Z}}-\bm{M}^{\star}\big\|_{\mathrm{F}}\lesssim\max\left\{ \sigma,\|\bm{M}^{\star}\|_{\infty}\right\} \sqrt{n/p},\label{eq:our-rank1-1}
\end{equation}
where $\hat{\bm{Z}}$ is the estimate returned by their modified convex algorithm.
While this matches the optimal bound when $\sigma \gtrsim \|\bm{M}^{\star}\|_{\infty}$, 
it becomes suboptimal when $\sigma\ll\|\bm{M}^{\star}\|_{\infty}$ (under our models). Moreover, as we have already discussed, 
 the extra spikiness constraint becomes unnecessary in the regime considered herein. This also means that our result complements existing theory about the convex program in \cite{Negahban2012restricted} by demonstrating its minimaxity for an additional range of noise. 
Another work by Koltchinskii et~al.~\cite{MR2906869} investigated a completely different convex algorithm, which is effectively a spectral method (namely, one round of soft singular value thresholding on a rescaled zero-padded data matrix). The algorithm is shown to be minimax optimal over the class of low-rank matrices with bounded $\ell_{\infty}$ norm (note that this is very different from the set of incoherent matrices studied here). When specialized to our model, their error bound is the same as~\eqref{eq:our-rank1-1} (modulo some log factor), which also becomes suboptimal as $\sigma$ decreases.  As can be seen from the numerical experiments in Figure \ref{fig:comparison}, the estimation error of this thresholding-based spectral algorithm does not decrease as the noise shrinks, and its performance seems uniformly outperformed by that of convex relaxation (\ref{eq:convex-LS}) and the constrained estimator in \cite{Negahban2012restricted}.  In fact, this is part of our motivation to pursue an improved theoretical understanding of the formulation (\ref{eq:convex-LS}).

Finally, we make note of a connection between our result and prior theory developed for the noiseless case. Specifically, when the noise vanishes (i.e.~$\sigma \rightarrow 0$), one can take a diminishing sequence of regularization parameters $\{\lambda_{k}\}$ with $\lambda_k\rightarrow 0$, then the resulting estimation errors associated with this sequence should decrease to $0$ as $k\rightarrow \infty$ (which implies exact recovery in the limit of $k$). This parallels the connection between Lasso in sparse linear regression and basis pursuit in compressed sensing.

\subsubsection{Theoretical guarantees: extensions to more general settings}

So far we have presented results when the true matrix has bounded rank and condition number, i.e.~$r,\kappa=O(1)$. 
Our theory actually accommodates a significantly broader range of scenarios, where the rank and the condition number are both allowed to grow with the dimension $n$.

\begin{theorem}\label{thm:main-convex}Let $\bm{M}^{\star}$ be rank-$r$
and $\mu$-incoherent with a condition number $\kappa$. Suppose Assumption~\ref{assumption:sampling-noise}
holds and take $\lambda=C_{\lambda}\sigma\sqrt{np}$ in~(\ref{eq:convex-LS})
for some large enough constant $C_{\lambda}>0$. Assume the sample
size obeys $n^{2}p\geq C\kappa^{4}\mu^{2}r^{2}n\log^{3}n$ for some
sufficiently large constant $C>0$, and the noise satisfies $\sigma\sqrt{\frac{n}{p}}\leq c\frac{\sigma_{\min}}{\sqrt{\kappa^{4}\mu r\log n}}$
for some sufficiently small constant $c>0$. Then with probability
exceeding $1-O(n^{-3})$,

\begin{enumerate}
\item Any minimizer $\bm{Z}_{\mathsf{cvx}}$ of~(\ref{eq:convex-LS})
obeys \begin{subequations}\label{eq:Zcvx-error}
\begin{align}
\big\|\bm{Z}_{\mathsf{cvx}}-\bm{M}^{\star}\big\|_{\mathrm{F}}\,\, & \lesssim\kappa\frac{\sigma}{\sigma_{\min}}\sqrt{\frac{n}{p}}\,\big\|\bm{M}^{\star}\big\|_{\mathrm{F}},\label{eq:main-fro-norm-error}\\
\big\|\bm{Z}_{\mathsf{cvx}}-\bm{M}^{\star}\big\|_{\infty} & \lesssim\sqrt{\kappa^{3}\mu r}\cdot\frac{\sigma}{\sigma_{\min}}\sqrt{\frac{n\log n}{p}}\,\big\|\bm{M}^{\star}\big\|_{\infty},\label{eq:main-inf-norm-error}\\
\big\|\bm{Z}_{\mathsf{cvx}}-\bm{M}^{\star}\big\|\,\,\,\,\, & \lesssim\frac{\sigma}{\sigma_{\min}}\sqrt{\frac{n}{p}}\,\big\|\bm{M}^{\star}\big\|;\label{eq:main-spectral-norm-error}
\end{align}
\end{subequations}

\item Letting $\bm{Z}_{\mathsf{cvx},r}\triangleq\mathrm{arg}\min_{\bm{Z}:\mathsf{rank}(\bm{Z})\leq r}\|\bm{Z}-\bm{Z}_{\mathsf{cvx}}\|_{\mathrm{F}}$
be the best rank-$r$ approximation of $\bm{Z}_{\mathsf{cvx}}$, we
have
\begin{equation}
\|\bm{Z}_{\mathsf{cvx},r}-\bm{Z}_{\mathsf{cvx}}\|_{\mathrm{F}}\leq\frac{1}{n^{3}}\cdot\frac{\sigma}{\sigma_{\min}}\sqrt{\frac{n}{p}} \,\big\|\bm{M}^{\star}\big\|,
 \label{eq:Zcvx-r-bound}
\end{equation}
and the error bounds in~\eqref{eq:Zcvx-error} continue to hold if $\bm{Z}_{\mathsf{cvx}}$
is replaced by $\bm{Z}_{\mathsf{cvx},r}$.
\end{enumerate}\end{theorem}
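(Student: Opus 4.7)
The plan is to bridge the convex and nonconvex solutions, exactly as motivated by the numerical observation in Section~\ref{subsec:Numerics-cvx-ncvx}. Concretely, I would (i) construct an explicit proxy $\bm{Z}_{\mathsf{ncvx}}$, obtained from a Burer--Monteiro nonconvex iterate, that is provably close to the ground truth $\bm{M}^{\star}$ in Frobenius, entrywise and spectral norm; (ii) certify that $\bm{Z}_{\mathsf{ncvx}}$ is an extremely tight approximate minimizer of the convex objective $g(\cdot)$ defined in~\eqref{eq:convex-LS}; and (iii) use a restricted-strong-convexity argument to conclude that any convex minimizer $\bm{Z}_{\mathsf{cvx}}$ lies within inverse-polynomial Frobenius distance of $\bm{Z}_{\mathsf{ncvx}}$. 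The three error bounds in~\eqref{eq:Zcvx-error} then transfer from the proxy to $\bm{Z}_{\mathsf{cvx}}$ up to a negligible additive correction, while the fact that $\bm{Z}_{\mathsf{ncvx}}$ is genuinely rank-$r$ yields part~2 immediately, since $\bm{Z}_{\mathsf{cvx},r}$ is the best rank-$r$ approximation of $\bm{Z}_{\mathsf{cvx}}$ and hence at least as close to $\bm{Z}_{\mathsf{cvx}}$ as $\bm{Z}_{\mathsf{ncvx}}$ is.

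For step (i), I would run (regularized) gradient descent on~\eqref{eq:nonconvex_mc_noisy-lambda} with spectral initialization, producing iterates $(\bm{X}^{t},\bm{Y}^{t})$, and invoke the leave-one-out analysis of~\cite{ma2017implicit}. This shows that, after sufficiently many iterations, $\bm{Z}_{\mathsf{ncvx}}\triangleq\bm{X}^{t}(\bm{Y}^{t})^{\top}$ satisfies the analogs of~\eqref{eq:main-fro-norm-error},~\eqref{eq:main-inf-norm-error} and~\eqref{eq:main-spectral-norm-error}, and crucially that the gradient of the nonconvex objective is driven to magnitude polynomially small in $n$, say of order $n^{-20}\sigma_{\min}$. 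This near-stationarity statement is the key ingredient that feeds into the convex analysis.

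For step (ii), writing the approximate stationarity conditions at $(\bm{X}^{t},\bm{Y}^{t})$ as $\mathcal{P}_{\Omega}\bigl(\bm{X}^{t}(\bm{Y}^{t})^{\top}-\bm{M}\bigr)\bm{Y}^{t}+\lambda\bm{X}^{t}\approx\bm{0}$ together with its transpose, and invoking the variational identity~\eqref{eq:nuclear-fro-relation}, I would lift these relations into a subgradient $\bm{G}\in\partial g(\bm{Z}_{\mathsf{ncvx}})$ with $\|\bm{G}\|_{\mathrm{F}}$ polynomially small. The key algebraic step is to construct the correct nuclear-norm subgradient $\bm{W}$ at $\bm{Z}_{\mathsf{ncvx}}$ by combining the sign component on the tangent space of $\bm{Z}_{\mathsf{ncvx}}$ with a carefully chosen contraction on its orthogonal complement, so that $\|\bm{W}\|\leq 1$ is preserved. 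Then in step (iii), convexity of $g$ yields $g(\bm{Z}_{\mathsf{cvx}})\geq g(\bm{Z}_{\mathsf{ncvx}})+\langle\bm{G},\bm{Z}_{\mathsf{cvx}}-\bm{Z}_{\mathsf{ncvx}}\rangle$, while a restricted-strong-convexity argument --- using that $\mathcal{P}_{\Omega}$ is a near-isometry on the tangent space at $\bm{M}^{\star}$ (the standard RIP-on-tangent estimate for Bernoulli sampling) and that the nuclear-norm penalty pushes back against the normal component --- upper bounds $g(\bm{Z}_{\mathsf{cvx}})-g(\bm{Z}_{\mathsf{ncvx}})$ in terms of $\|\bm{Z}_{\mathsf{cvx}}-\bm{Z}_{\mathsf{ncvx}}\|_{\mathrm{F}}^{2}$. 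Combining these two inequalities pins down $\|\bm{Z}_{\mathsf{cvx}}-\bm{Z}_{\mathsf{ncvx}}\|_{\mathrm{F}}$ at scale $n^{-c}$ for any prescribed constant $c$, which is exactly the precision required by~\eqref{eq:Zcvx-r-bound}.

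The hard part will be constructing a valid nuclear-norm dual certificate $\bm{W}$ with $\|\bm{W}\|\leq 1$ while simultaneously keeping the KKT residual $\|\bm{G}\|_{\mathrm{F}}$ vanishingly small in the noisy regime. Unlike golfing-scheme constructions tailored to noiseless exact recovery, the proxy $\bm{Z}_{\mathsf{ncvx}}$ lives off the tangent space of $\bm{M}^{\star}$, so the certificate must be built relative to the tangent space of $\bm{Z}_{\mathsf{ncvx}}$ and the resulting perturbations must be controlled via the entrywise and spectral error bounds produced in step (i); this is exactly where sharp entrywise control $\|\bm{Z}_{\mathsf{ncvx}}-\bm{M}^{\star}\|_{\infty}$ feeds back into the convex analysis, and why all three nonconvex error metrics are needed simultaneously rather than Frobenius alone. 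Once $\|\bm{Z}_{\mathsf{cvx}}-\bm{Z}_{\mathsf{ncvx}}\|_{\mathrm{F}}$ is polynomially smaller than every scale appearing in~\eqref{eq:Zcvx-error}, the Frobenius, entrywise and spectral bounds for $\bm{Z}_{\mathsf{cvx}}$ follow from the triangle inequality combined with the trivial estimates $\|\cdot\|_{\infty},\|\cdot\|\leq\|\cdot\|_{\mathrm{F}}$, and the second part of the theorem follows from the Eckart--Young argument noted above.
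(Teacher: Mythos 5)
Your high-level plan matches the paper's architecture almost exactly: produce a near-stationary Burer--Monteiro proxy with sharp $\ell_{\mathrm F}$, $\ell_{2,\infty}$ and spectral error control, show that near-stationarity of $f$ forces any minimizer of the convex objective $g$ to lie within a tiny Frobenius ball around $\bm{X}\bm{Y}^{\top}$, then transfer the three nonconvex error bounds via the triangle inequality and obtain part~2 from Eckart--Young. This is Lemmas~\ref{lemma:approx-link-cvx-ncvx-simple}, \ref{lemma:injectivity-main}, \ref{lemma:nonconvex-GD} of the paper, and your observation that both the $\ell_{2,\infty}$ bound and the spectral bound on the proxy are needed simultaneously to control the dual certificate is the right intuition.

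There is one substantive gap in step~(iii). You invoke ``the standard RIP-on-tangent estimate for Bernoulli sampling'' for the tangent space at $\bm{M}^{\star}$, but the restricted-strong-convexity inequality you need is on the tangent space $T$ of $\bm{Z}_{\mathsf{ncvx}}=\bm{X}\bm{Y}^{\top}$ (that is where the subgradient $\bm G$ and the decomposition $\bm{\Delta}=\mathcal{P}_{T}(\bm{\Delta})+\mathcal{P}_{T^{\perp}}(\bm{\Delta})$ live), and this $T$ is a random object that depends on $\Omega$ through the iterate $(\bm{X},\bm{Y})$. The fixed-tangent-space near-isometry of Cand\`es--Recht type does not apply directly, and you cannot condition on $T$ either, since $(\bm{X},\bm{Y})$ was constructed from the samples. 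The paper resolves this with a \emph{uniform} injectivity statement (Lemma~\ref{lemma:injectivity}): the lower bound $p^{-1}\|\mathcal{P}_{\Omega}(\bm{H})\|_{\mathrm F}^2\gtrsim\kappa^{-1}\|\bm{H}\|_{\mathrm F}^2$ for all $\bm{H}\in T$ holds simultaneously over every tangent space whose factors satisfy the $\ell_{2,\infty}$ proximity bound, and the proof requires fresh work rather than a citation. If you wish to keep the argument anchored at $T(\bm{M}^{\star})$ instead, you would need an explicit perturbation lemma transferring RSC from $T(\bm{M}^{\star})$ to $T(\bm{Z}_{\mathsf{ncvx}})$; neither route is ``standard.''

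A secondary point: citing~\cite{ma2017implicit} for the proxy supplies the three error bounds, but the near-stationarity property (a GD iterate with $\|\nabla f\|_{\mathrm F}$ polynomially small) is not part of that paper's statement and has to be proved separately, typically via a descent-lemma/telescoping argument on the objective value. The paper sidesteps spectral initialization altogether by initializing GD at $(\bm{X}^{\star},\bm{Y}^{\star})$ --- a purely analytic device --- which shortens the leave-one-out bookkeeping; it is explicitly noted that spectral initialization also works but requires a lengthier proof. Your bookkeeping of scales (``$n^{-20}\sigma_{\min}$,'' ``$n^{-c}$'') is also slightly off: the gradient at the selected iterate is of order $n^{-5}(\lambda/p)\sqrt{\sigma_{\min}}$ and therefore scales with $\sigma$, and the resulting $\|\bm{Z}_{\mathsf{cvx}}-\bm{Z}_{\mathsf{ncvx}}\|_{\mathrm F}$ is of order $n^{-4}\,\frac{\lambda}{p\sigma_{\min}}\|\bm{M}^{\star}\|$, i.e.\ a polynomial-in-$n$ fraction of the estimation-error scale rather than an absolute $n^{-c}$; this distinction matters for the statement of~\eqref{eq:Zcvx-r-bound}.
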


\begin{remark}[The noise condition] The incoherence condition (cf.~Definition~\ref{def:incoherence}) guarantees that the largest entry $\|\bm{M}^{\star}\|_{\infty}$ of the matrix $\bm{M}^{\star}$ is no larger than $\kappa \mu r \sigma_{\min} / n$. As a result, 
the noise condition stated in Theorem~\ref{thm:main-convex} covers all scenarios obeying
\begin{equation*}
\sigma\lesssim\sqrt{\frac{np}{\kappa^{6}\mu^{3}r^{3}\log n}}\left\Vert \bm{M}^{\star}\right\Vert _{\infty}.
\end{equation*}
	Therefore, the typical size of the noise is allowed to be much larger than the size of the largest entry of $\bm{M}^\star$, provided that $p\gg \frac{\kappa^{6}\mu^{3}r^{3}\log n}{n}$. 
In particular, when $r, \kappa = O(1)$, this recovers the noise condition in Theorem~\ref{thm:main-convex-simplified}.
\end{remark}

Notably, the sample size condition for noisy matrix completion (i.e.~$n^{2}p\geq C\kappa^{4}\mu^{2}r^{2}n\log^{3}n$) is more stringent than that in the noiseless setting (i.e.~$n^2 p\asymp n r \log^2 n$), and our statistical guarantees are likely suboptimal with respect to the dependency on $r$
and $\kappa$. This sub-optimality is mainly due to the analysis of nonconvex optimization, a key ingredient of our analysis of convex relaxation. In fact, the state-of-the-art nonconvex analysis~\cite{Se2010Noisy, chen2015fast, ma2017implicit} requires the sample size to be much larger than the optimal one (e.g.~$n^2 p \gg n\mathsf{poly}(r) \mathsf{poly}(\kappa)$) even in the noiseless setting. It would  certainly be interesting, and in fact important, to see whether it is possible to develop a theory with optimal dependency on $r$ and $\kappa$. We leave this for future investigation.

%Interested
%readers are referred to Section~\ref{sec:Discussion} for more detailed
%discussions.

	Despite the above sub-optimality issue, implications similar to those of Theorem~\ref{thm:main-convex-simplified} hold for this general setting. To begin with, the nearly low-rank structure of the convex solution is preserved (cf.~(\ref{eq:Zcvx-r-bound})). In addition, the estimation error of the convex estimate is spread out across entries (cf.~(\ref{eq:main-inf-norm-error})), thus uncovering an implicit regularization phenomenon underlying convex relaxation (which implicitly regularizes the spikiness constraint on the solution). Last but not least, the upper bounds (\ref{eq:Zcvx-error}) and (\ref{eq:Zcvx-r-bound}) continue to hold for approximate minimizers of the convex program~(\ref{eq:convex-LS}), thus yielding statistical guarantees for numerous iterative algorithms aimed at minimizing~(\ref{eq:convex-LS}).

\section{Strategy and novelty\label{sec:Proof-achitecture}}

In this section, we introduce the strategy for proving our main theorem, i.e.~Theorem~\ref{thm:main-convex}. Theorem~\ref{thm:main-convex-simplified} follows immediately. 
Informally, the main technical difficulty stems from the lack of closed-form
expressions for the primal solution to~(\ref{eq:convex-LS}), which
in turn makes it difficult to construct a dual certificate. This is
in stark contrast to the noiseless setting, where one clearly anticipates
the ground truth $\bm{M}^{\star}$ to be the primal solution; in fact,
this is precisely why the analysis for the noisy case is significantly
more challenging. Our strategy, as we shall detail below, mainly entails invoking an iterative nonconvex algorithm to ``approximate''
such a primal solution.

Before continuing, we introduce a few more notations. Let $\mathcal{P}_{\Omega}(\cdot):\mathbb{R}^{n\times n}\mapsto\mathbb{R}^{n\times n}$
represent the projection onto the subspace of matrices supported on
$\Omega$, namely,
\begin{equation}
\left[\mathcal{P}_{\Omega}\left(\bm{Z}\right)\right]_{ij}=\begin{cases}
Z_{ij}, & \text{for }\left(i,j\right)\in\Omega\\
0, & \text{otherwise}
\end{cases}\label{eq:defn-Pomega}
\end{equation}
for any matrix $\bm{Z}\in\mathbb{R}^{n\times n}$. For a rank-$r$
matrix $\bm{M}$ with singular value decomposition $\bm{U}\bm{\Sigma}\bm{V}^{\top}$,
denote by $T$ its tangent space, i.e.
\begin{equation}
T=\left\{ \bm{U}\bm{A}^{\top}+\bm{B}\bm{V}^{\top}\mid\bm{A},\bm{B}\in\mathbb{R}^{n\times r}\right\} .\label{eq:defn-T}
\end{equation}
Correspondingly, let $\mathcal{P}_{T}(\cdot)$ be the orthogonal projection
onto the subspace $T$, that is,
\begin{equation}
\mathcal{P}_{T}\left(\bm{Z}\right)=\bm{U}\bm{U}^{\top}\bm{Z}+\bm{Z}\bm{V}\bm{V}^{\top}-\bm{U}\bm{U}^{\top}\bm{Z}\bm{V}\bm{V}^{\top}\label{eq:defn-PT}
\end{equation}
for any matrix $\bm{Z}\in\mathbb{R}^{n\times n}$. In addition, let
$T^{\perp}$ and $\mathcal{P}_{T^{\perp}}(\cdot)$ denote the orthogonal
complement of $T$ and the projection onto $T^{\perp}$, respectively.
With regards to the ground truth, we denote
\begin{equation}
\bm{X}^{\star}=\bm{U}^{\star}(\bm{\Sigma}^{\star})^{1/2}\qquad\text{and}\qquad\bm{Y}^{\star}=\bm{V}^{\star}(\bm{\Sigma}^{\star})^{1/2}.\label{eq:defn-Xstar-Ystar}
\end{equation}
The nonconvex problem~(\ref{eq:nonconvex_mc_noisy-lambda}) is equivalent
to
\begin{equation}
\underset{\bm{X},\bm{Y}\in\mathbb{R}^{n\times r}}{\text{minimize}}\qquad f(\bm{X},\bm{Y})\triangleq\frac{1}{2p}
\big\|\mathcal{P}_{\Omega}\left(\bm{X}\bm{Y}^{\top}-
\bm{M}\right)\big\|_{\mathrm{F}}^{2}+\frac{\lambda}{2p}
\|\bm{X}\|_{\mathrm{F}}^{2}+\frac{\lambda}{2p}\|\bm{Y}\|_{\mathrm{F}}^{2},
\label{eq:nonconvex_mc_noisy}
\end{equation}
where we have inserted an extra factor $1/p$ (compared to~(\ref{eq:nonconvex_mc_noisy-lambda}))
to simplify the presentation of the analysis later on.

\subsection{Exact duality}

In order to analyze the convex program~(\ref{eq:convex-LS}), it is
natural to start with the first-order optimality condition. Specifically,
suppose that $\bm{Z}\in\mathbb{R}^{n\times n}$ is a (primal) solution
to~(\ref{eq:convex-LS}) with SVD $\bm{Z}=\bm{U}\bm{\Sigma}\bm{V}^{\top}$.\footnote{Here and below, we use $\bm{Z}$ (rather than $\bm{Z}_{\mathsf{cvx}}$)
for notational simplicity, whenever it is clear from the context.} As before, let $T$ be the tangent space of $\bm{Z}$, and let $T^{\perp}$
be the orthogonal complement of $T$. Then the first-order optimality
condition for~(\ref{eq:convex-LS}) reads: there exists a matrix $\bm{W}\in T^{\perp}$
(called a dual certificate) such that \begin{subequations}\label{eq:KKT}
\begin{align}
\frac{1}{\lambda}\mathcal{P}_{\Omega}\big(\bm{M}-\bm{Z}\big) & =\bm{U}\bm{V}^{\top}+\bm{W};\label{eq:KKT-1}\\
\left\Vert \bm{W}\right\Vert  & \leq1.\label{eq:KKT-2}
\end{align}
\end{subequations} This condition is not only necessary to certify
the optimality of $\bm{Z}$, but also ``almost sufficient'' in guaranteeing
the uniqueness of the solution $\bm{Z}$; see Appendix~\ref{sec:Proof-of-Lemma-unique-minimizer}
for in-depth discussions.

The challenge then boils down to identifying such a primal-dual
pair $(\bm{Z},\bm{W})$ satisfying the optimality condition \eqref{eq:KKT}.
For the noise-free case, the primal solution is clearly $\bm{Z}=\bm{M}^{\star}$
if exact recovery is to be expected; the dual certificate can then
be either constructed exactly by the least-squares solution to a certain
underdetermined linear system \cite{ExactMC09,CanTao10}, or produced
approximately via a clever golfing scheme pioneered by Gross \cite{Gross2011recovering}.
For the noisy case, however, it is often difficult to hypothesize
on the primal solution $\bm{Z}$, as it depends on the random noise
in a complicated way. In fact, the lack of a suitable guess of $\bm{Z}$ (and hence $\bm{W}$)
was the major hurdle that prior works faced when carrying out the
duality analysis.

\begin{comment}
As we will elaborate in Section~\ref{sec:Proof-achitecture}, the
primary difficulty in obtaining sharp estimation error bound for~(\ref{eq:convex_mc_noisy})
lies in the fact that one has no prior information at all regarding
the solution to~(\ref{eq:convex_mc_noisy}) when the noise is present.
This is in sharp contrast to the noiseless setting~(\ref{eq:convex_mc_noiseless}),
where one clearly hopes for $\bm{M}^{\star}$, the ground truth, to
be the primal solution. With the help of $\bm{M}^{\star}$, one can
in turn build the dual vector via the clever golfing scheme \cite{Gross2011recovering}
to certify the optimality\emph{ }of $\bm{M}^{\star}$. Directly using
the same dual vector in the noiseless case would lead to a\emph{ }suboptimal
error bound, as has been done in \cite{CanPla10}. Therefore, answering
the above question affirmatively calls for a ``good'' guess of the
primal solution to~(\ref{eq:convex_mc_noisy}) and a novel way to
build its corresponding dual vector.
\end{comment}

\subsection{A candidate primal solution via nonconvex optimization\label{subsec:exact-nonconvex-solution}}

Motivated by the numerical experiment in Section~\ref{subsec:Numerics-cvx-ncvx},
we propose to examine whether the optimizer of the nonconvex problem
(\ref{eq:nonconvex_mc_noisy-lambda}) stays close to the solution
to the convex program~(\ref{eq:convex-LS}). Towards this, suppose
that $\bm{X},\bm{Y}\in\mathbb{R}^{n\times r}$ form a critical point
of~(\ref{eq:nonconvex_mc_noisy-lambda}) with $\mathsf{rank}(\bm{X})=\mathsf{rank}(\bm{Y})=r$.\footnote{Once again, we abuse the notation $(\bm{X},\bm{Y})$ (instead of using
$(\bm{X}_{\mathsf{ncvx}},\bm{Y}_{\mathsf{ncvx}})$) for notational
simplicity, whenever it is clear from the context.} Then the first-order condition reads \begin{subequations}\label{eq:nonconvex-1st-order}
\begin{align}
\frac{1}{\lambda}\mathcal{P}_{\Omega}\big(\bm{M}-\bm{X}\bm{Y}^{\top}\big)\bm{Y} & =\bm{X};\label{eq:nonconvex-1st-order-1}\\
\frac{1}{\lambda}\left[\mathcal{P}_{\Omega}\big(\bm{M}-\bm{X}\bm{Y}^{\top}\big)\right]^{\top}\bm{X} & =\bm{Y}.\label{eq:nonconvex-1st-order-2}
\end{align}
\end{subequations}

To develop some intuition about the connection between~(\ref{eq:KKT})
and~(\ref{eq:nonconvex-1st-order}), let us take a look at the case
with $r=1$. Denote $\bm{X}=\bm{x}$ and $\bm{Y}=\bm{y}$ and assume
that the two rank-1 factors are ``balanced'', namely, $\|\bm{x}\|_{2}=\|\bm{y}\|_{2}\neq0$.
It then follows from~(\ref{eq:nonconvex-1st-order}) that $\lambda^{-1}\mathcal{P}_{\Omega}(\bm{M}-\bm{x}\bm{y}^{\top})$
has a singular value $1$, whose corresponding left and right singular
vectors are $\bm{x}/\|\bm{x}\|_{2}$ and $\bm{y}/\|\bm{y}\|_{2}$,
respectively. In other words, one can express
\begin{equation}
\frac{1}{\lambda}\mathcal{P}_{\Omega}\big(\bm{M}-\bm{x}\bm{y}^{\top}\big)=\frac{1}{\|\bm{x}\|_{2}\|\bm{y}\|_{2}}\bm{x}\bm{y}^{\top}+\bm{W},
\end{equation}
where $\bm{W}$ is orthogonal to the tangent space of $\bm{x}\bm{y}^{\top}$; this is precisely the condition~(\ref{eq:KKT-1}).
It remains to argue that~(\ref{eq:KKT-2}) is valid as well. Towards
this end, the first-order condition~(\ref{eq:nonconvex-1st-order})
alone is insufficient, as there might be non-global critical points
(e.g.~saddle points) that are unable to approximate the convex
solution well. Fortunately, as long as the candidate $\bm{x}\bm{y}^{\top}$
is not far away from the ground truth $\bm{M}^{\star}$, one
can guarantee $\|\bm{W}\|<1$ as required in~(\ref{eq:KKT-2}).

The above informal argument about the link between the convex and
the nonconvex problems can be rigorized. To begin with, we introduce
the following conditions on the regularization parameter $\lambda$.

\smallskip
\begin{condition}[\textbf{Regularization parameter}]\label{assumption:link-cvx-ncvx-noise}
The regularization parameter $\lambda$ satisfies
\begin{enumerate}[leftmargin=2.5em]
\item[(a)] \textbf{(Relative to noise) $\|\mathcal{P}_{\Omega}\left(\bm{E}\right)\|<\lambda/8.$}
\item[(b)] \textbf{(Relative to nonconvex solution) }\label{assumption:link-cvx-ncvx-closeness} $\|\mathcal{P}_{\Omega}(\bm{X}\bm{Y}^{\top}-\bm{M}^{\star})-
    p(\bm{X}\bm{Y}^{\top}-\bm{M}^{\star})\|<\lambda/8$.
\end{enumerate}
\end{condition}
\smallskip

\begin{remark}Condition~\ref{assumption:link-cvx-ncvx-noise} requires
that the regularization parameter $\lambda$ should dominate a certain
norm of the noise, as well as of the deviation of $\bm{X}\bm{Y}^{\top}-\bm{M}^{\star}$ from its mean $p(\bm{X}\bm{Y}^{\top}-\bm{M}^{\star})$;
as will be seen shortly, the latter condition can be met when $(\bm{X},\bm{Y})$
is sufficiently close to $(\bm{X}^{\star},\bm{Y}^{\star})$. \end{remark}

With the above condition in place, the following result demonstrates
that a critical point $(\bm{X},\bm{Y})$ of the nonconvex problem
(\ref{eq:nonconvex_mc_noisy-lambda}) readily translates to the unique
minimizer of the convex program~(\ref{eq:convex-LS}). This lemma
is established in Appendix~\ref{sec:Proof-of-Lemma-sufficient-condition-for-minimizer}.

\begin{lemma}[\textbf{Exact nonconvex vs.~convex} \textbf{optimizers}]
\label{lemma:link-cvx-and-ncvx}Suppose that $(\bm{X},\bm{Y})$ is
a critical point of~(\ref{eq:nonconvex_mc_noisy-lambda}) satisfying
$\mathsf{rank}(\bm{X})=\mathsf{rank}(\bm{Y})=r$, and the sampling
operator $\mathcal{P}_{\Omega}$ is injective when restricted to the
elements of the tangent space $T$ of $\bm{X}\bm{Y}^{\top}$, namely,
\begin{equation}
\mathcal{P}_{\Omega}(\bm{H})=\bm{0} \quad \Longleftrightarrow \quad \bm{H}=\bm{0,}\quad\text{for all }\bm{H}\in T.\label{eq:injectivity_weak}
\end{equation}
Under Condition~\ref{assumption:link-cvx-ncvx-noise}, the point $\bm{Z}\triangleq\bm{X}\bm{Y}^{\top}$
is the unique minimizer of~(\ref{eq:convex-LS}).  
\end{lemma}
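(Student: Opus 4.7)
The plan is to verify the KKT conditions~(\ref{eq:KKT}) with $\bm{Z}:=\bm{X}\bm{Y}^{\top}$ as the primal candidate and $\bm{W}:=\mathcal{P}_{T^{\perp}}(\bm{A})$ as the dual certificate, where $\bm{A}:=\lambda^{-1}\mathcal{P}_{\Omega}(\bm{M}-\bm{X}\bm{Y}^{\top})$. The tangent-space equation~(\ref{eq:KKT-1}) will fall out of the nonconvex first-order conditions~(\ref{eq:nonconvex-1st-order}); the norm bound~(\ref{eq:KKT-2}), in its strict form $\|\bm{W}\|<1$, will be produced by Condition~\ref{assumption:link-cvx-ncvx-noise}; and uniqueness will then follow from a standard subgradient argument coupled with the injectivity hypothesis~(\ref{eq:injectivity_weak}).

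\textbf{Step~1 (tangent-space condition).} First I would extract the \emph{balancedness} identity $\bm{X}^{\top}\bm{X}=\bm{Y}^{\top}\bm{Y}$ by pre-multiplying~(\ref{eq:nonconvex-1st-order-1}) by $\bm{X}^{\top}$ and~(\ref{eq:nonconvex-1st-order-2}) by $\bm{Y}^{\top}$ and comparing. Writing the SVD $\bm{Z}=\bm{U}\bm{\Sigma}\bm{V}^{\top}$ and parameterizing $\bm{X}=\bm{U}\bm{R}$, $\bm{Y}=\bm{V}\bm{S}$ with invertible $r\times r$ matrices satisfying $\bm{R}\bm{S}^{\top}=\bm{\Sigma}$, balancedness forces $\bm{R}\bm{R}^{\top}=\bm{S}\bm{S}^{\top}=\bm{\Sigma}$. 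Substituting back into~(\ref{eq:nonconvex-1st-order}) yields the clean identities $\bm{A}\bm{V}=\bm{U}$ and $\bm{U}^{\top}\bm{A}=\bm{V}^{\top}$, from which $\mathcal{P}_{T}(\bm{A})=\bm{U}\bm{V}^{\top}$ follows immediately. Thus $\bm{A}=\bm{U}\bm{V}^{\top}+\bm{W}$ with $\bm{W}\in T^{\perp}$, verifying~(\ref{eq:KKT-1}).

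\textbf{Step~2 (strict dual feasibility).} I would exploit the two identities $\mathcal{P}_{T^{\perp}}(\bm{B})=(\bm{I}-\bm{U}\bm{U}^{\top})\bm{B}(\bm{I}-\bm{V}\bm{V}^{\top})$ and $(\bm{I}-\bm{U}\bm{U}^{\top})\bm{X}\bm{Y}^{\top}(\bm{I}-\bm{V}\bm{V}^{\top})=\bm{0}$, the latter because $\bm{X},\bm{Y}$ share column/row spaces with $\bm{U},\bm{V}$. Writing $\bm{M}=\bm{M}^{\star}+\bm{E}$ on $\Omega$ and inserting $\pm p(\bm{X}\bm{Y}^{\top}-\bm{M}^{\star})$, I would decompose $\lambda\bm{W}$ into a sum of sandwiched pieces involving $\mathcal{P}_{\Omega}(\bm{E})$, the sampling-deviation quantity $\mathcal{P}_{\Omega}(\bm{X}\bm{Y}^{\top}-\bm{M}^{\star})-p(\bm{X}\bm{Y}^{\top}-\bm{M}^{\star})$, and a residual capturing the ``tangent-space mismatch'' between $T$ and the tangent space at $\bm{M}^{\star}$. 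The first two summands are each bounded by $\lambda/8$ via Condition~\ref{assumption:link-cvx-ncvx-noise}(a) and~(b), and the contraction $\|(\bm{I}-\bm{U}\bm{U}^{\top})\bm{B}(\bm{I}-\bm{V}\bm{V}^{\top})\|\le\|\bm{B}\|$ passes these bounds through. The hard part will be the bookkeeping of the residual piece: the argument must be arranged so that the entire mismatch can be reabsorbed into the Condition~\ref{assumption:link-cvx-ncvx-noise}(b) slack, after which the triangle inequality delivers $\|\bm{W}\|<1$.

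\textbf{Step~3 (uniqueness).} For any other minimizer $\bm{Z}'=\bm{Z}+\bm{H}$ of~(\ref{eq:convex-LS}), I would expand $g(\bm{Z}')-g(\bm{Z})$ directly. The quadratic loss contributes $\langle\mathcal{P}_{\Omega}(\bm{Z}-\bm{M}),\bm{H}\rangle+\tfrac{1}{2}\|\mathcal{P}_{\Omega}(\bm{H})\|_{\mathrm{F}}^{2}$, while the subgradient inequality for the nuclear norm, evaluated at the choice $\bm{W}_{0}\in T^{\perp}$ with $\|\bm{W}_{0}\|\le 1$ maximizing $\langle\bm{W}_{0},\bm{H}\rangle=\|\mathcal{P}_{T^{\perp}}(\bm{H})\|_{*}$, contributes $\lambda\langle\bm{U}\bm{V}^{\top}+\bm{W}_{0},\bm{H}\rangle$. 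Substituting $\mathcal{P}_{\Omega}(\bm{Z}-\bm{M})=-\lambda(\bm{U}\bm{V}^{\top}+\bm{W})$ from~(\ref{eq:KKT-1}) and invoking spectral/nuclear-norm duality for $\langle\bm{W},\bm{H}\rangle=\langle\bm{W},\mathcal{P}_{T^{\perp}}(\bm{H})\rangle$ then yields $g(\bm{Z}')-g(\bm{Z})\ge\lambda(1-\|\bm{W}\|)\|\mathcal{P}_{T^{\perp}}(\bm{H})\|_{*}+\tfrac{1}{2}\|\mathcal{P}_{\Omega}(\bm{H})\|_{\mathrm{F}}^{2}$. Because $\|\bm{W}\|<1$, optimality of $\bm{Z}'$ forces $\mathcal{P}_{T^{\perp}}(\bm{H})=\bm{0}$ and $\mathcal{P}_{\Omega}(\bm{H})=\bm{0}$, so $\bm{H}\in T\cap\ker\mathcal{P}_{\Omega}=\{\bm{0}\}$ by the injectivity hypothesis~(\ref{eq:injectivity_weak}).
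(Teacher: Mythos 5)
Your Step~1 and Step~3 are correct, and Step~3 is marginally tidier than the paper's route through Lemma~\ref{lemma:exact-primal-dual}: the paper first shows, via strong convexity of the quadratic loss, that any two minimizers coincide on $\Omega$ before invoking $\|\bm{W}\|<1$, whereas you carry the term $\tfrac{1}{2}\|\mathcal{P}_{\Omega}(\bm{H})\|_{\mathrm{F}}^{2}$ through the lower bound and read off $\mathcal{P}_{\Omega}(\bm{H})=\bm{0}$ and $\mathcal{P}_{T^{\perp}}(\bm{H})=\bm{0}$ in one shot.

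Step~2, however, has a genuine gap that I do not think can be patched along the lines you sketch. Carrying out your decomposition and using $\mathcal{P}_{T^{\perp}}(\bm{X}\bm{Y}^{\top})=\bm{0}$ gives
\begin{equation*}
\lambda\bm{W}
=\mathcal{P}_{T^{\perp}}\bigl[\mathcal{P}_{\Omega}(\bm{E})\bigr]
+\mathcal{P}_{T^{\perp}}\bigl[\mathcal{P}_{\Omega}(\bm{M}^{\star}-\bm{X}\bm{Y}^{\top})-p(\bm{M}^{\star}-\bm{X}\bm{Y}^{\top})\bigr]
+p\,\mathcal{P}_{T^{\perp}}(\bm{M}^{\star}).
\end{equation*}
The first two summands are indeed $<\lambda/8$ in operator norm, by Condition~\ref{assumption:link-cvx-ncvx-noise}(a)--(b) and the contractivity of $\mathcal{P}_{T^{\perp}}$. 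But the third term, $p(\bm{I}-\bm{U}\bm{U}^{\top})\bm{M}^{\star}(\bm{I}-\bm{V}\bm{V}^{\top})$, is controlled by neither hypothesis: its size measures the misalignment between the tangent space $T$ of $\bm{X}\bm{Y}^{\top}$ and that of $\bm{M}^{\star}$, which the lemma does not assume to be small, and it is generically of order $p\|\bm{M}^{\star}\|\gg\lambda$. It cannot be ``reabsorbed'' into the Condition~\ref{assumption:link-cvx-ncvx-noise}(b) slack; that condition bounds the \emph{sampling fluctuation} of $\bm{X}\bm{Y}^{\top}-\bm{M}^{\star}$, not the tangent-space mismatch, and a triangle inequality is the wrong tool here.

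What actually works — and is what the paper does — is to avoid triangle-inequality bookkeeping on $\bm{W}$ entirely. Your identities $\bm{A}\bm{V}=\bm{U}$, $\bm{U}^{\top}\bm{A}=\bm{V}^{\top}$ are equivalent to saying that $(\bm{U},\bm{V})$ are also singular-vector pairs, with singular values $p\sigma_{i}(\bm{\Sigma})+\lambda\geq\lambda$, of the matrix
\begin{equation*}
\bm{G}\triangleq
p\bm{M}^{\star}+\bigl[\mathcal{P}_{\Omega}(\bm{M}^{\star}-\bm{X}\bm{Y}^{\top})-p(\bm{M}^{\star}-\bm{X}\bm{Y}^{\top})\bigr]+\mathcal{P}_{\Omega}(\bm{E}),
\end{equation*}
so that $\bm{G}=\bm{U}(p\bm{\Sigma}+\lambda\bm{I}_{r})\bm{V}^{\top}+\lambda\bm{W}_{2}$ with $\bm{W}_{2}\in T^{\perp}$, and one checks $\bm{W}_{2}=\bm{W}$ (this is~(\ref{eq:rank-r-decomposition-new})). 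Now apply Weyl's inequality to $\bm{G}$: since $\sigma_{i}(p\bm{M}^{\star})=0$ for $i>r$ and the remaining two pieces have combined operator norm $<\lambda/4$, one gets $\sigma_{i}(\bm{G})<\lambda$ for every $i>r$. Because $\bm{U}(p\bm{\Sigma}+\lambda\bm{I}_{r})\bm{V}^{\top}$ and $\lambda\bm{W}_{2}$ act on orthogonal column and row spaces, the singular values of $\bm{G}$ are the disjoint union of $\{p\sigma_{i}(\bm{\Sigma})+\lambda\}_{i\leq r}$ (each $\geq\lambda$) and those of $\lambda\bm{W}_{2}$. If $\lambda\|\bm{W}_{2}\|\geq\lambda$ then $\bm{G}$ would have at least $r+1$ singular values $\geq\lambda$, contradicting $\sigma_{r+1}(\bm{G})<\lambda$; hence $\|\bm{W}\|=\|\bm{W}_{2}\|<1$. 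It is the exact rank-$r$ structure of $\bm{M}^{\star}$ feeding into Weyl's inequality — not any smallness of the residual $p\,\mathcal{P}_{T^{\perp}}(\bm{M}^{\star})$ you isolated — that produces the strict dual feasibility.
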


\begin{comment}
In words, a critical point $(\bm{X},\bm{Y})$ of the nonconvex problem
(\ref{eq:nonconvex_mc_noisy-lambda}) translates to the unique minimizer
of the convex program~(\ref{eq:convex-LS}), as long as (1) the gap
between $(\bm{X},\bm{Y})$ and the truth is well-controlled in some
sense, (2) the noise size is not too large, and (3) $\mathcal{P}_{\Omega}$
is injective when restricted to the tangent space of $\bm{X}\bm{Y}^{\top}$.
\end{comment}

In order to apply Lemma~\ref{lemma:link-cvx-and-ncvx}, one needs
to locate a critical point  of~(\ref{eq:nonconvex_mc_noisy-lambda})
that is sufficiently close to the truth, for which one natural candidate
is the global optimizer of~(\ref{eq:nonconvex_mc_noisy-lambda}).
The caveat, however, is the lack of theory characterizing directly
the properties of the optimizer of~(\ref{eq:nonconvex_mc_noisy-lambda}).
Instead, what is available in prior theory is the characterization
of some iterative sequence (e.g.~gradient descent iterates)
aimed at solving~(\ref{eq:nonconvex_mc_noisy-lambda}). It is unclear
from prior theory whether the iterative algorithm under study (e.g.~gradient
descent) converges to the global optimizer in the presence of noise.
This leads to technical difficulty in justifying the proximity between
the nonconvex optimizer and the convex solution via Lemma~\ref{lemma:link-cvx-and-ncvx}.

\subsection{Approximate nonconvex optimizers\label{subsec:Approximate-nonconvex-optimizers}}

Fortunately, perfect knowledge of the nonconvex optimizer is not pivotal.
Instead, an approximate solution to the nonconvex problem~(\ref{eq:nonconvex_mc_noisy-lambda}) (or equivalently~\eqref{eq:nonconvex_mc_noisy})
suffices to serve as a reasonably tight approximation of the convex
solution. More precisely, we desire two factors $(\bm{X},\bm{Y})$
that result in nearly zero (rather than exactly zero) gradients:
\[
\nabla_{\bm{X}}f(\bm{X},\bm{Y})\approx\bm{0}\qquad\text{and}\qquad\nabla_{\bm{Y}}f(\bm{X},\bm{Y})\approx\bm{0},
\]
where $f(\cdot, \cdot)$ is the nonconvex objective function as defined in \eqref{eq:nonconvex_mc_noisy}. 
This relaxes the condition discussed in Lemma~\ref{lemma:link-cvx-and-ncvx}
(which only applies to critical points of~(\ref{eq:nonconvex_mc_noisy-lambda})
as opposed to approximate critical points). As it turns out, such
points can be found via gradient descent tailored to~(\ref{eq:nonconvex_mc_noisy-lambda}).
The sufficiency of the near-zero gradient condition is made possible
by slightly strengthening the injectivity assumption \eqref{eq:injectivity_weak},
which is stated below.

\begin{condition}[\textbf{Injectivity}]\label{assumption:link-cvx-ncvx-injectivity} Let $T$ be the tangent space
of $\bm{X}\bm{Y}^{\top}$. There is a quantity $c_{\mathrm{inj}}>0$
such that
\begin{equation}
p^{-1}\left\Vert \mathcal{P}_{\Omega}\left(\bm{H}\right)\right\Vert _{\mathrm{F}}^{2}\geq c_{\mathrm{inj}}\left\Vert \bm{H}\right\Vert _{\mathrm{F}}^{2},\qquad\text{for all }\bm{H}\in T.\label{eq:assumption-injectivity}
\end{equation}

\end{condition}

The following lemma states quantitatively how an approximate nonconvex
optimizer serves as an excellent proxy of the convex solution, which
we establish in Appendix~\ref{sec:Proof-of-Lemma-approx-link-cvx-ncvx-simple}.

\begin{comment}
or
\[
\begin{cases}
\mathcal{P}_{\Omega}\left(\bm{X}\bm{Y}^{\top}-\bm{M}\right)\bm{Y}+\lambda\bm{X}\text{ }\approx\bm{0};\\
\big(\mathcal{P}_{\Omega}(\bm{X}\bm{Y}^{\top}-\bm{M})\big)^{\top}\bm{X}+\lambda\bm{Y}\approx\bm{0}.
\end{cases}
\]
\end{comment}

\begin{lemma}[\textbf{Approximate nonconvex vs.~convex optimizers}]\label{lemma:approx-link-cvx-ncvx-simple}Suppose
that $(\bm{X},\bm{Y})$ obeys
\begin{equation}
\left\Vert \nabla f\left(\bm{X},\bm{Y}\right)\right\Vert _{\mathrm{F}}\leq c\frac{\sqrt{c_{\mathrm{inj}}p}}{\kappa}\cdot\frac{\lambda}{p}
\sqrt{\sigma_{\min}}\label{eq:small-gradient-f}
\end{equation}
for some sufficiently small constant $c>0$. Further assume that any
singular value of $\bm{X}$ and $\bm{Y}$ lies in $[\sqrt{\sigma_{\min}/2},\sqrt{2\sigma_{\max}}]$.
Then under Conditions~\ref{assumption:link-cvx-ncvx-noise} and \ref{assumption:link-cvx-ncvx-injectivity},
any minimizer $\bm{Z}_{\mathsf{cvx}}$ of~(\ref{eq:convex-LS}) satisfies
\begin{equation}
\big\|\bm{X}\bm{Y}^{\top}-\bm{Z}_{\mathsf{cvx}}\big\|_{\mathrm{F}}\lesssim\frac{\kappa}{c_{\mathrm{inj}}}\frac{1}{\sqrt{\sigma_{\min}}}\left\Vert \nabla f\left(\bm{X},\bm{Y}\right)\right\Vert _{\mathrm{F}}.\label{eq:closeness-cvx-ncvx}
\end{equation}
\end{lemma}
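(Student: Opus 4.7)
The strategy is to show that $\bm{Z}\triangleq\bm{X}\bm{Y}^{\top}$ is an \emph{approximate} KKT point of the convex program~(\ref{eq:convex-LS}), with a slack controlled by $\|\nabla f(\bm{X},\bm{Y})\|_{\mathrm{F}}$, and then convert this approximate optimality into the Frobenius bound~(\ref{eq:closeness-cvx-ncvx}) through a Bregman-type convexity argument that exploits Condition~\ref{assumption:link-cvx-ncvx-injectivity}. Let $\bm{Z}=\bm{U}\bm{\Sigma}\bm{V}^{\top}$ be the SVD of $\bm{Z}$ and $T$ its tangent space. Mimicking the construction in the proof of Lemma~\ref{lemma:link-cvx-and-ncvx}, I first define the candidate dual certificate $\bm{W}\triangleq\frac{1}{\lambda}\mathcal{P}_{T^{\perp}}\mathcal{P}_{\Omega}(\bm{M}-\bm{Z})\in T^{\perp}$. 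Decomposing $\bm{M}-\bm{Z}=\bm{E}+(\bm{M}^{\star}-\bm{Z})$, splitting the sampled part $\mathcal{P}_{\Omega}(\bm{M}^{\star}-\bm{Z})$ into its mean $p(\bm{M}^{\star}-\bm{Z})$ plus a fluctuation, and invoking Conditions~\ref{assumption:link-cvx-ncvx-noise}(a) and~(b) together with $\mathcal{P}_{T^{\perp}}(\bm{Z})=\bm{0}$, I aim to obtain the \emph{strict} bound $\|\bm{W}\|\leq 1-\eta$ for some universal $\eta>0$ (say $\eta=1/2$). This strictness is crucial for the final step.

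\textbf{Quantifying the tangent-space residual.} I next rewrite the near-critical conditions as $\mathcal{P}_{\Omega}(\bm{M}-\bm{Z})\bm{Y}=\lambda\bm{X}-p\nabla_{\bm{X}}f$ and $\mathcal{P}_{\Omega}(\bm{M}-\bm{Z})^{\top}\bm{X}=\lambda\bm{Y}-p\nabla_{\bm{Y}}f$. After rotating $(\bm{X},\bm{Y})$ by an orthogonal matrix to rebalance so that $\bm{X}^{\top}\bm{X}\approx\bm{Y}^{\top}\bm{Y}$ (the imbalance at an approximate critical point is itself of order $\|\nabla f\|_{\mathrm{F}}$), and expressing the rebalanced factors as $\bm{X}\approx\bm{U}\bm{\Sigma}^{1/2}\bm{R}$, $\bm{Y}\approx\bm{V}\bm{\Sigma}^{1/2}\bm{R}$ for some orthogonal $\bm{R}$, a direct matrix manipulation produces the identity
\[
\mathcal{P}_{T}\mathcal{P}_{\Omega}(\bm{M}-\bm{Z})=\lambda\bm{U}\bm{V}^{\top}+\bm{\Delta}, \qquad \|\bm{\Delta}\|_{\mathrm{F}}\lesssim \frac{\kappa\,p\,\|\nabla f(\bm{X},\bm{Y})\|_{\mathrm{F}}}{\sqrt{\sigma_{\min}}},
\]
where the factor $\kappa$ arises from the singular-value range $[\sqrt{\sigma_{\min}/2},\sqrt{2\sigma_{\max}}]$ assumed on $\bm{X}$ and $\bm{Y}$. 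Combining this with the previous step, the matrix $\bm{G}\triangleq-\bm{\Delta}$ lies in the subdifferential $\partial g(\bm{Z})$, where $g$ is the convex objective in~(\ref{eq:convex-LS}).

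\textbf{Bregman argument via injectivity.} Letting $\bm{H}\triangleq\bm{Z}_{\mathsf{cvx}}-\bm{Z}$, I combine the quadratic expansion of the data-fit term with the \emph{strong} nuclear-norm subgradient inequality $\|\bm{Z}+\bm{H}\|_{*}\geq\|\bm{Z}\|_{*}+\langle\bm{U}\bm{V}^{\top},\bm{H}\rangle+\|\mathcal{P}_{T^{\perp}}(\bm{H})\|_{*}$, using the margin $\|\bm{W}\|\leq 1-\eta$ from Step~1 to derive
\[
g(\bm{Z}_{\mathsf{cvx}})-g(\bm{Z})\;\geq\; -\langle\bm{\Delta},\bm{H}\rangle+\tfrac{1}{2}\|\mathcal{P}_{\Omega}(\bm{H})\|_{\mathrm{F}}^{2}+\lambda\eta\,\|\mathcal{P}_{T^{\perp}}(\bm{H})\|_{*}.
\]
Since $\bm{Z}_{\mathsf{cvx}}$ minimizes $g$, the left-hand side is non-positive, so $\|\bm{\Delta}\|_{\mathrm{F}}\|\bm{H}\|_{\mathrm{F}}\geq\tfrac{1}{2}\|\mathcal{P}_{\Omega}(\bm{H})\|_{\mathrm{F}}^{2}+\lambda\eta\,\|\mathcal{P}_{T^{\perp}}(\bm{H})\|_{\mathrm{F}}$. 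The second summand immediately yields $\|\mathcal{P}_{T^{\perp}}(\bm{H})\|_{\mathrm{F}}\lesssim\|\bm{\Delta}\|_{\mathrm{F}}\|\bm{H}\|_{\mathrm{F}}/(\lambda\eta)$, while Condition~\ref{assumption:link-cvx-ncvx-injectivity} applied to $\mathcal{P}_{T}(\bm{H})\in T$, combined with $\|\mathcal{P}_{\Omega}\mathcal{P}_{T}(\bm{H})\|_{\mathrm{F}}\leq\|\mathcal{P}_{\Omega}(\bm{H})\|_{\mathrm{F}}+\|\mathcal{P}_{\Omega}\mathcal{P}_{T^{\perp}}(\bm{H})\|_{\mathrm{F}}$, controls $\|\mathcal{P}_{T}(\bm{H})\|_{\mathrm{F}}$. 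Solving the resulting quadratic inequality in $\|\bm{H}\|_{\mathrm{F}}$ -- which is invertible exactly when the small-gradient hypothesis~(\ref{eq:small-gradient-f}) makes the higher-order cross term absorbable -- gives $\|\bm{H}\|_{\mathrm{F}}\lesssim\|\bm{\Delta}\|_{\mathrm{F}}/(c_{\mathrm{inj}}p)$, and plugging in the bound on $\|\bm{\Delta}\|_{\mathrm{F}}$ yields~(\ref{eq:closeness-cvx-ncvx}).

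\textbf{Main obstacle.} The most delicate step is securing the \emph{strict} inequality $\|\bm{W}\|\leq 1-\eta$ rather than the bare $\|\bm{W}\|\leq 1$ used in Lemma~\ref{lemma:link-cvx-and-ncvx}; this positive margin is exactly what converts slack in the nuclear-norm subgradient inequality into genuine control of the off-tangent component $\mathcal{P}_{T^{\perp}}(\bm{H})$, without which the injectivity condition (which acts only on $T$) cannot close the estimate. A secondary difficulty is carefully tracking the imbalance between $\bm{X}$ and $\bm{Y}$ at an approximate (rather than exact) critical point; this is the source of the $\kappa$-factor in the final bound.
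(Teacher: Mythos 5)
Your proposal follows essentially the same route as the paper's proof (cf.\ Claims~\ref{claim:approx-stat-cvx}--\ref{claim:balancing} and the three-step argument that follows them): decompose $\mathcal{P}_{\Omega}(\bm{M}-\bm{X}\bm{Y}^{\top})$ into tangent and orthogonal parts, bound the tangent-space residual by $O\bigl(\kappa p\,\|\nabla f(\bm{X},\bm{Y})\|_{\mathrm{F}}/\sqrt{\sigma_{\min}}\bigr)$ and the orthogonal part strictly by $\lambda/2$, and then combine the optimality of $\bm{Z}_{\mathsf{cvx}}$, the aligned nuclear-norm subgradient, and Condition~\ref{assumption:link-cvx-ncvx-injectivity} to close the quadratic inequality in $\|\bm{\Delta}\|_{\mathrm{F}}$. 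One small wording slip worth fixing: an orthogonal rotation $\bm{R}\in\mathcal{O}^{r\times r}$ (sending $\bm{X}\mapsto\bm{X}\bm{R}$, $\bm{Y}\mapsto\bm{Y}\bm{R}$) leaves $\|\bm{X}^{\top}\bm{X}-\bm{Y}^{\top}\bm{Y}\|_{\mathrm{F}}$ invariant, so one cannot ``rotate to rebalance''; the paper instead factors $\bm{X}=\bm{U}\bm{\Sigma}^{1/2}\bm{Q}$, $\bm{Y}=\bm{V}\bm{\Sigma}^{1/2}\bm{Q}^{-\top}$ for an \emph{invertible} $\bm{Q}$ and shows $\bm{Q}$ is nearly orthogonal (Lemma~\ref{lemma:balance_determine}, Claim~\ref{claim:balancing}), which is exactly the quantitative content of your parenthetical remark that the imbalance is $O(\|\nabla f\|_{\mathrm{F}})$.
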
\begin{remark}\label{remark:extension-nonoptimizer}In
fact, this lemma continues to hold if $\bm{Z}_{\mathsf{cvx}}$ is
replaced by any $\bm{Z}$ obeying $g(\bm{Z})\leq g(\bm{X}\bm{Y}^{\top})$,
where $g(\cdot)$ is the objective function defined in~(\ref{eq:convex-LS}) and
$\bX$ and $\bY$ are low-rank factors obeying conditions of Lemma~\ref{lemma:approx-link-cvx-ncvx-simple}.
This is important in providing statistical guarantees for iterative
methods like SVT \cite{cai2010singular}, FPC \cite{ma2011fixed},
SOFT--IMPUTE \cite{mazumder2010spectral}, FISTA \cite{beck2009fast},
etc. To be more specific, suppose that $(\bm{X},\bm{Y})$ results
in an approximate optimizer of~(\ref{eq:convex-LS}), namely, $g(\bm{X}\bm{Y}^{\top})=g(\bm{Z}_{\mathsf{cvx}})+\varepsilon$
for some sufficiently small $\varepsilon>0$. Then for any $\bm{Z}$
obeying $g(\bm{Z})\leq g(\bm{X}\bm{Y}^{\top})=g(\bm{Z}_{\mathsf{cvx}})+\varepsilon$,
one has
\begin{equation}
\big\|\bm{X}\bm{Y}^{\top}-\bm{Z}\big\|_{\mathrm{F}}\lesssim\frac{\kappa}{c_{\mathrm{inj}}}\frac{1}{\sqrt{\sigma_{\min}}}\left\Vert \nabla f\left(\bm{X},\bm{Y}\right)\right\Vert _{\mathrm{F}}.\label{eq:XY-Z-gap}
\end{equation}
As a result, as long as the above-mentioned algorithms converge in
terms of the objective value, they must return a solution obeying
(\ref{eq:XY-Z-gap}), which is exceedingly close to $\bm{X}\bm{Y}^{\top}$
if $\|\nabla f(\bm{X},\bm{Y})\|_{\mathrm{F}}$ is small.\end{remark}

\begin{comment}
$\bm{Z}:=\bm{X}\bm{Y}^{\top}$ is a good approximation of the minimizer
of the convex program in the sense that any minimizer $\hat{\bm{Z}}$
of~(\ref{eq:convex-LS})
\end{comment}

%It is worth noting that, due to the lack of strong convexity, small gradients alone do not imply closeness to a global minimizer of~(\ref{eq:nonconvex_mc_noisy-lambda}).

It is clear from Lemma~\ref{lemma:approx-link-cvx-ncvx-simple} that,
as the size of the gradient $\nabla f(\bm{X},\bm{Y})$ gets smaller,
the nonconvex estimate $\bm{X}\bm{Y}^{\top}$ becomes an increasingly
tighter approximation of any convex optimizer of~(\ref{eq:convex-LS}),
which is consistent with Lemma~\ref{lemma:link-cvx-and-ncvx}. In
contrast to Lemma~\ref{lemma:link-cvx-and-ncvx}, due to the lack of strong convexity, a nonconvex estimate with a near-zero gradient does not imply the uniqueness of the optimizer
of the convex program~(\ref{eq:convex-LS}); rather, it indicates
that any minimizer of~(\ref{eq:convex-LS}) lies within a sufficiently
small neighborhood surrounding $\bm{X}\bm{Y}^{\top}$ (cf.~(\ref{eq:closeness-cvx-ncvx})).

\subsection{Construction of an approximate nonconvex optimizer}

So far, Lemmas~\ref{lemma:link-cvx-and-ncvx}-\ref{lemma:approx-link-cvx-ncvx-simple}
are both deterministic results based on Condition~\ref{assumption:link-cvx-ncvx-noise}.
As we will soon see, under Assumption~\ref{assumption:sampling-noise},
we can derive simpler conditions that --- with high probability --- guarantee
Condition~\ref{assumption:link-cvx-ncvx-noise}.
We start with Condition~\ref{assumption:link-cvx-ncvx-noise}(a).

\begin{lemma}\label{lemma:noise-bound}Suppose $n^{2}p\geq Cn\log^{2}n$
for some sufficiently large constant $C>0$. Then with probability
at least $1-O(n^{-10})$, one has $\left\Vert \mathcal{P}_{\Omega}\left(\bm{E}\right)\right\Vert \lesssim\sigma\sqrt{np}.$
	As a result, Condition~\ref{assumption:link-cvx-ncvx-noise} holds (i.e.~$\|\mathcal{P}_{\Omega}(\bm{E})\|<\lambda/8$) as long as $\lambda=C_{\lambda}\sigma\sqrt{np}$
for some sufficiently large constant $C_{\lambda}>0$. \end{lemma}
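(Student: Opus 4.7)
The claim has two parts: the spectral-norm estimate $\|\mathcal{P}_{\Omega}(\bm{E})\|\lesssim\sigma\sqrt{np}$, and the resulting verification of Condition~\ref{assumption:link-cvx-ncvx-noise}(a). The second part is immediate once the first is established, since $\lambda=C_{\lambda}\sigma\sqrt{np}$ with $C_{\lambda}$ larger than $8$ times the universal constant hidden in $\lesssim$ gives $\|\mathcal{P}_{\Omega}(\bm{E})\|<\lambda/8$. The entire argument therefore reduces to proving the spectral-norm estimate.

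My plan is to combine a truncation step with a sharp spectral-norm inequality for random matrices with independent entries. First I would replace $\bm{E}$ by its truncation $\widetilde{\bm{E}}$, defined entrywise by $\widetilde{E}_{ij}\triangleq E_{ij}\,\mathds{1}\{|E_{ij}|\le C_{0}\sigma\sqrt{\log n}\}$. The sub-Gaussian tail bound $\mathbb{P}(|E_{ij}|>C_{0}\sigma\sqrt{\log n})\le 2\exp(-C_{0}^{2}(\log n)/2)$, combined with a union bound over the $n^{2}$ entries, shows that $\widetilde{\bm{E}}=\bm{E}$ (hence $\mathcal{P}_{\Omega}(\widetilde{\bm{E}})=\mathcal{P}_{\Omega}(\bm{E})$) on an event of probability at least $1-O(n^{-10})$ for $C_{0}$ large enough. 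Symmetry of the sub-Gaussian tails gives $|\mathbb{E}\widetilde{E}_{ij}|\le n^{-20}\sigma$, so the recentering correction contributes a negligible matrix. It thus suffices to control $\|\mathcal{P}_{\Omega}(\widetilde{\bm{E}})\|$, whose entries $\delta_{ij}\widetilde{E}_{ij}$ are independent, essentially mean-zero, have variance at most $p\sigma^{2}$, and are bounded in magnitude by $C_{0}\sigma\sqrt{\log n}$.

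Next I would apply a sharp spectral-norm inequality for rectangular random matrices with independent bounded entries, specifically the non-asymptotic bound of Bandeira--van Handel (or an equivalent moment-method result). Using $\max_{i}\sqrt{\sum_{j}\mathbb{E}(\delta_{ij}\widetilde{E}_{ij})^{2}}\le\sigma\sqrt{np}$ and the analogous column bound, this yields
\[
\mathbb{E}\bigl\|\mathcal{P}_{\Omega}(\widetilde{\bm{E}})\bigr\|\,\lesssim\,\sigma\sqrt{np}+C_{0}\sigma\sqrt{\log n}\cdot\sqrt{\log n}\,\lesssim\,\sigma\sqrt{np}+\sigma\log n.
\]
Under $n^{2}p\ge Cn\log^{2}n$, i.e.~$np\ge C\log^{2}n$, the second term is absorbed into the first. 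A Talagrand-type concentration inequality for convex $1$-Lipschitz functions of independent bounded random variables, applicable since the entries of $\mathcal{P}_{\Omega}(\widetilde{\bm{E}})$ lie in $[-C_{0}\sigma\sqrt{\log n},C_{0}\sigma\sqrt{\log n}]$ and $\|\cdot\|$ is $1$-Lipschitz with respect to the Frobenius norm, upgrades this in-expectation bound to $\|\mathcal{P}_{\Omega}(\widetilde{\bm{E}})\|\lesssim\sigma\sqrt{np}$ with probability at least $1-O(n^{-10})$: setting deviation $t=\sigma\sqrt{np}$ gives a failure probability of order $\exp(-cnp/\log n)\le n^{-10}$ under $np\ge C\log^{2}n$ for $C$ large enough. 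Combining with the truncation step delivers the advertised bound on $\|\mathcal{P}_{\Omega}(\bm{E})\|$.

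The main obstacle worth flagging is shaving the spurious $\sqrt{\log n}$ factor. A direct application of matrix Bernstein to the decomposition $\mathcal{P}_{\Omega}(\widetilde{\bm{E}})=\sum_{(i,j)}\delta_{ij}\widetilde{E}_{ij}\bm{e}_{i}\bm{e}_{j}^{\top}$ would yield only $\sigma\sqrt{np\log n}$, which is too loose to accommodate the choice $\lambda=C_{\lambda}\sigma\sqrt{np}$ in Condition~\ref{assumption:link-cvx-ncvx-noise}(a). Recovering the sharper rate requires exploiting the separate row-variance and column-variance structure of the matrix, which is precisely what Bandeira--van Handel (or, alternatively, an $\varepsilon$-net argument on $S^{n-1}\times S^{n-1}$ combined with sub-exponential Bernstein tail bounds applied after truncation) provides.
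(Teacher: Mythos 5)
Your argument is correct and self-contained, whereas the paper does not actually prove this lemma — it cites Lemma~11 of \cite{chen2015fast} (for symmetric noise) with a remark that the asymmetric case follows by a routine modification. The underlying idea there is also a sharp spectral-norm bound for a random matrix with independent, essentially bounded entries, so you have reconstructed, in full, the style of argument that the cited reference relies on. The key point that you correctly flag — matrix Bernstein applied directly loses a $\sqrt{\log n}$ factor, and one needs a row/column-variance-sensitive bound (Bandeira--van Handel, or an $\varepsilon$-net with truncation) — is exactly the substance of the result being cited. The truncation $\to$ sharp expectation bound $\to$ Talagrand concentration pipeline is clean, and your bookkeeping on the sample-size condition $np\gtrsim\log^{2}n$ (needed both to absorb the $\sigma\log n$ term and to make the concentration tail $\exp(-cnp/\log n)$ small enough) is right.

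One small inaccuracy worth correcting: you justify $|\mathbb{E}\widetilde{E}_{ij}|\lesssim n^{-20}\sigma$ by appealing to ``symmetry of the sub-Gaussian tails,'' but Assumption~\ref{assumption:sampling-noise}(b) only requires $E_{ij}$ to be zero-mean and sub-Gaussian, not symmetric. The correct justification is the zero-mean property: since $\mathbb{E}E_{ij}=0$, one has $\mathbb{E}\widetilde{E}_{ij}=-\mathbb{E}\bigl[E_{ij}\,\mathds{1}\{|E_{ij}|>C_{0}\sigma\sqrt{\log n}\}\bigr]$, and by Cauchy--Schwarz (or direct integration of the tail) this has magnitude at most $\sigma\cdot\bigl(\mathbb{P}(|E_{ij}|>C_{0}\sigma\sqrt{\log n})\bigr)^{1/2}\lesssim \sigma\, n^{-cC_{0}^{2}/2}$, which is polynomially small for $C_{0}$ large. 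The conclusion you need — that the recentering correction has negligible spectral norm — then goes through unchanged, since the resulting rank-one-like correction matrix has spectral norm at most $n\cdot p\cdot\sigma\,n^{-cC_{0}^{2}/2}\ll\sigma\sqrt{np}$.
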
\begin{proof}This
follows from \cite[Lemma 11]{chen2015fast} with a slight and straightforward
modification to accommodate the asymmetric noise here. For brevity,
we omit the proof. \end{proof}

Turning attention to Condition~\ref{assumption:link-cvx-ncvx-noise}(b) and Condition~\ref{assumption:link-cvx-ncvx-injectivity}, we have
the following lemma, the proof of which is deferred to Appendix~\ref{subsec:Proof-of-Lemma-injectivity-main}.

\begin{lemma}\label{lemma:injectivity-main} Under the assumptions of Theorem~\ref{thm:main-convex},  with probability
exceeding $1-O(n^{-10})$
we have 
%, i.e.
	$$\|\mathcal{P}_{\Omega}(\bm{X}\bm{Y}^{\top}-\bm{M}^{\star})-p(\bm{X}\bm{Y}^{\top}-\bm{M}^{\star})\|<\lambda/8 \qquad \hfill \mathrm{(Condition~}\ref{assumption:link-cvx-ncvx-noise}\mathrm{(b))} $$
%and 
%, i.e.
\begin{align*}
\frac{1}{p}\left\Vert \mathcal{P}_{\Omega}\left(\bm{H}\right)\right\Vert _{\mathrm{F}}^{2} & \geq\frac{1}{32\kappa}\left\Vert \bm{H}\right\Vert _{\mathrm{F}}^{2},\quad\text{for all }\bm{H}\in T
\qquad
	\mathrm{(Condition~}\ref{assumption:link-cvx-ncvx-injectivity}\mathrm{~with~}c_{\mathrm{inj}}=(32\kappa)^{-1}\mathrm{)}
\end{align*}
hold simultaneously for all $(\bm{X},\bm{Y})$ obeying
\begin{align}\label{subeq:condition-inf}
& \max\left\{ \left\Vert \bm{X} -\bm{X}^{\star}\right\Vert _{\mathrm{2,\infty}},\left\Vert \bm{Y} -\bm{Y}^{\star}\right\Vert _{\mathrm{2,\infty}}\right\}  \nonumber\\
 & \qquad\qquad  \leq C_{\infty}\kappa\left(\frac{\sigma}{\sigma_{\min}}\sqrt{\frac{n\log n}{p}}+\frac{\lambda}{p\,\sigma_{\min}}\right)\max\left\{ \left\Vert \bm{X}^{\star}\right\Vert _{2,\infty},\left\Vert \bm{Y}^{\star}\right\Vert _{2,\infty}\right\} .
\end{align}
Here, $T$ denotes the tangent space of $\bm{X}\bm{Y}^{\top}$,
and $C_{\infty}>0$ is some absolute constant. \end{lemma}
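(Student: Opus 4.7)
\smallskip
\noindent\textbf{Plan of proof.} The lemma couples two concentration statements -- a spectral-norm bound on $\mathcal{P}_\Omega(\bm{X}\bm{Y}^\top - \bm{M}^\star) - p(\bm{X}\bm{Y}^\top - \bm{M}^\star)$ and a restricted isometry on the tangent space $T$ of $\bm{X}\bm{Y}^\top$ -- both of which must hold \emph{uniformly} over every $(\bm{X},\bm{Y})$ in the $\ell_{2,\infty}$-ball \eqref{subeq:condition-inf}. My plan is to establish each statement pointwise via a matrix Bernstein-type inequality, and then promote it to a uniform bound by a covering-number argument on the admissible set.

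\smallskip
\noindent\textbf{Condition~\ref{assumption:link-cvx-ncvx-noise}(b).} I would first decompose
\[
\bm{X}\bm{Y}^\top - \bm{M}^\star \;=\; (\bm{X}-\bm{X}^\star)(\bm{Y}-\bm{Y}^\star)^\top + (\bm{X}-\bm{X}^\star)\bm{Y}^{\star\top} + \bm{X}^\star(\bm{Y}-\bm{Y}^\star)^\top
\]
and bound the de-biased operator $\mathcal{P}_\Omega(\cdot) - p(\cdot)$ on each of the three pieces. A matrix-Bernstein estimate tailored to matrix completion (cf.~\cite{chen2015fast}) yields, for any fixed $\bm{A}=\bm{C}\bm{D}^\top$,
\[
\big\|\mathcal{P}_\Omega(\bm{A}) - p\bm{A}\big\| \;\lesssim\; \sqrt{np\log n}\cdot \sqrt{n}\,\|\bm{C}\|_{2,\infty}\|\bm{D}\|_{2,\infty} + \log n\cdot \|\bm{A}\|_\infty
\]
with high probability. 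Combining the $\ell_{2,\infty}$ proximity \eqref{subeq:condition-inf} with the incoherence of $\bm{M}^\star$ controls every norm on the right-hand side in terms of $\kappa,\mu,r,\sigma$ and $\sigma_{\min}$, and under the noise/sample-size assumptions of Theorem~\ref{thm:main-convex} the resulting bound is $\ll \sigma\sqrt{np}\asymp \lambda$, giving the required $\lambda/8$. To pass from a single $(\bm{X},\bm{Y})$ to \emph{all} such pairs, I would cover the $\ell_{2,\infty}$-ball at scale $\varepsilon = n^{-10}$; since the ball is parameterized by $O(nr)$ real coordinates of magnitude $O(\|\bm{X}^\star\|_{2,\infty})$, its covering number is at most $(n/\varepsilon)^{O(nr)}$. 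The extra $O(nr\log n)$ in the exponent of the failure probability is absorbed by $n^2 p \gtrsim \kappa^{4}\mu^{2}r^{2}n\log^{3}n$, and a routine Lipschitz estimate transfers the bound from net points to the full ball.

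\smallskip
\noindent\textbf{Condition~\ref{assumption:link-cvx-ncvx-injectivity} (injectivity).} Here I would use a tangent-space perturbation argument. Classical matrix-completion theory~\cite{ExactMC09,recht2011simpler} gives, with high probability,
\[
p^{-1}\big\|\mathcal{P}_\Omega(\bm{H})\big\|_{\mathrm{F}}^{2} \;\geq\; \tfrac{1}{2}\|\bm{H}\|_{\mathrm{F}}^{2} \qquad \text{for all }\bm{H}\in T^\star,
\]
where $T^\star$ is the (deterministic) tangent space of $\bm{M}^\star$, provided $n^2 p \gtrsim \mu r n\log n$. The $\ell_{2,\infty}$ proximity \eqref{subeq:condition-inf}, combined with a Davis--Kahan / sin-$\Theta$ bound and the elementary identity $\|\bm{X}^\star\|/\sigma_{\min}(\bm{X}^\star)=\sqrt{\kappa}$, upgrades to an operator-norm bound $\|\mathcal{P}_T - \mathcal{P}_{T^\star}\|\lesssim 1/\sqrt{\kappa}$ under the noise condition of Theorem~\ref{thm:main-convex}. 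Decomposing $\bm{H}\in T$ as $\mathcal{P}_{T^\star}\bm{H}+\mathcal{P}_{T^{\star\perp}}\bm{H}$, applying the restricted isometry on $T^\star$ to the first piece, and absorbing the second (of Frobenius norm $\lesssim \|\bm{H}\|_{\mathrm{F}}/\sqrt{\kappa}$) into the slack yields the constant $c_{\mathrm{inj}}=(32\kappa)^{-1}$. Uniformity in $(\bm{X},\bm{Y})$ is automatic here, since the randomness is carried entirely by $\mathcal{P}_\Omega$ acting on the fixed subspace $T^\star$, while the dependence on $(\bm{X},\bm{Y})$ resides in the deterministic perturbation bound $\|\mathcal{P}_T-\mathcal{P}_{T^\star}\|$.

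\smallskip
\noindent\textbf{Main obstacle.} The delicate part is tracking the $\kappa$- and $r$-dependence tightly through both estimates simultaneously. For Condition~\ref{assumption:link-cvx-ncvx-noise}(b), one must bound the three cross-term matrices by the \emph{product} of $\ell_{2,\infty}$ deviations rather than by spectral norms, otherwise an extra $\sqrt{\kappa}$ factor creeps in and breaks the $\lambda/8$ target. For Condition~\ref{assumption:link-cvx-ncvx-injectivity}, the $1/\sqrt{\kappa}$ perturbation bound on tangent spaces must be sharp enough that the constant-order slack in the RIP on $T^\star$ survives after absorbing the $T^{\star\perp}$ piece. The covering argument, while conceptually routine, also needs a spectral-norm Lipschitz estimate for the map $(\bm{X},\bm{Y})\mapsto \mathcal{P}_\Omega(\bm{X}\bm{Y}^\top)-p\bm{X}\bm{Y}^\top$, which in turn depends on the $\ell_{2,\infty}$ radius of the admissible ball being small enough to close the loop without inflating the constants beyond $\lambda/8$ and $(32\kappa)^{-1}$.
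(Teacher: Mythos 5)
Your proposal has the right high-level idea -- control both estimates uniformly over the $\ell_{2,\infty}$-ball -- but both of your proofs, as sketched, have genuine gaps.

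\textbf{The covering argument for Condition~\ref{assumption:link-cvx-ncvx-noise}(b) does not close.} A union bound over a net of size $e^{\Theta(nr\log n)}$ forces the single-point Bernstein bound to hold with failure probability $e^{-\Omega(nr\log n)}$, which inflates the deviation threshold by a factor of $\sqrt{nr\log n}$ relative to the standard $1-O(n^{-c})$ tail. Under the paper's sample complexity $n^{2}p\gtrsim\kappa^{4}\mu^{2}r^{2}n\log^{3}n$, which permits $p\asymp\mathrm{poly}\log(n)/n$, that extra factor is fatal: tracking through the variance parameter shows one would need $p\gtrsim\kappa^{4}\mu r^{2}\log^{2}n$, which is impossible for $p\leq 1$. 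The paper avoids covering entirely by invoking the \emph{deterministic, pointwise} inequality $\|\mathcal{P}_{\Omega}^{\mathsf{debias}}(\bm{C}\bm{D}^{\top})\|\leq\|\mathcal{P}_{\Omega}^{\mathsf{debias}}(\bm{1}\bm{1}^{\top})\|\,\|\bm{C}\|_{2,\infty}\|\bm{D}\|_{2,\infty}$ (\cite[Lemma~4.5]{chen2017memory}), valid for every $\bm{C},\bm{D}$ given $\Omega$, so the only random event needed is $\|\mathcal{P}_{\Omega}^{\mathsf{debias}}(\bm{1}\bm{1}^{\top})\|\lesssim\sqrt{np}$ (\cite[Lemma~3.2]{KesMonSew2010}); uniformity over all admissible $(\bm{X},\bm{Y})$ is then automatic at no statistical cost. (Also, the paper uses the two-term decomposition $(\bm{X}-\bm{X}^{\star})\bm{Y}^{\top}+\bm{X}^{\star}(\bm{Y}-\bm{Y}^{\star})^{\top}$ rather than your three-term one, but that is cosmetic.)

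\textbf{The absorption step in your injectivity argument also fails.} To lower bound $p^{-1/2}\|\mathcal{P}_{\Omega}(\bm{H})\|_{\mathrm{F}}$ by splitting $\bm{H}=\mathcal{P}_{T^{\star}}\bm{H}+\mathcal{P}_{T^{\star\perp}}\bm{H}$ you must \emph{upper} bound $p^{-1/2}\|\mathcal{P}_{\Omega}(\mathcal{P}_{T^{\star\perp}}\bm{H})\|_{\mathrm{F}}$, and the naive estimate $p^{-1/2}\|\mathcal{P}_{T^{\star\perp}}\bm{H}\|_{\mathrm{F}}\lesssim p^{-1/2}\kappa^{-1/2}\|\bm{H}\|_{\mathrm{F}}$ carries a $p^{-1/2}$ inflation that destroys the target $c_{\mathrm{inj}}=(32\kappa)^{-1}$. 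Moreover, your claim that uniformity is automatic because ``the randomness is carried entirely by $\mathcal{P}_{\Omega}$ acting on the fixed subspace $T^{\star}$'' is not correct: bounding $\|\mathcal{P}_{\Omega}(\mathcal{P}_{T^{\star\perp}}\bm{H})\|_{\mathrm{F}}$ still involves $\Omega$ acting outside $T^{\star}$, and $\mathcal{P}_{T^{\star\perp}}\bm{H}$ depends on $(\bm{X},\bm{Y})$. The paper's proof instead writes $\bm{H}=\bm{X}\bm{A}^{\top}+\bm{B}\bm{Y}^{\top}$ via the minimum-norm $(\bm{A},\bm{B})$ (which satisfies $\bm{X}^{\top}\bm{B}=\bm{A}^{\top}\bm{Y}$), expands $\bm{X}\bm{A}^{\top}+\bm{B}\bm{Y}^{\top}=\bm{X}^{\star}\bm{A}^{\top}+\bm{B}\bm{Y}^{\star\top}+\bm{\Delta}_{\bm{X}}\bm{A}^{\top}+\bm{B}\bm{\Delta}_{\bm{Y}}^{\top}$, applies the RIP on the fixed tangent space $T^{\star}$ to the first two terms, and controls the cross terms with $\ell_{2,\infty}$-weighted inequalities (Lemma~\ref{lemma:zheng-and-lafferty} and Lemma~\ref{lemma:chen-and-ji}, both deterministic given $\Omega$). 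It is precisely the $\ell_{2,\infty}$ smallness of $\bm{\Delta}_{\bm{X}},\bm{\Delta}_{\bm{Y}}$ -- not a spectral-norm or Frobenius-norm smallness -- that prevents the $p^{-1/2}$ blow-up and makes the $(32\kappa)^{-1}$ constant come out. Your sketch does not use this mechanism and the absorption does not close without it.
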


This lemma is a uniform result, namely, the bounds hold irrespective
of the statistical dependency between $(\bm{X},\bm{Y})$ and $\Omega$.
As a consequence, to demonstrate the proximity between the convex
and the nonconvex solutions (cf.~(\ref{eq:closeness-cvx-ncvx})),
it remains to identify a point $(\bm{X},\bm{Y})$ with vanishingly
small gradient (cf.~\eqref{eq:small-gradient-f}) that is sufficiently
close to the truth (cf.~\eqref{subeq:condition-inf}).

As we already alluded to previously, a simple gradient descent algorithm
aimed at solving the nonconvex problem~(\ref{eq:nonconvex_mc_noisy-lambda})
might help us produce an approximate nonconvex optimizer. This procedure
is summarized in Algorithm~\ref{alg:gd-mc-primal}. Our hope is this: when initialized at the ground truth and run for
sufficiently many iterations, the GD trajectory produced by Algorithm
\ref{alg:gd-mc-primal} will contain at least one approximate stationary
point of~(\ref{eq:nonconvex_mc_noisy-lambda}) with the desired properties
\eqref{eq:small-gradient-f} and \eqref{subeq:condition-inf}. We
shall note that Algorithm~\ref{alg:gd-mc-primal} is {\em not practical}
since it starts from the ground truth $(\bm{X}^{\star},\bm{Y}^{\star})$; this is an auxiliary step mainly to simplify the theoretical analysis. 
While we can certainly make it practical by adopting spectral initialization as in \cite{ma2017implicit,chen2019nonconvex}, it requires more lengthy proofs without further improving our statistical guarantees.  
%Consequently, we decide to 

\begin{algorithm}[ht]
\caption{Construction of an approximate primal solution.}

\label{alg:gd-mc-primal}\begin{algorithmic}

\STATE \textbf{{Initialization}}: $\bm{X}^{0}=\bm{X}^{\star}$;
$\bm{Y}^{0}=\bm{Y}^{\star}$.

\STATE \textbf{{Gradient updates}}: \textbf{for }$t=0,1,\ldots,t_{0}-1$
\textbf{do}

\STATE \vspace{-1em}
 \begin{subequations} \label{subeq:GD-rules}
\begin{align}
\bm{X}^{t+1}= & \bm{X}^{t}-\eta\nabla_{\bm{X}}f(\bm{X}^{t},\bm{Y}^{t})=\bm{X}^{t}-\frac{\eta}{p}\Big(\mathcal{P}_{\Omega}\left(\bm{X}^{t}\bm{Y}^{t\top}-\bm{M}\right)\bm{Y}^{t}+\lambda\bm{X}^{t}\Big);\label{eq:gradient_update}\\
\bm{Y}^{t+1}= & \bm{Y}^{t}-\eta\nabla_{\bm{Y}}f(\bm{X}^{t},\bm{Y}^{t})=\bm{Y}^{t}-\frac{\eta}{p}\Big(\left[\mathcal{P}_{\Omega}\left(\bm{X}^{t}\bm{Y}^{t\top}-\bm{M}\right)\right]^{\top}\bm{X}^{t}+\lambda\bm{Y}^{t}\Big).
\end{align}
\end{subequations}Here, $\eta>0$ is the step size.

\begin{comment}
\STATE\textbf{{Output}}: \yxc{fill in missing details here}.
\end{comment}

\end{algorithmic}
\end{algorithm}

\begin{comment}
Inspired by the clever \emph{golfing scheme }\cite{Gross2011recovering}
in the noiseless case, we turn to finding an \emph{approximate} primal-dual
pair, instead of the exact one as in Lemmas~\ref{lemma:exact-primal-dual}
and~\ref{lemma:link-cvx-and-ncvx}. This is supplied in the following
lemma.

Lemma~\ref{lemma:approx-link-cvx-ncvx} can be viewed as an approximate
version of Lemma~\ref{lemma:link-cvx-and-ncvx}, which utilizes a
\emph{particular approximate }stationary point $(\bm{X},\bm{Y})$
to certify the performance of any minimizer of~(\ref{eq:convex_mc_noisy})
(cf.~(\ref{eq:approximate-fro-error})). Two notable differences are
worth mentioning. First, the small gradient condition~(\ref{subeq:approximate-stationary-ncvx})
in Lemma~\ref{lemma:approx-link-cvx-ncvx} relaxes the exact stationary
condition in Lemma~\ref{lemma:link-cvx-and-ncvx}. Second, Lemma~\ref{lemma:approx-link-cvx-ncvx}
requires an approximate balance between $\bm{X}$ and $\bm{Y}$; see
~(\ref{eq:approximate-balance}). This can be also viewed as a relaxed
condition of the stationary point since every stationary point of
~(\ref{eq:nonconvex_mc_noisy}) is balanced in the sense that $\bX^{\top}\bX-\bY^{\top}\bY=\bm{0}$.
As a result, the theoretical guarantee is not as strong as that provided
by an exact primal-dual pair in that we cannot guarantee the uniqueness
of the minimizer of~(\ref{eq:convex_mc_noisy}) any more. Nevertheless,
this is a minor issue from a practical point of view.
\end{comment}

\subsection{Properties of the nonconvex iterates}

In this subsection, we will build upon the literature on nonconvex
low-rank matrix completion to justify that the estimates returned
by Algorithm~\ref{alg:gd-mc-primal} satisfy the requirement stated
in~(\ref{subeq:condition-inf}). Our theory will be largely established
upon the leave-one-out strategy introduced by Ma et al.~\cite{ma2017implicit},
which is an effective analysis technique to control the $\ell_{2,\infty}$
error of the estimates. This strategy has recently been extended by
Chen~et~al.~\cite{chen2019nonconvex} to the more general rectangular
case with an improved sample complexity bound.

Before continuing, we introduce several useful notations. Notice that
the matrix product of $\bm{X}^{\star}$ and $\bm{Y}^{\star\top}$
is invariant under global orthonormal transformation, namely, for
any orthonormal matrix $\bm{R}\in\mathbb{R}^{r\times r}$ one has
$\bm{X}^{\star}\bm{R}(\bm{Y}^{\star}\bm{R})^{\top}=\bm{X}^{\star}\bm{Y}^{\star\top}$.
Viewed in this light, we shall consider distance metrics modulo global
rotation. In particular, the theory relies heavily on a specific global
rotation matrix defined as follows
\begin{equation}
\bm{H}^{t}\triangleq\arg\min_{\bm{R}\in\mathcal{O}^{r\times r}}\big(\left\Vert \bm{X}^{t}\bm{R}-\bm{X}^{\star}\right\Vert _{\mathrm{F}}^{2}+\left\Vert \bm{Y}^{t}\bm{R}-\bm{Y}^{\star}\right\Vert _{\mathrm{F}}^{2}\big)^{1/2},
\label{eq:defn-rotation-H}
\end{equation}
where $\mathcal{O}^{r\times r}$ is the set of $r\times r$ orthonormal
matrices.

We are now ready to present the performance guarantees for Algorithm
\ref{alg:gd-mc-primal}.

\begin{comment}
Set $\lambda=C_{\lambda}\sigma\sqrt{np}$ for some sufficiently large
constant. Suppose the sample size obeys $n^{2}p\geq C\kappa^{XXX}\mu^{2}r^{2}n\log^{2}n$
for some sufficiently large constant $C>0$ and the noise satisfies
$\sigma\sqrt{\frac{n}{p}}\leq c\frac{\sigma_{\min}}{\kappa^{XXX}\sqrt{\mu r\log n}}$
for some small enough constant $c>0$. \cm{After we finalize the
conditions, we can put back this sentence ``''}
\end{comment}

\begin{lemma}[\textbf{Quality of the nonconvex estimates}]\label{lemma:nonconvex-GD}Instate
the notation and hypotheses of Theorem~\ref{thm:main-convex}. With
probability at least $1-O\left(n^{-3}\right)$, the iterates $\{(\bm{X}^{t},\bm{Y}^{t})\}_{0\leq t\leq t_{0}}$
of Algorithm~\ref{alg:gd-mc-primal} satisfy \begin{subequations}
\label{subeq:induction_GD}
\begin{align}
\max\left\{ \left\Vert \bm{X}^{t}\bm{H}^{t}-\bm{X}^{\star}\right\Vert _{\mathrm{F}},\left\Vert \bm{Y}^{t}\bm{H}^{t}-\bm{Y}^{\star}\right\Vert _{\mathrm{F}}\right\}  & \leq C_{\mathrm{F}}\left(\frac{\sigma}{\sigma_{\min}}\sqrt{\frac{n}{p}}+\frac{\lambda}{p\,\sigma_{\min}}\right)\left\Vert \bm{X}^{\star}\right\Vert _{\mathrm{F}},\label{eq:nonconvex-fro-norm}\\
\max\left\{ \left\Vert \bm{X}^{t}\bm{H}^{t}-\bm{X}^{\star}\right\Vert ,\left\Vert \bm{Y}^{t}\bm{H}^{t}-\bm{Y}^{\star}\right\Vert \right\}  & \leq C_{\mathrm{op}}\left(\frac{\sigma}{\sigma_{\min}}\sqrt{\frac{n}{p}}+\frac{\lambda}{p\,\sigma_{\min}}\right)\left\Vert \bm{X}^{\star}\right\Vert ,\label{eq:nonconvex-spectral-norm}\\
\max\left\{ \left\Vert \bm{X}^{t}\bm{H}^{t}-\bm{X}^{\star}\right\Vert _{\mathrm{2,\infty}},\left\Vert \bm{Y}^{t}\bm{H}^{t}-\bm{Y}^{\star}\right\Vert _{\mathrm{2,\infty}}\right\}  \nonumber\\
  \leq C_{\infty}\kappa\left(\frac{\sigma}{\sigma_{\min}}\sqrt{\frac{n\log n}{p}}+\frac{\lambda}{p\,\sigma_{\min}}\right)&\max\left\{ \left\Vert \bm{X}^{\star}\right\Vert _{2,\infty},\left\Vert \bm{Y}^{\star}\right\Vert _{2,\infty}\right\}, \label{eq:nonconvex-2-infty-norm}
\end{align}
\end{subequations}
\begin{equation}
\min_{0\leq t<t_{0}}\left\Vert \nabla f\left(\bm{X}^{t},\bm{Y}^{t}\right)\right\Vert _{\mathrm{F}}\leq\frac{1}{n^{5}}\frac{\lambda}{p}\sqrt{\sigma_{\min}},\label{eq:nonconvex-small-gradient}
\end{equation}
 where $C_{\mathrm{F}},C_{\mathrm{op}}, C_{\infty}>0$ are some absolute
constants, provided that $\eta\asymp1/(n\kappa^{3}\sigma_{\max})$
and that $t_{0}=n^{18}$. \end{lemma}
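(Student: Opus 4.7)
My plan is to prove the norm bounds in \eqref{subeq:induction_GD} by induction on $t$, following the leave-one-out framework of Ma et al.~\cite{ma2017implicit} and Chen et al.~\cite{chen2019nonconvex}, adapted to handle the regularization terms $\frac{\lambda}{2p}(\|\bm{X}\|_{\mathrm{F}}^2 + \|\bm{Y}\|_{\mathrm{F}}^2)$ in the nonconvex objective \eqref{eq:nonconvex_mc_noisy}. The base case $t = 0$ is immediate since $(\bm{X}^0, \bm{Y}^0) = (\bm{X}^\star, \bm{Y}^\star)$ makes all three left-hand sides of \eqref{subeq:induction_GD} vanish, while the right-hand sides are strictly positive. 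For the inductive step, I would introduce auxiliary leave-one-out sequences $\{(\bm{X}^{t,(l)}, \bm{Y}^{t,(l)})\}$ for each $l \in \{1, \ldots, n\}$, obtained by running gradient descent on a surrogate objective in which the $l$-th row/column of $\mathcal{P}_\Omega$ is replaced by its expectation $p\cdot\mathcal{I}$.

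The joint induction would simultaneously establish (i) the Frobenius and spectral bounds \eqref{eq:nonconvex-fro-norm}--\eqref{eq:nonconvex-spectral-norm} via standard one-step contraction, using regularization-induced local strong convexity together with concentration bounds on $(p^{-1}\mathcal{P}_{\Omega} - \mathcal{I})(\cdot)$ applied to matrices in the tangent space $T$; (ii) proximity of each leave-one-out sequence to the original sequence in all three norms; and (iii) a row-wise bound on $\|\bm{X}^{t,(l)}\bm{H}^{t,(l)} - \bm{X}^{\star}\|_{2,\infty}$ for the specific row $l$, which follows because $(\bm{X}^{t,(l)}, \bm{Y}^{t,(l)})$ is statistically independent of the randomness in the $l$-th row/column of $\Omega$. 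Combining (ii) and (iii) produces the $\ell_{2,\infty}$ bound \eqref{eq:nonconvex-2-infty-norm}. The regularization biases the gradient flow fixed point away from the truth by $O(\lambda/(p\sigma_{\min}))$, which is precisely the second additive term appearing in each of the three error bounds.

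For the small-gradient claim \eqref{eq:nonconvex-small-gradient}, I would invoke a pigeonhole argument based on the descent property of gradient iterates. Because the just-established norm bounds keep $(\bm{X}^t, \bm{Y}^t)$ within a bounded neighborhood of $(\bm{X}^\star, \bm{Y}^\star)$ with controlled spectrum, $f$ has bounded curvature along the entire gradient trajectory. A standard descent lemma then yields
\begin{equation*}
f(\bm{X}^{t+1}, \bm{Y}^{t+1}) \,\leq\, f(\bm{X}^t, \bm{Y}^t) - \frac{\eta}{2}\left\|\nabla f(\bm{X}^t, \bm{Y}^t)\right\|_{\mathrm{F}}^2,
\end{equation*}
which telescopes to
\begin{equation*}
\min_{0 \leq t < t_0}\left\|\nabla f(\bm{X}^t, \bm{Y}^t)\right\|_{\mathrm{F}}^2 \,\leq\, \frac{2\bigl(f(\bm{X}^0, \bm{Y}^0) - \inf f\bigr)}{\eta t_0}.
\end{equation*}
Since $f \geq 0$ and $f(\bm{X}^0, \bm{Y}^0)$ can be controlled by $O\bigl(\frac{1}{p}\|\mathcal{P}_{\Omega}(\bm{E})\|_{\mathrm{F}}^2 + \frac{\lambda}{p}\sigma_{\max} r\bigr)$ using Lemma \ref{lemma:noise-bound}, the choices $\eta \asymp 1/(n\kappa^3\sigma_{\max})$ and $t_0 = n^{18}$ deliver a minimum gradient norm well below the target $\frac{1}{n^5}\frac{\lambda}{p}\sqrt{\sigma_{\min}}$ after straightforward algebra invoking $\lambda = C_{\lambda}\sigma\sqrt{np}$.

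The hard part will be executing the leave-one-out induction in the regularized setting. Two delicate issues must be addressed: first, the shrinkage factor $(1 - \eta\lambda/p)$ introduced at every step by the $\lambda\bm{X}/p$ and $\lambda\bm{Y}/p$ terms in \eqref{subeq:GD-rules} must be tracked precisely so that the resulting bias combines \emph{additively} with the noise-dependent term in each error bound rather than accumulating over $t_0$ iterations; second, the $\ell_{2,\infty}$ bound carries an extra factor of $\kappa$ relative to the Frobenius bound, which forces a careful alignment between the rotations $\bm{H}^t$ and $\bm{H}^{t,(l)}$ and a delicate perturbation argument relating them, especially since the singular values of $(\bm{X}^t, \bm{Y}^t)$ are only controlled to lie within $[\sqrt{\sigma_{\min}/2}, \sqrt{2\sigma_{\max}}]$. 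These considerations, together with the concentration bounds required for the tangent-space projection, are what dictate the sample complexity requirement $n^2 p \gtrsim \kappa^4\mu^2 r^2 n\log^3 n$ and the noise condition stated in Theorem~\ref{thm:main-convex}.
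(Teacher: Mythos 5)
Your proposal captures the right architecture (leave-one-out sequences, joint multi-norm induction, augmented/balanced analysis, and a descent-lemma pigeonhole), but the small-gradient step \eqref{eq:nonconvex-small-gradient} as you describe it does not close. You propose to bound
\[
\min_{0\le t<t_0}\|\nabla f(\bm{X}^t,\bm{Y}^t)\|_{\mathrm{F}}^2 \;\leq\; \frac{2\bigl(f(\bm{X}^0,\bm{Y}^0)-\inf f\bigr)}{\eta t_0}
\;\leq\; \frac{2\,f(\bm{X}^\star,\bm{Y}^\star)}{\eta t_0},
\]
using $f\ge 0$. The regularization part of $f(\bm{X}^\star,\bm{Y}^\star)$ equals $\frac{\lambda}{p}\|\bm{X}^\star\|_{\mathrm{F}}^2 \asymp \frac{\lambda}{p}\,r\sigma_{\max}$, which is \emph{linear} in $\lambda$ (hence in $\sigma$). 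After taking a square root, your bound on $\min\|\nabla f\|_{\mathrm{F}}$ therefore scales like $\sqrt{\lambda}$, whereas the target $\frac{1}{n^5}\frac{\lambda}{p}\sqrt{\sigma_{\min}}$ scales like $\lambda$. Plugging in $\eta\asymp 1/(n\kappa^3\sigma_{\max})$ and $t_0=n^{18}$, the resulting inequality requires $\lambda\gtrsim r\kappa^4\sigma_{\max}p/n^7$, i.e.\ a \emph{lower} bound on the noise level that the theorem does not assume; for sufficiently small $\sigma$ the claimed bound fails. (The mismatch is not a constant issue and cannot be cured by enlarging $t_0$ polynomially, since the ratio degrades as $\sqrt{\lambda}/\lambda\to\infty$.)

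The paper avoids this by not bounding $f(\bm{X}^0,\bm{Y}^0)-\inf f$ crudely; instead it bounds $f(\bm{F}^\star)-f(\bm{F}^{t_0})$ directly via a second-order Taylor expansion around $\bm{F}^{t_0}\bm{H}^{t_0}$, together with the already-established proximity $\|\bm{F}^{t_0}\bm{H}^{t_0}-\bm{F}^\star\|_{\mathrm{F}}\lesssim \frac{\lambda}{p\sigma_{\min}}\sqrt{r\sigma_{\max}}$ and the key estimate $\|\nabla f(\bm{F}^\star)\|_{\mathrm{F}}\lesssim\bigl(\frac{1}{p}\|\mathcal{P}_\Omega(\bm{E})\|+\frac{\lambda}{p}\bigr)\sqrt{r\sigma_{\max}}\lesssim\frac{\lambda}{p}\sqrt{r\sigma_{\max}}$ (note the \emph{spectral} norm of $\mathcal{P}_\Omega(\bm{E})$, not the Frobenius norm --- this is also why your invocation of Lemma~\ref{lemma:noise-bound} to control $\|\mathcal{P}_\Omega(\bm{E})\|_{\mathrm{F}}^2$ is a category error). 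This yields $f(\bm{F}^\star)-f(\bm{F}^{t_0})\lesssim r\kappa^2(\lambda/p)^2$, \emph{quadratic} in $\lambda$, so the square root in the pigeonhole gives a bound linear in $\lambda$ that meets the target with room to spare. Two further remarks: (i) the ``regularization-induced local strong convexity'' phrasing is not quite accurate --- the ridge terms alone do not beat the rotation invariance of the fit term, so the paper introduces an augmented objective $f_{\mathsf{aug}}$ with the balancing penalty $\frac{1}{8}\|\bm{X}^\top\bm{X}-\bm{Y}^\top\bm{Y}\|_{\mathrm{F}}^2$ and analyzes each gradient step as a step on $f_{\mathsf{aug}}$ plus a perturbation (controlled by an explicit balancedness induction hypothesis); (ii) for a non-symmetric $\bm{M}^\star$ one needs $2n$ leave-one-out sequences (rows and columns), not $n$.
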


This lemma, which we establish in Appendix~\ref{sec:Proof-of-Lemma-nonconvex-GD},
reveals that for a polynomially large number of iterations, all iterates
of the gradient descent sequence --- when initialized at the ground
truth --- remain fairly close to the true low-rank factors. This
holds in terms of the estimation errors measured by the Frobenius
norm, the spectral norm, and the $\ell_{2,\infty}$ norm. In
particular, the proximity in terms of the $\ell_{2,\infty}$
norm error plays a pivotal role in implementing our analysis strategy
(particularly Lemmas~\ref{lemma:approx-link-cvx-ncvx-simple}-\ref{lemma:injectivity-main})
described previously. In addition, this lemma (cf.~(\ref{eq:nonconvex-small-gradient}))
guarantees the existence of a small-gradient point within this sequence
$\{(\bm{X}^{t},\bm{Y}^{t})\}_{0\leq t\leq t_{0}}$, a somewhat straightforward
property of GD tailored to smooth problems~\cite{nesterov2012make}.  This in turn enables us to invoke Lemma~\ref{lemma:approx-link-cvx-ncvx-simple}.

As immediate consequences of Lemma~\ref{lemma:nonconvex-GD}, with
high probability we have \begin{subequations}\label{subeq:XY-quality}
\begin{align}
\left\Vert \bm{X}^{t}\bm{Y}^{t\top}-\bm{M}^{\star}\right\Vert _{\mathrm{F}}\,\, & \leq3\kappa C_{\mathrm{F}}\left(\frac{\sigma}{\sigma_{\min}}\sqrt{\frac{n}{p}}+\frac{\lambda}{p\,\sigma_{\min}}\right)\left\Vert \bm{M}^{\star}\right\Vert _{\mathrm{F}}\label{eq:induction_ell_F-1}\\
\left\Vert \bm{X}^{t}\bm{Y}^{t\top}-\bm{M}^{\star}\right\Vert _{\infty} & \leq3C_{\infty}\sqrt{\kappa^{3}\mu r}\left(\frac{\sigma}{\sigma_{\min}}\sqrt{\frac{n\log n}{p}}+\frac{\lambda}{p\,\sigma_{\min}}\right)\left\Vert \bm{M}^{\star}\right\Vert _{\infty}\label{eq:induction_original_ell_infty-MC_thm-1}\\
\left\Vert \bm{X}^{t}\bm{Y}^{t\top}-\bm{M}^{\star}\right\Vert \,\,\,\,\,  & \leq3C_{\mathrm{op}}\left(\frac{\sigma}{\sigma_{\min}}\sqrt{\frac{n}{p}}+\frac{\lambda}{p\,\sigma_{\min}}\right)\left\Vert \bm{M}^{\star}\right\Vert \label{eq:induction_original_operator-MC_thm-1}
\end{align}
\end{subequations}
for all $0\leq t\leq t_{0}$. The proof is
deferred to Appendix~\ref{subsec:Proof-of-XY-quality}.

\subsection{Proof of Theorem~\ref{thm:main-convex}}

Let $t_{*}\triangleq\arg\min_{0\leq t<t_{0}}\left\Vert \nabla f\left(\bm{X}^{t},\bm{Y}^{t}\right)\right\Vert _{\mathrm{F}}$,
and take $(\bm{X}_{\mathsf{ncvx}},\bm{Y}_{\mathsf{ncvx}})=\left(\bm{X}^{t_{*}}\bm{H}^{t_{*}},\bm{Y}^{t_{*}}\bm{H}^{t_{*}}\right)$ (cf.~\eqref{eq:defn-rotation-H}).
It is straightforward to verify that $(\bm{X}_{\mathsf{ncvx}},\bm{Y}_{\mathsf{ncvx}})$
obeys (i) the small-gradient condition~(\ref{eq:small-gradient-f}),
and (ii) the proximity condition~(\ref{subeq:condition-inf}). We are now positioned to invoke Lemma~\ref{lemma:approx-link-cvx-ncvx-simple}:  for any optimizer $\bm{Z}_{\mathsf{cvx}}$ of~(\ref{eq:convex-LS}), one has
\begin{align}
	\big\|\bm{Z}_{\mathsf{cvx}}-\bm{X}_{\mathsf{ncvx}}\bm{Y}_{\mathsf{ncvx}}^{\top}\|_{\mathrm{F}}
	& \lesssim\frac{\kappa}{c_{\mathrm{inj}}}\frac{1}{\sqrt{\sigma_{\min}}}\left\Vert \nabla f(\bm{X}_{\mathsf{ncvx}},\bm{Y}_{\mathsf{ncvx}})\right\Vert _{\mathrm{F}}\lesssim\frac{\kappa^{2}}{n^{5}}\frac{\lambda}{p} \nonumber\\
	& = \frac{\kappa}{n^{5}}\frac{\lambda}{p\,\sigma_{\min}} (\kappa\sigma_{\min}) ~{=}~ \frac{\kappa}{n^{5}}\frac{\lambda}{p\,\sigma_{\min}} \|\bm{M}^{\star}\| \nonumber\\
	& \lesssim \frac{1}{n^{4}}\frac{\lambda}{p\,\sigma_{\min}} \|\bm{M}^{\star}\|. \label{eq:proximity-UB1}
\end{align}
 The last line arises since $n \gg \kappa$ ---  a consequence of the sample complexity condition $n p \gtrsim \kappa^4 \mu^2 r^2 \log^3 n$ (and hence $n\geq np \gtrsim \kappa^4 \mu^2 r^2 \log^3 n \gg \kappa^4$).
This taken collectively with the property~(\ref{subeq:XY-quality})
implies that
\begin{align*}
\big\|\bm{Z}_{\mathsf{cvx}}-\bm{M}^{\star}\|_{\mathrm{F}} & \leq\big\|\bm{Z}_{\mathsf{cvx}}-\bm{X}_{\mathsf{ncvx}}\bm{Y}_{\mathsf{ncvx}}^{\top}\|_{\mathrm{F}}+\big\|\bm{X}_{\mathsf{ncvx}}\bm{Y}_{\mathsf{ncvx}}^{\top}-\bm{M}^{\star}\|_{\mathrm{F}}\\
 & \lesssim\frac{1}{n^{4}}\frac{\lambda}{p\,\sigma_{\min}} \| \bm{M}^{\star} \| + \kappa\left(\frac{\sigma}{\sigma_{\min}}\sqrt{\frac{n}{p}}+\frac{\lambda}{p\,\sigma_{\min}}\right)\left\Vert \bm{M}^{\star}\right\Vert _{\mathrm{F}}\\
 & \asymp\kappa\left(\frac{\sigma}{\sigma_{\min}}\sqrt{\frac{n}{p}}+\frac{\lambda}{p\,\sigma_{\min}}\right)\left\Vert \bm{M}^{\star}\right\Vert _{\mathrm{F}}.
\end{align*}
%
%where the last line follows from $\kappa\sigma_{\min}\leq\|\bm{M}^{\star}\|_{\mathrm{F}}$.
In other words, since $\bm{X}_{\mathsf{ncvx}}\bm{Y}_{\mathsf{ncvx}}^{\top}$
and $\bm{Z}_{\mathsf{ncvx}}$ are exceedingly close, the error $\bm{Z}_{\mathsf{cvx}}-\bm{M}^{\star}$ is mainly accredited to $\bm{X}_{\mathsf{ncvx}}\bm{Y}_{\mathsf{ncvx}}^{\top}-\bm{M}^{\star}$.
Similar arguments lead to
\begin{align*}
\big\|\bm{Z}_{\mathsf{cvx}}-\bm{M}^{\star}\| & \lesssim\left(\frac{\sigma}{\sigma_{\min}}\sqrt{\frac{n}{p}}+\frac{\lambda}{p\,\sigma_{\min}}\right)\left\Vert \bm{M}^{\star}\right\Vert ,\\
\big\|\bm{Z}_{\mathsf{cvx}}-\bm{M}^{\star}\|_{\infty} & \lesssim\sqrt{\kappa^{3}\mu r}\left(\frac{\sigma}{\sigma_{\min}}\sqrt{\frac{n\log n}{p}}+\frac{\lambda}{p\,\sigma_{\min}}\right)\left\Vert \bm{M}^{\star}\right\Vert _{\infty}.
\end{align*}

We are left with proving the properties of $\bm{Z}_{\mathsf{cvx},r}$. Since $\bm{Z}_{\mathsf{cvx},r}$ is defined to be the best rank-$r$ approximation of $\bm{Z}_{\mathsf{cvx}}$,
%Observe that
% %
% \begin{align*}
% \big\|\bm{Z}_{\mathsf{cvx}}-\bm{X}_{\mathsf{ncvx}}\bm{Y}_{\mathsf{ncvx}}^{\top}\|_{\mathrm{F}} & \lesssim\frac{\kappa^{2}}{n^{5}}\frac{\lambda}{p}=\frac{\kappa^{2}}{n^{5}}\cdot C_{\lambda}\sigma\sqrt{\frac{n}{p}}\frac{1}{\sqrt{r}\sigma_{\min}}\sqrt{r}\sigma_{\min}\\
%  & \leq\frac{1}{n^{4}}\cdot\frac{\sigma}{\sigma_{\min}}\sqrt{\frac{n}{p}}\|\bm{M}^{\star}\|_{\mathrm{F}},
% \end{align*}
%
% where the second line follows from the facts that $n\gg\kappa^{2}$
% and that $\sqrt{r}\sigma_{\min}\leq\|\bm{M}^{\star}\|_{\mathrm{F}}$.
%By the definition of $\bm{Z}_{\mathsf{cvx},r}$,
%
one can invoke \eqref{eq:proximity-UB1} to derive
\[
	\big\|\bm{Z}_{\mathsf{cvx}}-\bm{Z}_{\mathsf{cvx},r}\|_{\mathrm{F}}\leq\big\|\bm{Z}_{\mathsf{cvx}}-\bm{X}_{\mathsf{ncvx}}\bm{Y}_{\mathsf{ncvx}}^{\top}\|_{\mathrm{F}}
	\lesssim \frac{1}{n^{4}}\frac{\lambda}{p\,\sigma_{\min}} \|\bm{M}^{\star}\|,
\]
from which~(\ref{eq:Zcvx-r-bound}) follows. Repeating the above calculations
implies that~(\ref{eq:Zcvx-error}) holds if $\bm{Z}_{\mathsf{cvx}}$ is replaced by $\bm{Z}_{\mathsf{cvx},r}$,
thus concluding the proof.

\section{Prior art\label{sec:Prior-art}}

Nuclear norm minimization, pioneered by the seminal works \cite{RecFazPar07,ExactMC09,CanTao10,fazel2002matrix},\emph{
}has been a popular and principled approach to low-rank matrix recovery.
In the noiseless setting, i.e.~$\bm{E}=\bm{0}$, it amounts to solving
the following constrained convex program
\begin{equation}
\text{minimize}_{\bm{Z}\in\mathbb{R}^{n\times n}}\text{\ensuremath{\left\Vert \bm{Z}\right\Vert _{*}\quad\quad}subject to}\quad\mathcal{P}_{\Omega}\left(\bm{Z}\right)=\mathcal{P}_{\Omega}\left(\bm{M}^{\star}\right),\label{eq:SDP-noiseless}
\end{equation}
which enjoys great theoretical success. Informally, this approach
enables exact recovery of a rank-$r$ matrix $\bm{M}^{\star}\in\mathbb{R}^{n\times n}$
as soon as the sample size is about the order of $nr$ --- the intrinsic
degrees of freedom of a rank-$r$ matrix \cite{Gross2011recovering,recht2011simpler,chen2015incoherence}.
In particular, Gross \cite{Gross2011recovering} blazed a trail by developing an ingenious golfing scheme for dual construction --- an analysis technique that has found applications far beyond matrix completion. 
When it comes to the noisy case, Candès and Plan \cite{CanPla10}
first studied the stability of convex programming when the noise is
bounded and possibly adversarial, followed by \cite{Negahban2012restricted}
and \cite{MR2906869} using two modified convex programs. As we have
already discussed, none of these papers provide optimal statistical guarantees under our model
when $r=O(1)$. Other related papers such as \cite{klopp2014noisy,cai2016matrix}
include similar estimation error bounds and suffer from
similar sub-optimality issues.

Turning to nonconvex optimization, we note that this approach has
recently received much attention for various low-rank matrix factorization
problems, owing to its superior computational advantage compared to
convex programming (e.g.~\cite{KesMonSew2010,jain2013low,candes2014wirtinger,ChenCandes15solving,tu2016low,zhang2017nonconvex}).
The convergence guarantees for matrix completion have been established
for various algorithms such as gradient descent on manifold
\cite{KesMonSew2010,Se2010Noisy}, alternating minimization \cite{jain2013low,hardt2014understanding}, gradient descent \cite{sun2016guaranteed,ma2017implicit,wang2016unified,chen2019nonconvex},
and projected gradient descent \cite{chen2015fast}, provided that
a suitable initialization (like spectral initialization) is available
\cite{KesMonSew2010,jain2013low,sun2016guaranteed,ma2017implicit,chen2018asymmetry}.
Our work is mostly related to \cite{ma2017implicit,chen2019nonconvex},
which studied (vanilla) gradient descent for nonconvex matrix completion. This
algorithm was first analyzed by \cite{ma2017implicit} via a leave-one-out
argument --- a technique that proves useful in analyzing various
statistical algorithms \cite{el2015impact,sur2017likelihood,zhong2017near,chen2017spectral,abbe2017entrywise,li2018nonconvex,ding2018leave,chen2018gradient}.
In the absence of noise and omitting logarithmic factors, \cite{ma2017implicit}
showed that $O(nr^{3})$ samples are sufficient for vanilla GD to
yield $\varepsilon$ accuracy in $O(\log\frac{1}{\varepsilon})$ iterations (without the need of extra regularization procedures); the
sample complexity was further improved to $O(nr^{2})$ by \cite{chen2019nonconvex}.
Apart from gradient descent, other nonconvex methods (e.g.~\cite{rennie2005fast,jain2010guaranteed,wen2012solving,jain2013low,fornasier2011low,vandereycken2013low,lai2013improved,hardt2014understanding,jin2016provable,rohde2011estimation,wei2016guarantees,ding2018leave,gunasekar2013noisy,cao2016poisson,zhao2015nonconvex-estimation})
and landscape\,/\,geometry properties have been investigated \cite{ge2016matrix,chen2017memory,park2017non,ge2017no,shapiro2018matrix};
these are, however, beyond the scope of the current paper.

Another line of works asserted that a large family of SDPs admits
low-rank solutions \cite{barvinok1995problems}, which in turn motivates
the Burer-Monteiro approach \cite{burer2003nonlinear,boumal2016non}.
When applied to matrix completion, however, the generic theoretical
guarantees therein lead to conservative results. Take the noiseless
case (\ref{eq:SDP-noiseless}) for instance: these results revealed
the existence of a solution of rank at most $O(\sqrt{n^{2}p})$, which
however is often much larger the true rank (e.g.~when $r\asymp1$
and $p\asymp\mathrm{poly}\log(n)/n$, one has $\sqrt{n^{2}p}\gg\sqrt{n}\gg r$).
Moreover, this line of works does not imply that all solutions to
the SDP of interest are (approximately) low-rank.

Finally, the connection between convex and nonconvex optimization
has also been explored in line spectral estimation \cite{li2018approximate},
although the context therein is drastically different from ours.

\begin{comment}
In comparison, the focus of this work is to establish an intriguing
connection between convex and nonconvex optimization, thus tightening
stability analysis of convex relaxation.
\end{comment}

\section{Discussion\label{sec:Discussion}}

This paper provides an improved statistical analysis for the natural
convex program (\ref{eq:convex-LS}), without the need of enforcing
additional spikiness constraint. Our theoretical analysis uncovers
an intriguing connection between convex relaxation and nonconvex optimization,
which we believe is applicable to many other problems beyond matrix
completion. Having said that, our current theory leaves open a variety
of important directions for future exploration. Here we sample a few
interesting ones.
\begin{itemize}
\item \emph{Improving dependency on $r$ and $\kappa$. }While our theory
is optimal when $r$ and $\kappa$ are both constants, it becomes
increasingly looser as either $r$ or $\kappa$ grows. For instance,
in the noiseless setting, it has been shown that the sample complexity
for convex relaxation scales as $O(nr)$ --- linear in $r$ and independent
of $\kappa$ --- which is better than the current results.
It is worth noting that  existing theory for nonconvex matrix factorization
typically falls short of providing optimal scaling in $r$ and $\kappa$
\cite{KesMonSew2010,sun2016guaranteed,chen2015fast,ma2017implicit,chen2019nonconvex}.
Thus, tightening the dependency of sample complexity on $r$ and $\kappa$
might call for new analysis tools.
\item \emph{Approximate low-rank structure}. So far our theory is built
upon the assumption that the ground-truth matrix $\bm{M}^{\star}$
is exactly low-rank, which falls short of accommodating the more realistic
scenario where $\bm{M}^{\star}$ is only approximately low-rank. For
the approximate low-rank case, it is not yet clear whether the nonconvex
factorization approach can still serve as a tight proxy. In addition,
the landscape of nonconvex optimization for the approximately low-rank
case \cite{chen2017memory} might shed light on how to handle this
case.
\item \emph{Extension to deterministic noise. }Our current theory --- in
particular, the leave-one-out analysis for the nonconvex approach
--- relies heavily on the randomness assumption (i.e.~i.i.d.~sub-Gaussian)
of the noise. In order to justify the broad applicability of convex
relaxation, it would be interesting to see whether one can generalize
the theory to cover deterministic noise with bounded magnitudes.
\item \emph{Extension to structured matrix completion}. Many applications
involve low-rank matrices that exhibit additional structures, enabling
a further reduction of the sample complexity \cite{Fazel2003Hankel,chen2014robust,cai2019fast}.
For instance, if a matrix is Hankel and low-rank, then the sample
complexity can be $O(n)$ times smaller than the generic low-rank
case. The existing stability guarantee of Hankel matrix completion,
however, is overly pessimistic compared to practical performance \cite{chen2014robust}.
The analysis framework herein might be amenable to the study of Hankel
matrix completion and help close the theory-practice gap.
\item \emph{Extension to robust PCA and blind deconvolution. }Moving beyond
matrix completion, there are other problems that are concerned with
recovering low-rank matrices. Notable examples include robust principal
component analysis \cite{CanLiMaWri09,chandrasekaran2011rank,chen2013low},
blind deconvolution \cite{ahmed2014blind,ling2015self} and blind
demixing \cite{ling2017blind,jung2017blind}. The stability analyses of the convex
relaxation approaches for these problems \cite{zhou2010stable,ahmed2014blind,ling2017blind}
often adopt a similar approach as \cite{CanPla10}, and consequently
are sub-optimal. The insights from the present paper might promise
tighter statistical guarantees for such problems.
\end{itemize}

Finally, we remark that the intimate link between convex and nonconvex optimization enables statistically optimal  inference and uncertainty quantification for noisy matrix completion (e.g.~construction of optimal confidence intervals for each missing entry). The interested readers are referred to our companion paper \cite{chen2019inference} for in-depth discussions.

\begin{comment}
\begin{itemize}
\item Suppose that, in addition to random sub-Gaussian noise, a small fraction
of the observed entries are further corrupted by adversarial outliers
with arbitrary magnitudes. The convex relaxation approach for this
problem also enjoys optimal sample complexity in the noise-free case,
even when a constant fraction of the observed entries are corrupted
by outliers \cite{CanLiMaWri09,chandrasekaran2011rank,chen2013low}.
Our analysis in the current paper might be promising in enabling near-optimal
stability guarantees in the presence of noise, where existing analysis
is sub-optimal \cite{zhou2010stable}.
\item \emph{Extension to blind deconvolution and blind demixing. }Moving
beyond low-rank matrix completion, there are other problems that are
equivalent to recovering a low-rank matrix from its linear measurements.
Notable examples include blind deconvolution \cite{ahmed2014blind,ling2015self}
and blind demixing \cite{ling2017blind}, for which the measurement
mechanisms are often assumed to be semi-random. The stability analysis
of the convex relaxation algorithm in \cite{ahmed2014blind,ling2017blind}
adopts a similar approach as \cite{CanPla10}, and consequently is
sub-optimal. The insight from this paper together with the leave-one-out
analysis in \cite{ma2017implicit,dong2018nonconvex} might lead to
a tighter analysis for such problems.
\end{itemize}
\end{comment}

\section*{Acknowledgements}

Y.~Chen is supported in part by the AFOSR YIP award FA9550-19-1-0030,
by the ARO grant W911NF-18-1-0303, by the ONR grant N00014-19-1-2120, by the NSF grants CCF-1907661 and IIS-1900140, and by the Princeton SEAS innovation award. Y.~Chi is supported in part by ONR under the grants N00014-18-1-2142 and N00014-19-1-2404,
by ARO under the grant W911NF-18-1-0303, and by NSF under the grants
CAREER ECCS-1818571 and CCF-1806154. J.~Fan is supported in part by NSF Grants DMS-1662139 and DMS-1712591, ONR grant N00014-19-1-2120, and NIH Grant R01-GM072611-12. This work was done in part while Y.~Chen was visiting the Kavli Institute for Theoretical Physics (supported in part by NSF grant PHY-1748958). Y.~Chen thanks Emmanuel Candès for motivating discussions about noisy
matrix completion.

\bibliographystyle{alphaabbr}
\bibliography{bibfileNonconvex}

\appendix
%dummy comment inserted by tex2lyx to ensure that this paragraph is not empty%dummy comment inserted by tex2lyx to ensure that this paragraph is not empty%dummy comment inserted by tex2lyx to ensure that this paragraph is not empty

\newpage{}

\section{Preliminaries \label{sec:Preliminaries}}

In this section, we gather a few notations and preliminary facts that
are used throughout the proofs.

To begin with, in view of the incoherence assumption (cf.~Definition~\ref{def:incoherence}), one has 
\begin{equation}
\left\Vert \bm{X}^{\star}\right\Vert _{2,\infty}\leq\sqrt{\mu r/n}\left\Vert \bm{X}^{\star}\right\Vert \qquad\text{and}\qquad\left\Vert \bm{Y}^{\star}\right\Vert _{2,\infty}\leq\sqrt{\mu r/n}\left\Vert \bm{Y}^{\star}\right\Vert .\label{eq:X-Y-incoherence}
\end{equation}
This follows from 
\[
\left\Vert \bm{X}^{\star}\right\Vert _{2,\infty}=\big\|\bm{U}^{\star}\left(\bm{\Sigma}^{\star}\right)^{1/2}\big\|_{2,\infty}\leq\left\Vert \bm{U}^{\star}\right\Vert _{2,\infty}\big\|\left(\bm{\Sigma}^{\star}\right)^{1/2}\big\|\leq\sqrt{\mu r/n}\left\Vert \bm{X}^{\star}\right\Vert .
\]
The bound for $\bm{Y}^{\star}$ follows from the same argument. In addition, we write $A \ll B$ (resp.~$A \gg B$) if there exists a sufficiently small (resp.~large) constant $c$ such that $A \leq c B$ (resp.~$A \geq c B$).

Finally, for notational convenience, we shall often denote 
\begin{equation}
\mathcal{P}_{\Omega}^{\mathsf{debias}}\left(\bm{B}\right)\triangleq\mathcal{P}_{\Omega}\left(\bm{B}\right)-p\bm{B},\qquad\text{for all }\bm{B}\in\mathbb{R}^{n\times n}.\label{eq:defn-Pomega-tilde}
\end{equation}

\section{Exact duality analysis\label{sec:Proof-of-Lemma-unique-minimizer}}

We show in this section that why the first-order optimality condition
is almost sufficient in guaranteeing the uniqueness of the optimizer. The argument is standard, see e.g. \cite{ExactMC09}.

\begin{lemma}\label{lemma:exact-primal-dual}Let $\bm{Z}=\bm{U}\bm{\Sigma}\bm{V}^{\top}$
be the SVD of $\bm{Z}\in\mathbb{R}^{n\times n}$. Denote by $T$ be
the tangent space of $\bm{Z}$ and by $T^{\perp}$ its orthogonal
complement. Suppose that there exists $\bm{W}\in T^{\perp}$ such
that 
\begin{equation}
\frac{1}{\lambda}\mathcal{P}_{\Omega}\big(\bm{M}-\bm{Z}\big)=\bm{U}\bm{V}^{\top}+\bm{W}.\label{eq:dual-definition}
\end{equation}
Then $\bm{Z}$ is the unique minimizer of (\ref{eq:convex-LS}) if
\begin{enumerate}
\item $\left\Vert \bm{W}\right\Vert <1$;
\item The operator $\mathcal{P}_{\Omega}(\cdot)$ restricted to elements
in $T$ is injective, i.e.~$\mathcal{P}_{\Omega}\left(\bm{H}\right)=\bm{0}$
implies $\bm{H}=\bm{0}$ for any $\bm{H}\in T$. 
\end{enumerate}
\end{lemma}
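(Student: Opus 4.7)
The plan is to run the standard convex-duality argument: pick an arbitrary perturbation $\bm{H}\neq\bm{0}$ and show that $g(\bm{Z}+\bm{H})>g(\bm{Z})$, where $g$ is the objective in (\ref{eq:convex-LS}). The two hypotheses of the lemma will each feed exactly one of the two non-negative pieces that the difference $g(\bm{Z}+\bm{H})-g(\bm{Z})$ naturally splits into.

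First I would expand the quadratic data-fit term to get
\[
g(\bm{Z}+\bm{H})-g(\bm{Z}) \;=\; \tfrac{1}{2}\|\mathcal{P}_{\Omega}(\bm{H})\|_{\mathrm{F}}^{2} \;+\; \langle \mathcal{P}_{\Omega}(\bm{Z}-\bm{M}),\,\bm{H}\rangle \;+\; \lambda\bigl(\|\bm{Z}+\bm{H}\|_{*}-\|\bm{Z}\|_{*}\bigr),
\]
and substitute the first-order identity (\ref{eq:dual-definition}), which gives $\mathcal{P}_{\Omega}(\bm{Z}-\bm{M})=-\lambda(\bm{U}\bm{V}^{\top}+\bm{W})$. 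Second, I would invoke the standard subgradient lower bound for the nuclear norm: there exists $\widetilde{\bm{W}}\in T^{\perp}$ with $\|\widetilde{\bm{W}}\|\leq 1$ such that $\langle \widetilde{\bm{W}},\bm{H}\rangle=\|\mathcal{P}_{T^{\perp}}(\bm{H})\|_{*}$, yielding $\|\bm{Z}+\bm{H}\|_{*}\geq\|\bm{Z}\|_{*}+\langle \bm{U}\bm{V}^{\top}+\widetilde{\bm{W}},\bm{H}\rangle$. Here it is crucial that $\widetilde{\bm{W}}$ is chosen to align with $\mathcal{P}_{T^{\perp}}(\bm{H})$, not taken to be the fixed certificate $\bm{W}$.

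Combining these two pieces, the terms involving $\bm{U}\bm{V}^{\top}$ cancel and I am left with
\[
g(\bm{Z}+\bm{H})-g(\bm{Z}) \;\geq\; \tfrac{1}{2}\|\mathcal{P}_{\Omega}(\bm{H})\|_{\mathrm{F}}^{2} \;+\; \lambda\,\langle \widetilde{\bm{W}}-\bm{W},\,\bm{H}\rangle .
\]
Since both $\widetilde{\bm{W}}$ and $\bm{W}$ lie in $T^{\perp}$, the inner product reduces to one against $\mathcal{P}_{T^{\perp}}(\bm{H})$; by the choice of $\widetilde{\bm{W}}$ and by $\|\cdot\|$–$\|\cdot\|_{*}$ duality, this is bounded below by $(1-\|\bm{W}\|)\|\mathcal{P}_{T^{\perp}}(\bm{H})\|_{*}$, which is strictly positive whenever $\mathcal{P}_{T^{\perp}}(\bm{H})\neq\bm{0}$ thanks to the hypothesis $\|\bm{W}\|<1$.

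I would finish by a trivial case split. If $\mathcal{P}_{T^{\perp}}(\bm{H})\neq\bm{0}$, the second term on the right is strictly positive. Otherwise $\bm{H}\in T$ and $\bm{H}\neq\bm{0}$, in which case the injectivity hypothesis on $\mathcal{P}_{\Omega}|_{T}$ forces $\mathcal{P}_{\Omega}(\bm{H})\neq\bm{0}$, so the first term is strictly positive. Either way $g(\bm{Z}+\bm{H})>g(\bm{Z})$, giving uniqueness. There is no real obstacle: the computation is a textbook application of nuclear-norm subdifferential calculus, and the only point needing care is the $\bm{H}$-dependent choice of $\widetilde{\bm{W}}$, which is what converts the assumption $\|\bm{W}\|<1$ into a strict lower bound in the off-tangent direction.
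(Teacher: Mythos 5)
Your proposal is correct and the technical content is the same as the paper's: the key move in both is the $\bm{H}$-dependent choice of the subgradient component $\widetilde{\bm{W}}\in T^{\perp}$ satisfying $\langle\widetilde{\bm{W}},\mathcal{P}_{T^{\perp}}(\bm{H})\rangle=\|\mathcal{P}_{T^{\perp}}(\bm{H})\|_{*}$, followed by the duality bound $\langle\bm{W},\mathcal{P}_{T^{\perp}}(\bm{H})\rangle\leq\|\bm{W}\|\,\|\mathcal{P}_{T^{\perp}}(\bm{H})\|_{*}$, so that $\|\bm{W}\|<1$ yields strict growth off $T$ and the injectivity of $\mathcal{P}_{\Omega}|_{T}$ handles the remaining direction. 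Where you diverge slightly is in the bookkeeping: the paper first checks the first-order condition to conclude $\bm{Z}$ is a minimizer, then proves (via strict convexity of $\|\cdot\|_{\mathrm{F}}^{2}$) that any two minimizers agree on $\Omega$, and only then restricts attention to perturbations with $\mathcal{P}_{\Omega}(\bm{H})=\bm{0}$, so the quadratic data-fit term vanishes from the comparison. You instead retain the full quadratic term $\tfrac{1}{2}\|\mathcal{P}_{\Omega}(\bm{H})\|_{\mathrm{F}}^{2}$ in the expansion of $g(\bm{Z}+\bm{H})-g(\bm{Z})$ and let it drive the case $\mathcal{P}_{T^{\perp}}(\bm{H})=\bm{0}$ directly. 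This is marginally more streamlined: it proves optimality and uniqueness simultaneously in one inequality, and it avoids the need for the paper's auxiliary Claim about minimizers sharing sampled entries. The trade-off is essentially stylistic; both arguments rest on the same duality facts and hypotheses.
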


\begin{comment}
\begin{remark}The injectivity of $\mathcal{P}_{\Omega}(\cdot)$ restricted
to elements in $T$ is guaranteed as long as $\|\frac{1}{p}\mathcal{P}_{T}\mathcal{P}_{\Omega}\mathcal{P}_{T}-\mathcal{P}_{T}\|<1$,
as shown in \cite{ExactMC09}.\end{remark} 
\end{comment}

\begin{proof}[Proof of Lemma \ref{lemma:exact-primal-dual}]

\begin{comment}
@@@@@@@@@

The proof relies heavily on the convexity of the nuclear norm $\|\cdot\|_{*}$
as well as the strict convexity of the squared $\ell_{\mathrm{F}}$
loss. In essence, the first two conditions together correspond to
the first-order optimality condition of (\ref{eq:convex_mc_noisy}),
which guarantees $\hat{\bm{Z}}$ to be a minimizer of (\ref{eq:convex_mc_noisy}).
In addition, with the help of the third condition, i.e. the injectivity
of $\mathcal{P}_{\Omega}(\cdot)$ on $\mathcal{T}$, one can strengthen
the guarantee and certify that $\hat{\bm{Z}}$ is the \emph{unique
}minimizer of (\ref{eq:convex_mc_noisy}). See Appendix \ref{sec:Proof-of-Lemma-unique-minimizer}.

The proof can be divided into two steps: (1) show that $\bm{Z}$ is
a minimizer of (\ref{eq:convex_mc_noisy}) by checking the first order
stationary condition; (2) show its uniqueness. 
\end{comment}

To begin with, the assumption of this lemma implies that 
\[
\bm{U}\bm{V}^{\top}+\bm{W}\in\partial\|\bm{Z}\|_{*},
\]
where $\partial\|\bm{Z}\|_{*}$ denotes the subdifferential of $\|\cdot\|_{*}$
at $\bm{Z}$. This combined with (\ref{eq:dual-definition}) reveals
that
\begin{equation}
\frac{1}{\lambda}\mathcal{P}_{\Omega}\left(\bm{M}-\bm{Z}\right)\in\partial\left\Vert \bm{Z}\right\Vert _{*},\label{eq:1st-order-stationary-condition-nnm-1}
\end{equation}
thus indicating that $\bm{Z}$ is a minimizer of the convex program
(\ref{eq:convex-LS}).

Next, we justify the uniqueness of $\bm{Z}$. Before continuing, we
record a fact regarding the minimizers of~(\ref{eq:convex-LS}).

\begin{claim}\label{claim:P-Omega-H=00003D00003D00003D00003D0}Suppose
that $\bm{Z}_{1}$ and $\bm{Z}_{2}$ are both minimizers of (\ref{eq:convex-LS}).
Then one has $\mathcal{P}_{\Omega}\left(\bm{Z}_{1}\right)=\mathcal{P}_{\Omega}\left(\bm{Z}_{2}\right)$.\end{claim}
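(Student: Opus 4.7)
The plan is to exploit the strict convexity of the data-fitting term (as a function of $\mathcal{P}_\Omega(\bm{Z})$) together with the convexity of the nuclear norm. Let $g_\star$ denote the optimal value of (\ref{eq:convex-LS}), so $g(\bm{Z}_1) = g(\bm{Z}_2) = g_\star$. Consider the midpoint $\bar{\bm{Z}} \triangleq (\bm{Z}_1 + \bm{Z}_2)/2$, which is a valid competitor and therefore satisfies $g(\bar{\bm{Z}}) \geq g_\star$.

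The key observation is that the squared-loss term $h(\bm{Z}) \triangleq \tfrac{1}{2}\sum_{(i,j) \in \Omega}(Z_{ij}-M_{ij})^2 = \tfrac{1}{2}\|\mathcal{P}_\Omega(\bm{Z}-\bm{M})\|_{\mathrm{F}}^2$ is a strictly convex function of $\mathcal{P}_\Omega(\bm{Z})$. Hence, if $\mathcal{P}_\Omega(\bm{Z}_1) \neq \mathcal{P}_\Omega(\bm{Z}_2)$, the parallelogram identity yields the strict inequality
\[
h(\bar{\bm{Z}}) < \tfrac{1}{2}h(\bm{Z}_1) + \tfrac{1}{2}h(\bm{Z}_2).
\]
Combined with convexity of the nuclear norm, $\|\bar{\bm{Z}}\|_* \leq \tfrac{1}{2}\|\bm{Z}_1\|_* + \tfrac{1}{2}\|\bm{Z}_2\|_*$, this gives $g(\bar{\bm{Z}}) < \tfrac{1}{2}g(\bm{Z}_1) + \tfrac{1}{2}g(\bm{Z}_2) = g_\star$, contradicting the optimality of $g_\star$. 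We therefore conclude $\mathcal{P}_\Omega(\bm{Z}_1) = \mathcal{P}_\Omega(\bm{Z}_2)$.

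This argument is entirely standard and there is no real obstacle; the only thing to be careful about is to correctly isolate strict convexity to the data-fit term (since the nuclear norm is only convex, not strictly convex, so uniqueness of $\bm{Z}$ itself does not follow --- only uniqueness of $\mathcal{P}_\Omega(\bm{Z})$ does, which is exactly what the claim asserts).
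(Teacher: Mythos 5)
Your proof is correct and is essentially the paper's proof: both argue by contradiction that if $\mathcal{P}_\Omega(\bm{Z}_1)\neq\mathcal{P}_\Omega(\bm{Z}_2)$, a convex combination strictly improves the objective, using strict convexity of the quadratic data-fit in $\mathcal{P}_\Omega(\bm{Z})$ together with convexity of $\|\cdot\|_*$. The only cosmetic difference is that you take the midpoint $\bar{\bm{Z}}$ while the paper works with a general $\bm{Z}_\alpha=\alpha\bm{Z}_1+(1-\alpha)\bm{Z}_2$; the argument is the same.
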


With this claim at hand, every minimizer of (\ref{eq:convex-LS})
can be written as $\bm{Z}+\bm{H}$ for some $\bm{H}$ obeying $\mathcal{P}_{\Omega}(\bm{H})=\bm{0}$.
It then suffices to prove that for any $\bm{H}\neq\bm{0}$, one has
$g\left(\bm{Z}+\bm{H}\right)>g\left(\bm{Z}\right)$, where $g(\cdot)$
is the objective function in~(\ref{eq:convex-LS}). To this end, we
note that 
\begin{align}
g\left(\bm{Z}+\bm{H}\right) & =\tfrac{1}{2}\left\Vert \mathcal{P}_{\Omega}\left(\bm{Z}+\bm{H}-\bm{M}\right)\right\Vert _{\mathrm{F}}^{2}+\lambda\left\Vert \bm{Z}+\bm{H}\right\Vert _{*}\nonumber \\
 & =\tfrac{1}{2}\left\Vert \mathcal{P}_{\Omega}\left(\bm{Z}-\bm{M}\right)\right\Vert _{\mathrm{F}}^{2}+\lambda\left\Vert \bm{Z}+\bm{H}\right\Vert _{*},\label{eq:gzh-LB1}
\end{align}
where the last relation follows from Claim \ref{claim:P-Omega-H=00003D00003D00003D00003D0}
(i.e.~$\mathcal{P}_{\Omega}(\bm{H})=\bm{0}$). Let $\bm{S}$ be a
subgradient of $\|\cdot\|_{*}$ at point $\bm{Z}$ obeying 
\begin{equation}
\mathcal{P}_{T}\left(\bm{S}\right)=\bm{U}\bm{V}^{\top},\quad\left\Vert \mathcal{P}_{T^{\perp}}\left(\bm{S}\right)\right\Vert \leq1\quad\text{and}\quad\left\langle \mathcal{P}_{T^{\perp}}\left(\bm{S}\right),\mathcal{P}_{T^{\perp}}\left(\bm{H}\right)\right\rangle =\left\Vert \mathcal{P}_{T^{\perp}}\left(\bm{H}\right)\right\Vert _{*}.\label{eq:S-property-1}
\end{equation}
Using the convexity of $\|\cdot\|_{*}$, one can further lower bound
(\ref{eq:gzh-LB1}) by 
\begin{align*}
g\left(\bm{Z}+\bm{H}\right) & \geq\tfrac{1}{2}\left\Vert \mathcal{P}_{\Omega}\left(\bm{Z}-\bm{M}\right)\right\Vert _{\mathrm{F}}^{2}+\lambda\left(\left\Vert \bm{Z}\right\Vert _{*}+\left\langle \bm{S},\bm{H}\right\rangle \right)\\
 & =g\left(\bm{Z}\right)+\lambda\left\langle \bm{S},\bm{H}\right\rangle \\
 & =g\left(\bm{Z}\right)+\lambda\left\langle \bm{U}\bm{V}^{\top}+\bm{W},\bm{H}\right\rangle +\lambda\left\langle \bm{S}-\bm{U}\bm{V}^{\top}-\bm{W},\bm{H}\right\rangle \\
 & \overset{(\text{i})}{=}g\left(\bm{Z}\right)+\lambda\left\langle \bm{S}-\bm{U}\bm{V}^{\top}-\bm{W},\bm{H}\right\rangle \\
 & \overset{(\text{ii})}{=}g\left(\bm{Z}\right)+\lambda\left\langle \mathcal{P}_{T^{\perp}}\left(\bm{S}\right)-\bm{W},\bm{H}\right\rangle .
\end{align*}
Here, (i) follows from our assumption that $\bm{U}\bm{V}^{\top}+\bm{W}$
is supported on $\Omega$ (cf.~(\ref{eq:dual-definition})) and the
fact that $\mathcal{P}_{\Omega}(\bm{H})=\bm{0}$, and (ii) holds since
$\mathcal{P}_{T}(\bm{S})=\bm{U}\bm{V}^{\top}$ (cf.~(\ref{eq:S-property-1})).
We can now expand the above expression as 
\begin{align}
g\left(\bm{Z}+\bm{H}\right) & \geq g\left(\bm{Z}\right)+\lambda\left\langle \mathcal{P}_{T^{\perp}}\left(\bm{S}\right),\mathcal{P}_{T^{\perp}}\left(\bm{H}\right)\right\rangle -\lambda\left\langle \bm{W},\mathcal{P}_{T^{\perp}}\left(\bm{H}\right)\right\rangle \nonumber \\
 & \geq g\left(\bm{Z}\right)+\lambda\left(1-\left\Vert \bm{W}\right\Vert \right)\left\Vert \mathcal{P}_{T^{\perp}}\left(\bm{H}\right)\right\Vert _{*},\label{eq:last-ineq}
\end{align}
where the last inequality holds by using the last property of (\ref{eq:S-property-1})
and invoking the elementary inequality 
\[
\left\langle \bm{W},\mathcal{P}_{T^{\perp}}\left(\bm{H}\right)\right\rangle \leq\left\Vert \bm{W}\right\Vert \left\Vert \mathcal{P}_{T^{\perp}}\left(\bm{H}\right)\right\Vert _{*}.
\]
Given that $\bm{W}$ is assumed to obey $\|\bm{W}\|<1$, one has $g\left(\bm{Z}+\bm{H}\right)>g\left(\bm{Z}\right)$ unless $\mathcal{P}_{T^{\perp}}(\bm{H})=\bm{0}$. However, if $\mathcal{P}_{T^{\perp}}(\bm{H})=\bm{0}$
(and hence $\bm{H}\in T$), then the injectivity assumption together
with the fact that $\mathcal{P}_{\Omega}(\bm{H})=\bm{0}$ forces $\bm{H}=\bm{0}$.
Consequently, any minimizer $\bm{Z}+\bm{H}$ with $\bm{H}\neq\bm{0}$
must satisfy $g\left(\bm{Z}+\bm{H}\right)>g\left(\bm{Z}\right)$,
which results in contradiction. This concludes the proof. \end{proof}

\begin{proof}[Proof of Claim \ref{claim:P-Omega-H=00003D00003D00003D00003D0}]Consider
any minimizers $\bm{Z}_{1}\neq\bm{Z}_{2}$, and suppose instead that
$\mathcal{P}_{\Omega}\left(\bm{Z}_{1}-\bm{Z}_{2}\right)\neq\bm{0}$.
For any $0<\alpha<1$, define 
\[
\bm{Z}_{\alpha}\triangleq\alpha\bm{Z}_{1}+\left(1-\alpha\right)\bm{Z}_{2}.
\]
Since $\|\cdot\|_{*}$ is convex, we have 
\begin{align}
g\left(\bm{Z}_{\alpha}\right) & =\tfrac{1}{2}\left\Vert \mathcal{P}_{\Omega}\left(\alpha\bm{Z}_{1}+\left(1-\alpha\right)\bm{Z}_{2}-\bm{M}\right)\right\Vert _{\mathrm{F}}^{2}+\lambda\left\Vert \alpha\bm{Z}_{1}+\left(1-\alpha\right)\bm{Z}_{2}\right\Vert _{*}\nonumber \\
 & \leq\tfrac{1}{2}\left\Vert \mathcal{P}_{\Omega}\left(\alpha\bm{Z}_{1}+\left(1-\alpha\right)\bm{Z}_{2}-\bm{M}\right)\right\Vert _{\mathrm{F}}^{2}+\alpha\lambda\left\Vert \bm{Z}_{1}\right\Vert _{*}+\left(1-\alpha\right)\lambda\left\Vert \bm{Z}_{2}\right\Vert _{*}.\label{eq:g-alpha}
\end{align}
Furthermore, by the strong convexity of $\|\cdot\|_{\mathrm{{F}}}^{2}$
we have 
\begin{align*}
g\left(\bm{Z}_{\alpha}\right) & <\tfrac{1}{2}\big(\alpha\left\Vert \mathcal{P}_{\Omega}\left(\bm{Z}_{1}-\bm{M}\right)\right\Vert _{\mathrm{F}}^{2}+\left(1-\alpha\right)\left\Vert \mathcal{P}_{\Omega}\left(\bm{Z}_{2}-\bm{M}\right)\right\Vert _{\mathrm{F}}^{2}\big)+\alpha\lambda\left\Vert \bm{Z}_{1}\right\Vert _{*}+\left(1-\alpha\right)\lambda\left\Vert \bm{Z}_{2}\right\Vert _{*}\\
 & =\alpha g\left(\bm{Z}_{1}\right)+\left(1-\alpha\right)g\left(\bm{Z}_{2}\right)=g(\bm{Z}_{1}).
\end{align*}
This contradicts the fact that $\bm{Z}_{1}$ is a minimizer of (\ref{eq:convex-LS}),
thus completing the proof.\end{proof}

\begin{comment}
In addition, observe that

\begin{align}
 & \alpha\left\Vert \mathcal{P}_{\Omega}\left(\bm{Z}_{1}-\bm{M}\right)\right\Vert _{\mathrm{F}}^{2}+\left(1-\alpha\right)\left\Vert \mathcal{P}_{\Omega}\left(\bm{Z}_{2}-\bm{M}\right)\right\Vert _{\mathrm{F}}^{2}-\left\Vert \mathcal{P}_{\Omega}\left(\alpha\bm{Z}_{1}+\left(1-\alpha\right)\bm{Z}_{2}-\bm{M}\right)\right\Vert _{\mathrm{F}}^{2}\nonumber \\
 & \quad=\left(\alpha-\alpha^{2}\right)\left\Vert \mathcal{P}_{\Omega}\left(\bm{Z}_{1}-\bm{M}\right)\right\Vert _{\mathrm{F}}^{2}+\left[\left(1-\alpha\right)-\left(1-\alpha\right)^{2}\right]\left\Vert \mathcal{P}_{\Omega}\left(\bm{Z}_{2}-\bm{M}\right)\right\Vert _{\mathrm{F}}^{2}\nonumber \\
 & \quad\quad-2\alpha\left(1-\alpha\right)\left\langle \mathcal{P}_{\Omega}\left(\bm{Z}_{1}-\bm{M}\right),\mathcal{P}_{\Omega}\left(\bm{Z}_{2}-\bm{M}\right)\right\rangle \nonumber \\
 & \quad=\alpha\left(1-\alpha\right)\left[\left\Vert \mathcal{P}_{\Omega}\left(\bm{Z}_{1}-\bm{M}\right)\right\Vert _{\mathrm{F}}^{2}+\left\Vert \mathcal{P}_{\Omega}\left(\bm{Z}_{2}-\bm{M}\right)\right\Vert _{\mathrm{F}}^{2}-2\left\langle \mathcal{P}_{\Omega}\left(\bm{Z}_{1}-\bm{M}\right),\mathcal{P}_{\Omega}\left(\bm{Z}_{2}-\bm{M}\right)\right\rangle \right]\nonumber \\
 & \quad=\alpha\left(1-\alpha\right)\left\Vert \mathcal{P}_{\Omega}\left(\bm{Z}_{1}-\bm{Z}_{2}\right)\right\Vert _{\mathrm{F}}^{2}>0\label{eq:strict-convexity}
\end{align}
\end{comment}

\section{Connections between convex and nonconvex solutions}

\subsection{Proof of Lemma \ref{lemma:link-cvx-and-ncvx} \label{sec:Proof-of-Lemma-sufficient-condition-for-minimizer}}

First of all, since $(\bm{X},\bm{Y})$ is a stationary point of (\ref{eq:nonconvex_mc_noisy-lambda}),
we have the first-order optimality conditions\begin{subequations}\label{subeq:1st-order-stationary-condition-nonconvex}
\begin{align}
\mathcal{P}_{\Omega}\left(\bm{M}-\bm{X}\bm{Y}^{\top}\right)\bm{Y} & =\lambda\bm{X};\label{eq:1st-order-stationary-condition-nonconvex-X}\\
\left[\mathcal{P}_{\Omega}\left(\bm{M}-\bm{X}\bm{Y}^{\top}\right)\right]^{\top}\bm{X} & =\lambda\bm{Y}.\label{eq:1st-order-stationary-condition-nonconvex-Y}
\end{align}
\end{subequations} As an immediate consequence, one has
\begin{equation}
\bm{X}^{\top}\bm{X}=\lambda^{-1}\bm{X}^{\top}\mathcal{P}_{\Omega}\left(\bm{M}-\bm{X}\bm{Y}^{\top}\right)\bm{Y}=\bm{Y}^{\top}\bm{Y}.\label{eq:exact-balance}
\end{equation}
In words, any stationary point $(\bm{X},\bm{Y})$ has ``balanced''
scale.

Let $\bm{U}\bm{\Sigma}\bm{V}^{\top}$ be the singular value decomposition
of $\bm{X}\bm{Y}^{\top}$ with $\bm{U},\bm{V}\in\mathbb{R}^{n\times r}$
orthonormal and $\bm{\Sigma}\in\mathbb{R}^{r\times r}$ diagonal.
In view of the balanced scale of $(\bm{X},\bm{Y})$ (namely, (\ref{eq:exact-balance}))
and Lemma \ref{lemma:balance_determine}, we can write
\begin{equation}
\bm{X}=\bm{U}\bm{\Sigma}^{1/2}\bm{R}\qquad\text{and}\qquad\bm{Y}=\bm{V}\bm{\Sigma}^{1/2}\bm{R}\label{eq:X-Y-representation}
\end{equation}
for some orthonormal matrix $\bm{R}\in\mathbb{R}^{r\times r}$. Substitution
into (\ref{subeq:1st-order-stationary-condition-nonconvex}) results
in \begin{subequations}\label{subeq:rearranged-1st-order-ncvx}
\begin{align}
\mathcal{P}_{\Omega}\left(\bm{M}-\bm{X}\bm{Y}^{\top}\right)\bm{V} & =\lambda\bm{U};\\
\left[\mathcal{P}_{\Omega}\left(\bm{M}-\bm{X}\bm{Y}^{\top}\right)\right]^{\top}\bm{U} & =\lambda\bm{V},
\end{align}
\end{subequations}implying that the columns of $\bm{U}$ (resp.~$\bm{V}$)
are the left (resp.~right) singular vectors of the matrix $\mathcal{P}_{\Omega}(\bm{M}-\bm{X}\bm{Y}^{\top})$.
We can therefore write
\begin{equation}
\frac{1}{\lambda}\mathcal{P}_{\Omega}\left(\bm{M}-\bm{X}\bm{Y}^{\top}\right)=\bm{U}\bm{V}^{\top}+\bm{W},\label{eq:rank-r-decomposition}
\end{equation}
where $\bm{W}\in T^{\perp}$; recall that $T$ is the tangent space
of $\bm{X}\bm{Y}^{\top}$ and also $\bm{U}\bm{V}^{\top}$. In view of Lemma \ref{lemma:exact-primal-dual},
it suffices to show that $\|\bm{W}\|<1$, which is the content of
the rest of the proof.

One can rewrite $\mathcal{P}_{\Omega}(\bm{M}-\bm{X}\bm{Y}^{\top})$
as
\[
\mathcal{P}_{\Omega}\left(\bm{M}-\bm{X}\bm{Y}^{\top}\right)=p\left(\bm{M}^{\star}-\bm{X}\bm{Y}^{\top}\right)+\mathcal{P}_{\Omega}^{\mathsf{debias}}\left(\bm{M}^{\star}-\bm{X}\bm{Y}^{\top}\right)+\mathcal{P}_{\Omega}\left(\bm{E}\right).
\]
Substitute this identity into (\ref{subeq:rearranged-1st-order-ncvx})
and rearrange terms to obtain
\begin{align*}
\left[p\bm{M}^{\star}+\mathcal{P}_{\Omega}^{\mathsf{debias}}\left(\bm{M}^{\star}-\bm{X}\bm{Y}^{\top}\right)+\mathcal{P}_{\Omega}\left(\bm{E}\right)\right]\bm{V} & =\bm{U}\left(p\bm{\Sigma}+\lambda\bm{I}_{r}\right);\\
\left[p\bm{M}^{\star}+\mathcal{P}_{\Omega}^{\mathsf{debias}}\left(\bm{M}^{\star}-\bm{X}\bm{Y}^{\top}\right)+\mathcal{P}_{\Omega}\left(\bm{E}\right)\right]^{\top}\bm{U} & =\bm{V}\left(p\bm{\Sigma}+\lambda\bm{I}_{r}\right).
\end{align*}
These tell us that the columns of $\bm{U}$ (resp.~$\bm{V}$) are
the left (resp.~right) singular vectors of the matrix
\[
p\bm{M}^{\star}+\mathcal{P}_{\Omega}^{\mathsf{debias}}\left(\bm{M}^{\star}-\bm{X}\bm{Y}^{\top}\right)+\mathcal{P}_{\Omega}\left(\bm{E}\right),
\]
which is equivalent to saying that\footnote{Here, the pre-factor $\lambda$ is chosen to simplify the analysis
later on.}
\begin{equation}
p\bm{M}^{\star}+\mathcal{P}_{\Omega}^{\mathsf{debias}}\left(\bm{M}^{\star}-\bm{X}\bm{Y}^{\top}\right)+\mathcal{P}_{\Omega}\left(\bm{E}\right)=\bm{U}\left(p\bm{\Sigma}+\lambda\bm{I}_{r}\right)\bm{V}^{\top}+\lambda\bm{W}_{2},\label{eq:rank-r-decomposition-new}
\end{equation}
for some $\bm{W}_{2}\in T^{\perp}$. One can then derive from (\ref{eq:rank-r-decomposition}) that
\begin{align*}
\bm{W} & \overset{(\text{i})}{=}\tfrac{1}{\lambda} \mathcal{P}_{T^{\perp}}\left[\mathcal{P}_{\Omega}\left(\bm{M}-\bm{X}\bm{Y}^{\top}\right)\right] \\
 & =\tfrac{1}{\lambda} \mathcal{P}_{T^{\perp}}\left[p\bm{M}^{\star}-p\bm{X}\bm{Y}^{\top}+\mathcal{P}_{\Omega}^{\mathsf{debias}}\left(\bm{M}^{\star}-\bm{X}\bm{Y}^{\top}\right)+\mathcal{P}_{\Omega}\left(\bm{E}\right)\right] \\
 & \overset{(\text{ii})}{=}\tfrac{1}{\lambda} \mathcal{P}_{T^{\perp}}\left[p\bm{M}^{\star}+\mathcal{P}_{\Omega}^{\mathsf{debias}}\left(\bm{M}^{\star}-\bm{X}\bm{Y}^{\top}\right)+\mathcal{P}_{\Omega}\left(\bm{E}\right)\right] \\
 & \overset{(\text{iii})}{=}\tfrac{1}{\lambda} \mathcal{P}_{T^{\perp}}\left[\bm{U}\left(p\bm{\Sigma}+\lambda\bm{I}_{r}\right)\bm{V}^{\top}+\lambda\bm{W}_{2}\right] \\
 & \overset{(\text{iv})}{=} \bm{W}_{2} ,
\end{align*}
where (i), (ii) and (iv) arise from the facts that $\bm{U}\bm{V}^{\top}\in T$, $\bm{X}\bm{Y}^{\top}\in T$
and $\bm{U}(p\bm{\Sigma}+\lambda\bm{I}_{r})\bm{V}^{\top}\in T$, respectively,
and (iii) relies on the identity (\ref{eq:rank-r-decomposition-new}).

It then suffices to control $\|\bm{W}_{2}\|$. To this end, apply
Weyl's inequality to (\ref{eq:rank-r-decomposition-new}) to obtain
that: for $r+1\leq i\leq n$, the $i$th largest singular value of
$\bm{U}(p\bm{\Sigma}+\lambda\bm{I}_{r})\bm{V}^{\top}+\lambda\bm{W}_{2}$
obeys
\begin{align*}
\sigma_{i}\left(\bm{U}\left(p\bm{\Sigma}+\lambda\bm{I}_{r}\right)\bm{V}^{\top}+\lambda\bm{W}_{2}\right) & \leq p\sigma_{i}\left(\bm{M}^{\star}\right)+\left\Vert \mathcal{P}_{\Omega}^{\mathsf{debias}}\left(\bm{M}^{\star}-\bm{X}\bm{Y}^{\top}\right)+\mathcal{P}_{\Omega}\left(\bm{E}\right)\right\Vert \\
 & \leq\left\Vert \mathcal{P}_{\Omega}^{\mathsf{debias}}\left(\bm{M}^{\star}-\bm{X}\bm{Y}^{\top}\right)\right\Vert +\left\Vert \mathcal{P}_{\Omega}\left(\bm{E}\right)\right\Vert \\
 & <\lambda,
\end{align*}
where the second inequality comes from the fact that $\bm{M}^{\star}$
has rank $r$ (so that $\sigma_{i}(\bm{M}^{\star})=0$ for $r+1\leq i\leq n$)
as well as the triangle inequality, and the last inequality follows
from the assumptions of the lemma. Furthermore, it is seen that $\bm{U}(p\bm{\Sigma}+\lambda\bm{I}_{r})\bm{V}^{\top}$
has rank $r$ and all of its singular values are at least $\lambda$.
These facts taken collectively demonstrate that
\[
\|\bm{W}\|=\left\Vert \bm{W}_{2}\right\Vert =\tfrac{1}{\lambda}\max_{r<i\leq n}\sigma_{i}\left(\bm{U}\left(p\bm{\Sigma}+\lambda\bm{I}_{r}\right)\bm{V}^{\top}+\lambda\bm{W}_{2}\right)<1.
\]
This together with Lemma \ref{lemma:exact-primal-dual} completes
the proof.

\subsection{Proof of Lemma \ref{lemma:approx-link-cvx-ncvx-simple} \label{sec:Proof-of-Lemma-approx-link-cvx-ncvx-simple}}

We begin by collecting a few simple properties resulting from our
assumptions. By definition, the gradient of $f(\cdot,\cdot)$ in (\ref{eq:nonconvex_mc_noisy})
is given by
\[
\nabla f\left(\bm{X},\bm{Y}\right)=\frac{1}{p}\left[\begin{array}{c}
\mathcal{P}_{\Omega}\left(\bm{X}\bm{Y}^{\top}-\bm{M}\right)\bm{Y}+\lambda\bm{X}\\
\left[\mathcal{P}_{\Omega}\left(\bm{X}\bm{Y}^{\top}-\bm{M}\right)\right]^{\top}\bm{X}+\lambda\bm{Y}
\end{array}\right],
\]
which together with the small-gradient assumption $\|\nabla f(\bm{X},\bm{Y})\|_{\mathrm{F}}\leq c\lambda\sqrt{c_{\text{inj}}\,p\sigma_{\min}/\kappa^{2}}/p$
implies that\begin{subequations}\label{subeq:assumption-approx-stationary}
\begin{align}
\left\Vert \mathcal{P}_{\Omega}\left(\bm{X}\bm{Y}^{\top}-\bm{M}\right)\bm{Y}+\lambda\bm{X}\right\Vert _{\mathrm{F}} & \leq p\left\Vert \nabla f\left(\bm{X},\bm{Y}\right)\right\Vert _{\mathrm{F}}\leq c\lambda\sqrt{c_{\text{inj}}\,p\sigma_{\min}/\kappa^{2}};\label{eq:assumption-approx-stationary-X}\\
\big\|\big(\mathcal{P}_{\Omega}(\bm{X}\bm{Y}^{\top}-\bm{M})\big)^{\top}\bm{X}+\lambda\bm{Y}\big\|_{\mathrm{F}} & \leq p\left\Vert \nabla f\left(\bm{X},\bm{Y}\right)\right\Vert _{\mathrm{F}}\leq c\lambda\sqrt{c_{\text{inj}}\,p\sigma_{\min}/\kappa^{2}}.\label{eq:assumption-approx-stationary-Y}
\end{align}
\end{subequations}Throughout the proof, we let the SVD of $\bm{X}\bm{Y}^{\top}$
be $\bm{X}\bm{Y}^{\top}=\bm{U}\bm{\Sigma}\bm{V}^{\top}$, and denote
by $T$ the tangent space of $\bm{X}\bm{Y}^{\top}$ and by $T^{\perp}$
its orthogonal complement. Additionally, our assumption regarding
the singular values of $\bm{X}$ and $\bm{Y}$ implies that
\begin{align}
\sigma_{\min}/2\leq\sigma_{\min}\left(\bm{\Sigma}\right) & \leq\sigma_{\max}\left(\bm{\Sigma}\right)\leq2\sigma_{\max}.\label{eq:claim-1}
\end{align}
This can be easily seen from the following two inequalities
\begin{align*}
\sigma_{\max}\left(\bm{\Sigma}\right) & =\left\Vert \bm{X}\bm{Y}^{\top}\right\Vert \leq\left\Vert \bm{X}\right\Vert \left\Vert \bm{Y}\right\Vert \leq2\sigma_{\max};\\
\sigma_{\min}\left(\bm{\Sigma}\right) & =\sigma_{\min}\left(\bm{X}\bm{Y}^{\top}\right)\geq\sigma_{\min}\left(\bm{X}\right)\sigma_{\min}\left(\bm{Y}\right)\geq\sigma_{\min}/2.
\end{align*}
Before proceeding, we record a claim that will prove useful in the
subsequent analysis.

\begin{claim}\label{claim:approx-stat-cvx} Under the notations
and assumptions of Lemma \ref{lemma:approx-link-cvx-ncvx-simple},
one has
\begin{equation}
\mathcal{P}_{\Omega}\left(\bm{X}\bm{Y}^{\top}-\bm{M}\right)=-\lambda\bm{U}\bm{V}^{\top}+\bm{R},\label{eq:lem:approx-dual-cond1}
\end{equation}
where $\bR$ is some residual matrix satisfying
\begin{equation}
\Fnorm{\cP_{T}(\bR)}\leq72\kappa\frac{p}{\sqrt{\sigma_{\min}}}\left\Vert \nabla f\left(\bm{X},\bm{Y}\right)\right\Vert _{\mathrm{F}}\quad\mathrm{and}\quad\norm{\cP_{T^{\perp}}(\bR)}<\lambda/2.\label{eq:lem:approx-dual-cond2}
\end{equation}
\end{claim}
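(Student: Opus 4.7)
First I would set $\bm{R} := \mathcal{P}_\Omega(\bm{X}\bm{Y}^\top - \bm{M}) + \lambda\bm{U}\bm{V}^\top$, which makes \eqref{eq:lem:approx-dual-cond1} immediate. It then remains to verify the two norm bounds in \eqref{eq:lem:approx-dual-cond2}.

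For the tangent bound $\|\mathcal{P}_T(\bm{R})\|_{\mathrm{F}}$, I would manipulate the approximate stationarity conditions \eqref{subeq:assumption-approx-stationary}, namely $\bm{A}\bm{Y} = -\lambda\bm{X} + \bm{G}_X$ and $\bm{A}^{\top}\bm{X} = -\lambda\bm{Y} + \bm{G}_Y$, where $\bm{A} := \mathcal{P}_\Omega(\bm{X}\bm{Y}^\top - \bm{M})$ and $\max\{\|\bm{G}_X\|_{\mathrm{F}}, \|\bm{G}_Y\|_{\mathrm{F}}\} \leq p\|\nabla f(\bm{X},\bm{Y})\|_{\mathrm{F}}$. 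Right-multiplying the first identity by $(\bm{Y}^\top\bm{Y})^{-1}\bm{Y}^\top$ yields a closed-form expression for $\bm{A}\bm{V}\bm{V}^\top$, and left-multiplying the transpose of the second by $\bm{X}(\bm{X}^\top\bm{X})^{-1}$ yields one for $\bm{U}\bm{U}^\top\bm{A}$. Combining these via $\mathcal{P}_T(\bm{Z}) = \bm{U}\bm{U}^\top\bm{Z} + \bm{Z}\bm{V}\bm{V}^\top - \bm{U}\bm{U}^\top\bm{Z}\bm{V}\bm{V}^\top$ and adding $\lambda\bm{U}\bm{V}^\top$ produces an identity of the form $\mathcal{P}_T(\bm{R}) = \lambda\bigl(\bm{U}\bm{V}^\top - \bm{X}(\bm{Y}^\top\bm{Y})^{-1}\bm{Y}^\top\bigr) + (\text{residuals in } \bm{G}_X, \bm{G}_Y)$. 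The leading ``imbalance'' matrix vanishes when $\bm{X}^\top\bm{X} = \bm{Y}^\top\bm{Y}$; in the approximate case, the elementary identity $\lambda(\bm{X}^\top\bm{X} - \bm{Y}^\top\bm{Y}) = \bm{X}^\top\bm{G}_X - \bm{G}_Y^\top\bm{Y}$ (obtained by combining the two first-order relations) together with the singular-value bounds $\sigma(\bm{X}),\sigma(\bm{Y}) \in [\sqrt{\sigma_{\min}/2}, \sqrt{2\sigma_{\max}}]$ controls the imbalance, and the $\kappa/\sqrt{\sigma_{\min}}$ factor advertised in the claim emerges from the $\sqrt{\sigma_{\max}/\sigma_{\min}}=\sqrt{\kappa}$ in the imbalance estimate and the $1/\sqrt{\sigma_{\min}}$ from the pseudo-inverses that multiply each $\bm{G}$ residual.

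For the normal bound $\|\mathcal{P}_{T^\perp}(\bm{R})\|$, I would use $\bm{U}\bm{V}^\top \in T$ to reduce to $\mathcal{P}_{T^\perp}(\bm{R}) = \mathcal{P}_{T^\perp}(\bm{A})$. Substituting $\bm{A} = p(\bm{X}\bm{Y}^\top - \bm{M}^\star) + \mathcal{P}_\Omega^{\mathsf{debias}}(\bm{X}\bm{Y}^\top - \bm{M}^\star) - \mathcal{P}_\Omega(\bm{E})$, applying the sandwich form $\mathcal{P}_{T^\perp}(\bm{Z}) = (\bm{I}-\bm{U}\bm{U}^\top)\bm{Z}(\bm{I}-\bm{V}\bm{V}^\top)$, and using $(\bm{I}-\bm{U}\bm{U}^\top)\bm{X} = \bm{0}$ makes the $p\bm{X}\bm{Y}^\top$ contribution drop out. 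Together with Condition~\ref{assumption:link-cvx-ncvx-noise} this yields the intermediate bound $\|\mathcal{P}_{T^\perp}(\bm{R})\| \leq p\|(\bm{I}-\bm{U}\bm{U}^\top)(\bm{M}^\star - \bm{X}\bm{Y}^\top)(\bm{I}-\bm{V}\bm{V}^\top)\| + \lambda/4 \leq p\|\bm{M}^\star - \bm{X}\bm{Y}^\top\| + \lambda/4$.

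The main obstacle is to finish with $p\|\bm{M}^\star - \bm{X}\bm{Y}^\top\| < \lambda/4$ purely from the hypotheses of Lemma~\ref{lemma:approx-link-cvx-ncvx-simple}. My plan is to rearrange the definition of $\bm{R}$ into the algebraic identity $p(\bm{M}^\star - \bm{X}\bm{Y}^\top) = \lambda\bm{U}\bm{V}^\top - \bm{R} + \mathcal{P}_\Omega^{\mathsf{debias}}(\bm{M}^\star - \bm{X}\bm{Y}^\top) - \mathcal{P}_\Omega(\bm{E})$, then invoke the injectivity Condition~\ref{assumption:link-cvx-ncvx-injectivity} after projecting onto $T$, upgrading the previously established control on $\mathcal{P}_T(\bm{R})$ into Frobenius-norm control on the $T$-projection of $\bm{M}^\star - \bm{X}\bm{Y}^\top$. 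Since $\bm{M}^\star - \bm{X}\bm{Y}^\top$ has rank at most $2r$, its spectral norm is bounded by its Frobenius norm; combined with the assumed smallness of $\|\nabla f(\bm{X},\bm{Y})\|_{\mathrm{F}}$ (which drives the $\bm{G}$ residuals and the imbalance to be small) and the $\lambda/8$-bounds in Condition~\ref{assumption:link-cvx-ncvx-noise}, this will deliver the required strict inequality and complete the verification of \eqref{eq:lem:approx-dual-cond2}.
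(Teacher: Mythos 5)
Your construction of $\bm{R}$ and your handling of the tangent projection $\|\mathcal{P}_T(\bm{R})\|_{\mathrm{F}}$ are in the right spirit (the paper instead passes through the representation $\bm{X}=\bm{U}\bm{\Sigma}^{1/2}\bm{Q}$, $\bm{Y}=\bm{V}\bm{\Sigma}^{1/2}\bm{Q}^{-\top}$ from Claim~\ref{claim:balancing}, but your pseudo-inverse version can be made to work). The genuine gap is in the plan for $\|\mathcal{P}_{T^{\perp}}(\bm{R})\|<\lambda/2$. Your intermediate reduction $\|\mathcal{P}_{T^{\perp}}(\bm{R})\|\le p\|\mathcal{P}_{T^{\perp}}(\bm{M}^{\star})\|+\lambda/4$ is fine, but the proposed route to $p\|\mathcal{P}_{T^{\perp}}(\bm{M}^{\star})\|<\lambda/4$ is circular: projecting your own rearranged identity onto $T^{\perp}$ annihilates both $\lambda\bm{U}\bm{V}^{\top}$ and $\bm{X}\bm{Y}^{\top}$ and yields exactly
\[
p\,\mathcal{P}_{T^{\perp}}(\bm{M}^{\star}) \;=\; -\,\mathcal{P}_{T^{\perp}}(\bm{R}) + \mathcal{P}_{T^{\perp}}\bigl(\mathcal{P}_{\Omega}^{\mathsf{debias}}(\bm{X}\bm{Y}^{\top}-\bm{M}^{\star})\bigr) - \mathcal{P}_{T^{\perp}}\bigl(\mathcal{P}_{\Omega}(\bm{E})\bigr),
\]
so controlling $p\|\mathcal{P}_{T^{\perp}}(\bm{M}^{\star})\|$ is, up to a $\lambda/4$ slack, the very thing you are trying to prove. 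Projecting onto $T$ instead (to invoke Condition~\ref{assumption:link-cvx-ncvx-injectivity}) does not help either: the term $\lambda\bm{U}\bm{V}^{\top}$ survives and already contributes $\lambda\sqrt{r}$ in Frobenius norm, so the resulting bound on $p\|\mathcal{P}_{T}(\bm{M}^{\star}-\bm{X}\bm{Y}^{\top})\|_{\mathrm{F}}$ is on the scale $\lambda\sqrt{r}$, far larger than $\lambda/4$, and says nothing about the $T^{\perp}$ component anyway. More fundamentally, the hypotheses of Lemma~\ref{lemma:approx-link-cvx-ncvx-simple} never assert that $\bm{X}\bm{Y}^{\top}$ is close to $\bm{M}^{\star}$, so there is no \emph{a priori} smallness of $\|\mathcal{P}_{T^{\perp}}(\bm{M}^{\star})\|$ available.

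The paper avoids ever isolating $\mathcal{P}_{T^{\perp}}(\bm{M}^{\star})$. Using the first-order conditions and the $\bm{Q}$-representation it writes
\[
p\bm{M}^{\star}+\mathcal{P}_{\Omega}(\bm{E})-\mathcal{P}_{\Omega}^{\mathsf{debias}}(\bm{X}\bm{Y}^{\top}-\bm{M}^{\star}) \;=\; \bm{U}\bigl(p\bm{\Sigma}+\lambda\bm{\Sigma}^{1/2}\bm{Q}\bm{Q}^{\top}\bm{\Sigma}^{-1/2}\bigr)\bm{V}^{\top}+\tilde{\bm{R}},
\]
with $\mathcal{P}_{T^{\perp}}(\tilde{\bm{R}})=\mathcal{P}_{T^{\perp}}(\bm{R})$, proves $\|\mathcal{P}_{T}(\tilde{\bm{R}})\|\le\lambda/4$, and then applies Weyl's inequality: since $p\bm{M}^{\star}$ has rank $r$, every singular value of the left side beyond the $r$-th is $<\lambda/2$. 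The rank-$r$ matrix $\bm{U}(\cdots)\bm{V}^{\top}$ on the right has all $r$ nonzero singular values $>\lambda/2$ and is row- and column-orthogonal to $\mathcal{P}_{T^{\perp}}(\tilde{\bm{R}})$, so the singular values of the sum are the union of those of the two pieces. Hence every singular value of $\mathcal{P}_{T^{\perp}}(\tilde{\bm{R}})$ is among the trailing ones and is $<\lambda/2$. This Weyl-plus-rank-counting step is the missing idea in your proposal, and without it the normal-space bound does not close.
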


With Claim \ref{claim:approx-stat-cvx} in place, we are ready to
prove Lemma \ref{lemma:approx-link-cvx-ncvx-simple}. Let $\bm{Z}_{\mathsf{cvx}}$
be any minimizer of (\ref{eq:convex-LS}) and denote $\bDelta\triangleq\bm{Z}_{\mathsf{cvx}}-\bX\bY^{\top}$.
The proof can be divided into the following steps.
\begin{itemize}
\item First, show that the difference $\bm{\Delta}$ primarily lies in the
tangent space of $\bm{X}\bm{Y}^{\top}$; see (\ref{eq:lem_approx_nuclear_F}).
\item Next, utilize this property to connect $\Fnorm{\cP_{\Omega}(\bDelta)}^{2}$
with the size of the gradient $\nabla f(\bm{X},\bm{Y})$; see (\ref{eq:lem_approx_main1}).
\item In the end, obtain a lower bound on $\Fnorm{\cP_{\Omega}(\bDelta)}^{2}$
in terms of $\|\bm{\Delta}\|_{\mathrm{F}}$ using the injectivity
property; see (\ref{eq:lem_approx_main2}).
\end{itemize}
The desired upper bound on $\|\bm{\Delta}\|_{\mathrm{F}}$ advertised
in the lemma then follows by combining these results. In what follows,
we shall carry out these steps one by one.
\begin{enumerate}
\item The optimality of $\bm{Z}_{\mathsf{cvx}}=\bX\bY^{\top}+\bDelta$ reveals
that
\[
\tfrac{1}{2}\Fnorm{\cP_{\Omega}\left(\bX\bY^{\top}+\bDelta-\bM\right)}^{2}+\lambda\nuclearnorm{\bX\bY^{\top}+\bDelta}\leq\tfrac{1}{2}\Fnorm{\cP_{\Omega}\left(\bX\bY^{\top}-\bM\right)}^{2}+\lambda\big\|\bX\bY^{\top}\big\|_{*}.
\]
A little algebra allows us to rearrange terms as follows
\begin{equation}
\tfrac{1}{2}\Fnorm{\cP_{\Omega}(\bDelta)}^{2}\leq-\inner{\cP_{\Omega}\left(\bX\bY^{\top}-\bM\right)}{\bDelta}+\lambda\big\|\bX\bY^{\top}\big\|_{*}-\lambda\big\|\bX\bY^{\top}+\bm{\Delta}\big\|_{*}.\label{eq:master-inequality}
\end{equation}
In addition, it follows from the convexity of $\|\cdot\|_{*}$ that
\begin{equation}
\left\Vert \bm{X}\bm{Y}^{\top}+\bm{\Delta}\right\Vert _{*}\geq\left\Vert \bm{X}\bm{Y}^{\top}\right\Vert _{*}+\left\langle \bm{U}\bm{V}^{\top}+\bm{W},\bm{\Delta}\right\rangle \label{eq:nuclear-norm-convex}
\end{equation}
for any $\bm{W}\in T^{\perp}$ obeying $\|\bm{W}\|\leq1$, where $\bm{U}\bm{V}^{\top}+\bm{W}$
serves as a subgradient of $\|\cdot\|_{*}$ at $\bm{X}\bm{Y}^{\top}$.
In what follows, we shall pick $\bm{W}$ such that $\inner{\bW}{\bDelta}=\|\cP_{T^{\perp}}(\bDelta)\|_{*}$.
Combining this with (\ref{eq:master-inequality}) and (\ref{eq:nuclear-norm-convex}),
we reach
\begin{equation}
\begin{aligned}\tfrac{1}{2}\left\Vert \mathcal{P}_{\Omega}\left(\bm{\Delta}\right)\right\Vert _{\mathrm{F}}^{2} & \leq-\inner{\cP_{\Omega}\left(\bX\bY^{\top}-\bm{M}\right)}{\bDelta}-\lambda\inner{\bU\bV^{\top}}{\bDelta}-\lambda\inner{\bW}{\bDelta}\\
 & =-\inner{\cP_{\Omega}\left(\bX\bY^{\top}-\bm{M}\right)}{\bDelta}-\lambda\inner{\bU\bV^{\top}}{\bDelta}-\lambda\nuclearnorm{\cP_{T^{\perp}}\left(\bDelta\right)}.
\end{aligned}
\label{eq:lem_approx_dual_lower_bound_1}
\end{equation}
This together with the decomposition \eqref{eq:lem:approx-dual-cond1}
leads to
\begin{align}
0\leq\tfrac{1}{2}\left\Vert \mathcal{P}_{\Omega}\left(\bm{\Delta}\right)\right\Vert _{\mathrm{F}}^{2} & \leq-\inner{\bR}{\bDelta}-\lambda\nuclearnorm{\cP_{T^{\perp}}\left(\bDelta\right)}\nonumber \\
 & =-\inner{\cP_{T}(\bR)}{\bDelta}-\inner{\cP_{T^{\perp}}(\bR)}{\bDelta}-\lambda\nuclearnorm{\cP_{T^{\perp}}\left(\bDelta\right)},\label{eq:master-inequality-2}
\end{align}
and therefore
\begin{align}
\inner{\cP_{T}(\bR)}{\bDelta}+\inner{\cP_{T^{\perp}}(\bR)}{\bDelta}+\lambda\nuclearnorm{\cP_{T^{\perp}}\left(\bDelta\right)} & \leq0.\label{eq:master-inequality-2-1}
\end{align}
In addition, elementary inequalities give
\begin{align*}
 & -\Fnorm{\cP_{T}(\bR)}\Fnorm{\cP_{T}(\bDelta)}-\norm{\cP_{T^{\perp}}(\bR)}\nuclearnorm{\cP_{T^{\perp}}(\bDelta)}+\lambda\nuclearnorm{\cP_{T^{\perp}}(\bDelta)}\\
 & \qquad\qquad\leq\inner{\cP_{T}(\bR)}{\bDelta}+\inner{\cP_{T^{\perp}}(\bR)}{\bDelta}+\lambda\nuclearnorm{\cP_{T^{\perp}}\left(\bDelta\right)}\leq0.
\end{align*}
From the condition \eqref{eq:lem:approx-dual-cond2} we have $\norm{\cP_{T^{\perp}}(\bR)}\leq\lambda/2$,
and hence the above inequality gives
\begin{equation}
\Fnorm{\cP_{T}(\bR)}\Fnorm{\cP_{T}(\bDelta)}\geq-\norm{\cP_{T^{\perp}}(\bR)}\nuclearnorm{\cP_{T^{\perp}}(\bDelta)}+\lambda\nuclearnorm{\cP_{T^{\perp}}(\bDelta)}\geq\tfrac{\lambda}{2}\nuclearnorm{\cP_{T\perp}(\bDelta)},\label{eq:useful-ineq-3}
\end{equation}
which together with the condition \eqref{eq:lem:approx-dual-cond2}
on $\Fnorm{\cP_{T}(\bR)}$ and the small gradient assumption \eqref{eq:small-gradient-f}
yields
\begin{equation}
\nuclearnorm{\cP_{T^{\perp}}(\bDelta)}\leq144\kappa\frac{p}{\lambda\sqrt{\sigma_{\min}}}\left\Vert \nabla f\left(\bm{X},\bm{Y}\right)\right\Vert _{\mathrm{F}}\Fnorm{\cP_{T}(\bDelta)}\leq144c\sqrt{c_{\text{inj}}p}\Fnorm{\cP_{T}(\bDelta)}.\label{eq:lem_approx_nuclear_F}
\end{equation}
This essentially means that $\bDelta$ lies primarily in the tangent
space of $\bX\bY^{\top}$ for $c$ sufficiently small. As an immediate
consequence,
\begin{equation}
\Fnorm{\cP_{T^{\perp}}(\bDelta)}\leq\nuclearnorm{\cP_{T^{\perp}}(\bDelta)}\leq144c\sqrt{c_{\text{inj}}p}\Fnorm{\cP_{T}(\bDelta)}\leq\Fnorm{\cP_{T}(\bDelta)},\label{eq:lem_approx_dual_nuclear-norm-domination}
\end{equation}
as long as $c$ is sufficiently small. Note that we also use the elementary
fact that $c_{\text{inj}}\leq1/p$ (otherwise we will have the contradictory
inequality $p^{-1}\|\mathcal{P}_{\Omega}(\bm{H})\|_{\mathrm{F}}^{2}\geq c_{\text{inj}}\|\bm{H}\|_{\mathrm{F}}^{2}>p^{-1}\|\bm{H}\|_{\mathrm{F}}^{2}$).
\item Continue the upper bound in (\ref{eq:master-inequality-2}) to obtain
\begin{align*}
\tfrac{1}{2}\Fnorm{\cP_{\Omega}(\bDelta)}^{2} & \leq-\inner{\cP_{T}(\bR)}{\bDelta}-\inner{\cP_{T^{\perp}}(\bR)}{\bDelta}-\lambda\nuclearnorm{\cP_{T^{\perp}}\left(\bDelta\right)}\\
 & \leq\Fnorm{\cP_{T}(\bR)}\Fnorm{\cP_{T}(\bDelta)}-\tfrac{\lambda}{2}\nuclearnorm{\cP_{T^{\perp}}(\bDelta)}.
\end{align*}
Here, the last line uses the fact that $-\inner{\cP_{T^{\perp}}(\bR)}{\bDelta}\leq\|\cP_{T^{\perp}}(\bR)\|\cdot\|\mathcal{P}_{T^{\perp}}(\bDelta)\|_{*}\leq\frac{\lambda}{2}\nuclearnorm{\cP_{T^{\perp}}(\bDelta)}$,
which follows from (\ref{eq:lem:approx-dual-cond2}). Therefore, using
the condition (\ref{eq:lem:approx-dual-cond2}) we reach
\begin{align}
\frac{1}{2}\left\Vert \mathcal{P}_{\Omega}\left(\bm{\Delta}\right)\right\Vert _{\mathrm{F}}^{2} & \leq\Fnorm{\cP_{T}(\bR)}\Fnorm{\cP_{T}(\bDelta)}\leq72\kappa\frac{p}{\sqrt{\sigma_{\min}}}\left\Vert \nabla f\left(\bm{X},\bm{Y}\right)\right\Vert _{\mathrm{F}}\left\Vert \bm{\Delta}\right\Vert _{\mathrm{F}}.\label{eq:lem_approx_main1}
\end{align}
\item We are left with lower bounding $\Fnorm{\cP_{\Omega}(\bDelta)}^{2}$.
Using the decomposition $\bDelta=\cP_{T}(\bDelta)+\cP_{T^{\perp}}(\bDelta)$,
we obtain
\begin{align*}
\tfrac{1}{\sqrt{p}}\Fnorm{\cP_{\Omega}(\bDelta)} & =\tfrac{1}{\sqrt{p}}\Fnorm{\cP_{\Omega}\cP_{T}(\bDelta)+\cP_{\Omega}\cP_{T^{\perp}}(\bDelta)}\geq\tfrac{1}{\sqrt{p}}\Fnorm{\cP_{\Omega}\cP_{T}(\bDelta)}-\tfrac{1}{\sqrt{p}}\Fnorm{\cP_{\Omega}\cP_{T^{\perp}}(\bDelta)}\\
 & \geq\sqrt{c_{\mathrm{inj}}}\Fnorm{\cP_{T}(\bDelta)}-\tfrac{1}{\sqrt{p}}\Fnorm{\cP_{T^{\perp}}(\bDelta)},
\end{align*}
where the last inequality follows from the injectivity assumption
(\ref{eq:assumption-injectivity}). In addition, \eqref{eq:lem_approx_nuclear_F}
implies
\[
\tfrac{1}{\sqrt{p}}\Fnorm{\cP_{T^{\perp}}(\bDelta)}\leq\tfrac{1}{\sqrt{p}}\nuclearnorm{\cP_{T^{\perp}}(\bDelta)}\leq\tfrac{1}{\sqrt{p}}144c\sqrt{c_{\text{inj}}p}\Fnorm{\cP_{T}(\bDelta)}\leq\tfrac{\sqrt{c_{\mathrm{inj}}}}{2}\Fnorm{\cP_{T}(\bDelta)}
\]
as long as $c$ is sufficiently small. As a result,
\[
\tfrac{1}{\sqrt{p}}\Fnorm{\cP_{\Omega}(\bDelta)}\geq\tfrac{\sqrt{c_{\mathrm{inj}}}}{2}\Fnorm{\cP_{T}(\bDelta)}.
\]
In addition, by \eqref{eq:lem_approx_dual_nuclear-norm-domination}
we have
\[
\Fnorm{\bDelta}\leq\Fnorm{\cP_{T}(\bDelta)}+\Fnorm{\cP_{T^{\perp}}(\bDelta)}\leq2\Fnorm{\cP_{T}(\bDelta)},
\]
and therefore
\begin{equation}
\tfrac{1}{\sqrt{p}}\Fnorm{\cP_{\Omega}(\bDelta)}\geq\tfrac{\sqrt{c_{\mathrm{inj}}}}{2}\Fnorm{\cP_{T}(\bDelta)}\geq\tfrac{\sqrt{c_{\mathrm{inj}}}}{4}\Fnorm{\bDelta}.\label{eq:lem_approx_main2}
\end{equation}
\end{enumerate}
Taking \eqref{eq:lem_approx_main1} and \eqref{eq:lem_approx_main2}
collectively yields
\[
\tfrac{c_{\mathrm{inj}}}{32}\Fnorm{\bDelta}^{2}\leq\tfrac{1}{2p}\Fnorm{\cP_{\Omega}(\bDelta)}^{2}\leq72\kappa\tfrac{1}{\sqrt{\sigma_{\min}}}\left\Vert \nabla f\left(\bm{X},\bm{Y}\right)\right\Vert _{\mathrm{F}}\left\Vert \bm{\Delta}\right\Vert _{\mathrm{F}},
\]
thus indicating that
\[
\left\Vert \bm{\Delta}\right\Vert _{\mathrm{F}}\lesssim\frac{\kappa}{c_{\mathrm{inj}}}\frac{1}{\sqrt{\sigma_{\min}}}\left\Vert \nabla f\left(\bm{X},\bm{Y}\right)\right\Vert _{\mathrm{F}}.
\]

\subsubsection{Proof of Claim \ref{claim:approx-stat-cvx}}

Before proceeding to the proof of Claim \ref{claim:approx-stat-cvx},
we state a useful fact; the proof is deferred to Appendix \ref{subsec:Proof-of-Claim-balancing}.

\begin{claim}\label{claim:balancing}Instate the notations and assumptions
in Lemma \ref{lemma:approx-link-cvx-ncvx-simple}. Let $\bm{U}\bm{\Sigma}\bm{V}^{\top}$
be the SVD of $\bm{X}\bm{Y}^{\top}$. There exists an invertible matrix
$\bm{Q}\in\mathbb{R}^{r\times r}$ such that $\bm{X}=\bm{U}\bm{\Sigma}^{1/2}\bm{Q}$,
$\bm{Y}=\bm{V}\bm{\Sigma}^{1/2}\bm{Q}^{-\top}$ and
\begin{equation}
\big\|\bm{\Sigma}_{\bm{Q}}-\bm{\Sigma}_{\bm{Q}}^{-1}\big\|_{\mathrm{F}}\leq8\sqrt{\kappa}\frac{p}{\lambda\sqrt{\sigma_{\min}}}\left\Vert \nabla f\left(\bm{X},\bm{Y}\right)\right\Vert _{\mathrm{F}}\leq8c\sqrt{c_{\mathrm{inj}}p/\kappa},\label{eq:claim-2}
\end{equation}
where $\bm{U}_{\bm{Q}}\bm{\Sigma}_{\bm{Q}}\bm{V}_{\bm{Q}}^{\top}$
is the SVD of $\bm{Q}$. \end{claim}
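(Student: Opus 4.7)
My plan has three steps: establish existence of the decomposition, bound the imbalance $\|\bm{X}^\top\bm{X}-\bm{Y}^\top\bm{Y}\|_{\mathrm{F}}$ from the approximate-stationarity relations, and finally pass from this sandwiched quantity to the purely spectral quantity $\|\bm{\Sigma}_{\bm{Q}}-\bm{\Sigma}_{\bm{Q}}^{-1}\|_{\mathrm{F}}$.

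\textbf{Step 1 (existence).} The hypothesis that every singular value of $\bm{X}$ and $\bm{Y}$ lies in $[\sqrt{\sigma_{\min}/2},\sqrt{2\sigma_{\max}}]$ forces $\bm{X}$ and $\bm{Y}$ to have rank $r$, so $\mathrm{col}(\bm{X}) = \mathrm{col}(\bm{X}\bm{Y}^\top) = \mathrm{col}(\bm{U})$. Hence $\bm{X} = \bm{U}\bm{A}$ for some invertible $\bm{A}\in\mathbb{R}^{r\times r}$; set $\bm{Q}\triangleq\bm{\Sigma}^{-1/2}\bm{A}$, so $\bm{X}=\bm{U}\bm{\Sigma}^{1/2}\bm{Q}$, and reading off $\bm{Y}$ from $\bm{X}\bm{Y}^\top=\bm{U}\bm{\Sigma}\bm{V}^\top$ (using $\bm{U}^\top\bm{U}=\bm{I}_r$) yields $\bm{Y}=\bm{V}\bm{\Sigma}^{1/2}\bm{Q}^{-\top}$.

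\textbf{Step 2 (controlling the imbalance).} Denote by $\bm{A}_1$ and $\bm{A}_2$ the residuals in \eqref{subeq:assumption-approx-stationary}, which obey $\max\{\|\bm{A}_1\|_{\mathrm{F}},\|\bm{A}_2\|_{\mathrm{F}}\}\le p\,\|\nabla f(\bm{X},\bm{Y})\|_{\mathrm{F}}$. Multiply \eqref{eq:assumption-approx-stationary-X} on the left by $\bm{X}^\top$ and \eqref{eq:assumption-approx-stationary-Y} by $\bm{Y}^\top$, and subtract. The cross terms $\bm{X}^\top\mathcal{P}_{\Omega}(\bm{X}\bm{Y}^\top-\bm{M})\bm{Y}$ and its transpose are adjoint to each other, so they contribute a skew-symmetric matrix; taking the symmetric part yields
\begin{equation*}
\lambda\,(\bm{X}^\top\bm{X}-\bm{Y}^\top\bm{Y}) \;=\; \tfrac{1}{2}\bigl[(\bm{X}^\top\bm{A}_1-\bm{Y}^\top\bm{A}_2)+(\bm{X}^\top\bm{A}_1-\bm{Y}^\top\bm{A}_2)^\top\bigr].
\end{equation*}
The triangle inequality together with $\|\bm{X}\|,\|\bm{Y}\|\le\sqrt{2\sigma_{\max}}$ then gives
$\|\bm{X}^\top\bm{X}-\bm{Y}^\top\bm{Y}\|_{\mathrm{F}} \le \tfrac{2\sqrt{2\sigma_{\max}}\,p}{\lambda}\|\nabla f(\bm{X},\bm{Y})\|_{\mathrm{F}}$.

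\textbf{Step 3 (spectral transfer).} From Step 1, $\bm{X}^\top\bm{X}-\bm{Y}^\top\bm{Y}=\bm{Q}^\top\bm{\Sigma}\bm{Q}-\bm{Q}^{-1}\bm{\Sigma}\bm{Q}^{-\top}$. Using the SVD $\bm{Q}=\bm{U}_{\bm{Q}}\bm{\Sigma}_{\bm{Q}}\bm{V}_{\bm{Q}}^\top$ and setting $\tilde{\bm{\Sigma}}\triangleq\bm{U}_{\bm{Q}}^\top\bm{\Sigma}\bm{U}_{\bm{Q}}$, one has
\begin{equation*}
\bm{X}^\top\bm{X}-\bm{Y}^\top\bm{Y}\;=\;\bm{V}_{\bm{Q}}\bigl(\bm{\Sigma}_{\bm{Q}}\tilde{\bm{\Sigma}}\bm{\Sigma}_{\bm{Q}}-\bm{\Sigma}_{\bm{Q}}^{-1}\tilde{\bm{\Sigma}}\bm{\Sigma}_{\bm{Q}}^{-1}\bigr)\bm{V}_{\bm{Q}}^\top.
\end{equation*}
The $i$-th diagonal entry of the inner matrix equals $\tilde{\Sigma}_{ii}\bigl(\sigma_i(\bm{Q})^2-\sigma_i(\bm{Q})^{-2}\bigr)$, and since $\tilde{\Sigma}_{ii}=\|\bm{\Sigma}^{1/2}\bm{U}_{\bm{Q}}\bm{e}_i\|_2^2\ge\sigma_{\min}(\bm{\Sigma})\ge\sigma_{\min}/2$, bounding $\|\cdot\|_{\mathrm{F}}^2$ below by the sum of squared diagonal entries gives $\|\bm{\Sigma}_{\bm{Q}}^2-\bm{\Sigma}_{\bm{Q}}^{-2}\|_{\mathrm{F}}\le\tfrac{2}{\sigma_{\min}}\|\bm{X}^\top\bm{X}-\bm{Y}^\top\bm{Y}\|_{\mathrm{F}}$. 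Finally, factoring $\bm{\Sigma}_{\bm{Q}}^2-\bm{\Sigma}_{\bm{Q}}^{-2}=(\bm{\Sigma}_{\bm{Q}}-\bm{\Sigma}_{\bm{Q}}^{-1})(\bm{\Sigma}_{\bm{Q}}+\bm{\Sigma}_{\bm{Q}}^{-1})$ with $\bm{\Sigma}_{\bm{Q}}+\bm{\Sigma}_{\bm{Q}}^{-1}\succeq 2\bm{I}$ yields $\|\bm{\Sigma}_{\bm{Q}}-\bm{\Sigma}_{\bm{Q}}^{-1}\|_{\mathrm{F}}\le\tfrac{1}{2}\|\bm{\Sigma}_{\bm{Q}}^2-\bm{\Sigma}_{\bm{Q}}^{-2}\|_{\mathrm{F}}$. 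Chaining the three inequalities produces the first bound in \eqref{eq:claim-2} (with $2\sqrt{2}<8$), and plugging in the small-gradient hypothesis \eqref{eq:small-gradient-f} gives the second.

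The main obstacle is Step 3: extracting a spectral bound on $\|\bm{\Sigma}_{\bm{Q}}-\bm{\Sigma}_{\bm{Q}}^{-1}\|_{\mathrm{F}}$ from a quantity in which the weight $\bm{\Sigma}$ is sandwiched between $\bm{Q}$ and $\bm{Q}^{-\top}$ that in general do not commute with it. Working in the $\bm{V}_{\bm{Q}}$-basis and restricting attention to diagonal entries sidesteps this non-commutativity, since the $i$-th diagonal entry cleanly factorizes as a positive weight $\tilde{\Sigma}_{ii}$ times $\sigma_i(\bm{Q})^2-\sigma_i(\bm{Q})^{-2}$, and the weight is bounded below by $\sigma_{\min}(\bm{\Sigma})$ for free.
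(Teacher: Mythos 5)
Your proof is correct, and it follows the same overall architecture as the paper (factor $\bm{X},\bm{Y}$ through an invertible $\bm{Q}$; bound the imbalance $\|\bm{X}^\top\bm{X}-\bm{Y}^\top\bm{Y}\|_{\mathrm{F}}$ from the approximate first-order conditions; transfer that bound to $\|\bm{\Sigma}_{\bm{Q}}-\bm{\Sigma}_{\bm{Q}}^{-1}\|_{\mathrm{F}}$). The interesting divergence is in the spectral-transfer step. The paper outsources this step to an auxiliary result (Lemma~\ref{lemma:balance_determine}), whose proof rewrites $\bm{X}^\top\bm{X}-\bm{Y}^\top\bm{Y}$ as $\bm{C}\bm{C}^\top-\bm{D}\bm{D}^\top$ with $\bm{C}=\bm{\Sigma}_{\bm{Q}}\bm{B}^{1/2}$, $\bm{D}=\bm{\Sigma}_{\bm{Q}}^{-1}\bm{B}^{1/2}$ and then lower-bounds $\|\bm{C}\bm{C}^\top-\bm{D}\bm{D}^\top\|_{\mathrm{F}}^2$ through a somewhat delicate completion-of-squares trace identity. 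Your argument achieves the same end directly and more elementarily: working in the $\bm{V}_{\bm{Q}}$ basis, restricting to diagonal entries of $\bm{\Sigma}_{\bm{Q}}\tilde{\bm{\Sigma}}\bm{\Sigma}_{\bm{Q}}-\bm{\Sigma}_{\bm{Q}}^{-1}\tilde{\bm{\Sigma}}\bm{\Sigma}_{\bm{Q}}^{-1}$, using $\tilde{\Sigma}_{ii}\ge\sigma_{\min}(\bm{\Sigma})$, and factoring $\bm{\Sigma}_{\bm{Q}}^2-\bm{\Sigma}_{\bm{Q}}^{-2}=(\bm{\Sigma}_{\bm{Q}}-\bm{\Sigma}_{\bm{Q}}^{-1})(\bm{\Sigma}_{\bm{Q}}+\bm{\Sigma}_{\bm{Q}}^{-1})$ with the AM-GM bound $\bm{\Sigma}_{\bm{Q}}+\bm{\Sigma}_{\bm{Q}}^{-1}\succeq2\bm{I}_r$. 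In fact, your route is quantitatively sharper by a factor of two ($1/(2\sigma_{\min}(\bm{\Sigma}))$ versus the lemma's $1/\sigma_{\min}(\bm{\Sigma})$), though this surplus is immaterial for the stated $8\sqrt{\kappa}$ constant. Your Step~2 is also computed a little differently --- you isolate the symmetric part after multiplying by $\bm{X}^\top$ and $\bm{Y}^\top$ on the same side, whereas the paper's substitution $\lambda\bm{X}=\bm{B}_1-\mathcal{P}_{\Omega}(\cdots)\bm{Y}$ and $\lambda\bm{Y}^\top=\bm{B}_2^\top-\bm{X}^\top\mathcal{P}_{\Omega}(\cdots)$ makes the cross terms cancel identically without symmetrizing --- but both give the identical bound $\|\bm{X}^\top\bm{X}-\bm{Y}^\top\bm{Y}\|_{\mathrm{F}}\le 2\sqrt{2\sigma_{\max}}\,(p/\lambda)\|\nabla f(\bm{X},\bm{Y})\|_{\mathrm{F}}$. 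Net: your proof is correct, self-contained (it does not need Lemma~\ref{lemma:balance_determine}), and marginally tighter, at the small cost of establishing the spectral-transfer inequality inline rather than as a reusable lemma.
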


In light of the assumptions (\ref{subeq:assumption-approx-stationary}),
one has
\begin{equation}
\cP_{\Omega}\left(\bX\bY^{\top}-\bm{M}\right)\bY=-\lambda\bX+\bB_{1}\qquad\text{and}\qquad\big[\cP_{\Omega}\left(\bX\bY^{\top}-\bm{M}\right)\big]^{\top}\bX=-\lambda\bY+\bB_{2}\label{eq:cor_suff_3}
\end{equation}
for some $\bB_{1}\in\R^{n\times r}$ and $\bB_{2}\in\R^{n\times r}$,
where
$\max\left\{ \left\Vert \bm{B}_{1}\right\Vert _{\mathrm{F}},\left\Vert \bm{B}_{2}\right\Vert _{\mathrm{F}}\right\} \leq p\left\Vert \nabla f\left(\bm{X},\bm{Y}\right)\right\Vert _{\mathrm{F}}$.
Recall that
\begin{equation}
\cP_{\Omega}\left(\bX\bY^{\top}-\bm{M}\right)=-\lambda\bU\bV^{\top}+\bR.\label{eq:cor_suff_4}
\end{equation}
In the sequel, we shall prove the upper bounds on both $\|\mathcal{P}_{T}(\bm{R})\|_{\mathrm{F}}$
and $\|\mathcal{P}_{T^{\perp}}(\bm{R})\|$ separately.
\begin{enumerate}
\item From the definition of $\mathcal{P}_{T}(\cdot)$ (see (\ref{eq:defn-PT})),
we have
\begin{align}
\Fnorm{\cP_{T}(\bR)} & =\Fnorm{\bU\bU^{\top}\bR(\bm{I}-\bm{V}\bm{V}^{\top})+\bR\bV\bV^{\top}}\nonumber \\
 & \leq\Fnorm{\bU^{\top}\bR(\bm{I}-\bm{V}\bm{V}^{\top})}+\Fnorm{\bR\bV}\nonumber \\
 & \leq\Fnorm{\bU^{\top}\bR}+\Fnorm{\bR\bV}.\label{eq:PTR-F}
\end{align}
In addition, invoke Claim \ref{claim:balancing} to obtain
\begin{equation}
\bm{X}=\bm{U}\bm{\Sigma}^{1/2}\bm{Q}\qquad\text{and}\qquad\bm{Y}=\bm{V}\bm{\Sigma}^{1/2}\bm{Q}^{-\top}\label{eq:X-Y-Q-representation}
\end{equation}
for some invertible matrix $\bm{Q}\in\mathbb{R}^{r\times r}$, whose
SVD $\bm{U}_{\bm{Q}}\bm{\Sigma}_{\bm{Q}}\bm{V}_{\bm{Q}}^{\top}$ obeys
(\ref{eq:claim-2}). Combine \eqref{eq:cor_suff_3} and \eqref{eq:cor_suff_4}
to see
\[
-\lambda\bm{U}\bm{V}^{\top}\bm{Y}+\bm{R}\bm{Y}=-\lambda\bX+\bB_{1},
\]
which together with (\ref{eq:X-Y-Q-representation}) yields
\[
\bm{R}\bm{V}=\lambda\bm{U}\bm{\Sigma}^{1/2}\left(\bm{I}_{r}-\bm{Q}\bm{Q}^{\top}\right)\bm{\Sigma}^{-1/2}+\bm{B}_{1}\bm{Q}^{\top}\bm{\Sigma}^{-1/2}.
\]
Apply the triangle inequality to get
\begin{align}
\left\Vert \bm{R}\bm{V}\right\Vert _{\mathrm{F}} & \leq\|\lambda\bm{U}\bm{\Sigma}^{1/2}\left(\bm{I}_{r}-\bm{Q}\bm{Q}^{\top}\right)\bm{\Sigma}^{-1/2}\|_{\mathrm{F}}+\|\bm{B}_{1}\bm{Q}^{\top}\bm{\Sigma}^{-1/2}\|_{\mathrm{F}}\nonumber \\
 & \leq\lambda\big\|\bSigma^{1/2}\big\|\big\|\bSigma^{-1/2}\big\|\Fnorm{\bQ\bQ^{\top}-\bI_{r}}+\norm{\bQ}\big\|\bSigma^{-1/2}\big\|\Fnorm{\bB_{1}}.\label{eq:prev-eq}
\end{align}
In order to further upper bound (\ref{eq:prev-eq}), we first recognize
that (\ref{eq:claim-1}) implies
\[
\big\|\bm{\Sigma}^{1/2}\big\|\leq\sqrt{2\sigma_{\max}},\qquad\text{and}\qquad\big\|\bm{\Sigma}^{-1/2}\big\|=1/\sqrt{\sigma_{\min}\left(\bm{\Sigma}\right)}\leq\sqrt{2/\sigma_{\min}}.
\]
Second, Claim \ref{claim:balancing} yields
\[
\big\|\bm{\Sigma}_{\bm{Q}}-\bm{\Sigma}_{\bm{Q}}^{-1}\big\|_{\mathrm{F}}\leq8\sqrt{\kappa}\frac{p}{\lambda\sqrt{\sigma_{\min}}}\left\Vert \nabla f\left(\bm{X},\bm{Y}\right)\right\Vert _{\mathrm{F}}\leq8c\sqrt{c_{\text{inj}}p/\kappa}\ll1,
\]
with the proviso that $c$ is sufficiently small. Here we have used
the facts that $c_{\text{inj}}\leq1/p$ and that $\kappa\geq1$. This
in turn implies that $\left\Vert \bm{Q}\right\Vert =\big\|\bm{\Sigma}_{\bm{Q}}\big\|\leq2$. Putting the above bounds together yields
\begin{align*}
\left\Vert \bm{R}\bm{V}\right\Vert _{\mathrm{F}} & \leq\lambda\sqrt{2\sigma_{\max}}\sqrt{\frac{2}{\sigma_{\min}}}\left\Vert \bm{\Sigma}_{\bm{Q}}^{2}-\bm{I}_{r}\right\Vert _{\mathrm{F}}+2\sqrt{\frac{2}{\sigma_{\min}}}p\left\Vert \nabla f\left(\bm{X},\bm{Y}\right)\right\Vert _{\mathrm{F}}\\
 & \leq\lambda\sqrt{2\sigma_{\max}}\sqrt{\frac{2}{\sigma_{\min}}}\left\Vert \bm{\Sigma}_{\bm{Q}}\right\Vert \big\|\bm{\Sigma}_{\bm{Q}}-\bm{\Sigma}_{\bm{Q}}^{-1}\big\|_{\mathrm{F}}+2\sqrt{\frac{2}{\sigma_{\min}}}p\left\Vert \nabla f\left(\bm{X},\bm{Y}\right)\right\Vert _{\mathrm{F}}\\
 & \leq2\lambda\sqrt{2\sigma_{\max}}\sqrt{\frac{2}{\sigma_{\min}}}8\sqrt{\kappa}\frac{p}{\lambda\sqrt{\sigma_{\min}}}\left\Vert \nabla f\left(\bm{X},\bm{Y}\right)\right\Vert _{\mathrm{F}}+2\sqrt{\frac{2}{\sigma_{\min}}}p\left\Vert \nabla f\left(\bm{X},\bm{Y}\right)\right\Vert _{\mathrm{F}}\\
 & \leq36\kappa\frac{p}{\sqrt{\sigma_{\min}}}\left\Vert \nabla f\left(\bm{X},\bm{Y}\right)\right\Vert _{\mathrm{F}}.
\end{align*}
Similarly we can show that $\Fnorm{\bU^{\top}\bR}\leq36\kappa p\|\nabla f(\bm{X},\bm{Y})\|_{\mathrm{F}}/\sqrt{\sigma_{\min}}$.
These bounds together with (\ref{eq:PTR-F}) result in
\begin{equation}
\Fnorm{\cP_{T}(\bR)}\leq72\kappa\frac{p}{\sqrt{\sigma_{\min}}}\left\Vert \nabla f\left(\bm{X},\bm{Y}\right)\right\Vert _{\mathrm{F}}.
\end{equation}

\item We now move on to bounding $\norm{\cP_{T^{\perp}}(\bR)}$. In view of the definition of $\mathcal{P}_{\Omega}^{\mathsf{debias}}(\cdot)$ in \eqref{eq:defn-Pomega-tilde}, we can rearrange
\eqref{eq:cor_suff_3} to derive
\begin{align*}
\left[p\bM^{\star}+\cP_{\Omega}(\bE)-\mathcal{P}_{\Omega}^{\mathsf{debias}}\left(\bX\bY^{\top}-\bM^{\star}\right)\right]\bY & =p\bX\bY^{\top}\bY+\lambda\bX-\bB_{1},\\
\left[p\bM^{\star}+\cP_{\Omega}(\bE)-\mathcal{P}_{\Omega}^{\mathsf{debias}}\left(\bX\bY^{\top}-\bM^{\star}\right)\right]^{\top}\bX & =p\bY\bX^{\top}\bX+\lambda\bY-\bB_{2}.
\end{align*}
In view of the representation $\bX=\bU\bSigma^{1/2}\bQ$ and $\bY=\bV\bSigma^{1/2}\bQ^{-\top}$,
the above identities are equivalent to
\begin{align*}
\left[p\bM^{\star}+\cP_{\Omega}(\bE)-\mathcal{P}_{\Omega}^{\mathsf{debias}}\left(\bX\bY^{\top}-\bM^{\star}\right)\right]\bV & =p\bU\bSigma+\lambda\bU\bSigma^{1/2}\bQ\bQ^{\top}\bSigma^{-1/2}-\bB_{1}\bQ^{\top}\bSigma^{-1/2},\\
\left[p\bM^{\star}+\cP_{\Omega}(\bE)-\mathcal{P}_{\Omega}^{\mathsf{debias}}\left(\bX\bY^{\top}-\bM^{\star}\right)\right]^{\top}\bU & =p\bV\bSigma+\lambda\bV\bSigma^{1/2}\bQ^{-\top}\bQ^{-1}\bSigma^{-1/2}-\bB_{2}\bQ^{-1}\bSigma^{-1/2}.
\end{align*}
Letting
\begin{equation}
p\bM^{\star}+\cP_{\Omega}(\bE)-\mathcal{P}_{\Omega}^{\mathsf{debias}}\left(\bX\bY^{\top}-\bM^{\star}\right)=p\bU\bSigma\bV^{\top}+\lambda\bU\bSigma^{1/2}\bQ\bQ^{\top}\bSigma^{-1/2}\bV^{\top}+\tilde{\bR}\label{eq:cor_suff_6}
\end{equation}
for some residual matrix $\tilde{\bR}\in\R^{n\times n}$, we have
\begin{align}
 \mathcal{P}_{T^{\perp}}\left(\bm{R}\right)  & \overset{(\text{i})}{=} \mathcal{P}_{T^{\perp}}\left[\mathcal{P}_{\Omega}\left(\bm{X}\bm{Y}^{\top}-\bm{M}^{\star}-\bm{E}\right)\right] \nonumber \\
 & \overset{(\text{ii})}{=} \mathcal{P}_{T^{\perp}}\left[p\left(\bm{X}\bm{Y}^{\top}-\bm{M}^{\star}\right)+\mathcal{P}_{\Omega}^{\mathsf{debias}}\left(\bm{X}\bm{Y}^{\top}-\bm{M}^{\star}\right)-\mathcal{P}_{\Omega}\left(\bm{E}\right)\right] \nonumber \\
 & \overset{(\text{iii})}{=} \mathcal{P}_{T^{\perp}}\left[p\bM^{\star}+\cP_{\Omega}(\bE)-\mathcal{P}_{\Omega}^{\mathsf{debias}}\left(\bX\bY^{\top}-\bM^{\star}\right)\right] \nonumber \\
 & \overset{(\text{iv})}{=}\mathcal{P}_{T^{\perp}}\big(\tilde{\bR}\big),\label{eq:equiv-R-R-tilde}
\end{align}
where (i) follows from the definition of $\bm{R}$ and the fact that
$\bm{U}\bm{V}^{\top}\in T$, (ii) uses the definition of $\mathcal{P}_{\Omega}^{\mathsf{debias}}(\cdot)$,
(iii) relies on the fact that $\bm{X}\bm{Y}^{\top}\in T$, and (iv)
applies \eqref{eq:cor_suff_6} and the facts that $\bm{U}\bm{\Sigma}\bm{V}^{\top}\in T$
and that $\bU\bSigma^{1/2}\bQ\bQ^{\top}\bSigma^{-1/2}\bV^{\top}\in T$.
Therefore, it suffices to bound $\|\mathcal{P}_{T^{\perp}}(\tilde{\bm{R}})\|$.
Rewrite \eqref{eq:cor_suff_6} as
\begin{equation}
p\bM^{\star}+\cP_{\Omega}(\bE)-\mathcal{P}_{\Omega}^{\mathsf{debias}}\left(\bX\bY^{\top}-\bM^{\star}\right)-\cP_{T}\big(\tilde{\bR}\big)
	=
\bU(p\bSigma+\lambda\bSigma^{1/2}\bQ\bQ^{\top}\bSigma^{-1/2})\bV^{\top}+\cP_{T^{\perp}}\big(\tilde{\bR}\big).\label{eq:cor_suff_8}
\end{equation}
Suppose for the moment that
\begin{equation}
\big\|\cP_{T}\big(\tilde{\bR}\big)\big\|\leq\lambda/4.\label{eq:P-T-R}
\end{equation}
This together with the assumptions that $\|\mathcal{P}_{\Omega}^{\mathsf{debias}}\left(\bX\bY^{\top}-\bM^{\star}\right)\|<\lambda/8$
and $\|\mathcal{P}_{\Omega}(\bm{E})\|<\lambda/8$ reveals that
\begin{align}
	\big\|\cP_{\Omega}(\bE)-\mathcal{P}_{\Omega}^{\mathsf{debias}}(\bX\bY^{\top}-\bM^{\star})-\cP_{T}(\tilde{\bR})\big\|<\lambda/2. \label{eq:UB-51}
\end{align}
By Weyl's inequality and the relations  \eqref{eq:cor_suff_8} and \eqref{eq:UB-51}, one has
\begin{align}
\sigma_{i}\left[\bU\big(p\bSigma+\lambda\bSigma^{1/2}\bQ\bQ^{\top}\bSigma^{-1/2}\big)\bV^{\top}+\cP_{T^{\perp}}\big(\tilde{\bR}\big)\right]
	%&= \sigma_{i}\left[ p\bM^{\star}+\cP_{\Omega}(\bE)-\mathcal{P}_{\Omega}^{\mathsf{debias}}\left(\bX\bY^{\top}-\bM^{\star}\right)-\cP_{T}\big(\tilde{\bR}\big) \right]  \nonumber\\
	& \leq  \sigma_{i}\left(p\bm{M}^{\star}\right)+ \big\|\cP_{\Omega}(\bE)-\mathcal{P}_{\Omega}^{\mathsf{debias}}(\bX\bY^{\top}-\bM^{\star})-\cP_{T}(\tilde{\bR})\big\| \nonumber\\
	&< p \sigma_{i}\left(\bm{M}^{\star}\right)+\lambda/2 = \lambda/2 \label{eq:singular-value-upper-bound}
\end{align}
for any $r+1\leq i\leq n$, where $\sigma_{i}(\bm{A})$ denotes the $i$th
largest singular value of a matrix $\bm{A}$. Here, we have used the
fact that $\bM^{\star}$ has rank $r$ and hence $\sigma_i(\bm{M}^{\star})=0$ for any $i>r$. In addition, it is seen that
\begin{align*}
\big\|\bm{\Sigma}^{1/2}\bm{Q}\bm{Q}^{\top}\bm{\Sigma}^{-1/2}-\bm{I}_{r}\big\| & =\big\|\bm{\Sigma}^{1/2}(\bm{Q}\bm{Q}^{\top}-\bm{I}_{r})\bm{\Sigma}^{-1/2}\big\|\\
 & \leq\big\|\bm{\Sigma}^{1/2}\big\|\big\|\bm{\Sigma}^{-1/2}\big\|\left\Vert \bm{Q}\bm{Q}^{\top}-\bm{I}_{r}\right\Vert _{\mathrm{F}}.
\end{align*}
Note that in (\ref{eq:prev-eq}), we have obtained 
\[
\big\|\bm{\Sigma}^{1/2}\big\|\big\|\bm{\Sigma}^{-1/2}\big\|\left\Vert \bm{Q}\bm{Q}^{\top}-\bm{I}_{r}\right\Vert _{\mathrm{F}}\leq2\sqrt{2\sigma_{\max}}\sqrt{{2}/{\sigma_{\min}}}8c\sqrt{c_{\text{inj}}p/\kappa}\leq {1}/{10}
\]
as long as $c$ is sufficiently small, and hence
$\big\|\bm{\Sigma}^{1/2}\bm{Q}\bm{Q}^{\top}\bm{\Sigma}^{-1/2}-\bm{I}_{r}\big\| 
\leq {1}/{10}$. 
Therefore, for any $1\leq i\leq r$ we know that 
\begin{align*}
\sigma_{i}\left[\bU\big(p\bSigma+\lambda\bSigma^{1/2}\bQ\bQ^{\top}\bSigma^{-1/2}\big)\bV^{\top}\right] 
	& \geq\sigma_{r}\left[ \bU \big( p\bSigma+\lambda\bm{I}_{r}+\lambda\big(\bSigma^{1/2}\bQ\bQ^{\top}\bSigma^{-1/2}-\bm{I}_{r}\big) \big)\bV^{\top} \right]\\
 & \geq\sigma_{r}\left(p\bSigma+\lambda\bm{I}_{r}\right)-\lambda\big\|\bm{\Sigma}^{1/2}\bm{Q}\bm{Q}^{\top}\bm{\Sigma}^{-1/2}-\bm{I}_{r}\big\|\\
 & \geq \lambda -\lambda\big\|\bm{\Sigma}^{1/2}\bm{Q}\bm{Q}^{\top}\bm{\Sigma}^{-1/2}-\bm{I}_{r}\big\|\\
 & \geq\lambda-\lambda/10>\lambda/2,
\end{align*}
where the second inequality results from Weyl's
inequality. This combined with (\ref{eq:equiv-R-R-tilde}) and
(\ref{eq:singular-value-upper-bound}) yields
\[
\norm{\cP_{T^{\perp}}(\bR)}=\big\|\cP_{T^{\perp}}\big(\tilde{\bR}\big)\big\|<\lambda/2;
\]
this happens because at least $n-r$ singular values of
$\bU\left(p\bSigma+\lambda\bSigma^{1/2}\bQ\bQ^{\top}\bSigma^{-1/2}\right)\bV^{\top}+\cP_{T^{\perp}}\big(\tilde{\bR}\big)$
are no larger than $\lambda/2$ and they cannot correspond to directions
simultaneously in the column space spanned by $\bm{U}$ and the row
space spanned by $\bm{V}^{\top}$.
\end{enumerate}
The proof is then complete by verifying (\ref{eq:P-T-R}). To this
end, observe that
\[
\tilde{\bR}\bV=-\bB_{1}\bQ^{\top}\bSigma^{-1/2},\quad\tilde{\bR}^{\top}\bU=\lambda\bV\bSigma^{1/2}\bQ^{-\top}\bQ^{-1}\bSigma^{-1/2}-\lambda\bV\bSigma^{-1/2}\bQ\bQ^{\top}\bSigma^{1/2}-\bB_{2}\bQ^{-1}\bSigma^{-1/2}.
\]
Then following similar technique used to bound $\norm{\cP_{T}(\bR)}$,
we have
\begin{equation}
\big\|\cP_{T}(\tilde{\bR})\big\|\leq\big\|\cP_{T}\big(\tilde{\bR}\big)\big\|_{\mathrm{F}}\leq\big\|\bU^{\top}\tilde{\bR}\big\|_{\mathrm{F}}+\big\|\tilde{\bR}\bV\big\|_{\mathrm{F}}\lesssim c\sqrt{c_{\text{inj}}p}\lambda<\lambda/4\label{eq:cor_suff_9}
\end{equation}
as long as $c$ is small enough.

\subsubsection{Proof of Claim \ref{claim:balancing}\label{subsec:Proof-of-Claim-balancing}}

Let
\begin{equation}
\mathcal{P}_{\Omega}\left(\bm{X}\bm{Y}^{\top}-\bm{M}\right)\bm{Y}+\lambda\bm{X}=\bm{B}_{1}\qquad\text{and}\qquad\left[\mathcal{P}_{\Omega}\left(\bm{X}\bm{Y}^{\top}-\bm{M}\right)\right]^{\top}\bm{X}+\lambda\bm{Y}=\bm{B}_{2}\label{eq:identities-B1B2}
\end{equation}
for some $\bm{B}_{1},\bm{B}_{2}\in\mathbb{R}^{n\times r}$. Clearly,
it is seen from the assumption (\ref{subeq:assumption-approx-stationary})
that
\begin{equation}
\max\{\|\bm{B}_{1}\|_{\mathrm{F}},\|\bm{B}_{2}\|_{\mathrm{F}}\}\leq p\|\nabla f(\bm{X},\bm{Y})\|_{\mathrm{F}}.\label{eq:B1B2-UB}
\end{equation}
In addition, the identities (\ref{eq:identities-B1B2}) allow us to
obtain
\begin{align}
\left\Vert \bm{X}^{\top}\bm{X}-\bm{Y}^{\top}\bm{Y}\right\Vert _{\mathrm{F}} & =\tfrac{1}{\lambda}\bigl\Vert\bm{X}^{\top}(\bm{B}_{1}-\mathcal{P}_{\Omega}\left(\bm{X}\bm{Y}^{\top}-\bm{M}\right)\bm{Y})-\big(\bm{B}_{2}-\left[\mathcal{P}_{\Omega}\left(\bm{X}\bm{Y}^{\top}-\bm{M}\right)\right]^{\top}\bm{X}\big)^{\top}\bm{Y}\bigr\Vert_{\mathrm{F}}\nonumber \\
 & =\tfrac{1}{\lambda}\left\Vert \bm{X}^{\top}\bm{B}_{1}-\bm{B}_{2}^{\top}\bm{Y}\right\Vert _{\mathrm{F}}\nonumber \\
 & \leq\tfrac{1}{\lambda}\left\Vert \bm{X}\right\Vert \left\Vert \bm{B}_{1}\right\Vert _{\mathrm{F}}+\tfrac{1}{\lambda}\left\Vert \bm{B}_{2}\right\Vert _{\mathrm{F}}\left\Vert \bm{Y}\right\Vert \nonumber \\
 & \leq2\tfrac{p}{\lambda}\sqrt{2\sigma_{\max}}\left\Vert \nabla f\left(\bm{X},\bm{Y}\right)\right\Vert _{\mathrm{F}}.\label{eq:XY-F-bound}
\end{align}
Here, the last line makes use of (\ref{eq:B1B2-UB}) and the assumption
that $\left\Vert \bm{X}\right\Vert ,\left\Vert \bm{Y}\right\Vert \leq\sqrt{2\sigma_{\max}}$.
In view of Lemma \ref{lemma:balance_determine}, one can find an invertible
$\bm{Q}$ such that $\bm{X}=\bm{U}\bm{\Sigma}^{1/2}\bm{Q}$, $\bm{Y}=\bm{V}\bm{\Sigma}^{1/2}\bm{Q}^{-\top}$
and
\begin{align*}
\big\|\bm{\Sigma}_{\bm{Q}}-\bm{\Sigma}_{\bm{Q}}^{-1}\big\|_{\mathrm{F}} & \leq\frac{1}{\sigma_{\min}\left(\bm{\Sigma}\right)}\left\Vert \bm{X}^{\top}\bm{X}-\bm{Y}^{\top}\bm{Y}\right\Vert _{\mathrm{F}}\\
 & \overset{(\text{i})}{\leq}\frac{2}{\sigma_{\min}}\cdot2\frac{p}{\lambda}\sqrt{2\sigma_{\max}}\left\Vert \nabla f\left(\bm{X},\bm{Y}\right)\right\Vert _{\mathrm{F}}\\
 & \leq8\sqrt{\kappa}\frac{p}{\lambda\sqrt{\sigma_{\min}}}\left\Vert \nabla f\left(\bm{X},\bm{Y}\right)\right\Vert _{\mathrm{F}}\\
 & \overset{(\text{ii})}{\leq}8c\sqrt{c_{\text{inj}}p/\kappa},
\end{align*}
where $\bm{\Sigma}_{\bm{Q}}$ is a diagonal matrix consisting of all
singular values of $\bm{Q}$. Here, (i) follows from (\ref{eq:claim-1})
as well as the bound (\ref{eq:XY-F-bound}), and the last inequality
(ii) uses the assumption (\ref{eq:small-gradient-f}). This completes
the proof.

\subsection{Proof of Lemma \ref{lemma:injectivity-main} \label{subsec:Proof-of-Lemma-injectivity-main}}

Lemma \ref{lemma:injectivity-main} consists of two parts, which we
restate into the following two lemmas, namely Lemmas \ref{lemma:injectivity}-\ref{lemma:P-tilde}.

First of all, Lemma \ref{lemma:injectivity} demonstrates that as
long as $(\bm{X},\bm{Y})$ is sufficiently close to $(\bm{X}^{\star},\bm{Y}^{\star})$,
the operator $\mathcal{P}_{\Omega}(\cdot)$ restricted to the tangent
space $T$ of $\bm{X}\bm{Y}^{\top}$ is injective. The proof is deferred
to Appendix \ref{subsec:Proof-of-Lemma-injectivity}.

\begin{lemma}\label{lemma:injectivity}Suppose that the sample complexity
obeys $n^{2}p\geq C\mu rn\log n$ for some sufficiently large constant
$C>0$. Then with probability exceeding $1-O(n^{-10})$,
\[
\frac{1}{p}\left\Vert \mathcal{P}_{\Omega}\left(\bm{H}\right)\right\Vert _{\mathrm{F}}^{2}\geq\frac{1}{32\kappa}\left\Vert \bm{H}\right\Vert _{\mathrm{F}}^{2},\qquad\forall\bm{H}\in T
\]
holds simultaneously for all $(\bm{X},\bm{Y})$ obeying
\begin{equation}
\max\big\{\left\Vert \bm{X}-\bm{X}^{\star}\right\Vert _{2,\infty},\left\Vert \bm{Y}-\bm{Y}^{\star}\right\Vert _{2,\infty}\big\}\leq\frac{c}{\kappa\sqrt{n}}\left\Vert \bm{X}^{\star}\right\Vert .\label{eq:new-inf-condition}
\end{equation}
Here, $c>0$ is some sufficiently small constant, and $T$ denotes
the tangent space of $\bm{X}\bm{Y}^{\top}$.\end{lemma}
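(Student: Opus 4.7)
My strategy is to reduce the uniform injectivity on the tangent space $T$ of $\bm{X}\bm{Y}^\top$ to the standard restricted isometry property (RIP) on the fixed tangent space $T^\star$ of $\bm{M}^\star$, via a perturbation/transfer argument.

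First, I would use the closeness condition \eqref{eq:new-inf-condition} to show that the left/right singular subspaces of $\bm{X}\bm{Y}^\top$, call them $\bm{U}, \bm{V}$, are close to $\bm{U}^\star, \bm{V}^\star$ (up to orthonormal rotation) and inherit the $\mu$-incoherence. The chain is: the elementary bound $\|\cdot\|\leq\sqrt{n}\|\cdot\|_{2,\infty}$ gives $\|\bm{X}-\bm{X}^\star\|\lesssim \sqrt{\sigma_{\max}}/\kappa$ and similarly for $\bm{Y}$; this then yields $\|\bm{X}\bm{Y}^\top-\bm{M}^\star\|\lesssim\sigma_{\max}/\kappa = \sigma_{\min}$ (with a small constant), and therefore by Wedin's $\sin\theta$ theorem $\|\mathcal{P}_T - \mathcal{P}_{T^\star}\|\lesssim\|\bm{X}\bm{Y}^\top-\bm{M}^\star\|/\sigma_{\min}$ is small. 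An $\ell_{2,\infty}$-type perturbation argument for the singular subspaces (leveraging a balanced parameterization $\bm{X} = \bm{U}\bm{\Sigma}^{1/2}\bm{Q}$ together with the $\ell_{2,\infty}$ proximity of $\bm{X}$ to $\bm{X}^\star$) then delivers $\|\bm{U}\|_{2,\infty},\|\bm{V}\|_{2,\infty}\lesssim\sqrt{\mu r/n}$, so that $T$ uniformly inherits the incoherence of $T^\star$ up to constants.

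Second, I would invoke the classical matrix completion RIP on the fixed tangent space $T^\star$ (which is independent of $\Omega$): under $n^2p\geq C\mu r n\log n$ with $C$ large, matrix Bernstein gives $\|\tfrac{1}{p}\mathcal{P}_{T^\star}\mathcal{P}_\Omega\mathcal{P}_{T^\star}-\mathcal{P}_{T^\star}\|\leq 1/2$ with probability at least $1 - O(n^{-10})$. Now for any $\bm{H}\in T$, decompose $\bm{H}=\bm{H}_1+\bm{H}_2$ with $\bm{H}_1=\mathcal{P}_{T^\star}\bm{H}$ and $\bm{H}_2=\bm{H}-\bm{H}_1$; Step~1 implies $\|\bm{H}_2\|_F\leq\|\mathcal{P}_T - \mathcal{P}_{T^\star}\|\|\bm{H}\|_F = O(c)\|\bm{H}\|_F$. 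Applying the RIP on $T^\star$ to $\bm{H}_1$ and expanding
\[
\tfrac{1}{p}\|\mathcal{P}_\Omega\bm{H}\|_F^2=\tfrac{1}{p}\|\mathcal{P}_\Omega\bm{H}_1\|_F^2 + \tfrac{2}{p}\langle\mathcal{P}_\Omega\bm{H}_1,\mathcal{P}_\Omega\bm{H}_2\rangle + \tfrac{1}{p}\|\mathcal{P}_\Omega\bm{H}_2\|_F^2,
\]
combined with a supplementary RIP-type bound on the incoherent residual $\bm{H}_2$ (which inherits incoherence from both $T$ and $T^\star$), would yield $\tfrac{1}{p}\|\mathcal{P}_\Omega\bm{H}\|_F^2\geq(1/2 - O(c))\|\bm{H}\|_F^2$. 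Since $\kappa\geq 1$, this easily implies $\tfrac{1}{p}\|\mathcal{P}_\Omega\bm{H}\|_F^2\geq\tfrac{1}{32\kappa}\|\bm{H}\|_F^2$ for $c$ sufficiently small.

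The main obstacle is uniformity over all admissible $(\bm{X},\bm{Y})$: since $T$ can depend on data arbitrarily, one cannot simply invoke a fixed-$T$ Bernstein bound. The key enabler is that \eqref{eq:new-inf-condition} is a structural constraint on $(\bm{X},\bm{Y})$ alone, and through Step~1 it yields a uniform incoherence bound on $(\bm{U},\bm{V})$; the uniform RIP then follows from a covering/$\epsilon$-net argument over the Grassmannian of $r$-dimensional subspaces of $\mathbb{R}^n$ combined with matrix Bernstein, where the net size $\exp(Cnr\log n)$ is comfortably absorbed by the exponential gain in Bernstein's tail under the sample-size hypothesis. Verifying that the cross term $\langle\mathcal{P}_\Omega\bm{H}_1,\mathcal{P}_\Omega\bm{H}_2\rangle$ does not blow up despite the $1/p$ prefactor---which requires an auxiliary RIP on the slightly larger but still incoherent subspace $T+T^\star$---is the most delicate technical step.
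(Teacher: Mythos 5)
Your opening steps are reasonable: the closeness condition $\|\bm X-\bm X^\star\|_{2,\infty},\|\bm Y-\bm Y^\star\|_{2,\infty}\leq c\|\bm X^\star\|/(\kappa\sqrt n)$ does give $\|\bm X\bm Y^\top-\bm M^\star\|\lesssim c\,\sigma_{\min}$ and hence $\|\mathcal P_T-\mathcal P_{T^\star}\|\lesssim c$ via Wedin, and invoking the standard fixed-$T^\star$ RIP is legitimate. The genuine gap is in your mechanism for achieving uniformity over all admissible $(\bm X,\bm Y)$.

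The covering/$\epsilon$-net argument you propose over the Grassmannian (or even just over the $O(c)$-neighborhood of $T^\star$) cannot be absorbed by matrix Bernstein at the stated sample complexity. For a fixed $\mu$-incoherent tangent space $T'$, the Bernstein tail for $\big\|\tfrac{1}{p}\mathcal P_{T'}\mathcal P_\Omega\mathcal P_{T'}-\mathcal P_{T'}\big\|$ is of order $\exp\!\big(-c\,np/(\mu r)\big)$; under the hypothesis $np\gtrsim\mu r\log n$ this is only \emph{polynomially} small in $n$, say $n^{-10}$, and it is simply not exponentially small in $nr$. The covering number of the set of admissible tangent spaces is still $\exp\!\big(\Theta(nr\log(1/\epsilon))\big)$ (even restricting to a constant-radius ball), so union bounding fails unless $np\gtrsim \mu r\cdot nr$, i.e., $n^2p\gtrsim\mu r^2 n^2$ --- off by a factor of roughly $nr/\log n$ from what is assumed. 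So the claim that "the net size $\exp(Cnr\log n)$ is comfortably absorbed by the exponential gain in Bernstein's tail" is false under the given sample size. A chaining argument would encounter the same wall: the pointwise concentration is not sharp enough.

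The deeper reason your decomposition is hard to salvage is that $\bm H_2=\mathcal P_{T^{\star\perp}}\bm H$ is defined via a \emph{projection} and not as a "small-factor times generic matrix" product, so it does not plug into the uniform/deterministic inequalities that rescue the paper's argument. The paper parameterizes $\bm H=\bm X\bm A^\top+\bm B\bm Y^\top$ with the minimal-norm $(\bm A,\bm B)$ (which enforces the identity $\bm X^\top\bm B=\bm A^\top\bm Y$), and then decomposes $\bm H=(\bm X^\star\bm A^\top+\bm B\bm Y^{\star\top})+(\bm\Delta_{\bm X}\bm A^\top+\bm B\bm\Delta_{\bm Y}^\top)$. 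The first piece lies in the \emph{fixed} tangent space $T^\star$, so the usual RIP for a fixed $T$ suffices, no union bound needed. The residual pieces are controlled by inequalities like Lemma~\ref{lemma:zheng-and-lafferty} and Lemma~\ref{lemma:chen-and-ji}, together with $\|\mathcal P_\Omega(\bm 1\bm 1^\top)-p\,\bm 1\bm 1^\top\|\lesssim\sqrt{np}$, each of which holds simultaneously over \emph{all} $\bm A,\bm B,\bm\Delta_{\bm X},\bm\Delta_{\bm Y}$ once the random event is conditioned on --- the uniformity over $(\bm X,\bm Y)$ is automatic, at no cost in probability. Your route, by contrast, pushes all the uniformity into a single high-probability event that the sample size cannot support.
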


\begin{remark}In the prior literature, the injectivity of $\mathcal{P}_{\Omega}(\cdot)$
has been mostly studied when restricted to a \emph{fixed} tangent
space independent of $\Omega$ (see \cite{ExactMC09,Gross2011recovering}).
In comparison, this lemma demonstrates that the injectivity property
holds uniformly over a large set of tangent spaces. This allows one
to handle tangent spaces that are statistically dependent on $\Omega$.\end{remark}

\begin{remark}Note that the condition (\ref{eq:new-inf-condition})
on $(\bm{X},\bm{Y})$ is weaker than (\ref{subeq:condition-inf})
under the assumptions of Lemma \ref{lemma:injectivity-main}. To see
this, if (\ref{subeq:condition-inf}) holds, then one necessarily
has
\begin{align*}
\left\Vert \bm{X}-\bm{X}^{\star}\right\Vert _{2,\infty} & \leq C_{\infty}\kappa\left(\frac{\sigma}{\sigma_{\min}}\sqrt{\frac{n\log n}{p}}+\frac{\lambda}{p\,\sigma_{\min}}\right)\max\left\{ \left\Vert \bm{X}^{\star}\right\Vert _{2,\infty},\left\Vert \bm{Y}^{\star}\right\Vert _{2,\infty}\right\} \\
 & \overset{(\text{i})}{\lesssim}C_{\infty}\kappa\frac{\sigma}{\sigma_{\min}}\sqrt{\frac{n\log n}{p}}\max\left\{ \left\Vert \bm{X}^{\star}\right\Vert _{2,\infty},\left\Vert \bm{Y}^{\star}\right\Vert _{2,\infty}\right\} \\
 & \overset{(\text{ii})}{\leq}C_{\infty}\kappa\frac{\sigma}{\sigma_{\min}}\sqrt{\frac{n\log n}{p}}\sqrt{\frac{\mu r}{n}}\left\Vert \bm{X}^{\star}\right\Vert \\
 & \overset{(\text{iii})}{\leq}\frac{c}{\kappa\sqrt{n}}\left\Vert \bm{X}^{\star}\right\Vert .
\end{align*}
Here, (i) follows from the choice $\lambda\asymp\sigma\sqrt{np}$;
(ii) relies on the incoherence assumption (\ref{eq:X-Y-incoherence});
and (iii) holds true under the noise condition $\frac{\sigma}{\sigma_{\min}}\sqrt{\frac{n}{p}}\ll\frac{1}{\sqrt{\kappa^{4}\mu r\log n}}$.
A similar bound holds for $\left\Vert \bm{Y}-\bm{Y}^{\star}\right\Vert _{2,\infty} $. \end{remark}

The next lemma shows that for all $(\bm{X},\bm{Y})$ close to $(\bm{X}^{\star},\bm{Y}^{\star})$,
$\mathcal{P}_{\Omega}(\bm{X}\bm{Y}^{\top}-\bm{M}^{\star})$ is uniformly
close to its expectation $p(\bm{X}\bm{Y}^{\top}-\bm{M}^{\star})$.
The proof can be found in Appendix \ref{subsec:Proof-of-Lemma-P-debias}.

\begin{lemma}\label{lemma:P-tilde}Suppose that $n^{2}p\gg\kappa^{4}\mu^{2}r^{2}n\log^{2}n$
and $\sigma\sqrt{n(\log n)/p}\ll\sigma_{\min}/\kappa$. With probability
exceeding $1-O(n^{-10})$, one has
\[
\left\Vert \mathcal{P}_{\Omega}\left(\bm{X}\bm{Y}^{\top}-\bm{M}^{\star}\right)-p\left(\bm{X}\bm{Y}^{\top}-\bm{M}^{\star}\right)\right\Vert <\lambda/8
\]
simultaneously for any $(\bm{X},\bm{Y})$ obeying (\ref{subeq:condition-inf}),
provided that $\lambda=C_{\lambda}\sigma\sqrt{np}$ for some constant
$C_{\lambda}>0$. \end{lemma}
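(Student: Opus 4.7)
My plan is to prove this uniform spectral-norm bound in three stages: a low-rank decomposition, a master concentration inequality upgraded to a uniform bound via covering, and substitution into the $\ell_{2,\infty}$ estimates from~\eqref{subeq:condition-inf}. To begin, let $\bm{\Delta}_{\bm{X}}\triangleq\bm{X}-\bm{X}^{\star}$ and $\bm{\Delta}_{\bm{Y}}\triangleq\bm{Y}-\bm{Y}^{\star}$. Using $\bm{M}^{\star}=\bm{X}^{\star}\bm{Y}^{\star\top}$, the target admits the expansion
\[
\bm{X}\bm{Y}^{\top}-\bm{M}^{\star}=\bm{\Delta}_{\bm{X}}\bm{Y}^{\star\top}+\bm{X}^{\star}\bm{\Delta}_{\bm{Y}}^{\top}+\bm{\Delta}_{\bm{X}}\bm{\Delta}_{\bm{Y}}^{\top},
\]
so by linearity of $\mathcal{P}_{\Omega}^{\mathsf{debias}}$ (cf.~\eqref{eq:defn-Pomega-tilde}) it suffices to bound the spectral norm of this operator applied to each of three product-form matrices $\bm{A}\bm{C}^{\top}$ with $\bm{A},\bm{C}\in\mathbb{R}^{n\times r}$.

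The key workhorse will be a master inequality of the form
\[
\big\|\mathcal{P}_{\Omega}^{\mathsf{debias}}(\bm{A}\bm{C}^{\top})\big\|\lesssim\sqrt{p\log n}\,\big(\|\bm{A}\|_{2,\infty}\|\bm{C}\|+\|\bm{A}\|\|\bm{C}\|_{2,\infty}\big)+\log n\cdot\|\bm{A}\|_{2,\infty}\|\bm{C}\|_{2,\infty},
\]
valid simultaneously for all $(\bm{A},\bm{C})$ in the relevant bounded subset of $\mathbb{R}^{n\times r}\times\mathbb{R}^{n\times r}$. For a fixed pair, this follows from the matrix Bernstein inequality applied to $\sum_{i,j}(\delta_{ij}-p)(\bm{A}\bm{C}^{\top})_{ij}\bm{e}_{i}\bm{e}_{j}^{\top}$, exploiting that the entries of $\bm{A}\bm{C}^{\top}$ are at most $\|\bm{A}\|_{2,\infty}\|\bm{C}\|_{2,\infty}$ while its row and column $\ell_{2}$ norms are at most $\|\bm{A}\|_{2,\infty}\|\bm{C}\|$ and $\|\bm{A}\|\|\bm{C}\|_{2,\infty}$ respectively. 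Uniformity is achieved via a standard $\epsilon$-net argument: construct a net of cardinality at most $(C/\epsilon)^{2nr}$ over the bounded set of candidate $(\bm{A},\bm{C})$, apply the pointwise Bernstein bound with a union bound, and pick $\epsilon=n^{-c}$ for a sufficiently large constant $c$ so that the discretization error (controlled by the crude bound $\|\mathcal{P}_{\Omega}^{\mathsf{debias}}(\bm{B})\|\leq\|\bm{B}\|_{\mathrm{F}}$) is absorbed.

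Finally, substituting the $\ell_{2,\infty}$ bounds on $\bm{\Delta}_{\bm{X}},\bm{\Delta}_{\bm{Y}}$ from~\eqref{subeq:condition-inf}, together with the incoherence bounds $\|\bm{X}^{\star}\|_{2,\infty},\|\bm{Y}^{\star}\|_{2,\infty}\leq\sqrt{\mu r/n}\,\|\bm{X}^{\star}\|$ from~\eqref{eq:X-Y-incoherence} and $\|\bm{X}^{\star}\|,\|\bm{Y}^{\star}\|\leq\sqrt{\sigma_{\max}}$, routine arithmetic shows that each of the three pieces in the decomposition is bounded by $\lambda/24\asymp\sigma\sqrt{np}$ under the hypotheses $n^{2}p\gg\kappa^{4}\mu^{2}r^{2}n\log^{2}n$ and $\sigma\sqrt{n(\log n)/p}\ll\sigma_{\min}/\kappa$; the cross-term $\bm{\Delta}_{\bm{X}}\bm{\Delta}_{\bm{Y}}^{\top}$ is the smallest because it picks up two copies of the small $\ell_{2,\infty}$ factor. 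The main obstacle will be the uniformity step: because $(\bm{X},\bm{Y})$ produced by gradient descent depends on $\Omega$ in a highly intricate way, pointwise Bernstein is inadequate, and the $\epsilon$-net cardinality $(C/\epsilon)^{2nr}$ must be carefully balanced against the Bernstein tail exponent to simultaneously yield a $1-O(n^{-10})$ success probability and to keep the discretization error negligible relative to $\lambda/8$.
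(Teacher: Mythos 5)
Your high-level plan (decompose $\bm{X}\bm{Y}^{\top}-\bm{M}^{\star}$ into rank-one product pieces, control $\mathcal{P}_{\Omega}^{\mathsf{debias}}$ of each piece, then do arithmetic with the $\ell_{2,\infty}$ bounds from~\eqref{subeq:condition-inf} and the incoherence estimates) matches the paper's structure, and the three-term decomposition $\bm{\Delta}_{\bm{X}}\bm{Y}^{\star\top}+\bm{X}^{\star}\bm{\Delta}_{\bm{Y}}^{\top}+\bm{\Delta}_{\bm{X}}\bm{\Delta}_{\bm{Y}}^{\top}$ is an innocuous rearrangement of the paper's two-term version $(\bm{X}-\bm{X}^{\star})\bm{Y}^{\top}+\bm{X}^{\star}(\bm{Y}-\bm{Y}^{\star})^{\top}$. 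However, the uniformity step in your proposal has a genuine, quantitative gap. Your $\epsilon$-net over the constraint set has cardinality $\exp\!\big(\Theta(nr\log n)\big)$, so after a union bound the matrix Bernstein tail parameter must be taken of order $nr\log n$ rather than $\log n$. This replaces your claimed master inequality by
\[
\big\|\mathcal{P}_{\Omega}^{\mathsf{debias}}(\bm{A}\bm{C}^{\top})\big\|\lesssim\sqrt{p\,nr\log n}\,\big(\|\bm{A}\|_{2,\infty}\|\bm{C}\|+\|\bm{A}\|\|\bm{C}\|_{2,\infty}\big)+nr\log n\cdot\|\bm{A}\|_{2,\infty}\|\bm{C}\|_{2,\infty},
\]
and the extra $\sqrt{nr}$ (resp.~$nr$) factors are fatal. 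Concretely, with $\bm{A}=\bm{\Delta}_{\bm{X}}$, $\bm{C}=\bm{Y}^{\star}$ and $\|\bm{\Delta}_{\bm{X}}\|\leq\sqrt{n}\,\|\bm{\Delta}_{\bm{X}}\|_{2,\infty}$ (the only spectral-norm control~\eqref{subeq:condition-inf} provides), the first term alone is of order $\kappa^{2}\mu r^{3/2}\sqrt{n}\,(\log n)\,\sigma$, which cannot be $\lesssim\lambda\asymp\sigma\sqrt{np}$ for any $p\leq1$ unless $\kappa,\mu,r$ are all trivially small. The balance you anticipated ``between the net cardinality and the Bernstein tail'' cannot be struck here; your pointwise bound is at budget with essentially no room for a $\mathrm{poly}(n)$-sized overcount.

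The paper sidesteps the net entirely. Appendix~\ref{subsec:Proof-of-Lemma-P-debias} invokes the deterministic Hadamard-product inequality from \cite[Lemma 4.5]{chen2017memory},
\[
\big\|\mathcal{P}_{\Omega}^{\mathsf{debias}}(\bm{A}\bm{C}^{\top})\big\|\leq\big\|\mathcal{P}_{\Omega}^{\mathsf{debias}}(\bm{1}\bm{1}^{\top})\big\|\cdot\|\bm{A}\|_{2,\infty}\|\bm{C}\|_{2,\infty},
\]
which holds surely for every $(\bm{A},\bm{C})$ and every realization of $\Omega$. The only random quantity is the scalar $\|\mathcal{P}_{\Omega}^{\mathsf{debias}}(\bm{1}\bm{1}^{\top})\|\lesssim\sqrt{np}$, handled once via \cite[Lemma 3.2]{KesMonSew2010}. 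Uniformity over the set~\eqref{subeq:condition-inf} is automatic, no union bound is paid, and the bound $\sqrt{np}\,\|\bm{A}\|_{2,\infty}\|\bm{C}\|_{2,\infty}$ survives the arithmetic under the stated sample complexity. This is exactly the kind of ``decouple the randomness from the uniformity'' trick you need to replace your covering argument with.
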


\subsubsection{Proof of Lemma \ref{lemma:injectivity} \label{subsec:Proof-of-Lemma-injectivity}}

By definition, any $\bm{H}\in T$ can be expressed as
\begin{equation}
\bm{H}=\bm{X}\bm{A}^{\top}+\bm{B}\bm{Y}^{\top}\label{eq:H-A-B}
\end{equation}
for some $\bm{A},\bm{B}\in\mathbb{R}^{n\times r}$. Given that this
is an underdetermined linear system of equations, there might be numerous
$(\bm{A},\bm{B})$'s compatible with (\ref{eq:H-A-B}). We take a
specific choice as follows
\begin{align}
\left(\bm{A},\bm{B}\right) & :=\arg\min_{(\tilde{\bm{A}},\tilde{\bm{B}})}\text{ }\text{ }0.5\big\|\tilde{\bm{A}}\big\|_{\mathrm{F}}^{2}+0.5\big\|\tilde{\bm{B}}\big\|_{\mathrm{F}}^{2}\label{eq:constrained-opt-V-1}\\
 & \quad\text{subject to}\quad\bm{H}=\bm{X}\tilde{\bm{A}}^{\top}+\tilde{\bm{B}}\bm{Y}^{\top}.\nonumber
\end{align}
which satisfies a property that plays an important role in the subsequent
analysis:
\begin{equation}
\bm{X}^{\top}\bm{B}=\bm{A}^{\top}\bm{Y}.\label{eq:tangent_key}
\end{equation}
To see this, consider the Lagrangian
\[
\mathcal{L}(\tilde{\bm{A}},\tilde{\bm{B}},\bm{\Lambda}):=0.5\big\|\tilde{\bm{A}}\big\|_{\mathrm{F}}^{2}+0.5\big\|\tilde{\bm{B}}\big\|_{\mathrm{F}}^{2}+\langle\bm{\Lambda},\bm{X}\tilde{\bm{A}}^{\top}+\tilde{\bm{B}}\bm{Y}^{\top}-\bm{H}\rangle.
\]
Taking the derivatives w.r.t.~$\tilde{\bm{A}}$ and $\tilde{\bm{B}}$
and setting them to zero yield
\[
\bm{A}=-\bm{\Lambda}^{\top}\bm{X}\qquad\text{and}\qquad\bm{B}=-\bm{\Lambda}\bm{Y}
\]
for some Lagrangian multiplier matrix $\bm{\Lambda}\in\mathbb{R}^{n\times n}$.
The claim (\ref{eq:tangent_key}) then follows immediately.

The remaining proof consists of two steps.
\begin{itemize}
\item First, we would like to show that
\begin{equation}
\left\Vert \bm{H}\right\Vert _{\mathrm{F}}^{2}\leq8\sigma_{\max}\big(\left\Vert \bm{A}\right\Vert _{\mathrm{F}}^{2}+\left\Vert \bm{B}\right\Vert _{\mathrm{F}}^{2}\big).\label{eq:tangent_upper}
\end{equation}
\item Second, we prove that
\begin{equation}
\frac{1}{2p}\left\Vert \mathcal{P}_{\Omega}\left(\bm{H}\right)\right\Vert _{\mathrm{F}}^{2}=\frac{1}{2p}\left\Vert \mathcal{P}_{\Omega}\left(\bm{X}\bm{A}^{\top}+\bm{B}\bm{Y}^{\top}\right)\right\Vert _{\mathrm{F}}^{2}\geq\frac{\sigma_{\min}}{8}\big(\left\Vert \bm{A}\right\Vert _{\mathrm{F}}^{2}+\left\Vert \bm{B}\right\Vert _{\mathrm{F}}^{2}\big).\label{eq:tangent_lower}
\end{equation}
\end{itemize}
Taking (\ref{eq:tangent_upper}) and (\ref{eq:tangent_lower}) together
immediately yields the claimed bounds in the lemma. In what follows,
we shall establish these two bounds separately.
\begin{enumerate}
\item Regarding the upper bound (\ref{eq:tangent_upper}), it follows from
elementary inequalities that
\begin{align}
\left\Vert \bm{H}\right\Vert _{\mathrm{F}}^{2} & =\left\Vert \bm{X}\bm{A}^{\top}+\bm{B}\bm{Y}^{\top}\right\Vert _{\mathrm{F}}^{2}\leq2\big(\left\Vert \bm{X}\bm{A}^{\top}\right\Vert _{\mathrm{F}}^{2}+\left\Vert \bm{B}\bm{Y}^{\top}\right\Vert _{\mathrm{F}}^{2}\big)\nonumber \\
 & \leq2\big(\left\Vert \bm{X}\right\Vert ^{2}\left\Vert \bm{A}\right\Vert _{\mathrm{F}}^{2}+\left\Vert \bm{Y}\right\Vert ^{2}\left\Vert \bm{B}\right\Vert _{\mathrm{F}}^{2}\big)\nonumber \\
 & \leq2\max\left\{ \left\Vert \bm{X}\right\Vert ^{2},\left\Vert \bm{Y}\right\Vert ^{2}\right\} \big(\left\Vert \bm{A}\right\Vert _{\mathrm{F}}^{2}+\left\Vert \bm{B}\right\Vert _{\mathrm{F}}^{2}\big).\label{eq:tangent_upper_1}
\end{align}
It then suffices to control $\max\{\|\bm{X}\|,\|\bm{Y}\|\}$. In view
of the assumption (\ref{eq:new-inf-condition}), one has
\begin{equation}
\left\Vert \bm{X}-\bm{X}^{\star}\right\Vert \leq\left\Vert \bm{X}-\bm{X}^{\star}\right\Vert _{\mathrm{F}}\leq\sqrt{n}\left\Vert \bm{X}-\bm{X}^{\star}\right\Vert _{2,\infty}\leq\frac{c}{\kappa}\left\Vert \bm{X}^{\star}\right\Vert \leq\left\Vert \bm{X}^{\star}\right\Vert ,\label{eq:X-spectral-perturbation}
\end{equation}
as long as $c<1$. This together with the triangle inequality reveals
that
\[
\left\Vert \bm{X}\right\Vert \leq\left\Vert \bm{X}^{\star}\right\Vert +\left\Vert \bm{X}-\bm{X}^{\star}\right\Vert \leq2\left\Vert \bm{X}^{\star}\right\Vert \leq2\sqrt{\sigma_{\max}}.
\]
Similarly, one has $\left\Vert \bm{Y}\right\Vert \leq2\sqrt{\sigma_{\max}}$.
Substitution into (\ref{eq:tangent_upper_1}) yields the desired upper
bound (\ref{eq:tangent_upper}).
\item We now move on to the lower bound (\ref{eq:tangent_lower}). To this
end, one first decomposes
\[
\frac{1}{2p}\left\Vert \mathcal{P}_{\Omega}\left(\bm{X}\bm{A}^{\top}+\bm{B}\bm{Y}^{\top}\right)\right\Vert _{\mathrm{F}}^{2}=\underbrace{\frac{1}{2p}\left\Vert \mathcal{P}_{\Omega}\left(\bm{X}\bm{A}^{\top}+\bm{B}\bm{Y}^{\top}\right)\right\Vert _{\mathrm{F}}^{2}-\frac{1}{2}\left\Vert \bm{X}\bm{A}^{\top}+\bm{B}\bm{Y}^{\top}\right\Vert _{\mathrm{F}}^{2}}_{:=\alpha_{1}}+\underbrace{\frac{1}{2}\left\Vert \bm{X}\bm{A}^{\top}+\bm{B}\bm{Y}^{\top}\right\Vert _{\mathrm{F}}^{2}}_{:=\alpha_{2}}.
\]
The basic idea is to demonstrate that $\left(\text{1}\right)$ $\alpha_{2}$
is bounded from below, and $\left(\text{2}\right)$ $\alpha_{1}$
is sufficiently small compared to $\alpha_{2}$.
\begin{enumerate}
\item We start by controlling $\alpha_{2}$, towards which we can expand
\begin{align*}
\alpha_{2} & =\frac{1}{2}\left(\left\Vert \bm{X}\bm{A}^{\top}\right\Vert _{\mathrm{F}}^{2}+\left\Vert \bm{B}\bm{Y}^{\top}\right\Vert _{\mathrm{F}}^{2}\right)+\mathrm{Tr}\left(\bm{X}^{\top}\bm{B}\bm{Y}^{\top}\bm{A}\right).
\end{align*}
The property $\bm{X}^{\top}\bm{B}=\bm{A}^{\top}\bm{Y}$ (see (\ref{eq:tangent_key}))
implies that
\[
\mathrm{Tr}\left(\bm{X}^{\top}\bm{B}\bm{Y}^{\top}\bm{A}\right)=\left\Vert \bm{X}^{\top}\bm{B}\right\Vert _{\mathrm{F}}^{2}\geq0\qquad\Longrightarrow\qquad\alpha_{2}\geq\frac{1}{2}\left(\left\Vert \bm{X}\bm{A}^{\top}\right\Vert _{\mathrm{F}}^{2}+\left\Vert \bm{B}\bm{Y}^{\top}\right\Vert _{\mathrm{F}}^{2}\right).
\]
Write $\bm{\Delta}_{\bm{X}}=\bm{X}-\bm{X}^{\star}$ and $\bm{\Delta}_{\bm{Y}}=\bm{Y}-\bm{Y}^{\star}$.
We have
\begin{align*}
\left\Vert \bm{X}\bm{A}^{\top}\right\Vert _{\mathrm{F}}^{2} & =\left\Vert \left(\bm{X}^{\star}+\bm{\Delta}_{\bm{X}}\right)\bm{A}^{\top}\right\Vert _{\mathrm{F}}^{2}=\left\Vert \bm{X}^{\star}\bm{A}^{\top}\right\Vert _{\mathrm{F}}^{2}+\left\Vert \bm{\Delta}_{\bm{X}}\bm{A}^{\top}\right\Vert _{\mathrm{F}}^{2}+2\left\langle \bm{X}^{\star}\bm{A}^{\top},\bm{\Delta}_{\bm{X}}\bm{A}^{\top}\right\rangle \\
 & \geq\left\Vert \bm{X}^{\star}\bm{A}^{\top}\right\Vert _{\mathrm{F}}^{2}-2\left\Vert \bm{X}^{\star}\bm{A}^{\top}\right\Vert _{\mathrm{F}}\left\Vert \bm{\Delta}_{\bm{X}}\bm{A}^{\top}\right\Vert _{\mathrm{F}}\\
 & \geq\left\Vert \bm{X}^{\star}\bm{A}^{\top}\right\Vert _{\mathrm{F}}^{2}-2\left\Vert \bm{X}^{\star}\right\Vert \left\Vert \bm{\Delta}_{\bm{X}}\right\Vert \left\Vert \bm{A}\right\Vert _{\mathrm{F}}^{2},
\end{align*}
where the second line arises from the Cauchy-Schwarz inequality. Recalling
from (\ref{eq:X-spectral-perturbation}) that $\|\bm{\Delta}_{\bm{X}}\|\leq c\|\bm{X}^{\star}\|/\kappa$,
we arrive at
\[
\left\Vert \bm{X}\bm{A}^{\top}\right\Vert _{\mathrm{F}}^{2}\geq\left\Vert \bm{X}^{\star}\bm{A}^{\top}\right\Vert _{\mathrm{F}}^{2}-2c\sigma_{\min}\left\Vert \bm{A}\right\Vert _{\mathrm{F}}^{2}\geq\left\Vert \bm{X}^{\star}\bm{A}^{\top}\right\Vert _{\mathrm{F}}^{2}-\sigma_{\min}\left\Vert \bm{A}\right\Vert _{\mathrm{F}}^{2}/100,
\]
provided that $c\leq1/200$. A similar bound holds for $\|\bm{B}\bm{Y}^{\top}\|_{\mathrm{F}}^{2}$,
thus leading to
\[
\alpha_{2}\geq\frac{1}{2}\left(\left\Vert \bm{X}^{\star}\bm{A}^{\top}\right\Vert _{\mathrm{F}}^{2}+\left\Vert \bm{B}\bm{Y}^{\star\top}\right\Vert _{\mathrm{F}}^{2}\right)-\frac{1}{100}\sigma_{\min}\left(\left\Vert \bm{A}\right\Vert _{\mathrm{F}}^{2}+\left\Vert \bm{B}\right\Vert _{\mathrm{F}}^{2}\right).
\]
\item Next, we control $\alpha_{1}$. First, it is seen that
\begin{align*}
\bm{X}\bm{A}^{\top}+\bm{B}\bm{Y}^{\top} & =\left(\bm{X}^{\star}+\bm{\Delta}_{\bm{X}}\right)\bm{A}^{\top}+\bm{B}\left(\bm{Y}^{\star}+\bm{\Delta}_{\bm{Y}}\right)^{\top}\\
 & =\bm{X}^{\star}\bm{A}^{\top}+\bm{B}\bm{Y}^{\star\top}+\bm{\Delta}_{\bm{X}}\bm{A}^{\top}+\bm{B}\bm{\Delta}_{\bm{Y}}^{\top}.
\end{align*}
As a result, we can expand $\alpha_{1}$ as
\begin{align*}
\alpha_{1} & =\frac{1}{2p}\left\Vert \mathcal{P}_{\Omega}\left(\bm{X}^{\star}\bm{A}^{\top}+\bm{B}\bm{Y}^{\star\top}+\bm{\Delta}_{\bm{X}}\bm{A}^{\top}+\bm{B}\bm{\Delta}_{\bm{Y}}^{\top}\right)\right\Vert _{\mathrm{F}}^{2}-\frac{1}{2}\left\Vert \bm{X}^{\star}\bm{A}^{\top}+\bm{B}\bm{Y}^{\star\top}+\bm{\Delta}_{\bm{X}}\bm{A}^{\top}+\bm{B}\bm{\Delta}_{\bm{Y}}^{\top}\right\Vert _{\mathrm{F}}^{2}\\
 & =\underbrace{\frac{1}{2p}\left\Vert \mathcal{P}_{\Omega}\left(\bm{X}^{\star}\bm{A}^{\top}+\bm{B}\bm{Y}^{\star\top}\right)\right\Vert _{\mathrm{F}}^{2}-\frac{1}{2}\left\Vert \bm{X}^{\star}\bm{A}^{\top}+\bm{B}\bm{Y}^{\star\top}\right\Vert _{\mathrm{F}}^{2}}_{:=\gamma_{1}}\\
 & \quad+\underbrace{\frac{1}{2p}\left\Vert \mathcal{P}_{\Omega}\left(\bm{\Delta}_{\bm{X}}\bm{A}^{\top}\right)\right\Vert _{\mathrm{F}}^{2}-\frac{1}{2}\left\Vert \bm{\Delta}_{\bm{X}}\bm{A}^{\top}\right\Vert _{\mathrm{F}}^{2}}_{:=\gamma_{2}}+\underbrace{\frac{1}{2p}\left\Vert \mathcal{P}_{\Omega}\left(\bm{B}\bm{\Delta}_{\bm{Y}}^{\top}\right)\right\Vert _{\mathrm{F}}^{2}-\frac{1}{2}\left\Vert \bm{B}\bm{\Delta}_{\bm{Y}}^{\top}\right\Vert _{\mathrm{F}}^{2}}_{:=\gamma_{3}}\\
 & \quad+\underbrace{\frac{1}{p}\left\langle \mathcal{P}_{\Omega}\left(\bm{\Delta}_{\bm{X}}\bm{A}^{\top}\right),\mathcal{P}_{\Omega}\left(\bm{B}\bm{\Delta}_{\bm{Y}}^{\top}\right)\right\rangle -\left\langle \bm{\Delta}_{\bm{X}}\bm{A}^{\top},\bm{B}\bm{\Delta}_{\bm{Y}}^{\top}\right\rangle }_{:=\gamma_{4}}\\
 & \quad+\underbrace{\frac{1}{p}\left\langle \mathcal{P}_{\Omega}\left(\bm{X}^{\star}\bm{A}^{\top}+\bm{B}\bm{Y}^{\star\top}\right),\mathcal{P}_{\Omega}\left(\bm{\Delta}_{\bm{X}}\bm{A}^{\top}+\bm{B}\bm{\Delta}_{\bm{Y}}^{\top}\right)\right\rangle -\left\langle \bm{X}^{\star}\bm{A}^{\top}+\bm{B}\bm{Y}^{\star\top},\bm{\Delta}_{\bm{X}}\bm{A}^{\top}+\bm{B}\bm{\Delta}_{\bm{Y}}^{\top}\right\rangle }_{:=\gamma_{5}}.
\end{align*}

\begin{enumerate}
\item Regarding $\gamma_{1}$, it follows from the bounds in \cite[Section 4.2]{ExactMC09}
that
\[
\left|\gamma_{1}\right|\leq\frac{1}{64}\left\Vert \bm{X}^{\star}\bm{A}^{\top}+\bm{B}\bm{Y}^{\star\top}\right\Vert _{\mathrm{F}}^{2}\leq\frac{1}{32}\left(\left\Vert \bm{X}^{\star}\bm{A}^{\top}\right\Vert _{\mathrm{F}}^{2}+\left\Vert \bm{B}\bm{Y}^{\star\top}\right\Vert _{\mathrm{F}}^{2}\right),
\]
as long as $np\gg\mu r\log n$.
\item Invoke Lemma \ref{lemma:zheng-and-lafferty} to show that
\begin{align*}
\left|\gamma_{2}\right| & \leq\frac{3n}{2}\left\Vert \bm{\Delta}_{\bm{X}}\right\Vert _{2,\infty}^{2}\left\Vert \bm{A}\right\Vert _{\mathrm{F}}^{2}\leq\frac{3c^{2}}{2\kappa}\sigma_{\min}\left\Vert \bm{A}\right\Vert _{\mathrm{F}}^{2}\leq\frac{1}{100}\sigma_{\min}\left\Vert \bm{A}\right\Vert _{\mathrm{F}}^{2},\\
\left|\gamma_{3}\right| & \leq\frac{3n}{2}\left\Vert \bm{\Delta}_{\bm{Y}}\right\Vert _{2,\infty}^{2}\left\Vert \bm{B}\right\Vert _{\mathrm{F}}^{2}\leq\frac{3c^{2}}{2\kappa}\sigma_{\min}\left\Vert \bm{B}\right\Vert _{\mathrm{F}}^{2}\leq\frac{1}{100}\sigma_{\min}\left\Vert \bm{B}\right\Vert _{\mathrm{F}}^{2},
\end{align*}
as long as $n^{2}p\gg n\log n$ and $c>0$ is sufficiently small.
Here we have utilized the assumption that $\max\{\|\bm{\Delta}_{\bm{X}}\|_{2,\infty},\|\bm{\Delta}_{\bm{Y}}\|_{2,\infty}\}\leq c\|\bm{X}^{\star}\|/(\kappa\sqrt{n})$.
\item The term $\gamma_{4}$ can be controlled via Lemma \ref{lemma:chen-and-ji}:
\begin{align*}
\left|\gamma_{4}\right| & \leq\left\Vert \frac{1}{p}\mathcal{P}_{\Omega}\left(\bm{1}\bm{1}^{\top}\right)-\bm{1}\bm{1}^{\top}\right\Vert \left\Vert \bm{\Delta}_{\bm{X}}\right\Vert _{2,\infty}\left\Vert \bm{A}\right\Vert _{\mathrm{F}}\left\Vert \bm{\Delta}_{\bm{Y}}\right\Vert _{2,\infty}\left\Vert \bm{B}\right\Vert _{\mathrm{F}}\\
 & \lesssim\sqrt{\frac{n}{p}}\left\Vert \bm{\Delta}_{\bm{X}}\right\Vert _{2,\infty}\left\Vert \bm{A}\right\Vert _{\mathrm{F}}\left\Vert \bm{\Delta}_{\bm{Y}}\right\Vert _{2,\infty}\left\Vert \bm{B}\right\Vert _{\mathrm{F}},
\end{align*}
where the second line uses the bound $\|p^{-1}\mathcal{P}_{\Omega}\left(\bm{1}\bm{1}^{\top}\right)-\bm{1}\bm{1}^{\top}\|\lesssim\sqrt{n/p}$
guaranteed by \cite[Lemma 3.2]{KesMonSew2010}. Continue the upper
bound to get
\[
\left|\gamma_{4}\right|\overset{(\text{i})}{\lesssim}n\frac{c^{2}}{\kappa^{2}n}\sigma_{\max}\left\Vert \bm{A}\right\Vert _{\mathrm{F}}\left\Vert \bm{B}\right\Vert _{\mathrm{F}}\overset{(\text{ii})}{\leq}\frac{c^{2}}{2\kappa}\sigma_{\min}\left(\left\Vert \bm{A}\right\Vert _{\mathrm{F}}^{2}+\left\Vert \bm{B}\right\Vert _{\mathrm{F}}^{2}\right)\overset{(\text{iii})}{\leq}\frac{1}{100}\sigma_{\min}\left(\left\Vert \bm{A}\right\Vert _{\mathrm{F}}^{2}+\left\Vert \bm{B}\right\Vert _{\mathrm{F}}^{2}\right).
\]
Here the first relation (i) arises from the assumption that $np\gg1$.
The second inequality (ii) applies the elementary inequality $ab\leq(a^{2}+b^{2})/2$
and the last one (iii) holds with the proviso that $c>0$ is small
enough.
\item The last term $\gamma_{5}$ can be further decomposed into the sum
of four terms. For brevity, we take one out as an example, namely
the term
\[
\frac{1}{p}\left\langle \mathcal{P}_{\Omega}\left(\bm{X}^{\star}\bm{A}^{\top}\right),\mathcal{P}_{\Omega}\left(\bm{\Delta}_{\bm{X}}\bm{A}^{\top}\right)\right\rangle -\left\langle \bm{X}^{\star}\bm{A}^{\top},\bm{\Delta}_{\bm{X}}\bm{A}^{\top}\right\rangle .
\]
Apply the triangle inequality to obtain
\begin{align*}
 & \left|\frac{1}{p}\left\langle \mathcal{P}_{\Omega}\left(\bm{X}^{\star}\bm{A}^{\top}\right),\mathcal{P}_{\Omega}\left(\bm{\Delta}_{\bm{X}}\bm{A}^{\top}\right)\right\rangle -\left\langle \bm{X}^{\star}\bm{A}^{\top},\bm{\Delta}_{\bm{X}}\bm{A}^{\top}\right\rangle \right|\\
 & \quad\leq\left|\frac{1}{p}\left\langle \mathcal{P}_{\Omega}\left(\bm{X}^{\star}\bm{A}^{\top}\right),\mathcal{P}_{\Omega}\left(\bm{\Delta}_{\bm{X}}\bm{A}^{\top}\right)\right\rangle \right|+\left|\left\langle \bm{X}^{\star}\bm{A}^{\top},\bm{\Delta}_{\bm{X}}\bm{A}^{\top}\right\rangle \right|\\
 & \quad\leq\frac{1}{\sqrt{p}}\left\Vert \mathcal{P}_{\Omega}\left(\bm{X}^{\star}\bm{A}^{\top}\right)\right\Vert _{\mathrm{F}}\frac{1}{\sqrt{p}}\left\Vert \mathcal{P}_{\Omega}\left(\bm{\Delta}_{\bm{X}}\bm{A}^{\top}\right)\right\Vert _{\mathrm{F}}+\left\Vert \bm{X}^{\star}\bm{A}^{\top}\right\Vert _{\mathrm{F}}\left\Vert \bm{\Delta}_{\bm{X}}\bm{A}^{\top}\right\Vert _{\mathrm{F}}.
\end{align*}
In light of \cite[Section 4.2]{ExactMC09} and \cite[Lemma 9]{zheng2016convergence},
we have
\begin{align*}
\frac{1}{\sqrt{p}}\left\Vert \mathcal{P}_{\Omega}\left(\bm{X}^{\star}\bm{A}^{\top}\right)\right\Vert _{\mathrm{F}} & \leq1.1\left\Vert \bm{X}^{\star}\bm{A}^{\top}\right\Vert _{\mathrm{F}};\\
\frac{1}{\sqrt{p}}\left\Vert \mathcal{P}_{\Omega}\left(\bm{\Delta}_{\bm{X}}\bm{A}^{\top}\right)\right\Vert _{\mathrm{F}} & \leq\sqrt{2n}\left\Vert \bm{\Delta}_{\bm{X}}\right\Vert _{2,\infty}\left\Vert \bm{A}\right\Vert _{\mathrm{F}}.
\end{align*}
Taking the above three bounds collectively yields
\begin{align*}
 & \left|\frac{1}{p}\left\langle \mathcal{P}_{\Omega}\left(\bm{X}^{\star}\bm{A}^{\top}\right),\mathcal{P}_{\Omega}\left(\bm{\Delta}_{\bm{X}}\bm{A}^{\top}\right)\right\rangle -\left\langle \bm{X}^{\star}\bm{A}^{\top},\bm{\Delta}_{\bm{X}}\bm{A}^{\top}\right\rangle \right|\\
 & \quad\leq5\left\Vert \bm{X}^{\star}\bm{A}^{\top}\right\Vert _{\mathrm{F}}\sqrt{n}\left\Vert \bm{\Delta}_{\bm{X}}\right\Vert _{2,\infty}\left\Vert \bm{A}\right\Vert _{\mathrm{F}}+\left\Vert \bm{X}^{\star}\bm{A}^{\top}\right\Vert _{\mathrm{F}}\left\Vert \bm{\Delta}_{\bm{X}}\bm{A}^{\top}\right\Vert _{\mathrm{F}}\\
 & \quad\leq5\sqrt{n}\left\Vert \bm{\Delta}_{\bm{X}}\right\Vert _{2,\infty}\left\Vert \bm{X}^{\star}\right\Vert \left\Vert \bm{A}\right\Vert _{\mathrm{F}}^{2}+\sqrt{n}\left\Vert \bm{\Delta}_{\bm{X}}\right\Vert _{2,\infty}\left\Vert \bm{X}^{\star}\right\Vert \left\Vert \bm{A}\right\Vert _{\mathrm{F}}^{2}\\
 & \quad=6\sqrt{n}\left\Vert \bm{\Delta}_{\bm{X}}\right\Vert _{2,\infty}\left\Vert \bm{X}^{\star}\right\Vert \left\Vert \bm{A}\right\Vert _{\mathrm{F}}^{2}.
\end{align*}
Using the assumption that $\|\bm{\Delta}_{\bm{X}}\|_{2,\infty}\leq c\|\bm{X}^{\star}\|/(\kappa\sqrt{n})$,
one has
\[
\left|\frac{1}{p}\left\langle \mathcal{P}_{\Omega}\left(\bm{X}^{\star}\bm{A}^{\top}\right),\mathcal{P}_{\Omega}\left(\bm{\Delta}_{\bm{X}}\bm{A}^{\top}\right)\right\rangle -\left\langle \bm{X}^{\star}\bm{A}^{\top},\bm{\Delta}_{\bm{X}}\bm{A}^{\top}\right\rangle \right|\lesssim\sqrt{n}\frac{c}{\kappa\sqrt{n}}\sigma_{\max}\left\Vert \bm{A}\right\Vert _{\mathrm{F}}^{2}\leq\frac{1}{100}\sigma_{\min}\left\Vert \bm{A}\right\Vert _{\mathrm{F}}^{2}
\]
for $c>0$ small enough. The same argument applies to the remaining
three terms, resulting in
\[
\left|\gamma_{5}\right|\leq\frac{1}{50}\sigma_{\min}\left(\left\Vert \bm{A}\right\Vert _{\mathrm{F}}^{2}+\left\Vert \bm{B}\right\Vert _{\mathrm{F}}^{2}\right).
\]
\item Combining the previous bounds on $\gamma_{1}$ through $\gamma_{5}$,
we arrive at
\begin{align*}
\left|\alpha_{1}\right| & \leq\left|\gamma_{1}\right|+\left|\gamma_{2}\right|+\left|\gamma_{3}\right|+\left|\gamma_{4}\right|+\left|\gamma_{5}\right|\\
 & \leq\frac{1}{32}\left(\left\Vert \bm{X}^{\star}\bm{A}^{\top}\right\Vert _{\mathrm{F}}^{2}+\left\Vert \bm{B}\bm{Y}^{\star\top}\right\Vert _{\mathrm{F}}^{2}\right)+\frac{1}{25}\sigma_{\min}\left(\left\Vert \bm{A}\right\Vert _{\mathrm{F}}^{2}+\left\Vert \bm{B}\right\Vert _{\mathrm{F}}^{2}\right).
\end{align*}
\end{enumerate}
\item Taking the preceding bounds on $\alpha_{1}$ and $\alpha_{2}$ collectively
yields
\begin{align*}
 & \frac{1}{2p}\left\Vert \mathcal{P}_{\Omega}\left(\bm{X}\bm{A}^{\top}+\bm{B}\bm{Y}^{\top}\right)\right\Vert _{\mathrm{F}}^{2}\geq\alpha_{2}-\left|\alpha_{1}\right|\\
 & \quad\geq\frac{15}{32}\left(\left\Vert \bm{X}^{\star}\bm{A}^{\top}\right\Vert _{\mathrm{F}}^{2}+\left\Vert \bm{B}\bm{Y}^{\star\top}\right\Vert _{\mathrm{F}}^{2}\right)-\frac{1}{5}\sigma_{\min}\left(\left\Vert \bm{A}\right\Vert _{\mathrm{F}}^{2}+\left\Vert \bm{B}\right\Vert _{\mathrm{F}}^{2}\right)\\
 & \quad\geq\frac{15}{32}\sigma_{\min}\left(\left\Vert \bm{A}\right\Vert _{\mathrm{F}}^{2}+\left\Vert \bm{B}\right\Vert _{\mathrm{F}}^{2}\right)-\frac{1}{5}\sigma_{\min}\left(\left\Vert \bm{A}\right\Vert _{\mathrm{F}}^{2}+\left\Vert \bm{B}\right\Vert _{\mathrm{F}}^{2}\right)\\
 & \quad\geq\frac{1}{8}\sigma_{\min}\left(\left\Vert \bm{A}\right\Vert _{\mathrm{F}}^{2}+\left\Vert \bm{B}\right\Vert _{\mathrm{F}}^{2}\right).
\end{align*}
\end{enumerate}
\end{enumerate}
The proof is then complete.

\subsubsection{Proof of Lemma \ref{lemma:P-tilde} \label{subsec:Proof-of-Lemma-P-debias}}

To start with, we have
\[
\bm{X}\bm{Y}^{\top}-\bm{M}^{\star}=\left(\bm{X}-\bm{X}^{\star}\right)\bm{Y}^{\top}+\bm{X}^{\star}\left(\bm{Y}-\bm{Y}^{\star}\right)^{\top},
\]
which together with the triangle inequality implies
\begin{align*}
\left\Vert \mathcal{P}_{\Omega}^{\mathsf{debias}}\left(\bm{X}\bm{Y}^{\top}-\bm{M}^{\star}\right)\right\Vert  & \leq\left\Vert \mathcal{P}_{\Omega}^{\mathsf{debias}}\left[\left(\bm{X}-\bm{X}^{\star}\right)\bm{Y}^{\top}\right]\right\Vert +\big\|\mathcal{P}_{\Omega}^{\mathsf{debias}}\big[\bm{X}^{\star}\left(\bm{Y}-\bm{Y}^{\star}\right)^{\top}\big]\big\|.
\end{align*}
Apply \cite[Lemma 4.5]{chen2017memory} to obtain
\begin{align*}
\left\Vert \mathcal{P}_{\Omega}^{\mathsf{debias}}\left[\left(\bm{X}-\bm{X}^{\star}\right)\bm{Y}^{\top}\right]\right\Vert  & \leq\left\Vert \mathcal{P}_{\Omega}^{\mathsf{debias}}\left(\bm{1}\bm{1}^{\top}\right)\right\Vert \left\Vert \bm{X}-\bm{X}^{\star}\right\Vert _{2,\infty}\left\Vert \bm{Y}\right\Vert _{2,\infty}\\
 & \lesssim\sqrt{np}\left\Vert \bm{X}-\bm{X}^{\star}\right\Vert _{2,\infty}\left\Vert \bm{Y}\right\Vert _{2,\infty},
\end{align*}
where the second line is due to $\|\mathcal{P}_{\Omega}^{\mathsf{debias}}(\bm{1}\bm{1}^{\top})\|\lesssim\sqrt{np}$
(cf.~\cite[Lemma 3.2]{KesMonSew2010}). Similarly,
\begin{align*}
\big\|\mathcal{P}_{\Omega}^{\mathsf{debias}}\big[\bm{X}^{\star}\left(\bm{Y}-\bm{Y}^{\star}\right)^{\top}\big]\big\| & \lesssim\sqrt{np}\left\Vert \bm{Y}-\bm{Y}^{\star}\right\Vert _{2,\infty}\left\Vert \bm{X}^{\star}\right\Vert _{2,\infty}.
\end{align*}
In addition, the assumption (\ref{subeq:condition-inf}) yields
\begin{align*}
\left\Vert \bm{Y}\right\Vert _{2,\infty} & \leq\left\Vert \bm{Y}-\bm{Y}^{\star}\right\Vert _{2,\infty}+\left\Vert \bm{Y}^{\star}\right\Vert _{2,\infty}\\
 & \leq C_{\infty}\kappa\left(\frac{\sigma}{\sigma_{\min}}\sqrt{\frac{n\log n}{p}}+\frac{\lambda}{p\,\sigma_{\min}}\right)\left\Vert \bm{Y}^{\star}\right\Vert _{2,\infty}+\left\Vert \bm{Y}^{\star}\right\Vert _{2,\infty}\\
 & \leq2\left\Vert \bm{Y}^{\star}\right\Vert _{2,\infty},
\end{align*}
as long as $\frac{\sigma}{\sigma_{\min}}\sqrt{\frac{n\log n}{p}}\ll1/\kappa$
(recall that $\lambda=C_{\lambda}\sigma\sqrt{np}$ for some constant
$C_{\lambda}>0$). As a consequence, one obtains
\begin{align}
\left\Vert \mathcal{P}_{\Omega}^{\mathsf{debias}}\left(\bm{X}\bm{Y}^{\top}-\bm{M}^{\star}\right)\right\Vert  & \lesssim\sqrt{np}\kappa\left(\frac{\sigma}{\sigma_{\min}}\sqrt{\frac{n\log n}{p}}+\frac{\lambda}{p\,\sigma_{\min}}\right)\left\Vert \bm{X}^{\star}\right\Vert _{2,\infty}\left\Vert \bm{Y}^{\star}\right\Vert _{2,\infty}\nonumber \\
 & \leq\sqrt{np}\kappa\left(\frac{\sigma}{\sigma_{\min}}\sqrt{\frac{n\log n}{p}}+\frac{\lambda}{p\,\sigma_{\min}}\right)\frac{\mu r\sigma_{\max}}{n},\label{eq:UB-appendix-10}
\end{align}
where the last inequality follows from the upper bound $\max\{\|\bm{X}^{\star}\|_{2,\infty},\|\bm{Y}^{\star}\|_{2,\infty}\}\leq\sqrt{\mu r\sigma_{\max}/n}$
(cf.~(\ref{eq:X-Y-incoherence})). Rearrange the right-hand side
of (\ref{eq:UB-appendix-10}) to reach
\begin{align*}
\left\Vert \mathcal{P}_{\Omega}^{\mathsf{debias}}\left(\bm{X}\bm{Y}^{\top}-\bm{M}^{\star}\right)\right\Vert  & \lesssim\sigma\sqrt{np}\cdot\sqrt{\frac{\kappa^{4}\mu^{2}r^{2}\log n}{np}}+\lambda\sqrt{\frac{\kappa^{4}\mu^{2}r^{2}}{np}}<\lambda/8,
\end{align*}
where the last line holds because of the assumption $n^{2}p\gg\kappa^{4}\mu^{2}r^{2}n\log n$
as well as the choice of $\lambda$.

\section{Analysis of the nonconvex gradient descent algorithm \label{sec:Proof-of-Lemma-nonconvex-GD}}

Lemma~\ref{lemma:nonconvex-GD} shares similar spirit as~\cite[Theorem~2]{ma2017implicit} and~\cite[Lemma~3.5]{chen2019nonconvex}
with one difference: the nonconvex loss function~(\ref{eq:nonconvex_mc_noisy})
has an additional term $\|\bm{X}\|_{\mathrm{F}}^{2}+\|\bm{Y}\|_{\mathrm{F}}^{2}$
to balance the scale of $\bm{X}$ and $\bm{Y}$. To simplify the presentation,
we find it convenient to introduce a few notations. Denote 
\begin{equation}
\bm{F}^{t}\triangleq\left[\begin{array}{c}
\bm{X}^{t}\\
\bm{Y}^{t}
\end{array}\right]\in\mathbb{R}^{2n\times r}\qquad\text{and}\qquad\bm{F}^{\star}\triangleq\left[\begin{array}{c}
\bm{X}^{\star}\\
\bm{Y}^{\star}
\end{array}\right]\in\mathbb{R}^{2n\times r}.\label{eq:defn-Ft}
\end{equation}
It is easily seen from~(\ref{eq:defn-rotation-H}) that 
\begin{equation}
\bm{H}^{t}=\arg\min_{\bm{R}\in\mathcal{O}^{r\times r}}\left\Vert \bm{F}^{t}\bm{R}-\bm{F}^{\star}\right\Vert _{\mathrm{F}}.\label{eq:defn-H-appendix}
\end{equation}
\begin{algorithm}
\caption{Construction of the $l$th leave-one-out sequence.}

\label{alg:gd-mc-LOO}\begin{algorithmic}

\STATE \textbf{{Initialization}}: $\bm{X}^{0,(l)}=\bm{X}^{\star}$;
$\bm{Y}^{0,(l)}=\bm{Y}^{\star}$; Set $\bm{F}^{0,(l)}\triangleq\left[\begin{array}{c}
\bm{X}^{0,(l)}\\
\bm{Y}^{0,(l)}
\end{array}\right]$.

\STATE \textbf{{Gradient updates}}: \textbf{for }$t=0,1,\ldots,t_{0}-1$
\textbf{do}

\STATE \vspace{-1em}
 \begin{subequations} \label{subeq:GD-rules-LOO} 
\[
\bm{F}^{t+1,(l)}\triangleq\left[\begin{array}{c}
\bm{X}^{t+1,(l)}\\
\bm{Y}^{t+1,(l)}
\end{array}\right]=\left[\begin{array}{c}
\bm{X}^{t,(l)}-\eta\nabla_{\bm{X}}f^{(l)}(\bm{X}^{t,(l)},\bm{Y}^{t,(l)})\\
\bm{Y}^{t,(l)}-\eta\nabla_{\bm{Y}}f^{(l)}(\bm{X}^{t,(l)},\bm{Y}^{t,(l)})
\end{array}\right],
\]
\end{subequations}

where $\eta>0$ is the step size.

\end{algorithmic} 
\end{algorithm}

Similar to~\cite{ma2017implicit,chen2019nonconvex}, we resort to
the leave-one-out sequences to control the $\ell_{2}/\ell_{\infty}$
error. Specifically, for each $1\leq l\leq n$ (corresponding to row indices), we construct $\{\bm{F}^{t,(l)}\}_{t\geq0}$
to be the gradient descent iterates (see Algorithm~\ref{alg:gd-mc-LOO})
w.r.t.~the following auxiliary loss function 
\begin{equation}
f^{\left(l\right)}\left(\bm{X},\bm{Y}\right)=\frac{1}{2p}\left\Vert \mathcal{P}_{\Omega_{-l,\cdot}}\left(\bm{X}\bm{Y}^{\top}-\bm{M}\right)\right\Vert _{\mathrm{F}}^{2}+\frac{1}{2}\left\Vert \mathcal{P}_{l,\cdot}\left(\bm{X}\bm{Y}^{\top}-\bm{M}^{\star}\right)\right\Vert _{\mathrm{F}}^{2}+\frac{\lambda}{2p}\left\Vert \bm{X}\right\Vert _{\mathrm{F}}^{2}+\frac{\lambda}{2p}\left\Vert \bm{Y}\right\Vert _{\mathrm{F}}^{2}.\label{eq:defn-fl}
\end{equation}
Here $\mathcal{P}_{\Omega_{-l,\cdot}}(\cdot)$ (resp.~$\mathcal{P}_{l,\cdot}(\cdot)$)
denotes the orthogonal projection onto the space of matrices which
are supported on the index set $\Omega_{-l,\cdot}=\{(i,j)\in\Omega|i\neq l\}$
(resp.~$\{(i,j)|i=l\}$). Mathematically, we have for any matrix
$\bm{B}\in\mathbb{R}^{n\times n}$ 
\begin{equation}
\left[\mathcal{P}_{\Omega_{-l,\cdot}}\left(\bm{B}\right)\right]_{ij}=\begin{cases}
B_{ij}, & \text{if }\left(i,j\right)\in\Omega\text{ and }i\neq l,\\
0, & \text{otherwise}
\end{cases}\quad\text{and}\quad\left[\mathcal{P}_{l,\cdot}\left(\bm{B}\right)\right]_{ij}=\begin{cases}
B_{ij}, & \text{if }i=l,\\
0, & \text{otherwise.}
\end{cases}\label{eq:defn-Pl}
\end{equation}
Similarly, for each $n+1\leq l\leq2n$ (with $l-n$ corresponding to the column index), we define $\{\bm{F}^{t,(l)}\}_{t\geq0}$
to be the GD iterates (see Algorithm~\ref{alg:gd-mc-LOO}) operating
on 
\[
f^{\left(l\right)}\left(\bm{X},\bm{Y}\right)=\frac{1}{2p}\left\Vert \mathcal{P}_{\Omega_{\cdot,-(l-n)}}\left(\bm{X}\bm{Y}^{\top}-\bm{M}\right)\right\Vert _{\mathrm{F}}^{2}+\frac{1}{2}\left\Vert \mathcal{P}_{\cdot,(l-n)}\left(\bm{X}\bm{Y}^{\top}-\bm{M}^{\star}\right)\right\Vert _{\mathrm{F}}^{2}+\frac{\lambda}{2p}\left\Vert \bm{X}\right\Vert _{\mathrm{F}}^{2}+\frac{\lambda}{2p}\left\Vert \bm{Y}\right\Vert _{\mathrm{F}}^{2},
\]
where $\mathcal{P}_{\Omega_{\cdot,-(l-n)}}(\cdot)$ and $\mathcal{P}_{\cdot,(l-n)}(\cdot)$
are defined as 
\[
\left[\mathcal{P}_{\Omega_{\cdot,-(l-n)}}\left(\bm{B}\right)\right]_{ij}=\begin{cases}
B_{ij}, & \text{if }\left(i,j\right)\in\Omega\text{ and }j\neq l-n,\\
0, & \text{otherwise}
\end{cases}\quad\text{and}\quad\left[\mathcal{P}_{\cdot,(l-n)}\left(\bm{B}\right)\right]_{ij}=\begin{cases}
B_{ij}, & \text{if }j=l-n,\\
0, & \text{otherwise},
\end{cases}
\]
for any matrix $\bm{B}\in\mathbb{R}^{n\times n}$. The key ideas are:~(1)~the iterates are not perturbed by much when one drops a small
number of samples (and hence $\bm{F}^{t}$ and $\bm{F}^{t,(l)}$ remain
sufficiently close); (2)~the auxiliary iterates $\bm{F}^{t,(l)}$
are independent of the samples directly related to the $l$th row
of $\bm{M}$, which in turn allows to exploit certain statistical
independence to control the $l$th row of $\bm{F}^{t,(l)}$ (and hence
$\bm{F}^{t}$). See~\cite[Section~5]{ma2017implicit} for a detailed
explanation. Last but not least, the step size is set to be $\eta$,
and we take $\bm{F}^{0,(l)}=\bm{F}^{\star}$ for all $1\leq l\leq2n$
(the same initialization as in Algorithm~\ref{alg:gd-mc-primal}).

With the help of the leave-one-out sequences, we are ready to establish
Lemma~\ref{lemma:nonconvex-GD} in an inductive manner. Concretely
we aim at proving that \begin{subequations}\label{subeq:nonconvex-induction-hypotheses}
\begin{align}
\left\Vert \bm{F}^{t}\bm{H}^{t}-\bm{F}^{\star}\right\Vert _{\mathrm{F}} & \leq C_{\mathrm{F}}\left(\frac{\sigma}{\sigma_{\min}}\sqrt{\frac{n}{p}}+\frac{\lambda}{p\,\sigma_{\min}}\right)\left\Vert \bm{X}^{\star}\right\Vert _{\mathrm{F}},\label{eq:induction-fro}\\
\left\Vert \bm{F}^{t}\bm{H}^{t}-\bm{F}^{\star}\right\Vert  & \leq C_{\mathrm{op}}\left(\frac{\sigma}{\sigma_{\min}}\sqrt{\frac{n}{p}}+\frac{\lambda}{p\,\sigma_{\min}}\right)\left\Vert \bm{X}^{\star}\right\Vert, \label{eq:induction-op}\\
\max_{1\leq l\leq2n}\big\|\bm{F}^{t}\bm{H}^{t}-\bm{F}^{t,(l)}\bm{R}^{t,(l)}\big\|_{\mathrm{F}} & \leq C_{3}\left(\frac{\sigma}{\sigma_{\min}}\sqrt{\frac{n\log n}{p}}+\frac{\lambda}{p\,\sigma_{\min}}\right)\left\Vert \bm{F}^{\star}\right\Vert _{2,\infty},\label{eq:induction-loo-dist}\\
\max_{1\leq l\leq2n}\big\|\big(\bm{F}^{t,(l)}\bm{H}^{t,(l)}-\bm{F}^{\star}\big)_{l,\cdot}\big\|_{2} & \leq C_{4}\kappa\left(\frac{\sigma}{\sigma_{\min}}\sqrt{\frac{n\log n}{p}}+\frac{\lambda}{p\,\sigma_{\min}}\right)\left\Vert \bm{F}^{\star}\right\Vert _{2,\infty},\label{eq:induction-loo-l}\\
\left\Vert \bm{F}^{t}\bm{H}^{t}-\bm{F}^{\star}\right\Vert _{\mathrm{2,\infty}} & \leq C_{\infty}\kappa\left(\frac{\sigma}{\sigma_{\min}}\sqrt{\frac{n\log n}{p}}+\frac{\lambda}{p\,\sigma_{\min}}\right)\left\Vert \bm{F}^{\star}\right\Vert _{2,\infty},\label{eq:induction-2-infty}\\
\left\Vert \bm{X}^{t\top}\bm{X}^{t}-\bm{Y}^{t\top}\bm{Y}^{t}\right\Vert _{\mathrm{F}} & \leq C_{\mathrm{B}}\kappa\eta\left(\frac{\sigma}{\sigma_{\min}}\sqrt{\frac{n}{p}}+\frac{\lambda}{p\,\sigma_{\min}}\right)\sqrt{r}\sigma_{\max}^{2}\label{eq:induction-balancing}
\end{align}
\end{subequations}hold for all $0\leq t\leq t_{0}=n^{18}$ and for
some constants $C_{\mathrm{F}},C_{\mathrm{op}},C_{3},C_{4},C_{\infty},C_{\mathrm{B}}>0$,
provided that $\eta\asymp1/(n\kappa^{3}\sigma_{\max})$. In addition, we also
intend to establish that 
\begin{equation}
f\left(\bm{X}^{t},\bm{Y}^{t}\right)\leq f\left(\bm{X}^{t-1},\bm{Y}^{t-1}\right)-\frac{\eta}{2}\left\Vert \nabla f\left(\bm{X}^{t-1},\bm{Y}^{t-1}\right)\right\Vert _{\mathrm{F}}^{2}\label{eq:induction-function-value}
\end{equation}
holds for all $1\leq t\leq t_{0}=n^{18}$. Here, $\bm{H}^{t,(l)}$
and $\bm{R}^{t,(l)}$ are rotation matrices defined as \begin{subequations}\label{subeq:defn-rotation-new}
\begin{align}
\bm{H}^{t,(l)} & \triangleq\arg\min_{\bm{R}\in\mathcal{O}^{r\times r}}\big\|\bm{F}^{t,(l)}\bm{R}-\bm{F}^{\star}\big\|_{\mathrm{F}};\label{eq:defn-H-t-l}\\
\bm{R}^{t,(l)} & \triangleq\arg\min_{\bm{R}\in\mathcal{O}^{r\times r}}\big\|\bm{F}^{t,(l)}\bm{R}-\bm{F}^{t}\bm{H}^{t}\big\|_{\mathrm{F}}.\label{eq:defn-R-t-l}
\end{align}
\end{subequations}Note that the induction hypotheses~(\ref{eq:induction-fro}),
(\ref{eq:induction-op}) and~(\ref{eq:induction-2-infty}) readily
imply the statements~(\ref{eq:nonconvex-fro-norm}),~(\ref{eq:nonconvex-spectral-norm})
and~(\ref{eq:nonconvex-2-infty-norm}) in Lemma~\ref{lemma:nonconvex-GD},
respectively, whereas the last bound on the size of the gradient~(\ref{eq:nonconvex-small-gradient})
follows from~(\ref{eq:induction-function-value}). We summarize the
last connection in the following lemma, whose proof is in Appendix
\ref{subsec:Proof-of-Lemma-small-gradient}.

\begin{lemma}[\textbf{Small gradient}~(\ref{eq:nonconvex-small-gradient})]\label{lemma:small-gradient-smooth-function}Set $\lambda=C_{\lambda}\sigma\sqrt{np}$
for some large constant $C_{\lambda}>0$. Suppose that the sample size obeys $n^2p\gg\kappa\mu r n \log^2 n$ and that the noise satisfies
$\frac{\sigma}{\sigma_{\min}}\sqrt{\frac{n}{p}}\ll\frac{1}{\sqrt{\kappa^{4}\mu r\log n}}$.
If the induction hypotheses~(\ref{subeq:nonconvex-induction-hypotheses}) hold for all $0
\leq t \leq t_{0}$ and that~(\ref{eq:induction-function-value}) holds for all $1\leq t\leq t_{0}$,
then 
\[
\min_{0\leq t<t_{0}}\left\Vert \nabla f\left(\bm{X}^{t},\bm{Y}^{t}\right)\right\Vert _{\mathrm{F}}\leq\frac{1}{n^{5}}\frac{\lambda}{p}\sqrt{\sigma_{\min}},
\]
as long as $\eta\asymp1/(n\kappa^{3}\sigma_{\max})$.\end{lemma}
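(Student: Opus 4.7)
The plan is to exploit the descent inequality~(\ref{eq:induction-function-value}) in the standard way: telescoping it across $t=1,\ldots,t_{0}$ yields
\[
\frac{\eta}{2}\sum_{t=0}^{t_{0}-1}\left\Vert \nabla f(\bm{X}^{t},\bm{Y}^{t})\right\Vert _{\mathrm{F}}^{2}\;\leq\; f(\bm{X}^{0},\bm{Y}^{0})-f(\bm{X}^{t_{0}},\bm{Y}^{t_{0}})\;\leq\; f(\bm{X}^{\star},\bm{Y}^{\star}),
\]
where the last step uses the initialization $(\bm{X}^{0},\bm{Y}^{0})=(\bm{X}^{\star},\bm{Y}^{\star})$ together with the trivial nonnegativity of $f$. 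A pigeonhole/averaging argument then guarantees the existence of an index $t_{*}<t_{0}$ obeying $\|\nabla f(\bm{X}^{t_{*}},\bm{Y}^{t_{*}})\|_{\mathrm{F}}^{2}\le 2f(\bm{X}^{\star},\bm{Y}^{\star})/(\eta t_{0})$, so the task reduces to upper bounding the initial objective value.

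Next I would bound $f(\bm{X}^{\star},\bm{Y}^{\star})=\tfrac{1}{2p}\|\mathcal{P}_{\Omega}(\bm{E})\|_{\mathrm{F}}^{2}+\tfrac{\lambda}{2p}(\|\bm{X}^{\star}\|_{\mathrm{F}}^{2}+\|\bm{Y}^{\star}\|_{\mathrm{F}}^{2})$. Standard concentration for sums of squared sub-Gaussian variables on the random index set $\Omega$ (combined with $|\Omega|\asymp n^{2}p$ under the sample size assumption $n^{2}p\gg n\log n$) gives $\|\mathcal{P}_{\Omega}(\bm{E})\|_{\mathrm{F}}^{2}\lesssim \sigma^{2}n^{2}p$ with probability at least $1-O(n^{-10})$. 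The regularization term satisfies $\|\bm{X}^{\star}\|_{\mathrm{F}}^{2}+\|\bm{Y}^{\star}\|_{\mathrm{F}}^{2}=2\sum_{i}\sigma_{i}^{\star}\le 2r\sigma_{\max}$, so altogether
\[
f(\bm{X}^{\star},\bm{Y}^{\star})\;\lesssim\;\sigma^{2}n^{2}+\frac{\lambda r\sigma_{\max}}{p}.
\]

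Finally, I would plug in $\eta\asymp 1/(n\kappa^{3}\sigma_{\max})$, $t_{0}=n^{18}$, and $\lambda=C_{\lambda}\sigma\sqrt{np}$, and verify that each of the two resulting contributions to $2f(\bm{X}^{\star},\bm{Y}^{\star})/(\eta t_{0})$ is bounded above by $(\lambda\sqrt{\sigma_{\min}}/(n^{5}p))^{2}=\lambda^{2}\sigma_{\min}/(n^{10}p^{2})$. The noise term reduces to the inequality $\kappa^{4}\lesssim n^{6}/p$, while the regularization term reduces to $\lambda\gtrsim r\kappa^{5}\sigma_{\min}p/n^{7}$; both are comfortably satisfied under the sample complexity hypothesis $n^{2}p\gg\kappa^{4}\mu^{2}r^{2}n\log^{3}n$ (which forces $n\gg\kappa$) combined with the generous polynomial slack supplied by the choice $t_{0}=n^{18}$. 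Taking a square root delivers the advertised bound $\min_{t<t_{0}}\|\nabla f(\bm{X}^{t},\bm{Y}^{t})\|_{\mathrm{F}}\le\tfrac{1}{n^{5}}\tfrac{\lambda}{p}\sqrt{\sigma_{\min}}$.

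There is no deep obstacle here --- this is essentially the textbook descent-lemma argument for gradient descent on a smooth (in the sense of~(\ref{eq:induction-function-value})) objective. The only mildly technical step is the sub-Gaussian tail bound for $\|\mathcal{P}_{\Omega}(\bm{E})\|_{\mathrm{F}}^{2}$, and the only point requiring care is the polynomial bookkeeping at the end; the enormous horizon $t_{0}=n^{18}$ ensures that none of the polynomial factors in $\kappa, r, \mu, p^{-1}$ become binding.
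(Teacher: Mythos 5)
Your telescoping step is the same as the paper's, but the way you dispose of $f(\bm{X}^{t_0},\bm{Y}^{t_0})$ creates a genuine gap. You bound
\[
f(\bm{X}^{0},\bm{Y}^{0})-f(\bm{X}^{t_{0}},\bm{Y}^{t_{0}})\;\leq\; f(\bm{X}^{\star},\bm{Y}^{\star})
\]
via nonnegativity of $f$, and then estimate $f(\bm{X}^{\star},\bm{Y}^{\star})\lesssim\sigma^{2}n^{2}+\tfrac{\lambda r\sigma_{\max}}{p}$. The trouble is the regularization piece $\tfrac{\lambda}{2p}(\|\bm{X}^{\star}\|_{\mathrm{F}}^{2}+\|\bm{Y}^{\star}\|_{\mathrm{F}}^{2})\asymp\tfrac{\lambda r\sigma_{\max}}{p}$ scales \emph{linearly} in $\lambda$ (hence in $\sigma$), whereas the target $\lambda^{2}\sigma_{\min}/(n^{10}p^{2})$ scales \emph{quadratically} in $\lambda$. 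Dividing by $\eta t_{0}\asymp n^{17}/(\kappa^{3}\sigma_{\max})$ does not change this mismatch in scaling, and the inequality you yourself derive, $\lambda\gtrsim r\kappa^{5}\sigma_{\min}p/n^{7}$, is \emph{not} ``comfortably satisfied'': after substituting $\lambda=C_{\lambda}\sigma\sqrt{np}$ it becomes a \emph{lower} bound $\sigma\gtrsim r\kappa^{5}\sigma_{\min}\sqrt{p}/n^{7.5}$, which the lemma's hypotheses (an \emph{upper} bound on $\sigma$) do not guarantee. As $\sigma\to 0$ your bound on $\min_{t}\|\nabla f\|_{\mathrm{F}}$ behaves like $\sqrt{\lambda}\gg\lambda$ and the claimed inequality fails.

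The fix is to avoid throwing away $f(\bm{X}^{t_{0}},\bm{Y}^{t_{0}})$ entirely and instead control the \emph{difference} $f(\bm{F}^{\star})-f(\bm{F}^{t_{0}})$ to second order, exactly as the paper does. Since the induction hypothesis (\ref{eq:induction-fro}) places $\bm{F}^{t_{0}}\bm{H}^{t_{0}}$ within $O\big(\tfrac{\lambda}{p\sigma_{\min}}\big)\|\bm{X}^{\star}\|_{\mathrm{F}}$ of $\bm{F}^{\star}$, a Taylor expansion at $\bm{F}^{\star}$ together with the gradient bound $\|\nabla f(\bm{F}^{\star})\|_{\mathrm{F}}\lesssim\tfrac{\lambda}{p}\sqrt{r\sigma_{\max}}$ and the Hessian bound $\|\nabla^{2}f\|\lesssim\sigma_{\max}$ from Lemma~\ref{lemma:hessian} yields $f(\bm{F}^{\star})-f(\bm{F}^{t_{0}})\lesssim r\kappa^{2}(\lambda/p)^{2}$, which is $O(\lambda^{2})$ and therefore has the right scaling in $\sigma$; plugging this into your telescoping inequality then closes the argument uniformly over all admissible noise levels.
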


The rest of this section is devoted to proving the hypotheses~(\ref{subeq:nonconvex-induction-hypotheses})
and~(\ref{eq:induction-function-value}) via induction. We start with
the base case, i.e.~$t=0$. All the induction hypotheses~(\ref{subeq:nonconvex-induction-hypotheses})
are easily verified by noting that 
\[
\bm{F}^{0}=\bm{F}^{0,(l)}=\bm{F}^{\star},\qquad\text{for all }1\leq l\leq2n.
\]
We now proceed to the induction step, which are demonstrated via the
following lemmas. All the proofs are in subsequent subsections.

\begin{lemma}[\textbf{Frobenius norm error}~\eqref{eq:induction-fro}]
\label{lemma:fro-contraction}Set $\lambda=C_{\lambda}\sigma\sqrt{np}$
for some large constant $C_{\lambda}>0$. Suppose that the sample
size obeys $n^{2}p\gg\kappa\mu rn\log^{2}n$ and the noise satisfies
$\frac{\sigma}{\sigma_{\min}}\sqrt{\frac{n}{p}}\ll\frac{1}{\sqrt{\kappa^{4}\mu r\log n}}$.
If the iterates satisfy~(\ref{subeq:nonconvex-induction-hypotheses})
at the $t$th iteration, then with probability at least $1-O(n^{-100})$,
\begin{align*}
\left\Vert \bm{F}^{t+1}\bm{H}^{t+1}-\bm{F}^{\star}\right\Vert _{\mathrm{F}} & \leq C_{\mathrm{F}}\left(\frac{\sigma}{\sigma_{\min}}\sqrt{\frac{n}{p}}+\frac{\lambda}{p\,\sigma_{\min}}\right)\left\Vert \bm{X}^{\star}\right\Vert _{\mathrm{F}},
\end{align*}
holds as long as $0<\eta\ll1/(\kappa^{5/2}\sigma_{\max})$ and $C_{\mathrm{F}}>0$
is large enough. \end{lemma}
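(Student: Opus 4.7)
The plan is to establish the induction step via a one-step contraction argument, in the spirit of Ma et~al.~\cite{ma2017implicit} and Chen et~al.~\cite{chen2019nonconvex} but suitably modified to accommodate the Frobenius-norm regularizer $\tfrac{\lambda}{2p}(\|\bm{X}\|_{\mathrm{F}}^{2}+\|\bm{Y}\|_{\mathrm{F}}^{2})$ that enters $f(\cdot,\cdot)$. The target recursion has the shape
\[
\big\|\bm{F}^{t+1}\bm{H}^{t+1}-\bm{F}^{\star}\big\|_{\mathrm{F}}^{2}\;\leq\;(1-c\eta\sigma_{\min})\big\|\bm{F}^{t}\bm{H}^{t}-\bm{F}^{\star}\big\|_{\mathrm{F}}^{2}+\eta\,\mathsf{driver}^{2},
\]
for some constant $c>0$, where $\mathsf{driver}^{2}\lesssim\sigma_{\min}\big(\tfrac{\sigma}{\sigma_{\min}}\sqrt{n/p}+\tfrac{\lambda}{p\sigma_{\min}}\big)^{2}\|\bm{X}^{\star}\|_{\mathrm{F}}^{2}$; unrolling this against the base case $\bm{F}^{0}=\bm{F}^{\star}$ produces the advertised equilibrium bound~\eqref{eq:induction-fro}.

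To carry this out, I would first use the optimality of $\bm{H}^{t+1}$ in~\eqref{eq:defn-H-appendix} to replace it by $\bm{H}^{t}$, so that writing $\bm{\Delta}^{t}\triangleq\bm{F}^{t}\bm{H}^{t}-\bm{F}^{\star}$ one obtains
\[
\big\|\bm{F}^{t+1}\bm{H}^{t+1}-\bm{F}^{\star}\big\|_{\mathrm{F}}^{2}\leq\|\bm{\Delta}^{t}\|_{\mathrm{F}}^{2}-2\eta\big\langle\nabla f(\bm{F}^{t})\bm{H}^{t},\,\bm{\Delta}^{t}\big\rangle+\eta^{2}\big\|\nabla f(\bm{F}^{t})\big\|_{\mathrm{F}}^{2}.
\]
The cross term is then split into four contributions by writing $\mathcal{P}_{\Omega}(\bm{X}^{t}\bm{Y}^{t\top}-\bm{M})=p(\bm{X}^{t}\bm{Y}^{t\top}-\bm{M}^{\star})+\mathcal{P}_{\Omega}^{\mathsf{debias}}(\bm{X}^{t}\bm{Y}^{t\top}-\bm{M}^{\star})-\mathcal{P}_{\Omega}(\bm{E})$ and keeping the regularization term $(\lambda/p)\bm{F}^{t}$ separately: (i) the signal piece yields, via a local restricted strong convexity lower bound powered by the spectral closeness~\eqref{eq:induction-op} and the balancing hypothesis~\eqref{eq:induction-balancing}, a contribution of at least $p\sigma_{\min}\|\bm{\Delta}^{t}\|_{\mathrm{F}}^{2}/2$; (ii) the sampling-deviation piece is absorbed by invoking a uniform bound $\|\mathcal{P}_{\Omega}^{\mathsf{debias}}(\bm{X}^{t}\bm{Y}^{t\top}-\bm{M}^{\star})\|\ll p\sigma_{\min}$ derived along the lines of Lemma~\ref{lemma:P-tilde}; (iii) the noise piece is bounded through Lemma~\ref{lemma:noise-bound} as $\lesssim\eta\sigma\sqrt{np}\cdot\|\bm{Y}^{t}\|\cdot\|\bm{\Delta}^{t}\|_{\mathrm{F}}$; (iv) the regularization piece reads $\eta(\lambda/p)\langle\bm{F}^{t}\bm{H}^{t},\bm{\Delta}^{t}\rangle\geq\eta(\lambda/p)\big(\|\bm{\Delta}^{t}\|_{\mathrm{F}}^{2}-\|\bm{F}^{\star}\|_{\mathrm{F}}\|\bm{\Delta}^{t}\|_{\mathrm{F}}\big)$. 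The squared-gradient term is controlled similarly, and a step-size choice $\eta\asymp 1/(n\kappa^{3}\sigma_{\max})$ lets the quadratic $\eta^{2}\|\nabla f(\bm{F}^{t})\|_{\mathrm{F}}^{2}$ contribution be absorbed into half of the contraction produced by (i).

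Putting the four pieces together and applying Young's inequality to each driving term so that it splits into $\tfrac{1}{2}c\eta\sigma_{\min}\|\bm{\Delta}^{t}\|_{\mathrm{F}}^{2}$ plus a residual of the target size yields the desired recursion; induction then delivers~\eqref{eq:induction-fro} provided $C_{\mathrm{F}}>0$ is chosen large enough. The main obstacle will be the bookkeeping around piece (iv): the regularization term was absent in the prior leave-one-out analyses of~\cite{ma2017implicit,chen2019nonconvex}, and ensuring that it contributes a clean $\eta\lambda/(p\sigma_{\min})\cdot\|\bm{X}^{\star}\|_{\mathrm{F}}$ driver (rather than a more conservative factor) requires exploiting the balancing hypothesis~\eqref{eq:induction-balancing} to symmetrize the $\bm{X}$ and $\bm{Y}$ updates. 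A secondary difficulty is preserving the RSC constant in (i) under the uniform perturbation from $\mathcal{P}_{\Omega}^{\mathsf{debias}}$; this is handled by combining the $\ell_{2,\infty}$ hypothesis~\eqref{eq:induction-2-infty} with concentration machinery analogous to Lemma~\ref{lemma:injectivity}.
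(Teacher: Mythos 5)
Your architecture — replace $\bm{H}^{t+1}$ by $\bm{H}^t$, expand the square, decompose the sampling operator into signal, debiased deviation, noise, and regularization pieces, and unroll against the base case $\bm{F}^0=\bm{F}^\star$ — is a genuinely different bookkeeping from the paper's, which instead splits by triangle inequality into three Frobenius-norm terms $\alpha_1,\alpha_2,\alpha_3$ after first decomposing $\nabla f = \nabla f_{\mathsf{aug}} + \nabla f_{\mathsf{diff}}$, where $f_{\mathsf{aug}}$ is $f$ augmented by the balancing penalty $\tfrac{1}{8}\|\bm{X}^\top\bm{X}-\bm{Y}^\top\bm{Y}\|_{\mathrm{F}}^2$ (cf.\ \eqref{eq:defn-faug}). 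The contraction in the paper comes from the restricted strong convexity of $\nabla^2 f_{\mathsf{aug}}$ (Lemma~\ref{lemma:hessian}), applied through the fundamental theorem of calculus to the gradient difference $\nabla f_{\mathsf{aug}}(\bm{F}^t\bm{H}^t)-\nabla f_{\mathsf{aug}}(\bm{F}^\star)$; the correction $\nabla f_{\mathsf{diff}}$ is peeled off separately and bounded by the balancing hypothesis \eqref{eq:induction-balancing}; and the driver is $\eta\|\nabla f_{\mathsf{aug}}(\bm{F}^\star)\|_{\mathrm{F}}=\eta\|\nabla f(\bm{F}^\star)\|_{\mathrm{F}}$.

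The point you need to be much more careful about is your piece (i). You claim the signal piece yields an RSC-type contribution of order $p\sigma_{\min}\|\bm{\Delta}^t\|_{\mathrm{F}}^2/2$, ``powered by'' \eqref{eq:induction-op} and \eqref{eq:induction-balancing}, but the raw asymmetric least-squares loss $\tfrac{1}{2}\|\bm{X}\bm{Y}^\top-\bm{M}^\star\|_{\mathrm{F}}^2$ is \emph{not} restricted strongly convex: its Hessian vanishes along the flat directions $\left\{\left(\bm{X}^\star\bm{A},-\bm{Y}^\star\bm{A}^\top\right): \bm{A}=\bm{A}^\top\right\}$, which correspond to rescaling $\bm{X}\mapsto\bm{X}\bm{Q}$, $\bm{Y}\mapsto\bm{Y}\bm{Q}^{-\top}$. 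The balancing hypothesis \eqref{eq:induction-balancing} only bounds $\|\bm{X}^{t\top}\bm{X}^t-\bm{Y}^{t\top}\bm{Y}^t\|_{\mathrm{F}}$; this does constrain the component of $\bm{\Delta}^t$ along the flat subspace after a linearization argument, but it does not restore positivity of the quadratic form. To obtain a clean $\Theta(\sigma_{\min})$ contraction constant you need one of two things: (a) add and subtract the balancing penalty so that the strongly convex piece is $f_{\mathsf{aug}}$, exactly as the paper does, after which the subtracted piece $\nabla f_{\mathsf{diff}}$ is bounded by \eqref{eq:induction-balancing}; or (b) carry out the explicit projection argument showing the flat component of $\bm{\Delta}^t$ is of lower order, which you only gesture at. Without (a) or (b) the claimed $\sigma_{\min}/2$ constant in piece (i) is unjustified, and the remaining pieces cannot compensate since the only uniformly-directional curvature in $f$ comes from the $\lambda/p\ll\sigma_{\min}$ regularizer. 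A secondary imprecision: invoking the balancing hypothesis to ``symmetrize'' piece (iv) is unnecessary, since $\eta\tfrac{\lambda}{p}\langle\bm{F}^t\bm{H}^t,\bm{\Delta}^t\rangle\geq\eta\tfrac{\lambda}{p}\left(\|\bm{\Delta}^t\|_{\mathrm{F}}^2-\|\bm{F}^\star\|_{\mathrm{F}}\|\bm{\Delta}^t\|_{\mathrm{F}}\right)$ follows directly, independent of balance. Fix piece (i) and the rest of your expanded-square recursion, together with the step-size absorption you describe, would deliver the result.
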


\begin{lemma}[\textbf{Spectral norm error}~\eqref{eq:induction-op}]\label{lemma:operator-contraction}Set
$\lambda=C_{\lambda}\sigma\sqrt{np}$ for some large constant $C_{\lambda}>0$.
Suppose the sample size obeys $n^{2}p\gg\kappa^{4}\mu^{2}r^{2}n\log^2 n$
and the noise satisfies $\frac{\sigma}{\sigma_{\min}}\sqrt{\frac{n}{p}}\ll\frac{1}{\sqrt{\kappa^{4}\log n}}$.
If the iterates satisfy~(\ref{subeq:nonconvex-induction-hypotheses})
at the $t$th iteration, then with probability at least $1-O(n^{-100})$,
\[
\left\Vert \bm{F}^{t+1}\bm{H}^{t+1}-\bm{F}^{\star}\right\Vert \leq C_{\mathrm{op}}\left(\frac{\sigma}{\sigma_{\min}}\sqrt{\frac{n}{p}}+\frac{\lambda}{p\,\sigma_{\min}}\right)\left\Vert \bm{X}^{\star}\right\Vert 
\]
holds with the proviso that $0<\eta\ll1/(\kappa^{3}\sigma_{\max}\sqrt{r})$
and that $C_{\mathrm{op}}\gg1$. \end{lemma}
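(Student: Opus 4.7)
The plan is to establish a one-step contraction for $\|\bm{F}^{t+1}\bm{H}^{t+1} - \bm{F}^{\star}\|$ in terms of the induction hypotheses at step $t$. Because $f(\bm{X}\bm{R},\bm{Y}\bm{R}) = f(\bm{X},\bm{Y})$ for any orthogonal $\bm{R} \in \mathcal{O}^{r\times r}$ (the squared Frobenius regularizer is rotation-invariant), the gradient is equivariant: $\nabla f(\bm{X}\bm{R},\bm{Y}\bm{R}) = \nabla f(\bm{X},\bm{Y})\bm{R}$. Multiplying the GD update by $\bm{H}^t$ from the right therefore yields the clean identity
\[
\bm{F}^{t+1}\bm{H}^t - \bm{F}^{\star} \;=\; \bm{\Delta}^t - \eta\,\nabla f(\bm{X}^t,\bm{Y}^t)\bm{H}^t,\qquad \bm{\Delta}^t\triangleq\bm{F}^t\bm{H}^t-\bm{F}^{\star}.
\]
Since $\bm{H}^{t+1}$ minimizes the Frobenius alignment and $\bm{F}^{t+1}$ is close to $\bm{F}^{\star}$ (by the induction hypothesis~\eqref{eq:induction-op}), a standard perturbation bound for Procrustes-type rotations gives $\|\bm{F}^{t+1}\bm{H}^{t+1} - \bm{F}^{\star}\| \lesssim \|\bm{F}^{t+1}\bm{H}^t - \bm{F}^{\star}\|$, so it suffices to control the right-hand side above.

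Next I will split the gradient into four pieces by substituting $\bm{M}=\bm{M}^{\star}+\bm{E}$ and decomposing $\mathcal{P}_{\Omega}=p\,\mathcal{I}+\mathcal{P}_{\Omega}^{\mathsf{debias}}$: (i) a population-gradient term evaluated on the rotated iterate, which by standard restricted-strong-convexity arguments applied to $\tfrac{1}{2}\|\bm{X}\bm{Y}^\top-\bm{M}^\star\|_{\mathrm{F}}^2$ on the set of balanced, spectrally-close pairs (these conditions follow from~\eqref{eq:induction-op} and~\eqref{eq:induction-balancing} together with Lemma~\ref{lemma:balance_determine}-type reasoning) contracts the error by a factor $1-c\eta\sigma_{\min}$; (ii) a debiasing perturbation $p^{-1}\mathcal{P}_{\Omega}^{\mathsf{debias}}(\bm{X}^t\bm{Y}^{t\top}-\bm{M}^{\star})\bm{F}^t$; (iii) a pure-noise term $p^{-1}\mathcal{P}_{\Omega}(\bm{E})\bm{F}^t$; and (iv) a regularization bias $\lambda p^{-1}\bm{F}^t$.

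The noise and regularization pieces are routine. Lemma~\ref{lemma:noise-bound} yields $\|\mathcal{P}_{\Omega}(\bm{E})\|\lesssim\sigma\sqrt{np}$, and $\|\bm{F}^t\|\lesssim\sqrt{\sigma_{\max}}$ by~\eqref{eq:induction-op} and the triangle inequality; together they contribute $\eta\sigma\sqrt{n/p}\cdot\sqrt{\sigma_{\max}}$ in spectral norm. The regularizer contributes $\eta\lambda\sqrt{\sigma_{\max}}/p$, which matches the desired bias scale $\lambda/(p\sigma_{\min})\cdot\|\bm{X}^{\star}\|$ up to the $\sqrt{\kappa}$ factor absorbed into $C_{\mathrm{op}}$. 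Plugging these, together with the contraction factor $1-c\eta\sigma_{\min}$, into the recursion for $\|\bm{\Delta}^t\|$ and iterating over $t$ yields the claimed bound, provided $\eta\asymp 1/(n\kappa^{3}\sigma_{\max})$ so that $c\eta\sigma_{\min}$ dominates the lower-order error amplifications.

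The main obstacle, and the reason why the spectral-norm analysis requires more care than the Frobenius one (Lemma~\ref{lemma:fro-contraction}), is controlling the debiasing term (ii) in $\|\cdot\|$. Bounding it via $\|\cdot\|_{\mathrm{F}}$ would cost a spurious $\sqrt{r}$ factor. The remedy is to decompose
\[
\bm{X}^t\bm{Y}^{t\top}-\bm{M}^{\star} \;=\; (\bm{X}^t\bm{H}^t-\bm{X}^{\star})(\bm{Y}^t\bm{H}^t)^{\top} + \bm{X}^{\star}(\bm{Y}^t\bm{H}^t-\bm{Y}^{\star})^{\top},
\]
and apply a concentration inequality for $\mathcal{P}_{\Omega}^{\mathsf{debias}}$ acting on low-rank matrices in the style of Lemma~\ref{lemma:P-tilde} (building on \cite[Lemma~4.5]{chen2017memory} and \cite[Lemma~3.2]{KesMonSew2010}). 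The key input here is the $\ell_{2,\infty}$ hypothesis~\eqref{eq:induction-2-infty}, which spreads the error across rows and lets each factor be bounded by $\sqrt{np}\cdot\|\cdot\|_{2,\infty}$ rather than by its Frobenius norm. Multiplying by $\bm{F}^t$ (whose spectral norm is $O(\sqrt{\sigma_{\max}})$) then produces a spectral contribution of order $(\sigma/\sigma_{\min})\sqrt{n/p}\,\|\bm{X}^{\star}\|$ times additional logarithmic factors, consistent with the target bound under the stated sample-complexity condition $n^{2}p\gg\kappa^{4}\mu^{2}r^{2}n\log^{2} n$.
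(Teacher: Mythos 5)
Your high-level decomposition of the gradient into a population-contraction piece, a debiasing piece, a noise piece, and a regularizer piece is the right intuition, and your observation that one must use the $\ell_{2,\infty}$ hypothesis to avoid a spurious $\sqrt{r}$ in the debiasing term is exactly what the paper does. But there is a genuine gap in your handling of the rotation $\bm{H}^{t+1}$ that your subsequent steps inherit.

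You assert that ``a standard perturbation bound for Procrustes-type rotations gives $\|\bm{F}^{t+1}\bm{H}^{t+1}-\bm{F}^{\star}\|\lesssim\|\bm{F}^{t+1}\bm{H}^{t}-\bm{F}^{\star}\|$.'' The optimality of $\bm{H}^{t+1}$ is only in the \emph{Frobenius} metric; there is no a priori reason the Frobenius-optimal rotation stays within a constant factor in spectral norm. The sharpest generic comparison available here is Lemma~\ref{lemma:rotation-perturbation}, which, applied with $\bm{F}_0=\bm{F}_2=\bm{F}^{\star}$ and $\bm{F}_1=\bm{F}^{t+1}\bm{H}^t$, gives $\|\bm{F}^{t+1}\bm{H}^{t+1}-\bm{F}^{\star}\|\leq 5\kappa\,\|\bm{F}^{t+1}\bm{H}^t-\bm{F}^{\star}\|$. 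Applied at leading order, this $5\kappa$ factor defeats the contraction: the resulting one-step factor $5\kappa(1-c\eta\sigma_{\min})$ is never below $1$ for the small step sizes $\eta\asymp 1/(n\kappa^3\sigma_{\max})$ used here, so the induction does not close. The paper's way around this is the crux of the proof: it introduces an auxiliary point $\tilde{\bm{F}}^{t+1}$ (a GD-like step in which the data-fitting and regularization terms are multiplied by the \emph{ground-truth} factors $\bm{X}^\star,\bm{Y}^\star$ rather than the current iterates, and which carries the $\nabla f_{\mathsf{diff}}$ balancing correction), and proves that $\tilde{\bm{F}}^{t+1}$ is \emph{already aligned} with $\bm{F}^{\star}$ (Claim~\ref{claim:already-aligned}). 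The triangle inequality then puts the rotation-perturbation cost only on the higher-order quantity $\|\bm{F}^{t+1}\bm{H}^t - \tilde{\bm{F}}^{t+1}\|$ — which is $O(\eta\cdot(\text{noise}+\text{higher order}))$ — so the $5\kappa$ factor is harmless. The dominant term $\|\tilde{\bm{F}}^{t+1}-\bm{F}^{\star}\|$ is then controlled directly without any rotation loss.

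A second, related issue: describing the population term as following from ``standard restricted-strong-convexity arguments'' glosses over what makes the spectral analysis harder than the Frobenius one. RSC gives Frobenius-norm contraction (this is precisely how Lemma~\ref{lemma:fro-contraction} is proved). For the spectral norm you need the gradient step to be \emph{linearized against $\bm{Y}^\star,\bm{X}^\star$} — the population operator $\bm{\Delta}\mapsto(\bm{I}-\eta\,\mathcal{A})\bm{\Delta}$ with $\mathcal{A}$ acting on the error by right-multiplication with fixed ground-truth factors — and then the bound $\beta_1\leq(1-\eta\sigma_{\min})\|\bm{\Delta}^t\|+O(\eta\|\bm{\Delta}^t\|^2\|\bm{X}^\star\|)$ is obtained by an explicit estimate (cf.~\cite[Section~4.2]{chen2019nonconvex}), not by a generic strong-convexity inequality. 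The auxiliary point $\tilde{\bm{F}}^{t+1}$ is engineered exactly to produce this form. Without that device your proposal would again produce extra $\kappa$-factors or fail to contract in $\|\cdot\|$.
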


\begin{lemma}[\textbf{Leave-one-out perturbation}~\eqref{eq:induction-loo-dist}]\label{lemma:loo-dist-contraction}Set
$\lambda=C_{\lambda}\sigma\sqrt{np}$ for some large constant $C_{\lambda}>0$.
Suppose that the sample size satisfies $n^{2}p\gg\kappa^{4}\mu^{2}r^{2}n\log^{3}n$
and that the noise satisfies $\frac{\sigma}{\sigma_{\min}}\sqrt{\frac{n}{p}}\ll\frac{1}{\sqrt{\kappa^{4}\mu r\log n}}$.
If the iterates satisfy~(\ref{subeq:nonconvex-induction-hypotheses})
at the $t$th iteration, then with probability at least $1-O(n^{-99})$,
\[
\max_{1\leq l\leq2n}\big\|\bm{F}^{t+1}\bm{H}^{t+1}-\bm{F}^{t+1,(l)}\bm{R}^{t+1,(l)}\big\|_{\mathrm{F}}\leq C_{3}\left(\frac{\sigma}{\sigma_{\min}}\sqrt{\frac{n\log n}{p}}+\frac{\lambda}{p\,\sigma_{\min}}\right)\left\Vert \bm{F}^{\star}\right\Vert _{2,\infty}
\]
holds, provided that $0<\eta\ll1/(\kappa^{2}\sigma_{\max}n)$ and
that $C_{3}>0$ is some sufficiently large constant. \end{lemma}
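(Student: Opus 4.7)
I will prove Lemma~\ref{lemma:loo-dist-contraction} by induction, advancing the leave-one-out distance $\bm\Delta^{t,(l)}\triangleq\bm F^{t}\bm H^{t}-\bm F^{t,(l)}\bm R^{t,(l)}$ for each $1\le l\le 2n$. The starting point is that optimality of the rotations in \eqref{subeq:defn-rotation-new}, combined with orthogonal invariance of the Frobenius norm, yields
\[
\|\bm\Delta^{t+1,(l)}\|_{\mathrm F}\le\|\bm F^{t+1}\bm H^{t}-\bm F^{t+1,(l)}\bm R^{t,(l)}\|_{\mathrm F},
\]
so we may re-use the rotations $(\bm H^{t},\bm R^{t,(l)})$ from step $t$ when bounding step $t+1$. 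Concentrating on the representative case $1\le l\le n$, inserting the gradient updates \eqref{subeq:GD-rules} for $\bm F^{t+1}$ and the analogous rule for $\bm F^{t+1,(l)}$ gives the splitting
\[
\bm F^{t+1}\bm H^{t}-\bm F^{t+1,(l)}\bm R^{t,(l)}=\bigl(\bm\Delta^{t,(l)}-\eta\bigl[\nabla f(\bm F^{t})\bm H^{t}-\nabla f(\bm F^{t,(l)})\bm R^{t,(l)}\bigr]\bigr)-\eta\bigl[\nabla f(\bm F^{t,(l)})-\nabla f^{(l)}(\bm F^{t,(l)})\bigr]\bm R^{t,(l)}.
\]

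The first bracket describes one gradient step of the \emph{common} loss $f$ applied to the perturbation $\bm\Delta^{t,(l)}$. Using restricted strong convexity (available via Lemma~\ref{lemma:injectivity} once the induction hypotheses \eqref{eq:induction-fro}--\eqref{eq:induction-2-infty} and \eqref{eq:induction-balancing} are in force) together with the corresponding smoothness bound, this operator contracts $\|\bm\Delta^{t,(l)}\|_{\mathrm F}$ by a factor $(1-c\eta\sigma_{\min})$ up to higher-order terms in $\eta$ --- the same calculation that drives Lemma~\ref{lemma:fro-contraction}. The second, ``residual'' term is supported only on row $l$ and equals
\[
\eta\bigl[\tfrac{1}{p}\mathcal P_{\Omega_{l,\cdot}}-\mathcal P_{l,\cdot}\bigr]\bigl(\bm X^{t,(l)}\bm Y^{t,(l)\top}-\bm M^{\star}\bigr)\bm Y^{t,(l)}\bm R^{t,(l)}-\tfrac{\eta}{p}\mathcal P_{\Omega_{l,\cdot}}(\bm E)\bm Y^{t,(l)}\bm R^{t,(l)}.
\]

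Because $(\bm X^{t,(l)},\bm Y^{t,(l)})$ is statistically independent of $\{\Omega_{l,\cdot},\bm E_{l,\cdot}\}$ by construction, one can apply Bernstein / sub-Gaussian concentration on a \emph{single row} to this residual. Decomposing $\bm X^{t,(l)}\bm Y^{t,(l)\top}-\bm M^{\star}=(\bm X^{t,(l)}-\bm X^{\star})\bm Y^{t,(l)\top}+\bm X^{\star}(\bm Y^{t,(l)}-\bm Y^{\star})^{\top}$ and appealing to hypothesis \eqref{eq:induction-loo-l} to bound the $l$-th row of $\bm F^{t,(l)}-\bm F^{\star}$, together with \eqref{eq:induction-fro}--\eqref{eq:induction-2-infty} to bound the remaining factors in the appropriate norms, the residual is controlled at order
\[
C\eta\sigma_{\min}\Bigl(\tfrac{\sigma}{\sigma_{\min}}\sqrt{\tfrac{n\log n}{p}}+\tfrac{\lambda}{p\,\sigma_{\min}}\Bigr)\|\bm F^{\star}\|_{2,\infty},
\]
namely $\eta\sigma_{\min}$ times the target bound.

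Combining the contraction with the residual gives a recursion $\|\bm\Delta^{t+1,(l)}\|_{\mathrm F}\le(1-c\eta\sigma_{\min})\|\bm\Delta^{t,(l)}\|_{\mathrm F}+C\eta\sigma_{\min}\cdot(\text{target})$; unrolling with $\bm\Delta^{0,(l)}=\bm 0$ and summing the geometric series $\sum_{k\ge 0}(1-c\eta\sigma_{\min})^{k}\asymp(c\eta\sigma_{\min})^{-1}$ yields the claimed bound on $\|\bm\Delta^{t+1,(l)}\|_{\mathrm F}$, with $C_{3}>0$ absorbing universal constants. The column-index case $n+1\le l\le 2n$ is symmetric under swapping $\bm X\leftrightarrow\bm Y$. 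The principal technical obstacle is keeping the sampling and noise concentration at the sharp $\sqrt{\log n}$ scale rather than $\sqrt n$ --- the leave-one-out independence is precisely what enables this --- and uniformly controlling $\|\bm Y^{t,(l)}\|_{2,\infty}$ across all $l$ so the residual inherits the correct row-wise scaling dictated by \eqref{eq:induction-2-infty}, while simultaneously verifying that the induction hypotheses transfer from $\bm F^{t}$ to each $\bm F^{t,(l)}$ after reshuffling rotations.
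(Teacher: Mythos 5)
Your overall scheme matches the paper's: fix the rotations from step $t$, split the one-step update into a common gradient-descent step applied to $\bm\Delta^{t,(l)}=\bm F^{t}\bm H^{t}-\bm F^{t,(l)}\bm R^{t,(l)}$ plus a residual carrying the difference $\nabla f-\nabla f^{(l)}$, contract the former and control the latter by row-wise concentration that exploits the statistical independence built into the leave-one-out construction. Two issues, one substantive and one minor, need fixing.

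The substantive gap is in the contraction step. You claim the operator $\bm\Delta\mapsto\bm\Delta-\eta[\nabla f(\bm F^{t}\bm H^{t})-\nabla f(\bm F^{t,(l)}\bm R^{t,(l)})]$ contracts by $(1-c\eta\sigma_{\min})$ by restricted strong convexity, and you attribute this to Lemma~\ref{lemma:injectivity}. Neither piece is right. Lemma~\ref{lemma:injectivity} gives injectivity of $\mathcal P_{\Omega}$ on a tangent space and plays no role here. More importantly, $f$ itself does not enjoy local strong convexity: the factorization $\bm X\bm Y^{\top}$ has a flat direction under $(\bm X,\bm Y)\mapsto(\bm X\bm C,\bm Y\bm C^{-\top})$ for invertible $\bm C$, and the penalty $\lambda(\|\bm X\|_{\mathrm F}^{2}+\|\bm Y\|_{\mathrm F}^{2})/(2p)$ is far too weak to cure this. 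The paper resolves this by writing $\nabla f=\nabla f_{\mathsf{aug}}+\nabla f_{\mathsf{diff}}$, where $f_{\mathsf{aug}}$ carries the balancing regularizer $\tfrac18\|\bm X^{\top}\bm X-\bm Y^{\top}\bm Y\|_{\mathrm F}^{2}$. Lemma~\ref{lemma:hessian} then gives the needed Hessian lower bound for $f_{\mathsf{aug}}$ (not $f$), so the $\nabla f_{\mathsf{aug}}$ piece contracts. The $\nabla f_{\mathsf{diff}}$ piece does not contract; it is controlled separately using the balancing hypothesis~\eqref{eq:induction-balancing}, which shows $\|\bm X^{t\top}\bm X^{t}-\bm Y^{t\top}\bm Y^{t}\|_{\mathrm F}$ is $O(\eta)$ small, so the whole $\nabla f_{\mathsf{diff}}$ contribution is a higher-order $O(\eta^{2})$ term. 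You mention~\eqref{eq:induction-balancing} but invoke it for the wrong purpose; without the $f_{\mathsf{aug}}/f_{\mathsf{diff}}$ split your contraction claim has no proof.

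A minor imprecision: the residual $\eta[\nabla f^{(l)}(\bm F^{t,(l)})-\nabla f(\bm F^{t,(l)})]$ is not ``supported only on row $l$.'' The $\bm X$-block involves $\bm W\bm Y^{t,(l)}$ with $\bm W$ row-$l$ supported, hence is row-$l$ supported, but the $\bm Y$-block involves $\bm W^{\top}\bm X^{t,(l)}$ where $\bm W^{\top}$ has only column $l$ nonzero, producing a matrix with all rows generically nonzero. Accordingly, what the concentration estimates actually use is the full $\ell_{2,\infty}$ bound $\|\bm F^{t,(l)}\bm R^{t,(l)}-\bm F^{\star}\|_{2,\infty}$ (assembled from~\eqref{eq:induction-loo-dist} and~\eqref{eq:induction-2-infty} via Lemma~\ref{lemma:immediate-consequence}), not~\eqref{eq:induction-loo-l} which controls only the $l$th row. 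Finally, your unrolling-from-$t=0$ presentation is fine in the context of the global induction, but the lemma as stated grants the hypotheses only at step $t$, so the cleaner route (and the paper's) is the one-step argument: $(1-c\eta\sigma_{\min})C_{3}\cdot(\text{target})+C\eta\sigma_{\min}\cdot(\text{target})\leq C_{3}\cdot(\text{target})$ once $C_{3}\gg C/c$.
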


\begin{lemma}[\textbf{$\ell_{2}/\ell_{\infty}$ norm error of leave-one-out
sequences}~\eqref{eq:induction-loo-l}]\label{lemma:loo-l-contraction}Set
$\lambda=C_{\lambda}\sigma\sqrt{np}$ for some large constant $C_{\lambda}>0$.
Suppose that the sample size obeys $n^{2}p\gg\kappa^{2}\mu^{2}r^{2}n\log^{3}n$
and that the noise satisfies $\frac{\sigma}{\sigma_{\min}}\sqrt{\frac{n}{p}}\ll\frac{1}{\sqrt{\kappa^{2}\log n}}$.
If the iterates satisfy~(\ref{subeq:nonconvex-induction-hypotheses})
at the $t$th iteration, then with probability at least $1-O(n^{-99})$,
\[
\max_{1\leq l\leq2n}\big\|\big(\bm{F}^{t+1,(l)}\bm{H}^{t+1,(l)}-\bm{F}^{\star}\big)_{l,\cdot}\big\|_{2}\leq C_{4}\kappa\left(\frac{\sigma}{\sigma_{\min}}\sqrt{\frac{n\log n}{p}}+\frac{\lambda}{p\,\sigma_{\min}}\right)\left\Vert \bm{F}^{\star}\right\Vert _{2,\infty}
\]
holds, provided that $0<\eta\ll1/(\kappa^{2}\sqrt{r}\sigma_{\max})$,
$C_{\mathrm{op}}\gg1$ and $C_{4}\gg C_{\mathrm{op}}$. \end{lemma}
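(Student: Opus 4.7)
}
The plan is to exploit the defining feature of the leave-one-out sequence: for $1 \leq l \leq n$, the $l$th row of the gradient $\nabla_{\bm{X}} f^{(l)}$ is determined by the \emph{population} loss $\tfrac{1}{2}\|\mathcal{P}_{l,\cdot}(\bm{X}\bm{Y}^{\top}-\bm{M}^{\star})\|_{\mathrm{F}}^{2}$ rather than any sampled quantity, so the $l$th row of the update is statistically independent of the $l$th row of $\bm{E}$ and of $\Omega_{l,\cdot}$. Specifically, reading off the $l$th row of \eqref{subeq:GD-rules-LOO} gives
\begin{equation*}
\bm{X}^{t+1,(l)}_{l,\cdot}
 \;=\; \bm{X}^{t,(l)}_{l,\cdot}\!\left(\bm{I}_{r}-\tfrac{\eta\lambda}{p}\bm{I}_{r}\right) - \eta\bigl(\bm{X}^{t,(l)}_{l,\cdot}\bm{Y}^{t,(l)\top}-\bm{X}^{\star}_{l,\cdot}\bm{Y}^{\star\top}\bigr)\bm{Y}^{t,(l)}.
\end{equation*}
Right-multiplying by $\bm{H}^{t+1,(l)}$ and subtracting $\bm{X}^{\star}_{l,\cdot}$, I would add and subtract $\bm{X}^{\star}_{l,\cdot}\bm{Y}^{\star\top}\bm{Y}^{\star}$ inside the bracket to produce a ``contraction'' decomposition
\begin{equation*}
\bigl(\bm{X}^{t+1,(l)}\bm{H}^{t+1,(l)}-\bm{X}^{\star}\bigr)_{l,\cdot}
 \;=\; \bigl(\bm{X}^{t,(l)}\bm{H}^{t,(l)}-\bm{X}^{\star}\bigr)_{l,\cdot}\bigl(\bm{I}_{r}-\eta\bm{Y}^{\star\top}\bm{Y}^{\star}-\tfrac{\eta\lambda}{p}\bm{I}_{r}\bigr) + \bm{r}_{l}^{t},
\end{equation*}
where $\bm{r}_{l}^{t}$ collects the perturbation terms coming from (i) the gap $\bm{Y}^{t,(l)\top}\bm{Y}^{t,(l)}-\bm{Y}^{\star\top}\bm{Y}^{\star}$, (ii) the cross term $\bm{X}^{\star}_{l,\cdot}\bm{Y}^{\star\top}(\bm{Y}^{t,(l)}\bm{H}^{t,(l)}-\bm{Y}^{\star})$, and (iii) the mismatch between $\bm{H}^{t+1,(l)}$ and $\bm{H}^{t,(l)}$.

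Because $\eta\bm{Y}^{\star\top}\bm{Y}^{\star}$ has eigenvalues in $[\eta\sigma_{\min},\eta\sigma_{\max}]$ and $\eta \asymp 1/(n\kappa^{3}\sigma_{\max})$, the linear operator $\bm{I}_{r}-\eta\bm{Y}^{\star\top}\bm{Y}^{\star}-(\eta\lambda/p)\bm{I}_{r}$ is contractive with spectral norm at most $1-\eta\sigma_{\min}/2$ (the $\lambda/p$ shrinkage only helps). Iterating the recursion and summing the geometric series yields
\begin{equation*}
\bigl\|\bigl(\bm{F}^{t+1,(l)}\bm{H}^{t+1,(l)}-\bm{F}^{\star}\bigr)_{l,\cdot}\bigr\|_{2}
 \;\lesssim\; \tfrac{2}{\eta\sigma_{\min}}\max_{0\leq s\leq t}\|\bm{r}_{l}^{s}\|_{2}.
\end{equation*}
The bulk of the work is then to verify $\|\bm{r}_{l}^{s}\|_{2} \lesssim \eta\sigma_{\min}\cdot \kappa\bigl(\tfrac{\sigma}{\sigma_{\min}}\sqrt{n\log n/p}+\tfrac{\lambda}{p\,\sigma_{\min}}\bigr)\|\bm{F}^{\star}\|_{2,\infty}$. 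For term (i), I bound $\|\bm{Y}^{t,(l)\top}\bm{Y}^{t,(l)}-\bm{Y}^{\star\top}\bm{Y}^{\star}\| \leq (\|\bm{Y}^{t,(l)}\|+\|\bm{Y}^{\star}\|)\|\bm{Y}^{t,(l)}\bm{H}^{t,(l)}-\bm{Y}^{\star}\|$, which is controlled via the spectral-norm hypothesis \eqref{eq:induction-op} together with the leave-one-out perturbation hypothesis \eqref{eq:induction-loo-dist}. For term (ii), I use $\|\bm{X}^{\star}_{l,\cdot}\|_{2}\leq\|\bm{F}^{\star}\|_{2,\infty}$, $\|\bm{Y}^{\star}\|\leq\sqrt{\sigma_{\max}}$, and \eqref{eq:induction-op}/\eqref{eq:induction-loo-dist} applied to $\bm{Y}^{t,(l)}\bm{H}^{t,(l)}-\bm{Y}^{\star}$. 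The rotation-mismatch piece (iii) is handled by the standard Wedin/Davis--Kahan-type estimate $\|\bm{H}^{t+1,(l)}-\bm{H}^{t,(l)}\|\lesssim \eta\|\nabla f^{(l)}\|$, which only produces higher-order terms.

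The case $n+1\leq l\leq 2n$ is completely symmetric (swap the roles of $\bm{X}$ and $\bm{Y}$). The main obstacle I anticipate is the careful bookkeeping of the $\kappa$ dependence in the error terms — in particular, controlling $\|\bm{Y}^{t,(l)\top}\bm{Y}^{t,(l)}-\bm{Y}^{\star\top}\bm{Y}^{\star}\|$ without losing extra factors of $\sqrt{r}$ or $\kappa$, and ensuring that the final bound multiplies by exactly $\kappa$ (rather than a higher power) so that the induction closes with constant $C_{4}$ as claimed. The regularizer-induced balancing hypothesis \eqref{eq:induction-balancing} will play a role in keeping $\bm{Y}^{t,(l)}$ close in scale to its population counterpart, which is what permits the tight contraction factor $1-\eta\sigma_{\min}/2$.
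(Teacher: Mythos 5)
Your high-level architecture matches the paper's: read off the $l$th row of the leave-one-out gradient update, observe it sees only the population loss at row $l$, extract a one-step contraction, and handle the $\bm{H}^{t+1,(l)}$ vs.\ $\bm{H}^{t,(l)}$ rotation mismatch as a higher-order correction. The one place you diverge is the choice of contraction operator, and it is worth flagging because it changes the bookkeeping. You propose $\bm{I}_r - \eta\bm{Y}^{\star\top}\bm{Y}^{\star} - (\eta\lambda/p)\bm{I}_r$ and relegate the Gram-matrix gap $\bm{Y}^{t,(l)\top}\bm{Y}^{t,(l)}-\bm{Y}^{\star\top}\bm{Y}^{\star}$ to the residual $\bm{r}_l^t$; that residual then contains a factor of $(\bm{\Delta}_{\bm{X}}^{t,(l)})_{l,\cdot}$ itself, which is the very quantity being bounded. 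This is not fatal — one can use the induction hypothesis to bound it and verify that $\kappa\,C_{\mathrm{op}}\bigl(\tfrac{\sigma}{\sigma_{\min}}\sqrt{n/p}+\tfrac{\lambda}{p\sigma_{\min}}\bigr)\ll 1$ under the stated noise condition so the extra self-referential term is absorbed — but it does mean your ``iterate the recursion and sum the geometric series'' framing is not literally available (the residual is state-dependent); you must do the one-step contraction with the induction hypothesis at step $t$, exactly as the paper does. The paper sidesteps all of this by contracting with $\bm{I}_r - \eta\bigl(\bm{Y}^{t,(l)}\bm{H}^{t,(l)}\bigr)^{\top}\bm{Y}^{t,(l)}\bm{H}^{t,(l)}$ directly, so no gap term arises at all; the operator-norm bound $\leq 1-\eta\sigma_{\min}/2$ then follows from the singular-value bound $\sigma_{\min}/2 \leq \sigma_{\min}\bigl[(\bm{Y}^{t,(l)}\bm{H}^{t,(l)})^{\top}\bm{Y}^{t,(l)}\bm{H}^{t,(l)}\bigr] \leq 2\sigma_{\max}$, which Lemma~\ref{lemma:immediate-consequence} supplies. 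Your choice is salvageable but costs an extra term and an extra condition to check, for no gain. Two smaller comments: (a) the $\kappa$ on the right-hand side arises from the cross term $\eta\bm{X}^{\star}_{l,\cdot}\bm{\Delta}_{\bm{Y}}^{t,(l)\top}\bm{Y}^{t,(l)}\bm{H}^{t,(l)}$, bounded by $\eta\|\bm{F}^\star\|_{2,\infty}C_{\mathrm{op}}(\cdot)\sigma_{\max}$, which after division by $\eta\sigma_{\min}/2$ produces exactly one factor $\kappa$ and forces $C_4 \gg C_{\mathrm{op}}$ — you correctly identify this as the delicate step; (b) the balancing hypothesis~\eqref{eq:induction-balancing} does not enter the contraction factor as you suggest, but is used inside the rotation-mismatch bound (the paper's Claim~\ref{claim:loo-matrix}), where one constructs an auxiliary point $\tilde{\bm{F}}^{t+1,(l)}$ and applies a sign-function perturbation inequality (Lemma~\ref{lemma:Ma36}); your informal ``$\|\bm{H}^{t+1,(l)}-\bm{H}^{t,(l)}\|\lesssim\eta\|\nabla f^{(l)}\|$'' is the right intuition, though the careful version yields an $\eta(\cdot)^2$-type bound rather than a bare $\eta$-linear one.
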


\begin{lemma}[\textbf{$\ell_{2}/\ell_{\infty}$ norm error}~\eqref{eq:induction-2-infty}]\label{lemma:2-infty-contraction}Set
$\lambda=C_{\lambda}\sigma\sqrt{np}$ for some large constant $C_{\lambda}>0$.
Suppose that $n\geq\mu r$ and that the noise satisfies $\frac{\sigma}{\sigma_{\min}}\sqrt{\frac{n}{p}}\ll\frac{1}{\sqrt{\kappa^{2}\log n}}$.
If the iterates satisfy~(\ref{subeq:nonconvex-induction-hypotheses})
at the $t$th iteration, then with probability at least $1-O(n^{-99})$,
\begin{align*}
\big\|\bm{F}^{t+1}\bm{H}^{t+1}-\bm{F}^{\star}\big\|_{\mathrm{2,\infty}} & \leq C_{\infty}\kappa\left(\frac{\sigma}{\sigma_{\min}}\sqrt{\frac{n\log n}{p}}+\frac{\lambda}{p\,\sigma_{\min}}\right)\left\Vert \bm{F}^{\star}\right\Vert _{2,\infty},
\end{align*}
holds provided that $C_{\infty}\geq5C_{3}+C_{4}$. \end{lemma}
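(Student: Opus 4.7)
The strategy is the standard row-wise leave-one-out decomposition. For every $1\le l\le 2n$, I insert the leave-one-out iterate $\bm{F}^{t+1,(l)}$ equipped with the two relevant rotations: $\bm{R}^{t+1,(l)}$ aligning it with the main iterate $\bm{F}^{t+1}\bm{H}^{t+1}$ (cf.~\eqref{eq:defn-R-t-l}), and $\bm{H}^{t+1,(l)}$ aligning it with the ground truth $\bm{F}^{\star}$ (cf.~\eqref{eq:defn-H-t-l}). The triangle inequality, together with orthogonality of $\bm{R}^{t+1,(l)}$ and $\bm{H}^{t+1,(l)}$, then yields
\begin{align*}
\big\|(\bm{F}^{t+1}\bm{H}^{t+1}-\bm{F}^{\star})_{l,\cdot}\big\|_2
&\le \underbrace{\big\|\bm{F}^{t+1}\bm{H}^{t+1}-\bm{F}^{t+1,(l)}\bm{R}^{t+1,(l)}\big\|_{\mathrm F}}_{\text{(i)}}\\
&\quad + \underbrace{\big\|(\bm{F}^{t+1,(l)})_{l,\cdot}\big\|_2\cdot \big\|\bm{R}^{t+1,(l)}-\bm{H}^{t+1,(l)}\big\|}_{\text{(ii)}}\\
&\quad + \underbrace{\big\|(\bm{F}^{t+1,(l)}\bm{H}^{t+1,(l)}-\bm{F}^{\star})_{l,\cdot}\big\|_2}_{\text{(iii)}}.
\end{align*}

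Terms (i) and (iii) are precisely what Lemmas~\ref{lemma:loo-dist-contraction} and~\ref{lemma:loo-l-contraction} already deliver at step $t+1$, contributing $C_3(\cdots)\|\bm{F}^{\star}\|_{2,\infty}$ and $C_4\kappa(\cdots)\|\bm{F}^{\star}\|_{2,\infty}$ respectively. The row-norm factor in~(ii) is easy: by rotational invariance, $\|(\bm{F}^{t+1,(l)})_{l,\cdot}\|_2=\|(\bm{F}^{t+1,(l)}\bm{H}^{t+1,(l)})_{l,\cdot}\|_2$, and one more triangle inequality together with~\eqref{eq:induction-loo-l} and the smallness of $\kappa(\frac{\sigma}{\sigma_{\min}}\sqrt{n\log n/p}+\frac{\lambda}{p\sigma_{\min}})$ give $\|(\bm{F}^{t+1,(l)})_{l,\cdot}\|_2\le 2\|\bm{F}^{\star}\|_{2,\infty}$.

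The crux of the argument, and the main obstacle, is the rotation gap $\|\bm{R}^{t+1,(l)}-\bm{H}^{t+1,(l)}\|$. Both rotations are orthogonal Procrustes minimizers with common ``left factor'' $\bm{F}^{t+1,(l)}$, so I invoke the standard Procrustes-perturbation estimate (as in~\cite[Lemma~37]{ma2017implicit}), which controls the rotation gap by the spectral distance between the two targets, $\bm{F}^{t+1}\bm{H}^{t+1}$ versus $\bm{F}^{\star}$, divided by $\sigma_r(\bm{F}^{t+1,(l)})^2$. The Frobenius bound~\eqref{eq:induction-fro} and Weyl guarantee $\sigma_r(\bm{F}^{t+1,(l)})\asymp \sqrt{\sigma_{\min}}$, while~\eqref{eq:induction-op} upgraded by Lemma~\ref{lemma:operator-contraction} furnishes $\|\bm{F}^{t+1}\bm{H}^{t+1}-\bm{F}^{\star}\|\lesssim C_{\mathrm{op}}(\frac{\sigma}{\sigma_{\min}}\sqrt{n/p}+\frac{\lambda}{p\sigma_{\min}})\sqrt{\sigma_{\max}}$. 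Multiplying by $\|\bm{F}^{t+1,(l)}\|\asymp \sqrt{\sigma_{\max}}$ yields
\[
\big\|\bm{R}^{t+1,(l)}-\bm{H}^{t+1,(l)}\big\| \lesssim \kappa \left(\frac{\sigma}{\sigma_{\min}}\sqrt{\frac{n}{p}}+\frac{\lambda}{p\,\sigma_{\min}}\right),
\]
which combines with the row-norm factor to bound~(ii) by $C''\kappa(\cdots)\|\bm{F}^{\star}\|_{2,\infty}$ for some absolute $C''$ depending only on $C_{\mathrm{op}}$.

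Adding the three contributions, using $\kappa\ge 1$ to absorb the absence of $\kappa$ in~(i) and the harmless slack from $\sqrt{n/p}\le \sqrt{n\log n/p}$ in~(ii), the total bound is at most $(C_3+C''+C_4)\kappa(\frac{\sigma}{\sigma_{\min}}\sqrt{n\log n/p}+\frac{\lambda}{p\sigma_{\min}})\|\bm{F}^{\star}\|_{2,\infty}$, and the hypothesis $C_\infty\ge 5C_3+C_4$ is a comfortable envelope provided we calibrate the Procrustes estimate so that $C''\le 4C_3$; crucially no circular dependence on $C_\infty$ arises, since only the $\|\cdot\|_{\mathrm F}$ and $\|\cdot\|$ bounds on $\bm{F}^{t+1,(l)}$ are invoked. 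The hard part will be tracking constants carefully through the Procrustes perturbation lemma (and adapting it to the tall-rectangular $2n\times r$ setting) so that the final coefficient really fits under $5C_3$ rather than a larger multiple.
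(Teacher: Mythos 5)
Your decomposition differs from the paper's: you split $\bm{F}^{t+1}\bm{H}^{t+1}-\bm{F}^{\star}$ into three pieces, inserting both $\bm{F}^{t+1,(l)}\bm{R}^{t+1,(l)}$ and $\bm{F}^{t+1,(l)}\bm{H}^{t+1,(l)}$ and isolating the rotation gap $\|\bm{R}^{t+1,(l)}-\bm{H}^{t+1,(l)}\|$ as a separate term. The paper instead uses a two-term split,
$\|(\bm{F}^{t+1}\bm{H}^{t+1}-\bm{F}^{t+1,(l)}\bm{H}^{t+1,(l)})_{l,\cdot}\|_2 + \|(\bm{F}^{t+1,(l)}\bm{H}^{t+1,(l)}-\bm{F}^{\star})_{l,\cdot}\|_2$, controlling the first term by $\|\bm{F}^{t+1}\bm{H}^{t+1}-\bm{F}^{t+1,(l)}\bm{H}^{t+1,(l)}\|_{\mathrm{F}}\le 5\kappa\|\bm{F}^{t+1}\bm{H}^{t+1}-\bm{F}^{t+1,(l)}\bm{R}^{t+1,(l)}\|_{\mathrm{F}}$ via Lemma~\ref{lemma:immediate-consequence}(3). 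That step folds the rotation-gap contribution back into the $C_3$-controlled quantity, so the coefficient $5C_3+C_4$ emerges directly without ever introducing $C_{\mathrm{op}}$.

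Your route has the right order of magnitude, but two points need repair. First, the tool you want for $\|\bm{R}^{t+1,(l)}-\bm{H}^{t+1,(l)}\|$ is the $\mathsf{sgn}$-perturbation bound, Lemma~\ref{lemma:Ma36} (Lemma~36 of \cite{ma2017implicit}), applied with $\bm{S}=(\bm{F}^{t+1,(l)})^{\top}\bm{F}^{\star}$ and $\bm{K}=(\bm{F}^{t+1,(l)})^{\top}(\bm{F}^{t+1}\bm{H}^{t+1}-\bm{F}^{\star})$: both rotations share the same source $\bm{F}^{t+1,(l)}$ but have different targets. Lemma~\ref{lemma:rotation-perturbation} (Lemma~37 of the same reference) bounds $\|\bm{F}_1\bm{R}_1-\bm{F}_2\bm{R}_2\|$ for two sources aligned to a common target and does not directly bound the rotation difference. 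The estimate you actually wrote down is the correct $\mathsf{sgn}$-perturbation one, so this is a citation slip rather than a conceptual error, but it does matter if one wants to verify the $\kappa$ dependence carefully. Second, you cannot ``calibrate the Procrustes estimate so that $C''\le 4C_3$'': $C''$ is an absolute multiple of $C_{\mathrm{op}}$, which is fixed earlier by Lemma~\ref{lemma:operator-contraction}. To reach the stated threshold $C_\infty\geq 5C_3+C_4$ you would instead have to either enlarge $C_3$ (allowed, since Lemma~\ref{lemma:loo-dist-contraction} only constrains it from below) or use $C_4\gg C_{\mathrm{op}}$ to absorb $C''$ into the $C_4$ term — either way a different bookkeeping from the one you describe, and a dependence on $C_{\mathrm{op}}$ that the paper's two-term decomposition avoids entirely.
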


\begin{lemma}[\textbf{Approximate balancedness}~\eqref{eq:induction-balancing}]\label{lemma:balancing}Set
$\lambda=C_{\lambda}\sigma\sqrt{np}$ for some large constant $C_{\lambda}>0$.
Suppose that the sample size satisfies $n^{2}p\gg\kappa^{2}\mu^{2}r^{2}n\log n$
and that the noise satisfies $\frac{\sigma}{\sigma_{\min}}\sqrt{\frac{n}{p}}\ll\frac{1}{\sqrt{\kappa^{2}\log n}}$.
If the iterates satisfy~(\ref{subeq:nonconvex-induction-hypotheses})
at the $t$th iteration, then with probability at least $1-O(n^{-100})$,
\begin{align*}
\left\Vert \bm{X}^{t+1\top}\bm{X}^{t+1}-\bm{Y}^{t+1\top}\bm{Y}^{t+1}\right\Vert _{\mathrm{F}} & \leq C_{\mathrm{B}}\kappa\eta\left(\frac{\sigma}{\sigma_{\min}}\sqrt{\frac{n}{p}}+\frac{\lambda}{p\,\sigma_{\min}}\right)\sqrt{r}\sigma_{\max}^{2},\\
\max_{1\leq l\leq2n}\left\Vert \bm{X}^{t+1,(l)\top}\bm{X}^{t+1,(l)}-\bm{Y}^{t+1,(l)\top}\bm{Y}^{t+1,(l)}\right\Vert _{\mathrm{F}} & \leq C_{\mathrm{B}}\kappa\eta\left(\frac{\sigma}{\sigma_{\min}}\sqrt{\frac{n}{p}}+\frac{\lambda}{p\,\sigma_{\min}}\right)\sqrt{r}\sigma_{\max}^{2},
\end{align*}
holds for some sufficiently large constant $C_{\mathrm{B}}\gg C_{\mathrm{op}}^{2}$,
provided that $0<\eta<1/\sigma_{\min}$. \end{lemma}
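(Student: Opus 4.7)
\textbf{Proof proposal for Lemma \ref{lemma:balancing}.} The plan is to exploit a miraculous cancellation in the gradient update that makes the mismatch $\bm{D}^{t} \triangleq \bm{X}^{t\top}\bm{X}^{t}-\bm{Y}^{t\top}\bm{Y}^{t}$ \emph{linearly contracting} toward zero, with only a small quadratic perturbation. Write $\bm{A}_1^t \triangleq \mathcal{P}_\Omega(\bm{X}^t\bm{Y}^{t\top}-\bm{M})\bm{Y}^t + \lambda\bm{X}^t$ and $\bm{A}_2^t \triangleq [\mathcal{P}_\Omega(\bm{X}^t\bm{Y}^{t\top}-\bm{M})]^{\top}\bm{X}^t + \lambda\bm{Y}^t$, so that $\bm{X}^{t+1}=\bm{X}^t-(\eta/p)\bm{A}_1^t$ and $\bm{Y}^{t+1}=\bm{Y}^t-(\eta/p)\bm{A}_2^t$. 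Letting $\bm{N}^t\triangleq\mathcal{P}_\Omega(\bm{X}^t\bm{Y}^{t\top}-\bm{M})$, direct expansion gives
\[
\bm{X}^{t\top}\bm{A}_1^t+\bm{A}_1^{t\top}\bm{X}^t = \bm{X}^{t\top}\bm{N}^t\bm{Y}^t+\bm{Y}^{t\top}\bm{N}^{t\top}\bm{X}^t+2\lambda\bm{X}^{t\top}\bm{X}^t,
\]
and the analogous identity for $\bm{Y}$; the \emph{data-fitting contributions cancel exactly} upon subtraction, leaving
\begin{equation}\label{eq:balance-key-identity}
\bm{D}^{t+1} \;=\; \Big(1-\tfrac{2\eta\lambda}{p}\Big)\bm{D}^{t} \;+\; \tfrac{\eta^2}{p^2}\big(\bm{A}_1^{t\top}\bm{A}_1^t-\bm{A}_2^{t\top}\bm{A}_2^t\big).
\end{equation}

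Next I would bound the second-order term in \eqref{eq:balance-key-identity}. Writing $\mathcal{P}_\Omega(\bm{X}^t\bm{Y}^{t\top}-\bm{M})=p(\bm{X}^t\bm{Y}^{t\top}-\bm{M}^\star)+\mathcal{P}_\Omega^{\mathsf{debias}}(\bm{X}^t\bm{Y}^{t\top}-\bm{M}^\star)-\mathcal{P}_\Omega(\bm{E})$, the spectral norm of $\bm{N}^t$ is controlled by: (i) the spectral-norm hypothesis \eqref{eq:induction-op} together with $\|\bm{X}^t\bm{Y}^{t\top}-\bm{M}^\star\|\leq 3C_{\mathrm{op}}(\ldots)\|\bm{M}^\star\|$ from \eqref{eq:induction_original_operator-MC_thm-1}; (ii) Lemma \ref{lemma:P-tilde}, which applies because the $\ell_{2,\infty}$ hypothesis \eqref{eq:induction-2-infty} implies the assumption \eqref{subeq:condition-inf}; and (iii) Lemma \ref{lemma:noise-bound} for $\|\mathcal{P}_\Omega(\bm{E})\|$. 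Combined with $\max\{\|\bm{X}^t\|,\|\bm{Y}^t\|\}\leq 2\sqrt{\sigma_{\max}}$ (a consequence of \eqref{eq:induction-op}), these yield $\|\bm{A}_1^t\|,\|\bm{A}_2^t\|\lesssim \lambda\kappa\sqrt{\sigma_{\max}}$. Since $\bm{A}_i^{t\top}\bm{A}_i^t$ has rank at most $r$,
\[
\tfrac{\eta^2}{p^2}\big\|\bm{A}_1^{t\top}\bm{A}_1^t-\bm{A}_2^{t\top}\bm{A}_2^t\big\|_{\mathrm{F}} \;\lesssim\; \tfrac{\eta^2\sqrt{r}}{p^2}\,\lambda^2\kappa^2\sigma_{\max}.
\]

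Using $|1-2\eta\lambda/p|<1$ (ensured by $\eta<1/\sigma_{\min}$ together with the size of $\lambda$) and the fact that $\bm{D}^0=\bm{X}^{\star\top}\bm{X}^\star-\bm{Y}^{\star\top}\bm{Y}^\star=\bm{0}$, iterating \eqref{eq:balance-key-identity} from $s=0$ to $t$ and summing the geometric series gives
\[
\big\|\bm{D}^{t+1}\big\|_{\mathrm{F}} \;\leq\; \tfrac{p}{2\eta\lambda}\cdot\tfrac{\eta^2\sqrt{r}}{p^2}\,C\lambda^2\kappa^2\sigma_{\max} \;\lesssim\; \eta\,\tfrac{\lambda}{p\,\sigma_{\min}}\,\kappa\sqrt{r}\sigma_{\max}^2,
\]
which is of the claimed form $C_{\mathrm{B}}\kappa\eta\big(\tfrac{\sigma}{\sigma_{\min}}\sqrt{n/p}+\tfrac{\lambda}{p\sigma_{\min}}\big)\sqrt{r}\sigma_{\max}^2$ for a large enough absolute $C_{\mathrm{B}}$. (Equivalently, one can argue inductively: invoke hypothesis \eqref{eq:induction-balancing} at step $t$ and check that the quadratic error is dominated by the contraction gap $(2\eta\lambda/p)\|\bm{D}^t\|_{\mathrm{F}}$, which reduces to the trivial condition $C_{\mathrm{B}}\gtrsim 1$.)

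For the leave-one-out sequences, the same computation goes through verbatim: the loss $f^{(l)}$ differs from $f$ only in the data-fitting term, but that term still has the bilinear form $\tfrac{1}{2}\langle \bm{X}\bm{Y}^\top-\bullet,\bullet\rangle$ in both $\bm{X}$ and $\bm{Y}$, so the data-fitting gradient contribution to $\bm{X}^{(l)\top}\bm{A}_1^{(l)}-\bm{Y}^{(l)\top}\bm{A}_2^{(l)}$ cancels by the same transpose argument. The spectral-norm bounds on the leave-one-out analogues of $\bm{A}_1,\bm{A}_2$ follow from \eqref{eq:induction-loo-dist}--\eqref{eq:induction-loo-l} (to propagate \eqref{eq:induction-op} to the LOO iterates) plus Lemmas \ref{lemma:P-tilde} and \ref{lemma:noise-bound}. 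The only mild subtlety is verifying that the LOO iterates satisfy the $\ell_{2,\infty}$ closeness required to invoke Lemma \ref{lemma:P-tilde}, which is guaranteed by combining \eqref{eq:induction-loo-dist} and \eqref{eq:induction-2-infty} through the triangle inequality. The main obstacle, and the reason the cancellation is essential, is that without it the naive bound $\|\bm{D}^{t+1}\|_{\mathrm{F}}-\|\bm{D}^t\|_{\mathrm{F}}=O(\eta\|\bm{N}^t\|\sqrt{\sigma_{\max}})$ would accumulate linearly over the $n^{18}$ iterations and blow up; the contraction factor $(1-2\eta\lambda/p)$ is exactly what tames this accumulation.
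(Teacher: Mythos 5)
Your proposal is correct and follows essentially the same route as the paper's proof: expand the one-step recursion, exploit the exact cancellation of the data-fitting contribution to the linear-in-$\eta$ term (the paper's $\bm{B}^t=\tfrac{2\lambda}{p}\bm{A}^t$ identity), and bound the $\eta^2$ remainder via a spectral-norm estimate on $p^{-1}\mathcal{P}_{\Omega}(\bm{X}^t\bm{Y}^{t\top}-\bm{M})$ obtained from the induction hypotheses together with Lemmas~\ref{lemma:P-tilde} and~\ref{lemma:noise-bound}. The only cosmetic difference is that the paper peels off the $(\lambda/p)^2\bm{A}^t$ piece of the quadratic term and absorbs it into the contraction to get $(1-\eta\lambda/p)^2$, whereas you keep it inside the quadratic bound; both are fine.

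Two small inaccuracies to fix. First, your parenthetical claim that the induction step ``reduces to the trivial condition $C_{\mathrm{B}}\gtrsim 1$'' drops a constant: the bound $\|\bm{A}_i^t\|\lesssim\lambda\kappa\sqrt{\sigma_{\max}}$ hides a factor $C_{\mathrm{op}}$ (it comes through $\|\bm{N}^t\|$, whose control invokes the spectral-norm hypothesis \eqref{eq:induction-op} with constant $C_{\mathrm{op}}$), so the quadratic remainder scales as $C_{\mathrm{op}}^2$, and the correct closing condition is $C_{\mathrm{B}}\gg C_{\mathrm{op}}^2$ as stated in the lemma. Second, the geometric-series argument summing from $s=0$ to $t$ uses the induction hypotheses at every step $s\le t$, whereas the lemma's premise only grants them at the $t$th iteration; the one-step inductive argument you mention as an alternative is the one that actually matches the hypotheses and should be the primary framing.
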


\begin{lemma}[\textbf{Decreasing of function values }\eqref{eq:induction-function-value}]\label{lemma:function-value-decreasing}Set
$\lambda=C_{\lambda}\sigma\sqrt{np}$ for some large constant $C_{\lambda}>0$.
Suppose that the noise satisfies $\frac{\sigma}{\sigma_{\min}}\sqrt{\frac{n}{p}}\ll1/\sqrt{r}$.
If the iterates satisfy~(\ref{subeq:nonconvex-induction-hypotheses})
at the $t$th iteration, then with probability at least $1-O(n^{-99})$,
\[
f\left(\bm{X}^{t+1},\bm{Y}^{t+1}\right)\leq f\left(\bm{X}^{t},\bm{Y}^{t}\right)-\frac{\eta}{2}\left\Vert \nabla f\left(\bm{X}^{t},\bm{Y}^{t}\right)\right\Vert _{\mathrm{F}}^{2},
\]
as long as $\eta\ll1/(\kappa n\sigma_{\max})$. \end{lemma}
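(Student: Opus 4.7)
The plan is to establish the standard gradient descent descent inequality by proving a local smoothness bound for $f$ along the line segment from $(\bX^{t},\bY^{t})$ to $(\bX^{t+1},\bY^{t+1})$, and then verifying $\eta L \leq 1$ for the local smoothness parameter $L$. Concretely, writing $F(\tau) \triangleq (\bX^{t},\bY^{t}) - \tau\eta \nabla f(\bX^{t},\bY^{t})$ for $\tau\in[0,1]$ and using the fundamental theorem of calculus, it suffices to show
\[
\|\nabla f(F(\tau)) - \nabla f(F(0))\|_{\mathrm F} \;\leq\; L\tau\eta\, \|\nabla f(\bX^{t},\bY^{t})\|_{\mathrm F}
\]
for some $L$ with $\eta L \leq 1$; the Cauchy--Schwarz inequality then produces the $-\tfrac{\eta}{2}\|\nabla f\|_{\mathrm F}^2$ drop.

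First I would verify that the entire segment stays in a ``nice'' region. The induction hypothesis~\eqref{eq:induction-op}, combined with the noise assumption $\frac{\sigma}{\sigma_{\min}}\sqrt{n/p}\ll 1/\sqrt r$ and the choice $\lambda\asymp\sigma\sqrt{np}$, gives $\|\bF^{t}\bH^{t}-\bF^{\star}\|\le \tfrac12\|\bX^{\star}\|$, hence $\|\bX^{t}\|,\|\bY^{t}\|\le 2\sqrt{\sigma_{\max}}$. Since one gradient step is of size $\eta\|\nabla f(\bX^{t},\bY^{t})\|_{\mathrm F}\le\eta L_0\sqrt{\sigma_{\max}}\|\bF^{\star}\|_{\mathrm F}$ for a comparable $L_0$, taking $\eta\ll 1/(\kappa n\sigma_{\max})$ ensures the entire line segment satisfies $\|\bX(\tau)\|,\|\bY(\tau)\|\le 3\sqrt{\sigma_{\max}}$.

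Next I would bound the Hessian of $f$ on this region. Acting on a direction $(\bDelta_{X},\bDelta_{Y})$, the Hessian quadratic form equals
\[
\tfrac{1}{p}\bignorm{\mathcal{P}_{\Omega}\big(\bDelta_{X}\bY^{\top}+\bX\bDelta_{Y}^{\top}\big)}_{\mathrm F}^{2}
+ \tfrac{2}{p}\inner{\mathcal{P}_{\Omega}(\bX\bY^{\top}-\bM)}{\bDelta_{X}\bDelta_{Y}^{\top}}
+ \tfrac{\lambda}{p}\big(\|\bDelta_{X}\|_{\mathrm F}^{2}+\|\bDelta_{Y}\|_{\mathrm F}^{2}\big).
\]
Using $\|\mathcal{P}_{\Omega}(\cdot)\|_{\mathrm F}\le\|\cdot\|_{\mathrm F}$ and $\|\bX\|,\|\bY\|\le 3\sqrt{\sigma_{\max}}$, the first piece is $\lesssim (\sigma_{\max}/p)(\|\bDelta_{X}\|_{\mathrm F}^{2}+\|\bDelta_{Y}\|_{\mathrm F}^{2})$. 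The third-order term is controlled via Cauchy--Schwarz by $\frac{2}{p}\|\mathcal{P}_{\Omega}(\bX\bY^{\top}-\bM)\|_{\mathrm F}\,\|\bDelta_X\|_{\mathrm F}\|\bDelta_Y\|_{\mathrm F}$, and we may bound $\|\mathcal{P}_{\Omega}(\bX\bY^{\top}-\bM)\|_{\mathrm F}\le\|\bX\bY^{\top}-\bM^{\star}\|_{\mathrm F}+\|\mathcal{P}_{\Omega}(\bE)\|_{\mathrm F}$, both of which are of order $\sqrt{p}\,\sigma_{\max}$ on the segment by \eqref{subeq:XY-quality} and a sub-Gaussian tail bound on $\|\mathcal{P}_{\Omega}(\bE)\|_{\mathrm F}$. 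The regularization term contributes $\lambda/p\asymp \sigma\sqrt{n/p}\ll\sigma_{\max}/p$. Altogether $L\lesssim \sigma_{\max}/p$.

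Finally, the sample-complexity assumption ensures $np\gg \kappa$ (indeed $np\gg\kappa^{4}\mu^{2}r^{2}\log^{3}n$), so
\[
\eta L \;\lesssim\; \frac{1}{\kappa n \sigma_{\max}}\cdot\frac{\sigma_{\max}}{p} \;=\; \frac{1}{\kappa n p}\;\ll\;1,
\]
which upgrades the integration argument above into the claimed descent
\[
f(\bX^{t+1},\bY^{t+1}) \;\le\; f(\bX^{t},\bY^{t}) - \eta\Big(1-\tfrac{L\eta}{2}\Big)\|\nabla f(\bX^{t},\bY^{t})\|_{\mathrm F}^{2} \;\le\; f(\bX^{t},\bY^{t}) - \tfrac{\eta}{2}\|\nabla f(\bX^{t},\bY^{t})\|_{\mathrm F}^{2}.
\]
The main obstacle I anticipate is the third-order Hessian term: producing a segment-uniform bound on $\|\mathcal{P}_{\Omega}(\bX\bY^{\top}-\bM)\|_{\mathrm F}$ requires interpolating the induction hypothesis from $F^{t}$ to $F(\tau)$, which is why the boundedness step above must be done first and the step-size assumption $\eta\ll 1/(\kappa n\sigma_{\max})$ is used twice (once to keep the iterates in the nice region, once to ensure $\eta L\le 1$).
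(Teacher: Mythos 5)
Your high-level plan (second-order Taylor expansion, local smoothness bound $L$, descent from $\eta L\lesssim 1$) is the right skeleton and matches the paper's, but the paper invokes its uniform Hessian bound (Lemma~\ref{lemma:hessian}), which establishes $\|\nabla^2 f(\tilde{\bm{F}})\|\leq 10\sigma_{\max}$ on the segment and hence requires only $\eta\leq 1/(10\sigma_{\max})$ for descent; the harsher $\eta\ll 1/(\kappa n\sigma_{\max})$ is only needed to keep the segment inside the $\ell_{2,\infty}$ neighborhood where that lemma applies. You bypass Lemma~\ref{lemma:hessian} with the crude bound $\|\mathcal{P}_{\Omega}(\cdot)\|_{\mathrm{F}}\leq\|\cdot\|_{\mathrm{F}}$, which costs you a factor $1/p$. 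That route can work (it's rescued by the aggressive step size), but you should be aware you are losing the restricted-isometry-type gain the paper gets by pairing spectral norms of noisy/subsampled matrices with nuclear norms of the rank-$O(r)$ direction $\bDelta_{\bX}\bDelta_{\bY}^{\top}$.

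There is, however, a concrete error in your Hessian cross-term estimate. You assert $\|\mathcal{P}_{\Omega}(\bE)\|_{\mathrm{F}}\lesssim\sqrt{p}\,\sigma_{\max}$. In fact $\|\mathcal{P}_{\Omega}(\bE)\|_{\mathrm{F}}^{2}=\sum_{(i,j)\in\Omega}E_{ij}^{2}\asymp pn^{2}\sigma^{2}$, so $\|\mathcal{P}_{\Omega}(\bE)\|_{\mathrm{F}}\asymp n\sigma\sqrt{p}$. For that to be $\lesssim\sqrt{p}\,\sigma_{\max}$ you would need $n\sigma\lesssim\sigma_{\max}$, i.e.\ $np\lesssim\kappa^{2}r$ after plugging in the noise condition $\sigma\ll\sigma_{\min}\sqrt{p/(nr)}$ — which contradicts the sample-size requirement $np\gg\kappa^{4}\mu^{2}r^{2}\log^{3}n$. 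Likewise $\|\bX\bY^{\top}-\bM^{\star}\|_{\mathrm{F}}\lesssim\kappa\sigma_{\max}$, not $\sqrt{p}\,\sigma_{\max}$. So your stated ``altogether $L\lesssim\sigma_{\max}/p$'' is unjustified: the noise contribution $p^{-1}\|\mathcal{P}_{\Omega}(\bE)\|_{\mathrm{F}}\asymp n\sigma/\sqrt{p}$ can exceed $\sigma_{\max}/p$. The final verdict $\eta L\ll 1$ is still true (because $\eta\cdot n\sigma/\sqrt{p}\ll 1/(\kappa^{2}\sqrt{nr})\ll1$ under the stated noise condition), so the argument is salvageable, but you must correct the intermediate bound. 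A cleaner fix is to bound the cross term exactly as the paper's Lemma~\ref{lemma:hessian} does, via $|\langle\mathcal{P}_{\Omega}(\bE),\bDelta_{\bX}\bDelta_{\bY}^{\top}\rangle|\leq\|\mathcal{P}_{\Omega}(\bE)\|\,\|\bDelta_{\bX}\bDelta_{\bY}^{\top}\|_{*}$, pairing the spectral norm (of order $\sigma\sqrt{np}$, Lemma~\ref{lemma:noise-bound}) with the nuclear norm of a rank-$r$ product; that yields the noise contribution $\sigma\sqrt{n/p}\ll\sigma_{\min}$ rather than $n\sigma/\sqrt{p}$.
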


\subsection{Preliminaries and notations}

Before proceeding to the proofs, we collect a few useful facts and
notations. To begin with, for any matrix $\bm{A}$, we denote by $\bm{A}_{l,\cdot}$
(resp. $\bm{A}_{\cdot,l}$) the $l$th row (reps. column) of $\bm{A}$.

Define an augmented loss function $f_{\mathsf{aug}}(\bm{X},\bm{Y})$
to be

\begin{equation}
f_{\mathsf{aug}}\left(\bm{X},\bm{Y}\right)\triangleq\frac{1}{2p}\left\Vert \mathcal{P}_{\Omega}\left(\bm{X}\bm{Y}^{\top}-\bm{M}\right)\right\Vert _{\mathrm{F}}^{2}+\frac{\lambda}{2p}\left\Vert \bm{X}\right\Vert _{\mathrm{F}}^{2}+\frac{\lambda}{2p}\left\Vert \bm{Y}\right\Vert _{\mathrm{F}}^{2}+\frac{1}{8}\left\Vert \bm{X}^{\top}\bm{X}-\bm{Y}^{\top}\bm{Y}\right\Vert _{\mathrm{F}}^{2}.\label{eq:defn-faug}
\end{equation}
As the name suggests, this new function augments the original loss
function~(cf.~(\ref{eq:nonconvex_mc_noisy})) with an additional
term $\|\bm{X}^{\top}\bm{X}-\bm{Y}^{\top}\bm{Y}\|_{\mathrm{F}}^{2}/8$,
which is commonly used in the literature of asymmetric low-rank matrix
factorization to balance the scale of $\bm{X}$ and $\bm{Y}$~\cite{tu2016low,yi2016fast,chen2019nonconvex}.
We emphasize that, in contrast to aforementioned works, here our gradient
descent algorithm~(cf.~Algorithm~\ref{alg:gd-mc-primal}) operates
on $f(\cdot,\cdot)$ instead of $f_{\mathsf{aug}}(\cdot,\cdot)$.
The introduction of $f_{\mathsf{aug}}(\cdot,\cdot)$ is mainly to
simplify the proof.

It is easily seen that the gradients of $f_{\mathsf{aug}}(\cdot,\cdot)$
are given by\begin{subequations}\label{subeq:defn-nabla-faug} 
\begin{align}
\nabla_{\bX}f_{\mathsf{aug}}(\bX,\bY) & =\frac{1}{p}\cP_{\Omega}\left(\bX\bY^{\top}-\bm{M}\right)\bY+\frac{\lambda}{p}\bX+\frac{1}{2}\bX\left(\bX^{\top}\bX-\bY^{\top}\bY\right);\\
\nabla_{\bY}f_{\mathsf{aug}}(\bX,\bY) & =\frac{1}{p}\cP_{\Omega}\left(\bX\bY^{\top}-\bm{M}\right)^{\top}\bX+\frac{\lambda}{p}\bY+\frac{1}{2}\bY\left(\bY^{\top}\bY-\bX^{\top}\bX\right).
\end{align}
\end{subequations}Correspondingly, define the difference between
gradients of $\nabla f(\bm{X},\bm{Y})$ and $\nabla f_{\mathsf{aug}}(\bm{X},\bm{Y})$
as follows \begin{subequations} \label{subeq:defn-nabla-fdiff} 
\begin{align}
\nabla_{\bX}f_{\mathsf{diff}}(\bX,\bY) & =-\bX\left(\bX^{\top}\bX-\bY^{\top}\bY\right)/2;\\
\nabla_{\bY}f_{\mathsf{diff}}(\bX,\bY) & =-\bY\left(\bY^{\top}\bY-\bX^{\top}\bX\right)/2,
\end{align}
\end{subequations}such that \begin{subequations}\label{subeq:gradient-decomposition}
\begin{align}
\nabla_{\bm{X}}f\left(\bm{X},\bm{Y}\right) & =\nabla_{\bm{X}}f_{\mathsf{aug}}\left(\bm{X},\bm{Y}\right)+\nabla_{\bm{X}}f_{\mathsf{diff}}\left(\bm{X},\bm{Y}\right);\\
\nabla_{\bm{Y}}f\left(\bm{X},\bm{Y}\right) & =\nabla_{\bm{Y}}f_{\mathsf{aug}}\left(\bm{X},\bm{Y}\right)+\nabla_{\bm{Y}}f_{\mathsf{diff}}\left(\bm{X},\bm{Y}\right).
\end{align}
\end{subequations}

Regarding $\bm{F}^{\star}$, simple algebra reveals that \begin{subequations}\label{subeq:F-property}
\begin{align}
\sigma_{1}\left(\bm{F}^{\star}\right) & =\left\Vert \bm{F}^{\star}\right\Vert =\sqrt{2\sigma_{\max}},\qquad\sigma_{r}\left(\bm{F}^{\star}\right)=\sqrt{2\sigma_{\min}},\label{eq:F-singular-value}\\
\left\Vert \bm{F}^{\star}\right\Vert _{2,\infty} & =\max\big\{\left\Vert \bm{X}^{\star}\right\Vert _{2,\infty},\left\Vert \bm{Y}^{\star}\right\Vert _{2,\infty}\big\}\leq\sqrt{\mu r\sigma_{\max}/n},\label{eq:F-incoherence}
\end{align}
\end{subequations}where the last one follows from the incoherence
assumption~(\ref{eq:X-Y-incoherence}).

We start with a lemma that characterizes the local geometry of the nonconvex loss function, whose proof is given in Appendix~\ref{subsec:Proof-of-Lemma-local-geometry}.

\begin{lemma}\label{lemma:hessian}Set $\lambda=C_{\lambda}\sigma\sqrt{np}$
for some constant $C_{\lambda}>0$. Suppose that the sample size obeys
$n^{2}p\geq C\kappa\mu rn\log^2 n$ for some sufficiently large constant
$C>0$ and that the noise satisfies $\frac{\sigma}{\sigma_{\min}}\sqrt{\frac{n}{p}}\ll1$.
Recall the function $f_{\mathsf{aug}}(\cdot, \cdot)$ defined in~(\ref{eq:defn-faug}).
Then with probability at least $1-O(n^{-10})$, 
\begin{align*}
\mathsf{vec}\left(\bm{\Delta}\right)^{\top}\nabla^{2}f_{\mathsf{aug}}\left(\bm{X},\bm{Y}\right)\mathsf{vec}\left(\bm{\Delta}\right) & \geq\tfrac{1}{10}\sigma_{\min}\left\Vert \bm{\Delta}\right\Vert _{\mathrm{F}}^{2},\\
\max\left\{ \left\Vert \nabla^{2}f_{\mathsf{aug}}\left(\bm{X},\bm{Y}\right)\right\Vert ,\left\Vert \nabla^{2}f\left(\bm{X},\bm{Y}\right)\right\Vert \right\}  & \leq10\sigma_{\max}
\end{align*}
hold uniformly over all $\bm{X},\bm{Y}\in\mathbb{R}^{n\times r}$
obeying
\[
\left\Vert \left[\begin{array}{c}
\bm{X}-\bm{X}^{\star}\\
\bm{Y}-\bm{Y}^{\star}
\end{array}\right]\right\Vert _{2,\infty}\leq\frac{1}{1000\kappa\sqrt{n}}\left\Vert \bm{X}^{\star}\right\Vert 
\]
and all $\bm{\Delta}=\left[\begin{array}{c}
\bm{\Delta}_{\bm{X}}\\
\bm{\Delta}_{\bm{Y}}
\end{array}\right]\in\mathbb{R}^{2n\times r}$ lying in the set 
\[
\left\{ \left.\left[\begin{array}{c}
\bm{X}_{1}\\
\bm{Y}_{1}
\end{array}\right]\hat{\bm{H}}-\left[\begin{array}{c}
\bm{X}_{2}\\
\bm{Y}_{2}
\end{array}\right]\,\right|\,\,\left\Vert \left[\begin{array}{c}
\bm{X}_{2}-\bm{X}^{\star}\\
\bm{Y}_{2}-\bm{Y}^{\star}
\end{array}\right]\right\Vert \leq\frac{1}{500\kappa}\left\Vert \bm{X}^{\star}\right\Vert ,\hat{\bm{H}}\triangleq\arg\min_{\bm{R}\in\mathcal{O}^{r\times r}}\left\Vert \left[\begin{array}{c}
\bm{X}_{1}\\
\bm{Y}_{1}
\end{array}\right]\bm{R}-\left[\begin{array}{c}
\bm{X}_{2}\\
\bm{Y}_{2}
\end{array}\right]\right\Vert _{\mathrm{F}}\right\} .
\]
\end{lemma}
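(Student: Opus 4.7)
Proof plan for Lemma 12 (Hessian bounds on $f_{\mathsf{aug}}$ and $f$):

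My plan is to compute the Hessian quadratic form explicitly, then handle the upper bound (smoothness) and the lower bound (restricted strong convexity) separately, with the latter being the main difficulty. For any $\bm{\Delta} = \begin{bmatrix}\bm{\Delta}_{\bm{X}} \\ \bm{\Delta}_{\bm{Y}}\end{bmatrix}$, a direct computation yields
\[
\mathsf{vec}(\bm{\Delta})^\top \nabla^2 f_{\mathsf{aug}}(\bm{X},\bm{Y})\mathsf{vec}(\bm{\Delta})
 = \tfrac{1}{p}\bigl\|\mathcal{P}_\Omega\bigl(\bm{\Delta}_{\bm{X}}\bm{Y}^\top + \bm{X}\bm{\Delta}_{\bm{Y}}^\top\bigr)\bigr\|_{\mathrm{F}}^2
 + \tfrac{2}{p}\bigl\langle \mathcal{P}_\Omega(\bm{X}\bm{Y}^\top-\bm{M}),\,\bm{\Delta}_{\bm{X}}\bm{\Delta}_{\bm{Y}}^\top\bigr\rangle
 + \tfrac{\lambda}{p}\bigl(\|\bm{\Delta}_{\bm{X}}\|_{\mathrm{F}}^2 + \|\bm{\Delta}_{\bm{Y}}\|_{\mathrm{F}}^2\bigr) + \mathcal{B},
\]
where $\mathcal{B}$ is the balancing contribution coming from the $\tfrac{1}{8}\|\bm{X}^\top\bm{X}-\bm{Y}^\top\bm{Y}\|_{\mathrm{F}}^2$ regularizer. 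The Hessian of the original $f$ differs from $\nabla^2 f_{\mathsf{aug}}$ only by the terms generated by $\mathcal{B}$.

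For the \emph{upper bound} $\|\nabla^2 f_{\mathsf{aug}}\|, \|\nabla^2 f\| \leq 10\sigma_{\max}$, I would apply the triangle inequality termwise. The first quadratic form is bounded by $\tfrac{1}{p}\|\mathcal{P}_\Omega(\bm{1}\bm{1}^\top)\|\cdot$(row-norm factors), then by standard entrywise concentration (Lemma \ref{lemma:P-tilde} / \cite[Lemma 3.2]{KesMonSew2010}) one reduces to $2(\|\bm{X}\|^2 + \|\bm{Y}\|^2)\|\bm{\Delta}\|_{\mathrm{F}}^2$; the $\ell_{2,\infty}$ proximity of $(\bm{X},\bm{Y})$ to $(\bm{X}^\star,\bm{Y}^\star)$ then gives $\|\bm{X}\|, \|\bm{Y}\| \leq (1+o(1))\sqrt{\sigma_{\max}}$. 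The cross term is handled by $\|\mathcal{P}_\Omega(\bm{X}\bm{Y}^\top - \bm{M}^\star) - p(\bm{X}\bm{Y}^\top - \bm{M}^\star)\| + p\|\bm{X}\bm{Y}^\top-\bm{M}^\star\| + \|\mathcal{P}_\Omega(\bm{E})\| \leq O(p\sigma_{\max})$ (using Lemmas~\ref{lemma:noise-bound}--\ref{lemma:P-tilde}) combined with $\|\bm{\Delta}_{\bm{X}}\bm{\Delta}_{\bm{Y}}^\top\|_* \leq \|\bm{\Delta}\|_{\mathrm{F}}^2$, and the balancing and regularization terms are clearly $O(\sigma_{\max})\|\bm{\Delta}\|_{\mathrm{F}}^2$ in operator norm.

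The \emph{lower bound} is where the constraint on $\bm{\Delta}$ matters crucially. Write $\bm{\Delta} = \bm{F}_1 \hat{\bm{H}} - \bm{F}_2$ for $\bm{F}_i = \begin{bmatrix}\bm{X}_i\\\bm{Y}_i\end{bmatrix}$ with $\bm{F}_2$ near $\bm{F}^\star$. The key property of the optimal Procrustes rotation $\hat{\bm{H}}$ is that $(\bm{F}_1\hat{\bm{H}})^\top \bm{F}_2$ is symmetric, which will be used to show $\bm{\Delta}_{\bm{X}}^\top \bm{X} - \bm{Y}^\top\bm{\Delta}_{\bm{Y}}$ is (approximately) symmetric, so the combination of $\|\bm{\Delta}_{\bm{X}}\bm{Y}^\top + \bm{X}\bm{\Delta}_{\bm{Y}}^\top\|_{\mathrm{F}}^2$ with the balancing Hessian term dominates $\sigma_{\min}\|\bm{\Delta}\|_{\mathrm{F}}^2$ in the noiseless/population regime (a deterministic identity for low-rank factorizations, cf.~\cite{tu2016low}, Chen et al.~\cite{chen2019nonconvex}). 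I would then pass from the population Hessian to the sample one via the restricted isometry of $\tfrac{1}{p}\mathcal{P}_\Omega$ on the tangent space (Lemma~\ref{lemma:injectivity} applied to the relevant $\bm{H}\in T$) together with Lemma~\ref{lemma:zheng-and-lafferty}--type leave-one-out style bounds to handle the portion of $\bm{\Delta}_{\bm{X}}\bm{Y}^\top+\bm{X}\bm{\Delta}_{\bm{Y}}^\top$ in directions outside the tangent space. The cross term is bounded using the same operator-norm bounds as in the upper bound, yielding a perturbation $\lesssim \sigma\sqrt{n/p}\|\bm{\Delta}\|_{\mathrm{F}}^2$, which is absorbed under the noise assumption $\tfrac{\sigma}{\sigma_{\min}}\sqrt{n/p}\ll 1$. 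Collecting these gives the $\tfrac{1}{10}\sigma_{\min}\|\bm{\Delta}\|_{\mathrm{F}}^2$ bound.

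The main obstacle I expect is the \emph{uniform} lower bound, because we need the restricted strong convexity to hold simultaneously over all $(\bm{X},\bm{Y})$ in an $\ell_{2,\infty}$ neighborhood and all $\bm{\Delta}$ in a rotation-dependent set. The uniformity in $\bm{\Delta}$ is resolved by using tangent-space injectivity (Lemma~\ref{lemma:injectivity}, which is itself uniform) and the fact that the Procrustes identity $(\bm{F}_1\hat{\bm{H}})^\top\bm{F}_2=\bm{F}_2^\top(\bm{F}_1\hat{\bm{H}})$ is deterministic; the uniformity in $(\bm{X},\bm{Y})$ is obtained by bounding the Hessian's Lipschitz constant on the neighborhood (a continuity argument) so that the desired inequality propagates from $(\bm{X}^\star,\bm{Y}^\star)$ to the whole neighborhood.
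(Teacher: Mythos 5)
The paper's proof is a short reduction: it decomposes
\[
\mathsf{vec}(\bm{\Delta})^{\top}\nabla^{2}f_{\mathsf{aug}}\,\mathsf{vec}(\bm{\Delta})
=\mathsf{vec}(\bm{\Delta})^{\top}\nabla^{2}f_{\mathsf{clean}}\,\mathsf{vec}(\bm{\Delta})
-\tfrac{2}{p}\bigl\langle \mathcal{P}_{\Omega}(\bm{E}),\bm{\Delta}_{\bm{X}}\bm{\Delta}_{\bm{Y}}^{\top}\bigr\rangle
+\tfrac{\lambda}{p}\|\bm{\Delta}\|_{\mathrm{F}}^{2},
\]
where $f_{\mathsf{clean}}$ is the noiseless, unregularized loss, cites \cite[Lemma~3.2]{chen2019nonconvex} as a black box for the RSC and smoothness of $f_{\mathsf{clean}}$, and shows the remaining noise-plus-regularization perturbation is $\lesssim\sigma\sqrt{n/p}\,\|\bm{\Delta}\|_{\mathrm{F}}^{2}\leq\tfrac{1}{10}\sigma_{\min}\|\bm{\Delta}\|_{\mathrm{F}}^{2}$; for $\nabla^{2}f$ it expands into four terms and bounds each. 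You instead propose to reprove the core RSC/smoothness claim from first principles, which is a genuinely different (and much longer) route — you are essentially redoing the content of the cited lemma rather than peeling off the noise and regularization and invoking it.

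Your outline has two concrete issues you should fix. First, you say you will use Lemma~\ref{lemma:injectivity} to pass from the population quadratic form $\|\bm{\Delta}_{\bm{X}}\bm{Y}^{\top}+\bm{X}\bm{\Delta}_{\bm{Y}}^{\top}\|_{\mathrm{F}}^{2}$ to the sampled one; but that lemma only guarantees $\tfrac{1}{p}\|\mathcal{P}_{\Omega}(\bm{H})\|_{\mathrm{F}}^{2}\geq\tfrac{1}{32\kappa}\|\bm{H}\|_{\mathrm{F}}^{2}$, which loses a factor of $\kappa$ and cannot yield the $\kappa$-independent constant $\tfrac{1}{10}\sigma_{\min}$. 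The argument in \cite{chen2019nonconvex} instead expands $\bm{\Delta}_{\bm{X}}\bm{Y}^{\top}+\bm{X}\bm{\Delta}_{\bm{Y}}^{\top}$ around $(\bm{X}^{\star},\bm{Y}^{\star})$ and applies Zheng--Lafferty/Chen--Ji-type concentration term-by-term (as in Lemma~\ref{lemma:zheng-and-lafferty} and Lemma~\ref{lemma:chen-and-ji}), together with the uniform RIP from \cite[Section~4.2]{ExactMC09} on the \emph{fixed} tangent space $T^{\star}$ of $\bm{M}^{\star}$ (which has no $\kappa$ loss). Second, your remark about ``the portion of $\bm{\Delta}_{\bm{X}}\bm{Y}^{\top}+\bm{X}\bm{\Delta}_{\bm{Y}}^{\top}$ in directions outside the tangent space'' is misplaced: that matrix always lies in the tangent space of $\bm{X}\bm{Y}^{\top}$, since it has the form $\bm{X}\bm{A}^{\top}+\bm{B}\bm{Y}^{\top}$ with $\bm{A}=\bm{\Delta}_{\bm{Y}}$, $\bm{B}=\bm{\Delta}_{\bm{X}}$. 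The delicacy is not a tangent/normal split but the fact that the tangent space depends on $(\bm{X},\bm{Y})$, which is why the paper handles uniformity through $\ell_{2,\infty}$-perturbative expansions around $(\bm{X}^{\star},\bm{Y}^{\star})$ rather than a continuity/Lipschitz argument.
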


Last but not least, a few immediate consequences of~(\ref{subeq:nonconvex-induction-hypotheses})
are gathered in the following lemma, whose proof is given in Appendix
\ref{subsec:Proof-of-Lemma-immediate-consequence}.

\begin{lemma}\label{lemma:immediate-consequence}We have the following
four sets of consequences of the induction hypotheses~(\ref{subeq:induction_GD}). 
\begin{enumerate}
\item Suppose that the sample size obeys $n\gg\mu r\log n$. If the $t$th
iterates obey~(\ref{subeq:nonconvex-induction-hypotheses}), then
one has \begin{subequations}\label{subeq:immediate-F-t-l-R-t-l}
\begin{align}
\left\Vert \bm{F}^{t,(l)}\bm{R}^{t,(l)}-\bm{F}^{\star}\right\Vert _{2,\infty} & \leq\left(C_{\infty}\kappa+C_{3}\right)\left(\frac{\sigma}{\sigma_{\min}}\sqrt{\frac{n\log n}{p}}+\frac{\lambda}{p\,\sigma_{\min}}\right)\left\Vert \bm{F}^{\star}\right\Vert _{2,\infty},\label{eq:immediate-F-t-l-R-t-l-2-infty}\\
\left\Vert \bm{F}^{t,(l)}\bm{R}^{t,(l)}-\bm{F}^{\star}\right\Vert  & \leq2C_{\mathrm{op}}\left(\frac{\sigma}{\sigma_{\min}}\sqrt{\frac{n}{p}}+\frac{\lambda}{p\,\sigma_{\min}}\right)\left\Vert \bm{X}^{\star}\right\Vert .\label{eq:immediate-F-t-l-R-t-l-2-op}
\end{align}
\end{subequations} 
\item Suppose that the noise satisfies $\frac{\sigma}{\sigma_{\min}}\sqrt{\frac{n}{p}}\ll\frac{1}{\sqrt{\kappa^{2}\log n}}$.
If the $t$th iterates obey~(\ref{subeq:nonconvex-induction-hypotheses}),
then one has \begin{subequations}\label{subeq:immediate-consequence-F-t}
\begin{align}
\left\Vert \bm{F}^{t}\bm{H}^{t}-\bm{F}^{\star}\right\Vert \leq\left\Vert \bm{X}^{\star}\right\Vert ,\quad\left\Vert \bm{F}^{t}\bm{H}^{t}-\bm{F}^{\star}\right\Vert _{\mathrm{F}} & \leq\left\Vert \bm{X}^{\star}\right\Vert _{\mathrm{F}},\quad\left\Vert \bm{F}^{t}\bm{H}^{t}-\bm{F}^{\star}\right\Vert _{\mathrm{2,\infty}}\leq\left\Vert \bm{F}^{\star}\right\Vert _{2,\infty},\label{eq:immediate-consequence-F-t-diff}\\
\left\Vert \bm{F}^{t}\right\Vert \leq2\left\Vert \bm{X}^{\star}\right\Vert ,\quad\left\Vert \bm{F}^{t}\right\Vert _{\mathrm{F}} & \leq2\left\Vert \bm{X}^{\star}\right\Vert _{\mathrm{F}},\quad\left\Vert \bm{F}^{t}\right\Vert _{\mathrm{2,\infty}}\leq2\left\Vert \bm{F}^{\star}\right\Vert _{2,\infty}.\label{eq:immediate-consequence-F-t-itself}
\end{align}
\end{subequations} 
\item Suppose that $n\gg\kappa^{2}\mu r\log n$ and that $\frac{\sigma}{\sigma_{\min}}\sqrt{\frac{n}{p}}\ll\frac{1}{\sqrt{\kappa^{2}\log n}}$.
If the $t$th iterates obey~(\ref{subeq:nonconvex-induction-hypotheses}),
then we have 
\begin{align*}
\big\|\bm{F}^{t}\bm{H}^{t}-\bm{F}^{t,(l)}\bm{H}^{t,(l)}\big\|_{\mathrm{F}} & \leq5\kappa\big\|\bm{F}^{t}\bm{H}^{t}-\bm{F}^{t,(l)}\bm{R}^{t,(l)}\big\|_{\mathrm{F}}.
\end{align*}
\item Suppose that $n\geq\kappa\mu$ and that $\frac{\sigma}{\sigma_{\min}}\sqrt{\frac{n}{p}}\ll\frac{1}{\sqrt{\kappa^{2}\log n}}$.
If the $t$th iterates obey~(\ref{subeq:nonconvex-induction-hypotheses}),
then~(\ref{subeq:immediate-consequence-F-t}) also holds for $\bm{F}^{t,(l)}\bm{H}^{t,(l)}$.
In addition, one has 
\[
\sigma_{\min}/2\leq\sigma_{\min}\left((\bm{Y}^{t,(l)}\bm{H}^{t,(l)})^{\top}\bm{Y}^{t,(l)}\bm{H}^{t,(l)}\right)\leq\sigma_{\max}\left((\bm{Y}^{t,(l)}\bm{H}^{t,(l)})^{\top}\bm{Y}^{t,(l)}\bm{H}^{t,(l)}\right)\leq2\sigma_{\max}.
\]
\end{enumerate}
\end{lemma}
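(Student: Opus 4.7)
The plan is to derive all four consequences directly from the induction hypotheses \eqref{subeq:nonconvex-induction-hypotheses} by repeated application of the triangle inequality, together with a perturbation bound for orthogonal Procrustes problems (the only nontrivial ingredient). Throughout, I shall exploit the observation that under the choice $\lambda = C_\lambda \sigma \sqrt{np}$ one has $\lambda/(p\sigma_{\min}) = C_\lambda \cdot (\sigma/\sigma_{\min})\sqrt{n/p}$, so the combined prefactor $(\sigma/\sigma_{\min})\sqrt{n/p} + \lambda/(p\sigma_{\min})$ is of the same order as $(\sigma/\sigma_{\min})\sqrt{n/p}$, and the noise condition then forces it to be much smaller than any relevant threshold.

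For Part 1, I would insert $\bm{F}^t\bm{H}^t$ and write
\[
\bigl\|\bm{F}^{t,(l)}\bm{R}^{t,(l)} - \bm{F}^\star\bigr\|_{\bullet} \leq \bigl\|\bm{F}^{t,(l)}\bm{R}^{t,(l)} - \bm{F}^t\bm{H}^t\bigr\|_{\bullet} + \bigl\|\bm{F}^t\bm{H}^t - \bm{F}^\star\bigr\|_{\bullet},
\]
where $\|\cdot\|_\bullet$ is either $\|\cdot\|_{2,\infty}$ or $\|\cdot\|$. Bounding the first term by $\|\cdot\|_{\mathrm{F}}$ in the $\|\cdot\|_{2,\infty}$ case and directly in the spectral case, the induction hypotheses \eqref{eq:induction-op} and \eqref{eq:induction-loo-dist} (resp.\ \eqref{eq:induction-2-infty}) yield \eqref{eq:immediate-F-t-l-R-t-l-2-infty}--\eqref{eq:immediate-F-t-l-R-t-l-2-op}. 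For Part 2, the noise condition $(\sigma/\sigma_{\min})\sqrt{n/p} \ll 1/\sqrt{\kappa^2\log n}$ makes each prefactor in \eqref{eq:induction-fro}--\eqref{eq:induction-2-infty} smaller than $1$, giving the three bounds in \eqref{eq:immediate-consequence-F-t-diff}; then one more triangle inequality $\|\bm{F}^t\|_\bullet \leq \|\bm{F}^t\bm{H}^t - \bm{F}^\star\|_\bullet + \|\bm{F}^\star\|_\bullet$ yields \eqref{eq:immediate-consequence-F-t-itself}, using \eqref{subeq:F-property} to identify $\|\bm{F}^\star\| = \sqrt{2\sigma_{\max}} \leq 2\|\bm{X}^\star\|$.

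Part 3 is the main obstacle and the reason a factor of $\kappa$ shows up. The difficulty is that $\bm{H}^{t,(l)}$ aligns $\bm{F}^{t,(l)}$ with $\bm{F}^\star$ whereas $\bm{R}^{t,(l)}$ aligns it with $\bm{F}^t\bm{H}^t$, so their closeness is not automatic from \eqref{eq:induction-loo-dist}. My plan is to write
\[
\bigl\|\bm{F}^t\bm{H}^t - \bm{F}^{t,(l)}\bm{H}^{t,(l)}\bigr\|_{\mathrm{F}} \leq \bigl\|\bm{F}^t\bm{H}^t - \bm{F}^{t,(l)}\bm{R}^{t,(l)}\bigr\|_{\mathrm{F}} + \bigl\|\bm{F}^{t,(l)}(\bm{R}^{t,(l)} - \bm{H}^{t,(l)})\bigr\|_{\mathrm{F}},
\]
and to control the second term via a standard orthogonal-Procrustes perturbation inequality (as in, e.g., Ma et al.\ or Tu et al.): if $\bm{A}$ has smallest singular value bounded below by $\sigma_r(\bm{A})$ and two targets $\bm{B}_1, \bm{B}_2$ differ by $\bm{\Delta}$, then the corresponding optimal rotations satisfy $\|\bm{A}(\hat{\bm{R}}_1 - \hat{\bm{R}}_2)\|_{\mathrm{F}} \lesssim \sigma_1(\bm{A})\|\bm{\Delta}\|_{\mathrm{F}}/\sigma_r(\bm{A})$. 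Applied with $\bm{A} = \bm{F}^{t,(l)}$, $\bm{B}_1 = \bm{F}^\star$ and $\bm{B}_2 = \bm{F}^t\bm{H}^t$, the ratio $\sigma_1/\sigma_r$ yields a $\kappa$ factor; meanwhile $\|\bm{B}_1 - \bm{B}_2\|_{\mathrm{F}} = \|\bm{F}^t\bm{H}^t - \bm{F}^\star\|_{\mathrm{F}}$ can itself be bounded via another triangle inequality $\leq \|\bm{F}^t\bm{H}^t - \bm{F}^{t,(l)}\bm{R}^{t,(l)}\|_{\mathrm{F}} + \|\bm{F}^{t,(l)}\bm{R}^{t,(l)} - \bm{F}^\star\|_{\mathrm{F}}$, both of which are already controlled. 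Part 2 (already established) guarantees $\sigma_r(\bm{F}^{t,(l)}) \geq \sqrt{2\sigma_{\min}}/2$ via Weyl's inequality applied to \eqref{eq:immediate-F-t-l-R-t-l-2-op} together with the bound on $\|\bm{F}^{t,(l)}\bm{R}^{t,(l)} - \bm{F}^\star\|$. Combining everything produces the advertised $5\kappa$ factor.

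For Part 4, the three bounds in \eqref{subeq:immediate-consequence-F-t} extended to $\bm{F}^{t,(l)}\bm{H}^{t,(l)}$ follow by combining Parts 1--3 with the triangle inequality (for the spectral and Frobenius bounds I use \eqref{eq:immediate-F-t-l-R-t-l-2-op} plus Part 3; for the $\ell_{2,\infty}$ bound I use \eqref{eq:immediate-F-t-l-R-t-l-2-infty} plus the fact that $\|\cdot\|_{2,\infty} \leq \|\cdot\|_{\mathrm{F}}$). Finally, the singular-value bounds on $(\bm{Y}^{t,(l)}\bm{H}^{t,(l)})^\top \bm{Y}^{t,(l)}\bm{H}^{t,(l)}$ follow from Weyl's inequality applied to $\bm{Y}^{t,(l)}\bm{H}^{t,(l)} - \bm{Y}^\star$ (whose spectral norm is now $\leq \|\bm{X}^\star\|$ by what has just been shown), using $\sigma_r(\bm{Y}^\star) = \sqrt{\sigma_{\min}}$ and $\sigma_1(\bm{Y}^\star) = \sqrt{\sigma_{\max}}$, and then squaring.
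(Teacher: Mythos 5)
Parts 1, 2, and 4 of your plan follow the paper's route (repeated triangle inequalities plus the incoherence identity $\|\bm{F}^\star\|=\sqrt{2}\|\bm{X}^\star\|$), and the sample/noise conditions you invoke to absorb the $\|\bm{F}^\star\|_{2,\infty}$ cross-terms into $\|\bm{X}^\star\|$ are the right ones. That much is correct.

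Part 3, however, has a genuine gap. You propose to hold the source $\bm{A}=\bm{F}^{t,(l)}$ fixed and perturb the Procrustes \emph{target} from $\bm{B}_2=\bm{F}^t\bm{H}^t$ to $\bm{B}_1=\bm{F}^\star$, so your perturbation parameter is $\bm{\Delta}=\bm{F}^\star-\bm{F}^t\bm{H}^t$. Whatever precise constants one gets, this will produce a bound of the form
\begin{equation*}
\|\bm{F}^{t,(l)}(\bm{R}^{t,(l)}-\bm{H}^{t,(l)})\|_{\mathrm{F}} \;\lesssim\; \kappa\,\|\bm{F}^t\bm{H}^t - \bm{F}^\star\|_{\mathrm{F}},
\end{equation*}
and no further triangle inequality can convert $\|\bm{F}^t\bm{H}^t - \bm{F}^\star\|_{\mathrm{F}}$ into something proportional to $\|\bm{F}^t\bm{H}^t - \bm{F}^{t,(l)}\bm{R}^{t,(l)}\|_{\mathrm{F}}$. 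These two quantities live at completely different scales under the induction hypotheses: the first is $\asymp (\sigma/\sigma_{\min})\sqrt{n/p}\,\|\bm{X}^\star\|_{\mathrm{F}}$, while the second is $\asymp (\sigma/\sigma_{\min})\sqrt{(n\log n)/p}\,\|\bm{F}^\star\|_{2,\infty}$, and the ratio is of order $\sqrt{n/(\mu r \log n)}$, which is polynomial in $n$. So your final display produces an additive term $\kappa\|\bm{F}^{t,(l)}\bm{R}^{t,(l)}-\bm{F}^\star\|_{\mathrm{F}}$ that is orders of magnitude larger than the advertised right-hand side $5\kappa\|\bm{F}^t\bm{H}^t-\bm{F}^{t,(l)}\bm{R}^{t,(l)}\|_{\mathrm{F}}$, and the claim does not follow.

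The paper instead invokes the rotation-perturbation lemma (Lemma~\ref{lemma:rotation-perturbation}, i.e., Ma et al.'s Lemma 37), which is of a structurally different kind: the common \emph{target} $\bm{F}_0=\bm{F}^\star$ is held fixed, and the two \emph{sources} $\bm{F}_1=\bm{F}^t\bm{H}^t$, $\bm{F}_2=\bm{F}^{t,(l)}\bm{R}^{t,(l)}$ are the ones that differ. Because $\bm{H}^t$ already aligns $\bm{F}^t$ with $\bm{F}^\star$, the optimal rotation for $\bm{F}_1$ is the identity; because $\bm{H}^{t,(l)}$ aligns $\bm{F}^{t,(l)}$ with $\bm{F}^\star$, the optimal rotation for $\bm{F}_2$ sends $\bm{F}_2$ to $\bm{F}^{t,(l)}\bm{H}^{t,(l)}$. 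Lemma~\ref{lemma:rotation-perturbation} then gives $\|\bm{F}^t\bm{H}^t - \bm{F}^{t,(l)}\bm{H}^{t,(l)}\|_{\mathrm{F}} \leq 5(\sigma_1^2(\bm{F}^\star)/\sigma_r^2(\bm{F}^\star))\|\bm{F}_1-\bm{F}_2\|_{\mathrm{F}} = 5\kappa\|\bm{F}^t\bm{H}^t - \bm{F}^{t,(l)}\bm{R}^{t,(l)}\|_{\mathrm{F}}$, directly in terms of the small leave-one-out discrepancy. The crucial point is that this lemma's bound is proportional to the distance between the two sources, not to their distances to the target; this is a nontrivial result and cannot be recovered by a naive sign-matrix perturbation applied to a perturbed target. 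To repair Part 3 you should replace your perturbation argument with an application of Lemma~\ref{lemma:rotation-perturbation}, after verifying its two hypotheses using the induction hypotheses \eqref{eq:induction-op} and \eqref{eq:induction-loo-dist} together with the incoherence of $\bm{F}^\star$.
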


\subsection{Proof of Lemma~\ref{lemma:small-gradient-smooth-function} \label{subsec:Proof-of-Lemma-small-gradient}}

Summing~(\ref{eq:induction-function-value}) from $t=1$ to $t=t_{0}$
leads to a telescopic sum 
\begin{align*}
f\left(\bm{X}^{t_{0}},\bm{Y}^{t_{0}}\right) & \leq f\left(\bm{X}^{0},\bm{Y}^{0}\right)-\frac{\eta}{2}\sum_{t=0}^{t_{0}-1}\left\Vert \nabla f\left(\bm{X}^{t},\bm{Y}^{t}\right)\right\Vert _{\mathrm{F}}^{2}.
\end{align*}
This further implies that 
\begin{equation}
\min_{0\leq t<t_{0}}\left\Vert \nabla f\left(\bm{X}^{t},\bm{Y}^{t}\right)\right\Vert _{\mathrm{F}}\leq \left\{ \frac{1}{t_0} \sum_{t=0}^{t_{0}-1}\left\Vert \nabla f\left(\bm{X}^{t},\bm{Y}^{t}\right)\right\Vert _{\mathrm{F}}^{2}\right\}^{1/2} \leq \left\{ \frac{2}{\eta t_{0}}\left[f\left(\bm{X}^{\star},\bm{Y}^{\star}\right)-f\left(\bm{X}^{t_{0}},\bm{Y}^{t_{0}}\right)\right]\right\} ^{1/2},\label{eq:smooth-gradient}
\end{equation}
where we have used the assumption that $(\bm{X}^{0},\bm{Y}^{0})=(\bm{X}^{\star},\bm{Y}^{\star})$.

It remains to control $f(\bm{X}^{\star},\bm{Y}^{\star})-f(\bm{X}^{t_{0}},\bm{Y}^{t_{0}})$.
Towards this end, we can use the fact that $f(\bm{X},\bm{Y})=f(\bm{X}\bm{R},\bm{Y}\bm{R})$
for any $\bm{R}\in\mathcal{O}^{r\times r}$ to obtain 
\[
f\left(\bm{F}^{t_{0}}\right)=f\left(\bm{F}^{t_{0}}\bm{H}^{t_{0}}\right)=f\left(\bm{F}^{\star}\right)+\left\langle \nabla f\left(\bm{F}^{\star}\right),\bm{F}^{t_{0}}\bm{H}^{t_{0}}-\bm{F}^{\star}\right\rangle +\frac{1}{2}\mathsf{vec}\left(\bm{F}^{t_{0}}\bm{H}^{t_{0}}-\bm{F}^{\star}\right)^{\top}\nabla^{2}f\big(\tilde{\bm{F}}\big)\mathsf{vec}\left(\bm{F}^{t_{0}}\bm{H}^{t_{0}}-\bm{F}^{\star}\right),
\]
where $\tilde{\bm{F}}$ lies in the line segment connecting $\bm{F}^{t_{0}}\bm{H}^{t_{0}}$
and $\bm{F}^{\star}$. Apply the triangle inequality to see 
\begin{align*}
f\left(\bm{F}^{\star}\right)-f\left(\bm{F}^{t_{0}}\right) & \leq\left\Vert \nabla f\left(\bm{F}^{\star}\right)\right\Vert _{\mathrm{F}}\left\Vert \bm{F}^{t_{0}}\bm{H}^{t_{0}}-\bm{F}^{\star}\right\Vert _{\mathrm{F}}-\frac{1}{2}\mathsf{vec}\left(\bm{F}^{t_{0}}\bm{H}^{t_{0}}-\bm{F}^{\star}\right)^{\top}\nabla^{2}f\big(\tilde{\bm{F}}\big)\mathsf{vec}\left(\bm{F}^{t_{0}}\bm{H}^{t_{0}}-\bm{F}^{\star}\right)\\
 & \leq\left\Vert \nabla f\left(\bm{F}^{\star}\right)\right\Vert _{\mathrm{F}}\left\Vert \bm{F}^{t_{0}}\bm{H}^{t_{0}}-\bm{F}^{\star}\right\Vert _{\mathrm{F}}+5\sigma_{\max}\left\Vert \bm{F}^{t_{0}}\bm{H}^{t_{0}}-\bm{F}^{\star}\right\Vert _{\mathrm{F}}^{2}.
\end{align*}
Here the second line follows from the fact that $\|\nabla^{2}f(\tilde{\bm{F}})\|\leq10\sigma_{\max}$. To see this, use~\eqref{eq:induction-2-infty} to obtain that
\begin{align}
\big\| \tilde{\bm{F}}-\bm{F}^{\star}\big\| _{2,\infty} & \leq\left\Vert \bm{F}^{t_0}\bm{H}^{t_0}-\bm{F}^{\star}\right\Vert _{2,\infty}\leq C_{\infty}\kappa\left(\frac{\sigma}{\sigma_{\min}}\sqrt{\frac{n\log n}{p}}+\frac{\lambda}{p\,\sigma_{\min}}\right)\left\Vert \bm{F}^{\star}\right\Vert _{2,\infty} \nonumber\\
 & \leq C_{\infty}\kappa\left(\frac{\sigma}{\sigma_{\min}}\sqrt{\frac{n\log n}{p}}+\frac{\lambda}{p\,\sigma_{\min}}\right)\sqrt{\frac{\mu r}{n}}\sqrt{\sigma_{\max}} \nonumber\\
 & \leq\frac{1}{2000\kappa\sqrt{n}}\sqrt{\sigma_{\max}}\label{eq:2-infty-line-segment},
\end{align}
where the second line arises from the incoherence assumption~(\ref{eq:F-incoherence})
and the last inequality holds as long as $\lambda \asymp \sigma\sqrt{np}$ and $\frac{\sigma}{\sigma_{\min}}\sqrt{\frac{n}{p}}\ll\frac{1}{\sqrt{\kappa^{4}\mu r\log n}}$. Apply Lemma~\ref{lemma:hessian} to conclude that $\|\nabla^{2}f(\tilde{\bm{F}})\|\leq10\sigma_{\max}$. Recognize that 
\begin{align}
\left\Vert \nabla f\left(\bm{F}^{\star}\right)\right\Vert _{\mathrm{F}} & \leq\left\Vert \nabla_{\bm{X}}f\left(\bm{F}^{\star}\right)\right\Vert _{\mathrm{F}}+\left\Vert \nabla_{\bm{Y}}f\left(\bm{F}^{\star}\right)\right\Vert _{\mathrm{F}}\nonumber \\
 & \leq\frac{1}{p}\left\Vert \mathcal{P}_{\Omega}\left(\bm{E}\right)\bm{Y}^{\star}\right\Vert _{\mathrm{F}}+\frac{\lambda}{p}\left\Vert \bm{X}^{\star}\right\Vert _{\mathrm{F}}+\frac{1}{p}\big\|\mathcal{P}_{\Omega}\left(\bm{E}\right)^{\top}\bm{X}^{\star}\big\|_{\mathrm{F}}+\frac{\lambda}{p}\left\Vert \bm{Y}^{\star}\right\Vert _{\mathrm{F}}\nonumber \\
 & \leq\left(\frac{1}{p}\left\Vert \mathcal{P}_{\Omega}\left(\bm{E}\right)\right\Vert +\frac{\lambda}{p}\right)\left(\left\Vert \bm{X}^{\star}\right\Vert _{\mathrm{F}}+\left\Vert \bm{Y}^{\star}\right\Vert _{\mathrm{F}}\right),\label{eq:grad-f-Fstar-1}
\end{align}
where we have used the fact that $\nabla_{\bm{X}}f(\bm{F}^{\star})=\frac{1}{p}\mathcal{P}_{\Omega}(\bm{X}^{\star}\bm{Y}^{\star\top}-\bm{M}^{\star}-\bm{E})\bm{Y}^{\star}+\frac{\lambda}{p}\bm{X}^{\star}=-\frac{1}{p}\mathcal{P}_{\Omega}(\bm{E})\bm{Y}^{\star}+\frac{\lambda}{p}\bm{X}^{\star}$ (similar expression holds true for $\nabla_{\bm{Y}}f(\bm{F}^{\star})$). 
This together with Lemma~\ref{lemma:noise-bound} and the assumption
that $\lambda\asymp\sigma\sqrt{np}$ yields 
\begin{equation}
\left\Vert \nabla f\left(\bm{F}^{\star}\right)\right\Vert _{\mathrm{F}}\lesssim\left(\sigma\sqrt{\frac{n}{p}}+\frac{\lambda}{p}\right)\sqrt{r\sigma_{\max}}\asymp\frac{\lambda}{p}\sqrt{r\sigma_{\max}}.\label{eq:grad-f-Fstar}
\end{equation}
The above bounds together with the induction hypothesis~(\ref{eq:induction-fro})
for $t=t_{0}$ give 
\begin{align*}
f\left(\bm{F}^{\star}\right)-f\left(\bm{F}^{t_{0}}\right) & \lesssim\frac{\lambda}{p}\sqrt{r\sigma_{\max}}\left(\frac{\sigma}{\sigma_{\min}}\sqrt{\frac{n}{p}}+\frac{\lambda}{p\,\sigma_{\min}}\right)\left\Vert \bm{X}^{\star}\right\Vert _{\mathrm{F}}+\sigma_{\max}\left(\frac{\sigma}{\sigma_{\min}}\sqrt{\frac{n}{p}}+\frac{\lambda}{p\,\sigma_{\min}}\right)^{2}\left\Vert \bm{X}^{\star}\right\Vert _{\mathrm{F}}^{2}\\
 & \lesssim r\kappa^{2}\left(\frac{\lambda}{p}\right)^{2},
\end{align*}
where the last relation arises from $\sigma\sqrt{np}\asymp\lambda$. Substitution
into~(\ref{eq:smooth-gradient}) results in 
\begin{align*}
\min_{0\leq t<t_{0}}\left\Vert \nabla f\left(\bm{X}^{t},\bm{Y}^{t}\right)\right\Vert _{\mathrm{F}} & \lesssim\sqrt{\frac{1}{\eta t_{0}}r\kappa^{2}\left(\frac{\lambda}{p}\right)^{2}}\leq\frac{1}{n^{5}}\frac{\lambda}{p}\sqrt{\sigma_{\min}},
\end{align*}
provided that $\eta\asymp1/(n\kappa^{3}\sigma_{\max})$, $t_{0}=n^{18}$
and that $n\geq\kappa$, which is a consequence of our sample complexity $n\geq np \gg \kappa \mu r \log^2 n$.

\subsection{Proof of Lemma~\ref{lemma:fro-contraction}}

From the definitions of $\bH^{t+1}$~(cf.$\,$(\ref{eq:defn-H-appendix})),
$\nabla f_{\mathsf{aug}}$~(cf.$\,$(\ref{subeq:defn-nabla-faug}))
and $\nabla f_{\mathsf{diff}}$~(cf.$\,$(\ref{subeq:defn-nabla-fdiff})),
we have 
\[
\begin{aligned} & \Fnorm{\bF^{t+1}\bH^{t+1}-\bF^{\star}}\leq\Fnorm{\bF^{t+1}\bH^{t}-\bF^{\star}}=\Fnorm{\left[\bF^{t}-\eta\nabla f\left(\bF^{t}\right)\right]\bH^{t}-\bF^{\star}}\\
 & \quad\overset{(\text{i})}{=}\Fnorm{\bF^{t}\bH^{t}-\eta\nabla f\left(\bF^{t}\bH^{t}\right)-\bF^{\star}}\\
 & \quad\overset{(\text{ii})}{\leq}\underbrace{\Fnorm{\bF^{t}\bH^{t}-\eta\nabla f_{\mathsf{aug}}\left(\bF^{t}\bH^{t}\right)-\left[\bF^{\star}-\eta\nabla f_{\mathsf{aug}}\left(\bF^{\star}\right)\right]}}_{:=\alpha_{1}}+\underbrace{\eta\Fnorm{\nabla f_{\mathsf{diff}}\left(\bF^{t}\bH^{t}\right)}}_{:=\alpha_{2}}+\underbrace{\eta\left\Vert \nabla f_{\mathsf{aug}}\left(\bm{F}^{\star}\right)\right\Vert _{\mathrm{F}}}_{:=\alpha_{3}}.
\end{aligned}
\]
Here (i) uses the fact that $\nabla f(\bm{F}\bm{R})=\nabla f(\bm{F})\bm{R}$
for all $\bm{R}\in\mathcal{O}^{r\times r}$; the last relation (ii)
uses the decomposition~(\ref{subeq:gradient-decomposition}) and the
triangle inequality. In the following, we bound $\alpha_{1},\alpha_{2}$
and $\alpha_{3}$ in the reverse order. 
\begin{enumerate}
\item First, regarding $\alpha_{3}$, since $\bm{X}^{\star\top}\bm{X}^{\star}=\bm{Y}^{\star\top}\bm{Y}^{\star}$,
one has $\eta\|\nabla f(\bm{F}^{\star})\|_{\mathrm{F}}=\eta\|\nabla f_{\mathsf{aug}}(\bm{F}^{\star})\|_{\mathrm{F}}$.
Repeating our arguments for~(\ref{eq:grad-f-Fstar-1}) and~(\ref{eq:grad-f-Fstar})
gives 
\begin{align*}
\alpha_{3} & =\eta\left\Vert \nabla f\left(\bm{F}^{\star}\right)\right\Vert _{\mathrm{F}}\leq4\eta\frac{\lambda}{p}\left\Vert \bm{X}^{\star}\right\Vert _{\mathrm{F}}
\end{align*}
as long as $\lambda\asymp\sigma\sqrt{np}$. Here the last inequality also relies on the fact that $\|\bm{X}^{\star}\|_{\mathrm{F}}=\|\bm{Y}^{\star}\|_{\mathrm{F}}$. 
\item We now move on to $\alpha_{2}$, for which one has 
\begin{align*}
\alpha_{2} & \leq\frac{\eta}{2}\left(\left\Vert \bm{X}^{t}\left(\bX^{t\top}\bX^{t}-\bY^{t\top}\bY^{t}\right)\bm{H}^{t}\right\Vert _{\mathrm{F}}+\left\Vert \bm{Y}^{t}\left(\bY^{t\top}\bY^{t}-\bX^{t\top}\bX^{t}\right)\bm{H}^{t}\right\Vert _{\mathrm{F}}\right)\\
 & \leq\frac{\eta}{2}\left(\left\Vert \bm{X}^{t}\right\Vert +\left\Vert \bm{Y}^{t}\right\Vert \right)\left\Vert \bX^{t\top}\bX^{t}-\bY^{t\top}\bY^{t}\right\Vert _{\mathrm{F}}.
\end{align*}
Utilize the fact that $\max\{\|\bm{X}^{t}\|,\|\bm{Y}^{t}\|\}\leq\|\bm{F}^{\star}\|\leq2\|\bm{X}^{\star}\|$
(see Lemma~\ref{lemma:immediate-consequence}) and the induction hypothesis~(\ref{eq:induction-balancing}) to obtain 
\begin{align*}
\alpha_{2} & \leq2\eta\sqrt{\sigma_{\max}}\cdot C_{\mathrm{B}}\kappa\eta\left(\frac{\sigma}{\sigma_{\min}}\sqrt{\frac{n}{p}}+\frac{\lambda}{p\,\sigma_{\min}}\right)\sqrt{r}\sigma_{\max}^{2}\\
 & \leq\big(2C_{\mathrm{B}}\kappa^{5/2}\eta\sigma_{\max}\big)\sigma_{\min}\eta\left(\frac{\sigma}{\sigma_{\min}}\sqrt{\frac{n}{p}}+\frac{\lambda}{p\,\sigma_{\min}}\right)\left\Vert \bm{X}^{\star}\right\Vert _{\mathrm{F}}\\
 & \leq\sigma_{\min}\eta\left(\frac{\sigma}{\sigma_{\min}}\sqrt{\frac{n}{p}}+\frac{\lambda}{p\,\sigma_{\min}}\right)\left\Vert \bm{X}^{\star}\right\Vert _{\mathrm{F}},
\end{align*}
where the second inequality uses $\|\bm{X}^{\star}\|_{\mathrm{F}}\geq\sqrt{r\sigma_{\min}}$
and the last one holds as long as $2C_{\mathrm{B}}\kappa^{5/2}\sigma_{\max}\eta\leq1$. 
\item In the end, for $\alpha_{1}$, the fundamental theorem of calculus
\cite[Chapter XIII, Theorem 4.2]{lang1993real} reveals that 
\begin{align}
& \mathsf{vec}\left[\bF^{t}\bH^{t}-\eta\nabla f_{\mathsf{aug}}\left(\bF^{t}\bH^{t}\right)-\left[\bF^{\star}-\eta\nabla f_{\mathsf{aug}}\left(\bF^{\star}\right)\right]\right]\nonumber\\
 & \quad=\mathsf{vec}\left[\bF^{t}\bH^{t}-\bF^{\star}\right]-\eta\cdot\mathsf{vec}\left[\nabla f_{\mathsf{aug}}\left(\bF^{t}\bH^{t}\right)-\nabla f_{\mathsf{aug}}\left(\bF^{\star}\right)\right]\nonumber\\
 & \quad=\bigg(\bI_{2nr}-\eta\underbrace{\int_{0}^{1}\nabla^{2}f_{\mathsf{aug}}\left(\bF(\tau)\right)\mathrm{d}\tau}_{:=\bA}\bigg)\mathsf{vec}\left(\bF^{t}\bH^{t}-\bF^{\star}\right)\label{eq:gradient-MVT-MC},
\end{align}
where we denote $\bm{F}(\tau)\triangleq\bm{F}^{\star}+\tau(\bm{F}^{t}{\bm{H}}^{t}-\bm{F}^{\star})$
for all $0\leq\tau\leq1$. Taking the squared Euclidean norm of both
sides of the equality~(\ref{eq:gradient-MVT-MC}) leads to 
\begin{align}
\alpha_{1}^{2} & =\mathsf{vec}\left(\bm{F}^{t}{\bm{H}}^{t}-\bm{F}^{\star}\right)^{\top}\left(\bm{I}_{2nr}-\eta\bm{A}\right)^{2}\mathsf{vec}\left(\bm{F}^{t}{\bm{H}}^{t}-\bm{F}^{\star}\right)\nonumber \\
 & =\mathsf{vec}\left(\bm{F}^{t}{\bm{H}}^{t}-\bm{F}^{\star}\right)^{\top}\left(\bm{I}_{2nr}-2\eta\bm{A}+\eta^{2}\bm{A}^{2}\right)\mathsf{vec}\left(\bm{F}^{t}{\bm{H}}^{t}-\bm{F}^{\star}\right)\nonumber \\
 & \leq\Fnorm{\bm{F}^{t}{\bm{H}}^{t}-\bm{F}^{\star}}^{2}+\eta^{2}\norm{\bm{A}}^{2}\Fnorm{\bm{F}^{t}{\bm{H}}^{t}-\bm{F}^{\star}}^{2}-2\eta\mathsf{vec}\left(\bm{F}^{t}{\bm{H}}^{t}-\bm{F}^{\star}\right)^{\top}\bm{A}\text{ }\mathsf{vec}\left(\bm{F}^{t}{\bm{H}}^{t}-\bm{F}^{\star}\right),\label{eq:v^A^2v-MC}
\end{align}
where~(\ref{eq:v^A^2v-MC}) results from the fact that 
\[
\mathsf{vec}\left(\bm{F}^{t}{\bm{H}}^{t}-\bm{F}^{\star}\right)^{\top}\bm{A}^{2}\,\mathsf{vec}\big(\bm{F}^{t}{\bm{H}}^{t}-\bm{F}^{\star}\big)\leq\left\Vert \bm{A}\right\Vert ^{2}\left\Vert \bm{F}^{t}{\bm{H}}^{t}-\bm{F}^{\star}\right\Vert _{\mathrm{F}}^{2}.
\]
Applying the same argument as in~(\ref{eq:2-infty-line-segment}), one gets for all $0\leq \tau \leq 1$, $\|\bm{F}(\tau) -\bm{F}^{\star}\|_{2,\infty}\leq \frac{1}{2000\kappa \sqrt{n}}\|\bm{X}^\star\|$. Invoke Lemma~\ref{lemma:hessian} with $\bX=\bm{X}^{\star}+\tau(\bm{X}^{t}{\bm{H}}^{t}-\bm{X}^{\star})$,
$\bY=\bm{Y}^{\star}+\tau(\bm{Y}^{t}{\bm{H}}^{t}-\bm{Y}^{\star})$,
$(\bm{X}_{1},\bm{Y}_{1})=(\bm{X}^{t},\bm{Y}^{t})$ and $(\bm{X}_{2},\bm{Y}_{2})=(\bm{X}^{\star},\bm{Y}^{\star})$
to obtain $\Vert\bA\Vert\leq10\sigma_{\max}$ and 
\[
\mathrm{vec}\left(\bm{F}^{t}{\bm{H}}^{t}-\bm{F}^{\star}\right)^{\top}\bm{A}\text{ }\mathrm{vec}\left(\bm{F}^{t}{\bm{H}}^{t}-\bm{F}^{\star}\right)\geq\frac{1}{10}\sigma_{\min}\Fnorm{\bF^{t}\bH^{t}-\bF^{\star}}^{2}.
\]
Putting these two bounds back to~(\ref{eq:v^A^2v-MC}) yields 
\[
\begin{aligned}\alpha_{1}^{2}\leq\left(1+100\eta^{2}\sigma_{\max}^{2}-\frac{1}{5}\eta\sigma_{\min}\right)\Fnorm{\bF^{t}\bH^{t}-\bF^{\star}}^{2}\leq\left(1-\frac{\sigma_{\min}}{10}\eta\right)\Fnorm{\bF^{t}\bH^{t}-\bF^{\star}}^{2}.\end{aligned}
\]
Here the last relation holds as long as $0\leq\eta\leq1/(1000\kappa\sigma_{\max})$.
As a result, we have 
\begin{align*}
\alpha_{1} & \leq\left(1-\frac{\sigma_{\min}}{20}\eta\right)\Fnorm{\bF^{t}\bH^{t}-\bF^{\star}}.
\end{align*}
\end{enumerate}
Combine the above bounds on $\alpha_{1},\alpha_{2}$ and $\alpha_{3}$
to conclude that 
\begin{align*}
& \Fnorm{\bF^{t+1}\bH^{t+1}-\bF^{\star}}  \leq\left(1-\frac{\sigma_{\min}}{20}\eta\right)\Fnorm{\bF^{t}\bH^{t}-\bF^{\star}}+4\eta\frac{\lambda}{p}\left\Vert \bm{X}^{\star}\right\Vert _{\mathrm{F}}+\eta\sigma_{\min}\left(\frac{\sigma}{\sigma_{\min}}\sqrt{\frac{n}{p}}+\frac{\lambda}{p\,\sigma_{\min}}\right)\left\Vert \bm{X}^{\star}\right\Vert _{\mathrm{F}}\\
 & \leq\left(1-\frac{\sigma_{\min}}{20}\eta\right)C_{\mathrm{F}}\left(\frac{\sigma}{\sigma_{\min}}\sqrt{\frac{n}{p}}+\frac{\lambda}{p\,\sigma_{\min}}\right)\left\Vert \bm{X}^{\star}\right\Vert _{\mathrm{F}}+4\eta\sigma_{\min}\frac{\lambda}{p\,\sigma_{\min}}\left\Vert \bm{X}^{\star}\right\Vert _{\mathrm{F}} +\eta\sigma_{\min}\left(\frac{\sigma}{\sigma_{\min}}\sqrt{\frac{n}{p}}+\frac{\lambda}{p\,\sigma_{\min}}\right)\left\Vert \bm{X}^{\star}\right\Vert _{\mathrm{F}}\\
 & \leq C_{\mathrm{F}}\left(\frac{\sigma}{\sigma_{\min}}\sqrt{\frac{n}{p}}+\frac{\lambda}{p\,\sigma_{\min}}\right)\left\Vert \bm{X}^{\star}\right\Vert _{\mathrm{F}},
\end{align*}
provided that $C_{\mathrm{F}}>0$ is large enough.

\subsection{Proof of Lemma~\ref{lemma:operator-contraction}}

To facilitate analysis, we define an auxiliary point $\tilde{\bF}^{t+1}\triangleq\left[\begin{array}{c}
\tilde{\bm{X}}^{t+1}\\
\tilde{\bm{Y}}^{t+1}
\end{array}\right]$ as
\begin{subequations} 
\begin{align}
\tilde{\bX}^{t+1} & =\bX^{t}\bH^{t}-\eta\left[\frac{1}{p}\cP_{\Omega}\left(\bX^{t}\bY^{t\top}-\bM^{\star}-\bE\right)\bY^{\star}+\frac{\lambda}{p}\bX^{\star}+\frac{1}{2}\bX^{\star}\bH^{t\top}\left(\bX^{t\top}\bX^{t}-\bY^{t\top}\bY^{t}\right){\bH}^{t}\right];\\
\tilde{\bY}^{t+1} & =\bY^{t}\bH^{t}-\eta\left[\frac{1}{p}\cP_{\Omega}\left(\bX^{t}\bY^{t\top}-\bM^{\star}-\bE\right)^{\top}\bX^{\star}+\frac{\lambda}{p}\bY^{\star}+\frac{1}{2}\bY^{\star}\bH^{t\top}\left(\bY^{t\top}\bY^{t}-\bX^{t\top}\bX^{t}\right)\bH^{t}\right].
\end{align}
\end{subequations} Then the triangle inequality tells us that 
\begin{equation}
\norm{\bm{F}^{t+1}\bH^{t+1}-\bF^{\star}}\leq\underbrace{\big\|\bm{F}^{t+1}\bH^{t+1}-\tilde{\bF}^{t+1}\big\|}_{:=\alpha_{1}}+\underbrace{\big\|\tilde{\bF}^{t+1}-\bF^{\star}\big\|}_{:=\alpha_{2}}.
\end{equation}
In what follows, we shall control $\alpha_{1}$ and $\alpha_{2}$
separately. 
\begin{enumerate}
\item We start with $\alpha_{2}$. By the triangle inequality again we have
\begin{align*}
\alpha_{2} & \leq\underbrace{\norm{\left[\begin{matrix}\bX^{t}\bH^{t}-\eta\left[\left(\bX^{t}\bY^{t\top}-\bM^{\star}\right)\bY^{\star}+\frac{1}{2}\bX^{\star}\bH^{t\top}\left(\bX^{t\top}\bX^{t}-\bY^{t\top}\bY^{t}\right){\bH}^{t}\right]-\bX^{\star}\\
\bY^{t}\bH^{t}-\eta\left[\left(\bX^{t}\bY^{t\top}-\bM^{\star}\right)^{\top}\bX^{\star}+\frac{1}{2}\bY^{\star}\bH^{t\top}\left(\bY^{t\top}\bY^{t}-\bX^{t\top}\bX^{t}\right)\bH^{t}\right]-\bY^{\star}
\end{matrix}\right]}}_{:=\beta_{1}}\\
 & \quad+\underbrace{\frac{\eta}{p}\norm{\left[\begin{matrix}\cP_{\Omega}(\bE)\bY^{\star}\\
\cP_{\Omega}(\bE)^{\top}\bX^{\star}
\end{matrix}\right]}+\eta\frac{\lambda}{p}\norm{\left[\begin{matrix}\bX^{\star}\\
\bY^{\star}
\end{matrix}\right]}}_{:=\beta_{2}}+\underbrace{\eta\norm{\left[\begin{matrix}\frac{1}{p}\cP_{\Omega}\left(\bX^{t}\bY^{t\top}-\bM^{\star}\right)\bY^{\star}-\left(\bX^{t}\bY^{t\top}-\bM^{\star}\right)\bY^{\star}\\
\frac{1}{p}\left[\cP_{\Omega}\left(\bX^{t}\bY^{t\top}-\bM^{\star}\right)\right]^{\top}\bX^{\star}-\left(\bX^{t}\bY^{t\top}-\bM^{\star}\right)^{\top}\bX^{\star}
\end{matrix}\right]}}_{:=\beta_{3}}
\end{align*}
Denote $\bm{\Delta}^{t}\triangleq\bm{F}^{t}\bm{H}^{t}-\bm{F}^{\star}=\left[\begin{array}{c}
\bm{\Delta}_{\bm{X}}^{t}\\
\bm{\Delta}_{\bm{Y}}^{t}
\end{array}\right]$. The term $\beta_{1}$ is the same as the term $\alpha_{2}$ in~\cite[Section 4.2]{chen2019nonconvex}.
Therefore we can adopt the bound therein to obtain 
\[
\beta_{1}\leq(1-\eta\sigma_{\min})\norm{\bDelta^{t}}+4\eta\norm{\bDelta^{t}}^{2}\norm{\bX^{\star}}.
\]
Moving to $\beta_{2}$, one has 
\[
\beta_{2}=\eta\norm{\left[\begin{matrix}\frac{1}{p}\cP_{\Omega}(\bE) & \bm{0}\\
\bm{0} & \frac{1}{p}\cP_{\Omega}(\bE)^{\top}
\end{matrix}\right]\left[\begin{matrix}\bY^{\star}\\
\bX^{\star}
\end{matrix}\right]}+\eta\frac{\lambda}{p}\norm{\bF^{\star}}\leq\frac{\eta}{p}\norm{\cP_{\Omega}(\bE)}\norm{\bF^{\star}}+\eta\frac{\lambda}{p}\norm{\bF^{\star}}\leq C\eta\left(\sigma\sqrt{\frac{n}{p}}+\frac{\lambda}{p}\right)\norm{\bX^{\star}}
\]
for some constant $C>0$. Here the last inequality arises from Lemma
\ref{lemma:noise-bound} and the fact that $\|\bm{F}^{\star}\|=\sqrt{2}\|\bm{X}^{\star}\|$~(cf.~(\ref{eq:F-singular-value})). We are now left with the term
$\beta_{3}$, which is exactly the term $\alpha_{1}$ in~\cite[Section 4.2]{chen2019nonconvex}.
Reusing their results, we have 
\[
\begin{aligned}\beta_{3} & \leq\frac{2\eta}{p}\left\Vert \bm{X}^{\star}\right\Vert \left\Vert \mathcal{P}_{\Omega}\left(\bm{1}\bm{1}^{\top}\right)-p\bm{1}\bm{1}^{\top}\right\Vert (\twotoinftynorm{\bDelta_{\bX}^{t}}\twotoinftynorm{\bDelta_{\bY}^{t}}+\twotoinftynorm{\bDelta_{\bX}^{t}}\twotoinftynorm{\bY^{\star}}+\twotoinftynorm{\bX^{\star}}\twotoinftynorm{\bDelta_{\bY}^{t}})\\
 & \lesssim\eta\sqrt{\frac{n}{p}}\twotoinftynorm{\bDelta^{t}}\twotoinftynorm{\bF^{\star}}\norm{\bX^{\star}}.
\end{aligned}
\]
The last line follows from the facts that $\|\mathcal{P}_{\Omega}(\bm{1}\bm{1}^{\top})-p\bm{1}\bm{1}^{\top}\|\lesssim\sqrt{np}$
(see~\cite[Lemma 3.2]{KesMonSew2010}) and that $\max\{\|\bm{\Delta}_{\bm{X}}^{t}\|_{2,\infty},\|\bm{\Delta}_{\bm{Y}}^{t}\|_{2,\infty}\}\leq\|\bm{F}^{\star}\|_{2,\infty}$,
provided that $\frac{\sigma}{\sigma_{\min}}\sqrt{\frac{n}{p}}\ll\frac{1}{\sqrt{\kappa^{2}\log n}}$
(see Lemma~\ref{lemma:immediate-consequence}). Combining the above
three bounds gives 
\begin{align}
\alpha_{2} & \leq\left(1-\eta\sigma_{\min}\right)\norm{\bDelta^{t}}+4\eta\norm{\bDelta^{t}}^{2}\norm{\bX^{\star}}+\tilde{C}\eta\left(\sigma\sqrt{\frac{n}{p}}+\frac{\lambda}{p}\right)\norm{\bX^{\star}}+\tilde{C}\eta\sqrt{\frac{n}{p}}\twotoinftynorm{\bDelta^{t}}\twotoinftynorm{\bF^{\star}}\norm{\bX^{\star}}\nonumber \\
 & \leq\left(1-\frac{\eta}{2}\sigma_{\min}\right)\norm{\bDelta^{t}}+\tilde{C}\eta\left(\sigma\sqrt{\frac{n}{p}}+\frac{\lambda}{p}\right)\norm{\bX^{\star}}+\tilde{C}\eta\sqrt{\frac{n}{p}}\twotoinftynorm{\bDelta^{t}}\twotoinftynorm{\bF^{\star}}\norm{\bX^{\star}}\label{eq:operator-alpha-2-upper-bound}
\end{align}
for some sufficiently large constant $\tilde{C}>0$. Here the second
inequality arises from the condition 
\[
4\left\Vert \bm{\Delta}^{t}\right\Vert \left\Vert \bm{X}^{\star}\right\Vert \leq\sigma_{\min}/2,
\]
which would hold if $\frac{\sigma}{\sigma_{\min}}\sqrt{\frac{n}{p}}\ll\frac{1}{\kappa}$.
An immediate consequence of~(\ref{eq:operator-alpha-2-upper-bound})
is that 
\begin{equation}
\alpha_{2}=\big\|\tilde{\bF}^{t+1}-\bF^{\star}\big\|\leq(\sqrt{2}\kappa)^{-1}\norm{\bX^{\star}}.\label{eq:lem_op_1}
\end{equation}
To see this, apply the induction hypotheses~(\ref{eq:induction-op})
and~(\ref{eq:induction-2-infty}) to get 
\begin{align}
\alpha_{2} & \leq\left(1-\frac{\eta\sigma_{\min}}{2}\right)C_{\mathrm{op}}\left(\frac{\sigma}{\sigma_{\min}}\sqrt{\frac{n}{p}}+\frac{\lambda}{p\,\sigma_{\min}}\right)\left\Vert \bm{X}^{\star}\right\Vert +\tilde{C}\eta\left(\sigma\sqrt{\frac{n}{p}}+\frac{\lambda}{p}\right)\norm{\bX^{\star}}\nonumber \\
 & \quad\quad+\tilde{C}\eta\sqrt{\frac{n}{p}}C_{\infty}\kappa\left(\frac{\sigma}{\sigma_{\min}}\sqrt{\frac{n\log n}{p}}+\frac{\lambda}{p\,\sigma_{\min}}\right)\left\Vert \bm{F}^{\star}\right\Vert _{2,\infty}^{2}\left\Vert \bm{X}^{\star}\right\Vert \nonumber \\
 & \overset{(\text{i})}{\leq}C_{\mathrm{op}}\left(\frac{\sigma}{\sigma_{\min}}\sqrt{\frac{n}{p}}+\frac{\lambda}{p\,\sigma_{\min}}\right)\left\Vert \bm{X}^{\star}\right\Vert \label{eq:same-before}\\
 & \overset{(\text{ii})}{\leq}(\sqrt{2}\kappa)^{-1}\norm{\bX^{\star}}.\nonumber 
\end{align}
Here (i) holds under the assumptions that $C_{\mathrm{op}}\gg\tilde{C}+\tilde{C}C_{\infty}\kappa^{2}\sqrt{\frac{\mu^{2}r^{2}\log n}{np}}$
and that $\left\Vert \bm{F}^{\star}\right\Vert _{2,\infty}\leq\sqrt{\frac{\mu r}{n}}\|\bm{X}^{\star}\|$ (cf.~(\ref{eq:F-incoherence})); (ii) arises since $\frac{\sigma}{\sigma_{\min}}\sqrt{\frac{n}{p}}\ll1/\kappa$
and $\lambda\asymp\sigma\sqrt{np}$. Under the sample complexity ${n^{2}p\gg\kappa^{4}\mu^{2}r^{2}n\log n}$,
the first condition can be simplified to $C_{\mathrm{op}}\gg2\tilde{C}\gg1$. 
\item Next we bound $\alpha_{1}$, towards which we first observe that 
\[
\alpha_{1}=\big\|\bF^{t+1}\bH^{t+1}-\tilde{\bF}^{t+1}\big\|=\big\|\bF^{t+1}\bH^{t}\bH^{t\top}\bH^{t+1}-\tilde{\bF}^{t+1}\big\|.
\]
It is straightforward to verify that $\bH^{t\top}\bH^{t+1}$ is the
best rotation matrix to align $\bF^{t+1}\bH^{t}$ and $\bm{F}^{\star}$
(in the sense of~(\ref{eq:defn-H-appendix})). Regarding $\tilde{\bm{F}}^{t+1}$,
we obtain the following claim, which demonstrates that it is already
aligned with $\bm{F}^{\star}$, i.e.~$\bm{I}_{r}$ is the best rotation
matrix to align $\tilde{\bm{F}}^{t+1}$ and $\bm{F}^{\star}$. \begin{claim}\label{claim:already-aligned}Suppose
(\ref{eq:same-before}) holds true, one has 
\[
\bm{I}_{r}=\arg\min_{\bm{R}\in\mathcal{O}^{r\times r}}\big\|\tilde{\bm{F}}^{t+1}\bm{R}-\bm{F}^{\star}\big\|_{\mathrm{F}}.
\]
\end{claim}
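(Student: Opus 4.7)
The plan is to invoke the standard characterization of orthogonal Procrustes: for matrices $\bm{A},\bm{B}\in\mathbb{R}^{2n\times r}$, the identity $\bm{I}_r$ is optimal in $\min_{\bm{R}\in\mathcal{O}^{r\times r}}\|\bm{A}\bm{R}-\bm{B}\|_{\mathrm{F}}$ if and only if $\bm{A}^{\top}\bm{B}$ is symmetric and positive semidefinite. Applied to the claim, it suffices to show that $\tilde{\bm{F}}^{t+1\top}\bm{F}^{\star}$ is symmetric and positive semidefinite.

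First I would establish symmetry by expanding
\[
\tilde{\bm{F}}^{t+1\top}\bm{F}^{\star}=\tilde{\bm{X}}^{t+1\top}\bm{X}^{\star}+\tilde{\bm{Y}}^{t+1\top}\bm{Y}^{\star}=(\bm{F}^{t}\bm{H}^{t})^{\top}\bm{F}^{\star}-\eta\,\bm{S},
\]
where
\[
\bm{S}=\tfrac{1}{p}\bm{X}^{\star\top}\mathcal{P}_{\Omega}(\bm{X}^{t}\bm{Y}^{t\top}-\bm{M})\bm{Y}^{\star}+\tfrac{1}{p}\bm{Y}^{\star\top}\mathcal{P}_{\Omega}(\bm{X}^{t}\bm{Y}^{t\top}-\bm{M})^{\top}\bm{X}^{\star}+\tfrac{\lambda}{p}(\bm{X}^{\star\top}\bm{X}^{\star}+\bm{Y}^{\star\top}\bm{Y}^{\star})+\tfrac{1}{2}\bm{B}^{t}(\bm{X}^{\star\top}\bm{X}^{\star}-\bm{Y}^{\star\top}\bm{Y}^{\star}),
\]
with $\bm{B}^{t}=\bm{H}^{t\top}(\bm{X}^{t\top}\bm{X}^{t}-\bm{Y}^{t\top}\bm{Y}^{t})\bm{H}^{t}$. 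The first two pieces of $\bm{S}$ are of the form $\bm{C}+\bm{C}^{\top}$ and are therefore symmetric; the key observation is that since $\bm{X}^{\star}=\bm{U}^{\star}(\bm{\Sigma}^{\star})^{1/2}$ and $\bm{Y}^{\star}=\bm{V}^{\star}(\bm{\Sigma}^{\star})^{1/2}$ one has $\bm{X}^{\star\top}\bm{X}^{\star}=\bm{Y}^{\star\top}\bm{Y}^{\star}=\bm{\Sigma}^{\star}$, so the last term in $\bm{S}$ vanishes and the third term is plainly symmetric. Meanwhile, $(\bm{F}^{t}\bm{H}^{t})^{\top}\bm{F}^{\star}$ is symmetric (and in fact positive semidefinite) by the first-order optimality of $\bm{H}^{t}$ in the Procrustes problem (\ref{eq:defn-H-appendix}). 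Hence $\tilde{\bm{F}}^{t+1\top}\bm{F}^{\star}$ is symmetric.

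Next I would verify positive definiteness using the bound (\ref{eq:same-before}) that the claim already assumes. Write
\[
\tilde{\bm{F}}^{t+1\top}\bm{F}^{\star}=\bm{F}^{\star\top}\bm{F}^{\star}+\bigl(\tilde{\bm{F}}^{t+1}-\bm{F}^{\star}\bigr)^{\top}\bm{F}^{\star}=2\bm{\Sigma}^{\star}+\bigl(\tilde{\bm{F}}^{t+1}-\bm{F}^{\star}\bigr)^{\top}\bm{F}^{\star},
\]
using again $\bm{X}^{\star\top}\bm{X}^{\star}+\bm{Y}^{\star\top}\bm{Y}^{\star}=2\bm{\Sigma}^{\star}$. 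The perturbation is controlled via
\[
\bigl\|\bigl(\tilde{\bm{F}}^{t+1}-\bm{F}^{\star}\bigr)^{\top}\bm{F}^{\star}\bigr\|\leq\big\|\tilde{\bm{F}}^{t+1}-\bm{F}^{\star}\big\|\cdot\|\bm{F}^{\star}\|\leq(\sqrt{2}\kappa)^{-1}\|\bm{X}^{\star}\|\cdot\sqrt{2}\,\|\bm{X}^{\star}\|=\sigma_{\min},
\]
where the second inequality is exactly (\ref{eq:lem_op_1})/(\ref{eq:same-before}) and $\|\bm{F}^{\star}\|=\sqrt{2\sigma_{\max}}=\sqrt{2}\|\bm{X}^{\star}\|$ from (\ref{eq:F-singular-value}). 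Since the smallest eigenvalue of $2\bm{\Sigma}^{\star}$ equals $2\sigma_{\min}$, Weyl's inequality gives $\lambda_{\min}\bigl(\tilde{\bm{F}}^{t+1\top}\bm{F}^{\star}\bigr)\geq 2\sigma_{\min}-\sigma_{\min}=\sigma_{\min}>0$.

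Combining symmetry with strict positive definiteness, the orthogonal Procrustes solution is $\bm{I}_{r}$, establishing the claim. The only mild subtlety is ensuring the two hypotheses of Procrustes (symmetry and PSD) both hold; the calculation above shows that the balancedness identity $\bm{X}^{\star\top}\bm{X}^{\star}=\bm{Y}^{\star\top}\bm{Y}^{\star}$ is what kills the potentially problematic term involving $\bm{B}^{t}$, and this is the one place where the specific structure of $\bm{F}^{\star}$ is used essentially.
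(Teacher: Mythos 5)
Your proof is correct and follows essentially the same route as the paper's: invoke the Procrustes characterization (identity is optimal iff the cross-product $\tilde{\bm{F}}^{t+1\top}\bm{F}^{\star}$ is symmetric PSD), establish symmetry via the balancedness $\bm{X}^{\star\top}\bm{X}^{\star}=\bm{Y}^{\star\top}\bm{Y}^{\star}=\bm{\Sigma}^{\star}$ together with the known symmetry of $(\bm{F}^{t}\bm{H}^{t})^{\top}\bm{F}^{\star}$, and establish positive definiteness from (\ref{eq:same-before}) by Weyl's inequality. The only difference is that the paper omits the symmetry verification as "straightforward," whereas you spell it out explicitly, which is a useful addition but not a different method.
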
Now we intend to apply Lemma~\ref{lemma:rotation-perturbation}
with 
\[
\bm{F}_{0}=\bF^{\star},\quad\bm{F}_{1}=\tilde{\bF}^{t+1},\quad\bm{F}_{2}=\bF^{t+1}\bH^{t},
\]
for which we need to check the two conditions therein. First, in view
of~(\ref{eq:lem_op_1}), one has 
\[
\norm{\bm{F}_{1}-\bm{F}_{0}}\norm{\bm{F}_{0}}=\big\|\tilde{\bF}^{t+1}-\bF^{\star}\big\|\norm{\bF^{\star}}\leq\frac{1}{\sqrt{2}\kappa}\norm{\bX^{\star}}\norm{\bF^{\star}}=\sigma_{\min}=\frac{1}{2}\sigma_{r}^{2}(\bm{F}_{0}).
\]
Second, making use of the gradient update rules~(\ref{subeq:GD-rules})
and the decomposition~(\ref{subeq:gradient-decomposition}), we obtain
\begin{align*}
\big\|\bm{F}^{t+1}\bm{H}^{t}-\tilde{\bm{F}}^{t+1}\big\| & =\norm{\left(\bm{F}^{t}-\eta\nabla f_{\mathsf{aug}}\left(\bm{F}^{t}\right)-\eta\nabla f_{\mathsf{diff}}\left(\bm{F}^{t}\right)\right)\bH^{t}-\tilde{\bm{F}}^{t+1}}\\
 & \leq\underbrace{\norm{\left(\bm{F}^{t}-\eta\nabla f_{\mathsf{aug}}\left(\bm{F}^{t}\right)\right)\bm{H}^{t}-\tilde{\bm{F}}^{t+1}}}_{:=\theta_{1}}+\underbrace{\eta\left\Vert \nabla f_{\mathsf{diff}}\left(\bm{F}^{t}\right)\right\Vert }_{:=\theta_{2}}.
\end{align*}
The term $\theta_{2}$ has been controlled as $\alpha_{2}$ in the
proof of Lemma~\ref{lemma:fro-contraction}, where we obtained 
\begin{align*}
\theta_{2} & \leq2C_{\mathrm{B}}\kappa\eta^{2}\left(\frac{\sigma}{\sigma_{\min}}\sqrt{\frac{n}{p}}+\frac{\lambda}{p\,\sigma_{\min}}\right)\sqrt{r}\sigma_{\max}^{2}\left\Vert \bm{X}^{\star}\right\Vert .
\end{align*}
We now move on to $\theta_{1}$, for which we have 
\begin{align*}
\theta_{1} & \leq\left\Vert \left(\bm{F}^{t}-\eta\left\{ \nabla f_{\mathsf{aug}}\left(\bm{F}^{t}\right)-\eta\left[\begin{array}{c}
\frac{1}{p}\mathcal{P}_{\Omega}\left(\bm{E}\right)\bm{Y}^{t}\\
\frac{1}{p}\mathcal{P}_{\Omega}\left(\bm{E}\right)^{\top}\bm{X}^{t}
\end{array}\right]-\eta\frac{\lambda}{p}\left[\begin{array}{c}
\bm{X}^{t}\\
\bm{Y}^{t}
\end{array}\right]\right\} \right)\bm{H}^{t}\right.\\
 & \underset{:=\xi_{1}}{\quad\underbrace{\qquad\qquad\qquad\qquad\left.-\tilde{\bm{F}}^{t+1}-\eta\left[\begin{array}{c}
\frac{1}{p}\mathcal{P}_{\Omega}\left(\bm{E}\right)\bm{Y}^{\star}\\
\frac{1}{p}\mathcal{P}_{\Omega}\left(\bm{E}\right)^{\top}\bm{X}^{\star}
\end{array}\right]-\eta\frac{\lambda}{p}\left[\begin{array}{c}
\bm{X}^{\star}\\
\bm{Y}^{\star}
\end{array}\right]\right\Vert \qquad}}\\
 & \quad+\underbrace{\eta\left\Vert \left[\begin{array}{c}
\frac{1}{p}\mathcal{P}_{\Omega}\left(\bm{E}\right)\bm{Y}^{t}\\
\frac{1}{p}\mathcal{P}_{\Omega}\left(\bm{E}\right)^{\top}\bm{X}^{t}
\end{array}\right]\bm{H}^{t}+\frac{\lambda}{p}\left[\begin{array}{c}
\bm{X}^{t}\\
\bm{Y}^{t}
\end{array}\right]\bm{H}^{t}-\left[\begin{array}{c}
\frac{1}{p}\mathcal{P}_{\Omega}\left(\bm{E}\right)\bm{Y}^{\star}\\
\frac{1}{p}\mathcal{P}_{\Omega}\left(\bm{E}\right)^{\top}\bm{X}^{\star}
\end{array}\right]-\frac{\lambda}{p}\left[\begin{array}{c}
\bm{X}^{\star}\\
\bm{Y}^{\star}
\end{array}\right]\right\Vert }_{:=\xi_{2}}.
\end{align*}

Combining~\cite[Equation~(4.13)]{chen2019nonconvex} and~\cite[Lemma 3.2]{KesMonSew2010}
yields 
\begin{align*}
\xi_{1} & \lesssim\eta\sqrt{\frac{n}{p}}\left(\twotoinftynorm{\bDelta_{\bX}^{t}}\twotoinftynorm{\bY^{\star}}+\twotoinftynorm{\bDelta_{\bY}^{t}}\twotoinftynorm{\bX^{\star}}+\twotoinftynorm{\bDelta_{\bX}^{t}}\twotoinftynorm{\bDelta_{\bY}^{t}}\right)\norm{\bDelta^{t}}\\
 & \quad+\eta\left(\norm{\bDelta_{\bX}^{t}}\left\Vert \bm{Y}^{\star}\right\Vert +\norm{\bDelta_{\bY}^{t}}\left\Vert \bm{X}^{\star}\right\Vert +\norm{\bDelta_{\bX}^{t}}\norm{\bDelta_{\bY}^{t}}+2\left\Vert \bm{X}^{\star}\right\Vert \norm{\bDelta_{\bX}^{t}}+2\norm{\bY^{\star}}\norm{\bDelta_{\bY}^{t}}+\norm{\bDelta_{\bX}^{t}}^{2}+\norm{\bDelta_{\bY}^{t}}^{2}\right)\norm{\bDelta_{t}}\\
 & \lesssim\eta\sqrt{\frac{n}{p}}\left\Vert \bm{\Delta}^{t}\right\Vert _{2,\infty}\left\Vert \bm{F}^{\star}\right\Vert _{2,\infty}\left\Vert \bm{\Delta}^{t}\right\Vert +\eta\left\Vert \bm{\Delta}^{t}\right\Vert ^{2}\left\Vert \bm{X}^{\star}\right\Vert \\
 & \leq\frac{1}{15\kappa}\frac{\sigma_{\min}}{4}\eta\left\Vert \bm{\Delta}^{t}\right\Vert .
\end{align*}
Here the penultimate inequality arises from the facts that $\max\{\|\bm{\Delta}_{\bm{X}}^{t}\|_{2,\infty},\|\bm{\Delta}_{\bm{X}}^{t}\|_{2,\infty}\}\leq\|\bm{\Delta}^{t}\|_{2,\infty}\leq\|\bm{F}^{\star}\|_{2,\infty}$
and similarly $\max\{\|\bm{\Delta}_{\bm{X}}^{t}\|,\|\bm{\Delta}_{\bm{X}}^{t}\|\}\leq\|\bm{\Delta}^{t}\|\leq\|\bm{X}^{\star}\|$;
see Lemma~\ref{lemma:immediate-consequence}. In addition, the last
line holds because of the induction hypotheses~(\ref{eq:induction-op})
and~(\ref{eq:induction-2-infty}), provided that 
\[
C_{\infty}\kappa\frac{\sigma}{\sigma_{\min}}\sqrt{\frac{n}{p}}\sqrt{\frac{\mu^{2}r^{2}\log n}{np}}\ll\frac{1}{\kappa^{2}}\qquad\text{and}\qquad C_{\mathrm{op}}\frac{\sigma}{\sigma_{\min}}\sqrt{\frac{n}{p}}\ll\frac{1}{\kappa^{2}}.
\]
Again, the first condition would be guaranteed by the sample size
condition $n^{2}p\gg\kappa^{4}\mu^{2}r^{2}n\log n$ and the noise
condition $\frac{\sigma}{\sigma_{\min}}\sqrt{\frac{n}{p}}\ll1/\kappa$.
Next, the term $\xi_{2}$ can be easily controlled as follows 
\begin{align*}
\xi_{2} & \leq\eta\norm{\left[\begin{matrix}\frac{1}{p}\cP_{\Omega}(\bE)\left(\bY^{t}\bH^{t}-\bY^{\star}\right)\\
\frac{1}{p}\cP_{\Omega}(\bE)^{\top}\left(\bX^{t}\bH^{t}-\bX^{\star}\right)
\end{matrix}\right]}+\eta\frac{\lambda}{p}\norm{\bF^{t}\bH^{t}-\bF^{\star}}\\
 & \leq\tilde{C}\eta\left(\sigma\sqrt{\frac{n}{p}}+\frac{\lambda}{p}\right)\left\Vert \bm{\Delta}^{t}\right\Vert ,
\end{align*}
where the last line follows from the same argument for bounding $\beta_{2}$
above. Taking the bounds on $\theta_{1}$ and $\theta_{2}$ collectively
yields 
\begin{align}
\big\|\bm{F}^{t+1}\bm{H}^{t}-\tilde{\bm{F}}^{t+1}\big\| & \leq\frac{1}{15\kappa}\frac{\sigma_{\min}}{4}\eta\left\Vert \bm{\Delta}^{t}\right\Vert +\tilde{C}\eta\left(\sigma\sqrt{\frac{n}{p}}+\frac{\lambda}{p}\right)\left\Vert \bm{\Delta}^{t}\right\Vert +2C_{\mathrm{B}}\kappa\eta^{2}\left(\frac{\sigma}{\sigma_{\min}}\sqrt{\frac{n}{p}}+\frac{\lambda}{p\,\sigma_{\min}}\right)\sqrt{r}\sigma_{\max}^{2}\left\Vert \bm{X}^{\star}\right\Vert \nonumber \\
 & \leq\frac{1}{5\kappa}\frac{\sigma_{\min}}{4}\eta\norm{\bDelta^{t}}+2C_{\mathrm{B}}\kappa\eta^{2}\left(\frac{\sigma}{\sigma_{\min}}\sqrt{\frac{n}{p}}+\frac{\lambda}{p\,\sigma_{\min}}\right)\sqrt{r}\sigma_{\max}^{2}\left\Vert \bm{X}^{\star}\right\Vert .\label{eq:operator-alpha1-some-term}
\end{align}
The final inequality is true as long as $\lambda\asymp\sigma\sqrt{np}$
and $\frac{\sigma}{\sigma_{\min}}\sqrt{\frac{n}{p}}\ll\frac{1}{\kappa}$.
An immediate consequence of~(\ref{eq:operator-alpha1-some-term})
is that 
\begin{equation}
\big\|\tilde{\bF}^{t+1}-\bF^{t+1}\bH^{t}\big\|\leq(2\sqrt{2}\kappa)^{-1}\norm{\bX^{\star}},\label{eq:lem_op_2-1}
\end{equation}
as long as $\eta\ll1/(C_{\mathrm{B}}\kappa^{2}\sigma_{\max}\sqrt{r})$, $\frac{\sigma}{\sigma_{\min}}\sqrt{\frac{n}{p}}\ll\frac{1}{\kappa}$ 
and $\lambda\asymp\sigma\sqrt{np}$. As a result, one obtains 
\begin{align*}
\norm{\bm{F}_{1}-\bm{F}_{2}}\norm{\bm{F}_{0}} & =\big\|\tilde{\bF}^{t+1}-\bF^{t+1}\bH^{t}\big\|\norm{\bF^{\star}}\leq(2\sqrt{2}\kappa)^{-1}\norm{\bX^{\star}}\norm{\bF^{\star}}=\sigma_{\min}/2=\sigma_{\min}^{2}(\bm{F}_{0})/4.
\end{align*}
Armed with these two conditions, we can invoke Lemma~\ref{lemma:rotation-perturbation}
to obtain 
\begin{align*}
\alpha_{1} & =\big\|\tilde{\bm{F}}^{t+1}-\bm{F}^{t+1}\bm{H}^{t+1}\big\|\leq5\kappa\big\|\tilde{\bm{F}}^{t+1}-\bm{F}^{t+1}\bm{H}^{t}\big\|\\
 & \leq\frac{1}{4}\sigma_{\min}\eta\left\Vert \bm{\Delta}^{t}\right\Vert +10C_{\mathrm{B}}\kappa^{2}\eta^{2}\left(\frac{\sigma}{\sigma_{\min}}\sqrt{\frac{n}{p}}+\frac{\lambda}{p\,\sigma_{\min}}\right)\sqrt{r}\sigma_{\max}^{2}\left\Vert \bm{X}^{\star}\right\Vert \\
 & \leq\frac{1}{4}\sigma_{\min}\eta\left\Vert \bm{\Delta}^{t}\right\Vert +\eta\left(\sigma\sqrt{\frac{n}{p}}+\frac{\lambda}{p}\right)\norm{\bX^{\star}},
\end{align*}
provided that $\eta\ll1/(C_{\mathrm{B}}\kappa^{3}\sigma_{\max}\sqrt{r})$. 
\end{enumerate}
Combine the bounds on $\alpha_{1}$ and $\alpha_{2}$ to reach 
\begin{align*}
 & \left\Vert \bF^{t+1}\bH^{t+1}-\bF^{\star}\right\Vert \\
 & \quad\leq\left(1-\frac{\eta}{2}\sigma_{\min}\right)\norm{\bDelta^{t}}+\left(\tilde{C}+1\right)\eta\left(\sigma\sqrt{\frac{n}{p}}+\frac{\lambda}{p}\right)\norm{\bX^{\star}}+\tilde{C}\eta\sqrt{\frac{n}{p}}\twotoinftynorm{\bDelta^{t}}\twotoinftynorm{\bF^{\star}}\norm{\bX^{\star}}+\frac{\sigma_{\min}}{4}\eta\left\Vert \bm{\Delta}^{t}\right\Vert \\
 & \quad\leq\left(1-\frac{\eta}{4}\sigma_{\min}\right)C_{\mathrm{op}}\left(\frac{\sigma}{\sigma_{\min}}\sqrt{\frac{n}{p}}+\frac{\lambda}{p\,\sigma_{\min}}\right)\left\Vert \bm{X}^{\star}\right\Vert +\left(\tilde{C}+1\right)\eta\left(\sigma\sqrt{\frac{n}{p}}+\frac{\lambda}{p}\right)\norm{\bX^{\star}}\\
 & \quad\quad\quad+\tilde{C}\eta\sqrt{\frac{n}{p}}C_{\infty}\kappa\left(\frac{\sigma}{\sigma_{\min}}\sqrt{\frac{n\log n}{p}}+\frac{\lambda}{p\,\sigma_{\min}}\right)\left\Vert \bm{F}^{\star}\right\Vert _{2,\infty}^{2}\left\Vert \bm{X}^{\star}\right\Vert \\
 & \quad\leq C_{\mathrm{op}}\left(\frac{\sigma}{\sigma_{\min}}\sqrt{\frac{n}{p}}+\frac{\lambda}{p\,\sigma_{\min}}\right)\left\Vert \bm{X}^{\star}\right\Vert ,
\end{align*}
with the proviso that $C_{\mathrm{op}}\gg1$ and $n^{2}p\gg\kappa^{4}\mu^{2}r^{2}n\log n$.
Here the last line follows from the same argument as in bounding~(\ref{eq:same-before}).
This completes the proof.

\begin{proof}[Proof of Claim~\ref{claim:already-aligned}]In view
of~\cite[Lemma 35]{ma2017implicit}, it suffices to show that $\bF^{\star\top}\tilde{\bF}^{t+1}$
is symmetric and positive semidefinite. Recognizing that $\bF^{\star\top}\bF^{t}\bH^{t}=(\bX^{\star\top}\bX^{t}+\bY^{\star\top}\bY^{t})\bH^{t}$
is symmetric (see~\cite[Lemma 35]{ma2017implicit}), it is straightforward
to verify that $\bF^{\star\top}\tilde{\bF}^{t+1}$ is also symmetric
(which we omit here for brevity). In addition, by~\eqref{eq:lem_op_1}
we have 
\[
\big\|\bF^{\star\top}\tilde{\bF}^{t+1}-\bF^{\star\top}\bF^{\star}\big\|\leq\norm{\bF^{\star}}\big\|\tilde{\bF}^{t+1}-\bF^{\star}\big\|=\alpha_{2}\norm{\bF^{\star}}\leq\frac{1}{\sqrt{2}\kappa}\norm{\bX^{\star}}\norm{\bF^{\star}}=\sigma_{\min}.
\]
Since $\bF^{\star\top}\bF^{\star}=\bX^{\star\top}\bX^{\star}+\bY^{\star\top}\bY^{\star}=2\bSigma^{\star}$,
Weyl's inequality gives 
\[
\lambda_{\min}(\bF^{\star\top}\tilde{\bF}^{t+1})\geq2\sigma_{\min}-\big\|\bF^{\star\top}\tilde{\bF}^{t+1}-\bF^{\star\top}\bF^{\star}\big\|\geq\sigma_{\min}>0,
\]
where $\lambda_{\min}(\bm{A})$ stands for the minimum eigenvalue
of a matrix $\bm{A}$. To conclude, $\bF^{\star\top}\tilde{\bF}^{t+1}$
is both symmetric and positive semidefinite, thus establishing the
claim.\end{proof}

\subsection{Proof of Lemma~\ref{lemma:loo-dist-contraction}}

Without loss of generality, we consider the case when $1\leq l\leq n$;
the case with $n+1\leq l\leq2n$ can be derived in a similar way.
From the definition of $\bR^{t+1,(l)}$~(cf.~(\ref{eq:defn-R-t-l})),
we have 
\begin{align*}
\big\|\bF^{t+1}\bH^{t+1}-\bF^{t+1,(l)}\bR^{t+1,(l)}\big\|_{\mathrm{F}} & \leq\big\|\bF^{t+1}\bH^{t}-\bF^{t+1,(l)}\bR^{t,(l)}\big\|_{\mathrm{F}}.
\end{align*}
The gradient update rules~(\ref{subeq:GD-rules}) and~(\ref{subeq:GD-rules-LOO})
give 
\begin{align*}
 & \bF^{t+1}\bH^{t}-\bF^{t+1,(l)}\bR^{t,(l)}\\
 & \quad=\left[\bF^{t}-\eta\nabla f\left(\bF^{t}\right)\right]\bH^{t}-\left[\bF^{t,(l)}-\eta\nabla f^{(l)}\big(\bF^{t,(l)}\big)\right]\bR^{t,(l)}\\
 & \quad=\bF^{t}\bH^{t}-\eta\nabla f\left(\bF^{t}\bH^{t}\right)-\left[\bF^{t,(l)}\bR^{t,(l)}-\eta\nabla f^{(l)}\big(\bF^{t,(l)}\bR^{t,(l)}\big)\right]\\
 & \quad=\underbrace{\big(\bF^{t}\bH^{t}-\bF^{t,(l)}\bR^{t,(l)}\big)-\eta\left[\nabla f_{\mathsf{aug}}\left(\bF^{t}\bH^{t}\right)-\nabla f_{\mathsf{aug}}\big(\bF^{t,(l)}\bR^{t,(l)}\big)\right]}_{:=\bA_{1}}-\underbrace{\eta\left[\nabla f_{\mathsf{diff}}\left(\bF^{t}\bH^{t}\right)-\nabla f_{\mathsf{diff}}\big(\bF^{t,(l)}\bR^{t,(l)}\big)\right]}_{:=\bA_{2}}\\
 & \quad\quad+\underbrace{\eta\left[\nabla f^{(l)}\big(\bF^{t,(l)}\bR^{t,(l)}\big)-\nabla f\big(\bF^{t,(l)}\bR^{t,(l)}\big)\right]}_{:=\bA_{3}},
\end{align*}
where we have used the facts that $\nabla f(\bF)\bR=\nabla f(\bF\bR)$
and $\nabla f^{(l)}(\bF)\bR=\nabla f^{(l)}(\bF\bR)$ for any orthonormal
matrix $\bR\in\cO^{r\times r}$.

In what follows, we shall bound $\bm{A}_{1},\bm{A}_{2}$ and $\bm{A}_{3}$
sequentially. 
\begin{enumerate}
\item The first term $\bA_{1}$ is similar to $\alpha_{1}$ in the proof
of Lemma~\ref{lemma:fro-contraction}. Going through the same derivations
therein, we obtain 
\begin{equation}
\Fnorm{\bA_{1}}\leq\left(1-\frac{\sigma_{\min}}{20}\eta\right)\big\|\bF^{t}\bH^{t}-\bF^{t,(l)}\bR^{t,(l)}\big\|_{\mathrm{F}},
\end{equation}
provided that $\frac{\sigma}{\sigma_{\min}}\sqrt{\frac{n}{p}}\ll\frac{1}{\sqrt{\kappa^{4}\mu r\log n}}$
and that $0\leq\eta\leq1/(1000\kappa\sigma_{\max})$. 
\item Next, we turn attention to $\bA_{2}$, which clearly obeys 
\[
\left\Vert \bm{A}_{2}\right\Vert _{\mathrm{F}}\leq\eta\left\Vert \nabla f_{\mathsf{diff}}\big(\bF^{t}\bH^{t}\big)\right\Vert _{\mathrm{F}}+\eta\big\|\nabla f_{\mathsf{diff}}\big(\bF^{t,(l)}\bR^{t,(l)}\big)\big\|_{\mathrm{F}}.
\]
Recall from the term $\alpha_{2}$ in the proof of Lemma~\ref{lemma:fro-contraction}
that 
\[
\eta\left\Vert \nabla f_{\mathsf{diff}}\left(\bF^{t}\bH^{t}\right)\right\Vert _{\mathrm{F}}\leq2C_{\mathrm{B}}\kappa\eta^{2}\left(\frac{\sigma}{\sigma_{\min}}\sqrt{\frac{n}{p}}+\frac{\lambda}{p\,\sigma_{\min}}\right)\sqrt{r}\sigma_{\max}^{2}\norm{\bX^{\star}}.
\]
Applying Lemma~\ref{lemma:balancing} and going through the same derivation
as in bounding $\alpha_{2}$ in the proof of Lemma~\ref{lemma:fro-contraction},
one gets 
\[
\eta\big\|\nabla f_{\mathsf{diff}}\big(\bF^{t,(l)}\bR^{t,(l)}\big)\big\|_{\mathrm{F}}\leq2C_{\mathrm{B}}\kappa\eta^{2}\left(\frac{\sigma}{\sigma_{\min}}\sqrt{\frac{n}{p}}+\frac{\lambda}{p\,\sigma_{\min}}\right)\sqrt{r}\sigma_{\max}^{2}\norm{\bX^{\star}}.
\]
Combine the above three inequalities to obtain 
\begin{align*}
\left\Vert \bm{A}_{2}\right\Vert _{\mathrm{F}} & \leq4C_{\mathrm{B}}\kappa\eta^{2}\left(\frac{\sigma}{\sigma_{\min}}\sqrt{\frac{n}{p}}+\frac{\lambda}{p\,\sigma_{\min}}\right)\sqrt{r}\sigma_{\max}^{2}\norm{\bX^{\star}}\\
 & \leq4\sqrt{n}C_{\mathrm{B}}\kappa\eta^{2}\left(\frac{\sigma}{\sigma_{\min}}\sqrt{\frac{n}{p}}+\frac{\lambda}{p\,\sigma_{\min}}\right)\sqrt{r}\sigma_{\max}^{2}\left\Vert \bm{X}^{\star}\right\Vert _{2,\infty}\\
 & \leq\eta\left(\sigma\sqrt{\frac{n}{p}}+\frac{\lambda}{p}\right)\left\Vert \bm{F}^{\star}\right\Vert _{2,\infty}.
\end{align*}
Here the second inequality arises from the elementary inequality $\|\bm{X}^{\star}\|\leq\sqrt{n}\|\bm{X}^{\star}\|_{2,\infty}$,
whereas the last one holds true because of $\|\bm{X}^{\star}\|_{2,\infty}\leq\|\bm{F}^{\star}\|_{2,\infty}$
and the condition that $\eta\ll\frac{1}{n\kappa^{2}\sigma_{\max}}$.
 
\item We are now left with $\bA_{3}$. To this end, we first observe that
\begin{align*}
\bA_{3} & =\eta\left[\small\begin{matrix}\underbrace{\left[\cP_{l,\cdot}\left(\bX^{t,(l)}\bY^{t,(l)\top}-\bM^{\star}\right)-p^{-1}\cP_{\Omega_{l,\cdot}}\left(\bX^{t,(l)}\bY^{t,(l)\top}-\bM^{\star}\right)\right]\bY^{t,(l)}\bR^{t,(l)}}_{:=\bB_{1}}+\underbrace{p^{-1}\cP_{\Omega_{l,\cdot}}\left(\bE\right)\bY^{t,(l)}\bR^{t,(l)}}_{:=\bC_{1}}\\
\underbrace{\left[\cP_{l,\cdot}\left(\bX^{t,(l)}\bY^{t,(l)\top}-\bM^{\star}\right)-p^{-1}\cP_{\Omega_{l,\cdot}}\left(\bX^{t,(l)}\bY^{t,(l)\top}-\bM^{\star}\right)\right]^{\top}\bX^{t,(l)}\bR^{t,(l)}}_{:=\bB_{2}}+\underbrace{p^{-1}\cP_{\Omega_{l,\cdot}}\left(\bE\right)^{\top}\bX^{t,(l)}\bR^{t,(l)}}_{:=\bm{C}_{2}}
\end{matrix}\right].
\end{align*}
The following claims allow one to bound $\bm{B}_{1},\bm{B}_{2}$ and $\bm{C}_{1},\bm{C}_{2}$; the proofs are deferred to the end of this
subsection. 
\begin{claim} \label{claim:B1} Suppose that $\frac{\sigma}{\sigma_{\min}}\sqrt{\frac{n}{p}}\ll\frac{1}{\sqrt{\kappa^{2}\log n}}$
and that $np\gg\log^2 n$. With probability at least $1-O(n^{-100})$,
\begin{equation}
\Fnorm{\bB_{1}}\lesssim\sqrt{\frac{\mu^{2}r^{2}\log n}{np}}\big\|\bm{F}^{t,\left(l\right)}\bm{R}^{t,\left(l\right)}-\bm{F}^{\star}\big\|_{2,\infty}\sigma_{\max}.
\end{equation}
\end{claim}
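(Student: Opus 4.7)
The plan rests on two structural observations. First, both $\mathcal{P}_{l,\cdot}(\cdot)$ and $\mathcal{P}_{\Omega_{l,\cdot}}(\cdot)$ output matrices supported only on the $l$-th row, so $\Fnorm{\bB_1}$ equals the $\ell_2$ norm of the $l$-th row of $\bB_1$. Second, the leave-one-out construction guarantees that $(\bX^{t,(l)},\bY^{t,(l)})$ is statistically independent of the row-$l$ sampling indicators $\delta_{l,j}\triangleq\mathbf{1}\{(l,j)\in\Omega\}$, $j=1,\ldots,n$. Setting $Z_{l,j}\triangleq[\bX^{t,(l)}\bY^{t,(l)\top}]_{l,j}-M^{\star}_{l,j}$ and $\xi_j\triangleq 1-p^{-1}\delta_{l,j}$ (so that $\mathbb{E}\xi_j=0$, $|\xi_j|\leq p^{-1}$ and $\mathbb{E}\xi_j^2\leq p^{-1}$), a direct computation shows that the $l$-th row of $\bB_1$ equals $\sum_{j}\xi_j Z_{l,j}\bY^{t,(l)}_{j,\cdot}\bR^{t,(l)}$. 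Since $\bR^{t,(l)}$ is orthogonal, it factors out of the $\ell_2$ norm, leaving
\[
\Fnorm{\bB_1} \;=\; \Bigl\|\textstyle\sum_{j=1}^{n}\xi_j\,Z_{l,j}\,\bY^{t,(l)}_{j,\cdot}\Bigr\|_2,
\]
in which the non-$\xi$ weights are entirely determined by $\bm{F}^{t,(l)}$ and hence independent of $\{\delta_{l,j}\}_j$. This is the configuration in which vector Bernstein applies cleanly after conditioning on $\bm{F}^{t,(l)}$.

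To feed Bernstein I will estimate the deterministic weights through the induction hypotheses. Writing $\tilde{\bX}\triangleq\bX^{t,(l)}\bR^{t,(l)}$, $\tilde{\bY}\triangleq\bY^{t,(l)}\bR^{t,(l)}$ and decomposing
\[
Z_{l,j} \;=\; (\tilde{\bX}-\bX^{\star})_{l,\cdot}\tilde{\bY}_{j,\cdot}^{\top} \;+\; \bX^{\star}_{l,\cdot}(\tilde{\bY}-\bY^{\star})_{j,\cdot}^{\top},
\]
and invoking Lemma~\ref{lemma:immediate-consequence} (which gives $\|\tilde{\bY}\|_{2,\infty}\lesssim\|\bm{F}^{\star}\|_{2,\infty}$ and $\|\tilde{\bY}\|\lesssim\sqrt{\sigma_{\max}}$), one obtains the pointwise bound
\[
|Z_{l,j}|\;\lesssim\; \|\bDelta\|_{2,\infty}\,\|\bm{F}^{\star}\|_{2,\infty},\qquad \|\bDelta\|_{2,\infty}\triangleq\|\bm{F}^{t,(l)}\bm{R}^{t,(l)}-\bm{F}^{\star}\|_{2,\infty}.
\]
Combined with $\|\bY^{t,(l)}_{j,\cdot}\|_2=\|\tilde{\bY}_{j,\cdot}\|_2\lesssim\|\bm{F}^{\star}\|_{2,\infty}$, this produces the per-term size $B\lesssim p^{-1}\|\bDelta\|_{2,\infty}\|\bm{F}^{\star}\|_{2,\infty}^{2}$. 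Pulling $\max_j|Z_{l,j}|^2$ out of the variance and using $\|\bY^{t,(l)}\|^{2}=\|\tilde{\bY}\|^{2}\lesssim\sigma_{\max}$ gives $V\lesssim p^{-1}\|\bDelta\|_{2,\infty}^{2}\,\|\bm{F}^{\star}\|_{2,\infty}^{2}\,\sigma_{\max}$.

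A standard vector Bernstein inequality, applied conditionally on $\bm{F}^{t,(l)}$ and then union-bounded over $l\in\{1,\ldots,2n\}$, yields
\[
\Fnorm{\bB_1}\;\lesssim\;\sqrt{V\log n}\;+\;B\log n
\]
with probability at least $1-O(n^{-100})$. Under the hypothesis $np\gg\log^{2}n$ the first term dominates; plugging in the incoherence estimate $\|\bm{F}^{\star}\|_{2,\infty}^{2}\leq\mu r\sigma_{\max}/n$ gives
\[
\Fnorm{\bB_1}\;\lesssim\;\sqrt{\tfrac{\mu r\log n}{np}}\,\sigma_{\max}\,\|\bDelta\|_{2,\infty}\;\leq\;\sqrt{\tfrac{\mu^{2}r^{2}\log n}{np}}\,\sigma_{\max}\,\|\bDelta\|_{2,\infty},
\]
where the last step uses $\mu r\geq 1$, matching the claim.

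The one delicate point is the independence argument that justifies the Bernstein conditioning: although the rotation $\bm{R}^{t,(l)}$ is not independent of the row-$l$ samples, it factors out of the $\ell_2$ norm, so that conditioning on the genuinely leave-one-out object $\bm{F}^{t,(l)}$ is all that is required. Everything else is routine accounting, with the needed norm estimates on $\bX^{t,(l)},\bY^{t,(l)}$ supplied by the induction hypotheses (\ref{subeq:nonconvex-induction-hypotheses}) through Lemma~\ref{lemma:immediate-consequence}.
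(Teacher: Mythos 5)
You follow essentially the same route as the paper: reduce $\Fnorm{\bB_1}$ to the $\ell_2$ norm of $\sum_j\xi_j Z_{l,j}\bm{Y}^{t,(l)}_{j,\cdot}$ (the rotation $\bm{R}^{t,(l)}$ drops out by unitary invariance), condition on the leave-one-out factor $\bm{F}^{t,(l)}$ to obtain independence from the row-$l$ sampling indicators, apply a matrix/vector Bernstein inequality, and insert the incoherence estimates. Your remark about the rotation not being independent but factoring out is exactly the right observation.

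There is, however, a slip in the variance estimate. For a sum of independent $1\times r$ row vectors $u_j=\xi_j Z_{l,j}\bm{Y}^{t,(l)}_{j,\cdot}$, the Bernstein variance parameter is
\[
\max\Bigl\{\,\textstyle\sum_j\mathbb{E}\|u_j\|_2^2,\;\bigl\|\sum_j\mathbb{E}\bigl[u_j^{\top}u_j\bigr]\bigr\|\,\Bigr\},
\]
and the scalar term dominates, giving
\[
\textstyle\sum_j\mathbb{E}\|u_j\|_2^2\;\lesssim\; p^{-1}\,\|\bm{C}\|_\infty^2\,\textstyle\sum_j\|\bm{Y}^{t,(l)}_{j,\cdot}\|_2^2\;=\;p^{-1}\,\|\bm{C}\|_\infty^2\,\|\bm{Y}^{t,(l)}\|_{\mathrm{F}}^2\;\lesssim\;p^{-1}\,\|\bm{C}\|_\infty^2\,r\,\sigma_{\max}.
\]
You instead use the spectral norm $\|\bm{Y}^{t,(l)}\|^2\lesssim\sigma_{\max}$, which undercounts by a factor of $r$. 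The error is inconsequential for the final claim: repeating your last step with the corrected variance yields $\sqrt{\mu r^2\log n/(np)}\,\sigma_{\max}\,\|\bDelta\|_{2,\infty}$, which is still bounded by $\sqrt{\mu^2 r^2\log n/(np)}\,\sigma_{\max}\,\|\bDelta\|_{2,\infty}$ since $\mu\geq 1$; this is precisely how the paper concludes, via $\|\bm{Y}^{\star}\|_{\mathrm{F}}\leq\sqrt{r\sigma_{\max}}$. Fix that one line and the argument is sound.
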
 \begin{claim} \label{claim:B2} Suppose that $\frac{\sigma}{\sigma_{\min}}\sqrt{\frac{n}{p}}\ll\frac{1}{\sqrt{\kappa^{2}\log n}}$
and that $np\gg\log n$. With probability at least $1-O(n^{-100})$,
\begin{equation}
\Fnorm{\bB_{2}}\lesssim\sqrt{\frac{\mu^{2}r^{2}\log n}{np}}\big\|\bm{F}^{t,\left(l\right)}\bm{R}^{t,\left(l\right)}-\bm{F}^{\star}\big\|_{2,\infty}\sigma_{\max}.
\end{equation}
\end{claim}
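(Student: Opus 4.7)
The plan is to exploit the leave-one-out decoupling to reduce the bound to a Bernstein-type tail estimate on the $l$-th row of a random matrix. By the very construction of $f^{(l)}$ in \eqref{eq:defn-fl}, the iterates $\bm{X}^{t,(l)}$, $\bm{Y}^{t,(l)}$, and hence the best rotation $\bm{R}^{t,(l)}$, are statistically independent of the Bernoulli variables $\{\delta_{(l,j)\in\Omega}\}_{j}$ in row $l$; this independence is what makes a sharp concentration argument possible. Writing $\bm{D} := \mathcal{P}_{l,\cdot}(\bm{X}^{t,(l)}\bm{Y}^{t,(l)\top} - \bm{M}^\star) - p^{-1}\mathcal{P}_{\Omega_{l,\cdot}}(\bm{X}^{t,(l)}\bm{Y}^{t,(l)\top} - \bm{M}^\star)$, the only non-zero row of $\bm{D}$ is row $l$, so the only non-zero column of $\bm{D}^{\top}$ is column $l$. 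This row-sparsity produces the key algebraic collapse
\[
\|\bm{B}_2\|_{\mathrm{F}} \;=\; \|\bm{D}_{l,\cdot}\|_2 \cdot \big\|\big(\bm{X}^{t,(l)}\bm{R}^{t,(l)}\big)_{l,\cdot}\big\|_2,
\]
which decouples the problem into two scalar quantities.

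The second factor is easy. By the triangle inequality, incoherence \eqref{eq:X-Y-incoherence}, and the induction hypothesis \eqref{eq:induction-2-infty} combined with Lemma~\ref{lemma:immediate-consequence}, one gets $\|(\bm{X}^{t,(l)}\bm{R}^{t,(l)})_{l,\cdot}\|_2 \lesssim \|\bm{F}^\star\|_{2,\infty} \lesssim \sqrt{\mu r \sigma_{\max}/n}$. The first factor is the heart of the matter and I would attack it with a vector Bernstein inequality applied conditionally on $(\bm{X}^{t,(l)}, \bm{Y}^{t,(l)})$. Denoting $\bm{Z} := \bm{X}^{t,(l)}\bm{Y}^{t,(l)\top} - \bm{M}^\star$, each entry $\bm{D}_{l,j} = Z_{l,j}(1 - p^{-1}\delta_{(l,j)\in\Omega})$ is a zero-mean random variable bounded by $p^{-1}\|\bm{Z}\|_\infty$ with variance $\tfrac{1-p}{p} Z_{l,j}^2$, and the $\bm{D}_{l,j}$'s are independent across $j$. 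Representing $\bm{D}_{l,\cdot}$ as a sum of independent rank-one vectors and applying vector Bernstein, with probability at least $1-O(n^{-100})$ one obtains
\[
\|\bm{D}_{l,\cdot}\|_2 \;\lesssim\; \sqrt{\tfrac{\log n}{p}}\,\|\bm{Z}_{l,\cdot}\|_2 \;+\; \tfrac{\log n}{p}\,\|\bm{Z}\|_\infty .
\]

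To finish, I would bound $\|\bm{Z}_{l,\cdot}\|_2$ and $\|\bm{Z}\|_\infty$ in terms of $\|\bm{F}^{t,(l)}\bm{R}^{t,(l)} - \bm{F}^\star\|_{2,\infty}$ through the identity
\[
\bm{Z} \;=\; \big(\bm{X}^{t,(l)}\bm{R}^{t,(l)} - \bm{X}^\star\big)\big(\bm{Y}^{t,(l)}\bm{R}^{t,(l)}\big)^\top \;+\; \bm{X}^\star\big(\bm{Y}^{t,(l)}\bm{R}^{t,(l)} - \bm{Y}^\star\big)^\top,
\]
using $\|\bm{Y}^{t,(l)}\| \lesssim \sqrt{\sigma_{\max}}$ from Lemma~\ref{lemma:immediate-consequence} together with the incoherence bounds $\|\bm{X}^\star\|_{2,\infty}, \|\bm{Y}^\star\|_{2,\infty} \lesssim \sqrt{\mu r \sigma_{\max}/n}$. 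After a short calculation, this yields $\|\bm{Z}_{l,\cdot}\|_2 \lesssim \sqrt{\mu r \sigma_{\max}}\,\|\bm{F}^{t,(l)}\bm{R}^{t,(l)} - \bm{F}^\star\|_{2,\infty}$ and $\|\bm{Z}\|_\infty \lesssim \sqrt{\mu r \sigma_{\max}/n}\,\|\bm{F}^{t,(l)}\bm{R}^{t,(l)} - \bm{F}^\star\|_{2,\infty}$. Assembling the pieces and using $np \gg \log n$ to absorb the Bernstein lower-order term, one reaches the claimed bound. The one bit of bookkeeping to be careful about is controlling the spectral-norm contribution $\|\bm{Y}^{t,(l)}\bm{R}^{t,(l)} - \bm{Y}^\star\|$ that appears in $\|\bm{Z}_{l,\cdot}\|_2$: the cheap bound $\leq \sqrt{n}\|\cdot\|_{2,\infty}$ is sufficient here because it gets multiplied by the incoherent factor $\|\bm{X}^\star_{l,\cdot}\|_2 \lesssim \sqrt{\mu r \sigma_{\max}/n}$, cancelling the stray $\sqrt{n}$; this cancellation is essentially the same one that drives the analogue Claim~\ref{claim:B1}, and it is the main place where the argument could fail without incoherence.
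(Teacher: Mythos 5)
Your argument is correct and essentially reproduces the paper's own proof of Claim~\ref{claim:B2}: you exploit the same rank-one factorization $\|\bm{B}_2\|_{\mathrm{F}}=\|\bm{D}_{l,\cdot}\|_2\,\|(\bm{X}^{t,(l)}\bm{R}^{t,(l)})_{l,\cdot}\|_2$, apply a Bernstein bound to the single random row (exploiting leave-one-out independence) conditionally on $(\bm{X}^{t,(l)},\bm{Y}^{t,(l)})$, and finish with the entrywise bound~\eqref{eq:Xt-M-inf-MC} and the incoherence relation $\|\bm{F}^\star\|_{2,\infty}\leq\sqrt{\mu r\sigma_{\max}/n}$. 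The only cosmetic difference is that you carry the tighter variance $\frac{1-p}{p}\|\bm{Z}_{l,\cdot}\|_2^2$ before coarsening to $\sqrt{n}\|\bm{Z}\|_\infty$, whereas the paper plugs in the crude variance $np\|\bm{C}\|_\infty^2$ directly; both reduce to the same estimate.
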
 

\begin{claim} \label{claim:C1C2} Suppose that $\frac{\sigma}{\sigma_{\min}}\sqrt{\frac{n}{p}}\ll\frac{1}{\sqrt{\kappa^{2}\log n}}$
and that $np\gg\log^{3}n$. With probability at least $1-O(n^{-100})$,
\begin{equation}
\max\left\{ \left\Vert \bm{C}_{1}\right\Vert _{\mathrm{F}},\left\Vert \bm{C}_{2}\right\Vert _{\mathrm{F}}\right\} \lesssim\sigma\sqrt{\frac{n\log n}{p}}\twotoinftynorm{\bm{F}^{\star}}.
\end{equation}
\end{claim}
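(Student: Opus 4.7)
The plan is to exploit the crucial independence afforded by the leave-one-out construction. For $1 \leq l \leq n$, the iterates $\bm{F}^{t,(l)}$ and the rotation $\bm{R}^{t,(l)}$ are built solely from the noise entries $E_{ij}$ and sample indicators $\mathbb{1}_{\{(i,j)\in\Omega\}}$ with $i\neq l$; consequently they are statistically independent of $\mathcal{P}_{\Omega_{l,\cdot}}(\bm{E})$. This decoupling, combined with the $\ell_{2,\infty}$-type bounds on $\bm{F}^{t,(l)}$ guaranteed by the induction hypotheses (via Lemma~\ref{lemma:immediate-consequence} and~\eqref{eq:induction-loo-l}), is what allows one to invoke standard concentration for sums of independent sub-Gaussian summands. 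The treatment for $n+1 \leq l \leq 2n$ is entirely symmetric after swapping rows and columns, so I will only detail $1 \leq l \leq n$.

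For $\bm{C}_1$, observe first that $\mathcal{P}_{\Omega_{l,\cdot}}(\bm{E})$ vanishes outside row $l$, so $\bm{C}_1$ has only one nonzero row and
\[
\|\bm{C}_1\|_{\mathrm{F}} = \|[\bm{C}_1]_{l,\cdot}\|_2, \qquad [\bm{C}_1]_{l,\cdot} = \frac{1}{p}\sum_{j=1}^n \mathbb{1}_{\{(l,j)\in\Omega\}} E_{l,j}\cdot \bigl[\bm{Y}^{t,(l)}\bm{R}^{t,(l)}\bigr]_{j,\cdot}.
\]
After conditioning on the $\sigma$-algebra generated by everything except $\{E_{l,j}\}_{j=1}^{n}$, this becomes a sum of independent zero-mean sub-Gaussian row vectors; applying scalar sub-Gaussian concentration to each of the $r$ coordinates and a union bound yields, with probability at least $1-O(n^{-100})$,
\[
\|[\bm{C}_1]_{l,\cdot}\|_2 \lesssim \frac{\sigma\sqrt{\log n}}{p}\sqrt{\sum_{j=1}^n \mathbb{1}_{\{(l,j)\in\Omega\}}\,\bigl\|[\bm{Y}^{t,(l)}]_{j,\cdot}\bigr\|_2^2},
\]
where I used that $\|\bm{v}\bm{R}^{t,(l)}\|_2 = \|\bm{v}\|_2$. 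The sum inside the square root is bounded crudely by $\|\bm{Y}^{t,(l)}\|_{2,\infty}^{2}\cdot|\{j:(l,j)\in\Omega\}|$; the cardinality is $\lesssim np$ with probability $1-O(n^{-100})$ provided $np \gg \log n$ (binomial tail), and the $\ell_{2,\infty}$ factor satisfies $\|\bm{Y}^{t,(l)}\|_{2,\infty} \lesssim \|\bm{F}^\star\|_{2,\infty}$ by Lemma~\ref{lemma:immediate-consequence}. Combining these delivers the target bound on $\|\bm{C}_1\|_{\mathrm{F}}$.

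The term $\bm{C}_2$ has an even simpler, rank-$1$ structure: $\mathcal{P}_{\Omega_{l,\cdot}}(\bm{E})^\top$ is supported on column $l$, so its $j$-th row equals $p^{-1}\mathbb{1}_{\{(l,j)\in\Omega\}}E_{l,j}\cdot [\bm{X}^{t,(l)}]_{l,\cdot}\bm{R}^{t,(l)}$, giving
\[
\|\bm{C}_2\|_{\mathrm{F}}^{2} = \frac{\|[\bm{X}^{t,(l)}]_{l,\cdot}\|_2^{2}}{p^{2}}\sum_{j=1}^{n}\mathbb{1}_{\{(l,j)\in\Omega\}}E_{l,j}^{2}.
\]
Each $\mathbb{1}_{\{(l,j)\in\Omega\}}E_{l,j}^{2}$ is a mean-at-most-$p\sigma^{2}$ sub-exponential random variable, and Bernstein's inequality shows their sum is of order $np\sigma^{2}$ with probability at least $1-O(n^{-100})$ provided $np\gg\log^{3}n$. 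The prefactor $\|[\bm{X}^{t,(l)}]_{l,\cdot}\|_2 \lesssim \|\bm{F}^\star\|_{2,\infty}$ follows from the induction hypothesis~\eqref{eq:induction-loo-l} combined with the triangle inequality against $\|[\bm{F}^\star]_{l,\cdot}\|_2$, since the $\ell_2$ norm of an individual row is invariant under right multiplication by the orthonormal $\bm{H}^{t,(l)}$. Assembling these pieces yields the stated bound. No step presents a substantive difficulty beyond careful bookkeeping of the conditioning arguments that enforce independence between the noise terms on row $l$ and the leave-one-out iterates.
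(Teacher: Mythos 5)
Your approach is sound and takes a genuinely different route from the paper, but there is one bookkeeping gap in the $\bm{C}_1$ argument that needs fixing. You expand $[\bm{C}_1]_{l,\cdot} = p^{-1}\sum_j \delta_{lj}E_{lj}\bigl[\bm{Y}^{t,(l)}\bm{R}^{t,(l)}\bigr]_{j,\cdot}$ and then condition on ``everything except $\{E_{l,j}\}_j$'' to apply sub-Gaussian concentration coordinate-wise. However, $\bm{R}^{t,(l)}$ is defined by aligning $\bm{F}^{t,(l)}$ with $\bm{F}^t\bm{H}^t$, and $\bm{F}^t$ \emph{does} depend on $\{E_{l,j}\}_j$ --- so the coefficients $\bigl[\bm{Y}^{t,(l)}\bm{R}^{t,(l)}\bigr]_{j,k}$ are not frozen by your conditioning, and the summands are not independent as claimed. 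The cure is exactly the rotation invariance you already invoke, applied one step earlier: since $\bm{R}^{t,(l)}$ is orthonormal, $\|[\bm{C}_1]_{l,\cdot}\|_2 = \bigl\| p^{-1}\sum_j\delta_{lj}E_{lj}[\bm{Y}^{t,(l)}]_{j,\cdot}\bigr\|_2$, and now the coefficients $[\bm{Y}^{t,(l)}]_{j,\cdot}$ \emph{are} measurable with respect to your conditioning $\sigma$-algebra. The paper does precisely this (unitary invariance of $\|\cdot\|_{\mathrm{F}}$) at the very first step. With this reordering your argument goes through.

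The remaining differences from the paper's proof are legitimate and worth noting. For $\bm{C}_1$ the paper conditions only on $\bm{Y}^{t,(l)}$, treats $\{\delta_{lj}E_{lj}\}_j$ jointly as sub-exponential, and invokes a vector-valued Bernstein inequality with sub-exponential tails --- this is where the assumption $np\gg\log^3 n$ earns its keep, absorbing the $\log^2 n$ sub-exponential correction. You instead also condition on $\Omega$, so the leftover randomness is purely sub-Gaussian; you then apply Hoeffding coordinate-wise and bound $\sum_j\delta_{lj}\|[\bm{Y}^{t,(l)}]_{j,\cdot}\|_2^2 \leq \|\bm{Y}^{t,(l)}\|_{2,\infty}^2\,|\Omega_{l,\cdot}|$ with a Chernoff bound on $|\Omega_{l,\cdot}|$. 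Both routes land on the same estimate. For $\bm{C}_2$, your exact rank-one factorization $\|\bm{C}_2\|_{\mathrm{F}}^2 = p^{-2}\|[\bm{X}^{t,(l)}]_{l,\cdot}\|_2^2\sum_j\delta_{lj}E_{lj}^2$ is a nice observation: applying Bernstein to the sub-exponential sum $\sum_j\delta_{lj}E_{lj}^2 \lesssim np\sigma^2$ gives $\|\bm{C}_2\|_{\mathrm{F}}\lesssim\sigma\sqrt{n/p}\,\|\bm{F}^\star\|_{2,\infty}$, which is in fact a factor $\sqrt{\log n}$ sharper than what one obtains (and what the claim requires) by mimicking the paper's $\bm{B}_2$-style vector Bernstein argument, and needs no matrix or vector concentration at all.
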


With these claims in place, one can readily obtain that
\begin{align*}
\left\Vert \bm{A}_{3}\right\Vert _{\mathrm{F}} & \leq\eta\left(\left\Vert \bm{B}_{1}\right\Vert _{\mathrm{F}}+\left\Vert \bm{B}_{2}\right\Vert _{\mathrm{F}}+\left\Vert \bm{C}_{1}\right\Vert _{\mathrm{F}}+\left\Vert \bm{C}_{2}\right\Vert _{\mathrm{F}}\right)\\
 & \lesssim\eta\sigma\sqrt{\frac{n\log n}{p}}\twotoinftynorm{\bm{F}^{\star}}+\eta\sqrt{\frac{\mu^{2}r^{2}\log n}{np}}\big\|\bm{F}^{t,\left(l\right)}\bm{R}^{t,\left(l\right)}-\bm{F}^{\star}\big\|_{2,\infty}\sigma_{\max}\\
 & \leq\eta\sigma\sqrt{\frac{n\log n}{p}}\twotoinftynorm{\bm{F}^{\star}}+\eta\sqrt{\frac{\mu^{2}r^{2}\log n}{np}}\left(C_{\infty}\kappa+C_{3}\right)\left(\frac{\sigma}{\sigma_{\min}}\sqrt{\frac{n\log n}{p}}+\frac{\lambda}{p\,\sigma_{\min}}\right)\left\Vert \bm{F}^{\star}\right\Vert _{2,\infty}\sigma_{\max},
\end{align*}
where the last line follows from the induction hypotheses~(\ref{eq:induction-loo-dist})
and~(\ref{eq:induction-2-infty}). 
\end{enumerate}
This together with the bounds on $\bm{A}_{1}$ and $\bm{A}_{2}$ gives:
for some constant $\tilde{C}>0$, 
\begin{align*}
 & \big\|\bF^{t+1}\bH^{t+1}-\bF^{t+1,(l)}\bR^{t+1,(l)}\big\|_{\mathrm{F}}\leq\Fnorm{\bA_{1}}+\Fnorm{\bA_{2}}+\left\Vert \bm{A}_{3}\right\Vert _{\mathrm{F}}\\
 & \quad\leq\left(1-\frac{\sigma_{\min}}{20}\eta\right)\big\|\bF^{t}\bH^{t}-\bF^{t,(l)}\bR^{t,(l)}\big\|_{\mathrm{F}}+\eta\left(\sigma\sqrt{\frac{n}{p}}+\frac{\lambda}{p}\right)\left\Vert \bm{F}^{\star}\right\Vert _{2,\infty}\\
 & \quad\quad+\tilde{C}\eta\sigma\sqrt{\frac{n\log n}{p}}\twotoinftynorm{\bm{F}^{\star}}+\tilde{C}\eta\sqrt{\frac{\mu^{2}r^{2}\log n}{np}}\left(C_{\infty}\kappa+C_{3}\right)\left(\frac{\sigma}{\sigma_{\min}}\sqrt{\frac{n\log n}{p}}+\frac{\lambda}{p\,\sigma_{\min}}\right)\left\Vert \bm{F}^{\star}\right\Vert _{2,\infty}\sigma_{\max}\\
 & \quad\overset{\text{(i)}}{\leq}\left(1-\frac{\sigma_{\min}}{20}\eta\right)C_{3}\left(\frac{\sigma}{\sigma_{\min}}\sqrt{\frac{n\log n}{p}}+\frac{\lambda}{p\,\sigma_{\min}}\right)\left\Vert \bm{F}^{\star}\right\Vert _{2,\infty}+\eta\left(\sigma\sqrt{\frac{n}{p}}+\frac{\lambda}{p}\right)\left\Vert \bm{F}^{\star}\right\Vert _{2,\infty}\\
 & \quad\quad+\tilde{C}\eta\sigma\sqrt{\frac{n\log n}{p}}\twotoinftynorm{\bF^{\star}}+\tilde{C}\eta\sqrt{\frac{\mu^{2}r^{2}\log n}{np}}\left(C_{\infty}\kappa+C_{3}\right)\left(\frac{\sigma}{\sigma_{\min}}\sqrt{\frac{n\log n}{p}}+\frac{\lambda}{p\,\sigma_{\min}}\right)\left\Vert \bm{F}^{\star}\right\Vert _{2,\infty}\sigma_{\max}\\
 & \quad\overset{\text{(ii)}}{\leq}C_{3}\left(\frac{\sigma}{\sigma_{\min}}\sqrt{\frac{n\log n}{p}}+\frac{\lambda}{p\,\sigma_{\min}}\right)\left\Vert \bm{F}^{\star}\right\Vert _{2,\infty}
\end{align*}
as claimed. Here, (i) invokes the induction hypothesis~\eqref{eq:induction-loo-dist},
whereas (ii) holds as long as $C_{3}$ is large enough and the sample
size satisfies $n^{2}p\gg\kappa^{4}\mu^{2}r^{2}n\log n$.

\begin{proof}[Proof of Claim~\ref{claim:B1}] For notational simplicity,
we denote 
\begin{equation}
\bm{C}\triangleq\bm{X}^{t,(l)}\bm{Y}^{t,(l)\top}-\bm{M}^{\star}=\bm{X}^{t,(l)}\bm{Y}^{t,(l)\top}-\bm{X}^{\star}\bm{Y}^{\star\top}.\label{eq:def_C}
\end{equation}
Since the Frobenius norm is unitarily invariant, we have 
\[
\left\Vert \bm{B}_{1}\right\Vert _{\mathrm{F}}=\Big\|\underset{:=\bm{W}}{\underbrace{\left[p^{-1}\mathcal{P}_{\Omega_{l,\cdot}}\left(\bm{C}\right)-\mathcal{P}_{l,\cdot}\left(\bm{C}\right)\right]}}\bm{Y}^{t,(l)}\Big\|_{\mathrm{F}}.
\]
All nonzero entries of the matrix $\bm{W}$ reside in its $l$th
row and therefore 
\begin{align*}
p\left\Vert \bm{B}_{1}\right\Vert _{\mathrm{F}} & =\left\Vert \sum\nolimits _{j=1}^{n}(\delta_{lj}-p)C_{lj}\bm{Y}_{j,\cdot}^{t,\left(l\right)}\right\Vert _{2},
\end{align*}
where $\delta_{lj}\triangleq\ind_{\left\{ (l,j)\in\Omega\right\} }$.
Notice that conditional on $\bm{X}^{t,(l)}$ and $\bm{Y}^{t,(l)}$,
the right-hand side is composed of a sum of independent random vectors,
where the randomness comes from $\{\delta_{lj}\}_{1\leq j\leq n}$.
It then follows that 
\begin{align*}
L & \triangleq\max_{1\leq j\leq n}\left\Vert \left(\delta_{l,j}-p\right)C_{l,j}\bm{Y}_{j,\cdot}^{t,\left(l\right)}\right\Vert _{2}\leq\left\Vert \bm{C}\right\Vert _{\infty}\big\|\bm{Y}^{t,\left(l\right)}\big\|_{2,\infty}\overset{\left(\text{i}\right)}{\leq}2\left\Vert \bm{C}\right\Vert _{\infty}\left\Vert \bm{Y}^{\star}\right\Vert _{2,\infty},\\
V & \triangleq\Big\|\sum_{j=1}^{n}\EE\big[\left(\delta_{l,j}-p\right)^{2}\big]C_{l,j}^{2}\bm{Y}_{j,\cdot}^{t,\left(l\right)}\bm{Y}_{j,\cdot}^{t,\left(l\right)\top}\Big\|\leq p\|\bm{C}\|_{\infty}^{2}\Big\|\sum_{j=1}^{n}\bm{Y}_{j,\cdot}^{t,\left(l\right)}\bm{Y}_{j,\cdot}^{t,\left(l\right)\top}\Big\|\\
 & =p\left\Vert \bm{C}\right\Vert _{\infty}^{2}\big\|\bm{Y}^{t,\left(l\right)}\big\|_{\mathrm{F}}^{2}\overset{\left(\text{ii}\right)}{\leq}4p\left\Vert \bm{C}\right\Vert _{\infty}^{2}\left\Vert \bm{Y}^{\star}\right\Vert _{\mathrm{F}}^{2}.
\end{align*}
Here, both (i) and (ii) arise from Lemma~\ref{lemma:immediate-consequence},
as long as $\frac{\sigma}{\sigma_{\min}}\sqrt{\frac{n}{p}}\ll\frac{1}{\sqrt{\kappa^{2}\log n}}$.
The matrix Bernstein inequality~\cite[Theorem 6.1.1]{Tropp:2015:IMC:2802188.2802189}
reveals that 
\begin{align*}
\Big\|\sum_{j=1}^{n}\left(\delta_{l,j}-p\right)C_{l,j}\bm{Y}_{j,\cdot}^{t,\left(l\right)}\Big\|_{2} & \lesssim\sqrt{V\log n}+L\log n\lesssim\sqrt{p\left\Vert \bm{C}\right\Vert _{\infty}^{2}\left\Vert \bm{Y}^{\star}\right\Vert _{\mathrm{F}}^{2}\log n}+\left\Vert \bm{C}\right\Vert _{\infty}\left\Vert \bm{Y}^{\star}\right\Vert _{2,\infty}\log n
\end{align*}
with probability exceeding $1-O(n^{-100})$. As a result, we arrive
at 
\begin{align}
p\left\Vert \bm{B}_{1}\right\Vert _{\mathrm{F}}\lesssim\sqrt{p\log n}\left\Vert \bm{C}\right\Vert _{\infty}\left\Vert \bm{Y}^{\star}\right\Vert _{\mathrm{F}}+\sqrt{np}\left\Vert \bm{C}\right\Vert _{\infty}\left\Vert \bm{Y}^{\star}\right\Vert _{2,\infty}\label{eq:nu2-bound-MC}
\end{align}
as soon as $np\gg\log^2 n$.

To finish up, we make the observation that 
\begin{align}
\left\Vert \bm{C}\right\Vert _{\infty} & =\big\|\bm{X}^{t,(l)}\bm{R}^{t,\left(l\right)}\big(\bm{Y}^{t,(l)}\bm{R}^{t,\left(l\right)}\big)^{\top}-\bm{X}^{\star}\bm{Y}^{\star\top}\big\|_{\infty}\nonumber \\
 & \leq\left\Vert \left(\bm{X}^{t,(l)}\bm{R}^{t,\left(l\right)}-\bm{X}^{\star}\right)\left(\bm{Y}^{t,(l)}\bm{R}^{t,\left(l\right)}\right)^{\top}\right\Vert _{\infty}+\left\Vert \bm{X}^{\star}\left(\bm{Y}^{t,(l)}\bm{R}^{t,\left(l\right)}-\bm{Y}^{\star}\right)^{\top}\right\Vert _{\infty}\nonumber \\
 & \leq\left\Vert \bm{X}^{t,(l)}\bm{R}^{t,\left(l\right)}-\bm{X}^{\star}\right\Vert _{2,\infty}\left\Vert \bm{Y}^{t,(l)}\bm{R}^{t,\left(l\right)}\right\Vert _{2,\infty}+\left\Vert \bm{X}^{\star}\right\Vert _{2,\infty}\left\Vert \bm{Y}^{t,(l)}\bm{R}^{t,\left(l\right)}-\bm{Y}^{\star}\right\Vert _{2,\infty}\nonumber \\
 & \leq3\big\|\bm{F}^{t,(l)}\bm{R}^{t,\left(l\right)}-\bm{F}^{\star}\big\|_{2,\infty}\left\Vert \bm{F}^{\star}\right\Vert _{2,\infty},\label{eq:Xt-M-inf-MC}
\end{align}
where the last line arises from Lemma~\ref{lemma:immediate-consequence}.
This combined with~\eqref{eq:nu2-bound-MC} gives 
\begin{align*}
\left\Vert \bm{B}_{1}\right\Vert _{\mathrm{F}} & \lesssim\sqrt{\frac{\log n}{p}}\left\Vert \bm{C}\right\Vert _{\infty}\left\Vert \bm{Y}^{\star}\right\Vert _{\mathrm{F}}+\sqrt{\frac{n}{p}}\left\Vert \bm{C}\right\Vert _{\infty}\left\Vert \bm{Y}^{\star}\right\Vert _{2,\infty}\\
 & \overset{(\text{i})}{\lesssim}\sqrt{\frac{\log n}{p}}\big\|\bm{F}^{t,(l)}\bm{R}^{t,\left(l\right)}-\bm{F}^{\star}\big\|_{2,\infty}\left\Vert \bm{F}^{\star}\right\Vert _{2,\infty}\left\Vert \bm{Y}^{\star}\right\Vert _{\mathrm{F}}+\sqrt{\frac{n}{p}}\big\|\bm{F}^{t,(l)}\bm{R}^{t,\left(l\right)}-\bm{F}^{\star}\big\|_{2,\infty}\left\Vert \bm{F}^{\star}\right\Vert _{2,\infty}^{2}\\
 & \overset{(\text{ii})}{\lesssim}\sqrt{\frac{\log n}{p}}\big\|\bm{F}^{t,(l)}\bm{R}^{t,\left(l\right)}-\bm{F}^{\star}\big\|_{2,\infty}\sqrt{\frac{\mu r^{2}}{n}}\sigma_{\max}+\sqrt{\frac{n}{p}}\big\|\bm{F}^{t,(l)}\bm{R}^{t,\left(l\right)}-\bm{F}^{\star}\big\|_{2,\infty}\frac{\mu r}{n}\sigma_{\max}\\
 & \lesssim\sqrt{\frac{\mu^{2}r^{2}\log n}{np}}\big\|\bm{F}^{t,(l)}\bm{R}^{t,\left(l\right)}-\bm{F}^{\star}\big\|_{2,\infty}\sigma_{\max},
\end{align*}
where (i) comes from~\eqref{eq:Xt-M-inf-MC}, and (ii) makes use of
the incoherence condition $\|\bm{F}^{\star}\|_{2,\infty}\leq\sqrt{\mu r\sigma_{\max}/n}$
and the fact that $\|\bm{Y}^{\star}\|_{\mathrm{F}}\leq\sqrt{r\sigma_{\max}}$.
\end{proof}

\begin{proof}[Proof of Claim~\ref{claim:B2}] Instate the notation
in proof of Claim~\ref{claim:B1}. By the unitary invariance of Frobenius
norm and the fact that all nonzero entries of the matrix $\bm{W}$
reside in its $l$th row, we have 
\[
p\left\Vert \bm{B}_{2}\right\Vert _{\mathrm{F}}=\Fnorm{p\bW^{\top}\bX^{t,(l)}}=\Bigg\|\underbrace{\footnotesize\left[\begin{matrix}\left(\delta_{l1}-p\right)C_{l1}\\
\vdots\\
\left(\delta_{ln}-p\right)C_{ln}
\end{matrix}\right]}_{:=\bb}\bX_{l,\cdot}^{t,(l)}\Bigg\|_{\mathrm{F}}=\twonorm{\bb}\big\|\bX_{l,\cdot}^{t,(l)}\big\|_{2}.
\]
We can write $\bb$ as 
\[
\bb=\sum\nolimits _{j=1}^{n}\underbrace{\be_{j}\left(\delta_{lj}-p\right)C_{lj}}_{:=\bu_{j}}=\sum\nolimits _{j=1}^{n}\bu_{j}.
\]
Note that for all $j$, one has 
\begin{align*}
L & \triangleq\max_{1\leq j\leq n}\twonorm{\bu_{j}}\leq\inftynorm{\bC},\\
V & \triangleq\left\Vert \sum\nolimits _{j=1}^{n}\EE\big[\left(\delta_{lj}-p\right)^{2}\big]C_{lj}^{2}\be_{j}^{\top}\be_{j}\right\Vert \leq p\|\bm{C}\|_{\infty}^{2}\left\Vert \sum\nolimits _{j=1}^{n}\be_{j}^{\top}\be_{j}\right\Vert =np\left\Vert \bm{C}\right\Vert _{\infty}^{2}.
\end{align*}
Then the matrix Bernstein inequality~\cite[Theorem 6.1.1]{Tropp:2015:IMC:2802188.2802189}
reveals that 
\begin{align*}
\left\Vert \bb\right\Vert _{2} & \lesssim\sqrt{V\log n}+L\log n\lesssim\sqrt{np\log n}\inftynorm{\bC}+\inftynorm{\bC}\log n\\
 & \lesssim\sqrt{np\log n}\inftynorm{\bC}\\
 & \lesssim\sqrt{np\log n}\big\|\bm{F}^{t,(l)}\bm{R}^{t,\left(l\right)}-\bm{F}^{\star}\big\|_{2,\infty}\left\Vert \bm{F}^{\star}\right\Vert _{2,\infty}
\end{align*}
with probability exceeding $1-O\left(n^{-100}\right)$ as long as
$np\gg\log n$. Here the last relation uses~(\ref{eq:Xt-M-inf-MC}).
Observe that $\|\bm{X}^{t,(l)}\|_{2,\infty}\leq2\|\bm{F}^{\star}\|_{2,\infty}$
as long as $\frac{\sigma}{\sigma_{\min}}\sqrt{\frac{n}{p}}\ll\frac{1}{\sqrt{\kappa^{2}\log n}}$;
see Lemma~\ref{lemma:immediate-consequence}. Making use of the incoherence
condition~(\ref{eq:F-singular-value}) to get
\[
\Fnorm{\bB_{2}}\lesssim\sqrt{\frac{n\log n}{p}}\big\|\bm{F}^{t,(l)}\bm{R}^{t,\left(l\right)}-\bm{F}^{\star}\big\|_{2,\infty}\left\Vert \bm{F}^{\star}\right\Vert _{2,\infty}^{2}\lesssim\sqrt{\frac{\mu^{2}r^{2}\log n}{np}}\big\|\bm{F}^{t,(l)}\bm{R}^{t,\left(l\right)}-\bm{F}^{\star}\big\|_{2,\infty}\sigma_{\max}.
\]
We can then conclude the proof. \end{proof}

\begin{proof}[Proof of Claim~\ref{claim:C1C2}] By the unitary
invariance of the Frobenius norm, one has 
\[
\Fnorm{\bC_{1}}=p^{-1}\big\|\cP_{\Omega_{l,\cdot}}\left(\bE\right)\bY^{t,(l)}\big\|_{\mathrm{F}}.
\]
Since the entries of $\cP_{\Omega_{l,\cdot}}(\bE)$ are all zero except
those on the $l$th row, we have 
\[
p\Fnorm{\bC_{1}}=\Big\|\sum\nolimits _{j=1}^{n}\underbrace{\delta_{lj}E_{lj}\bY_{j,\cdot}^{t,(l)}}_{:=\bu_{j}}\Big\|_{2},
\]
where we denote $\delta_{lj}\triangleq\ind_{(l,j)\in\Omega}$. Since
$\bY^{t,(l)}$ is independent of $\{\delta_{lj}\}_{1\leq j\leq n}$
and $\{E_{lj}\}_{1\leq j\leq n}$, the vectors $\{\bu_{j}\}_{1\leq j\leq n}$
are independent conditioning on $\bm{Y}^{t,(l)}$. Therefore, from
now on we shall condition on a fixed $\bm{Y}^{t,(l)}$. It is easy
to verify that 
\[
\big\|\|\bm{u}_{j}\|_{2}\big\|_{\psi_{1}}\leq\big\|\bm{Y}^{t,\left(l\right)}\big\|_{2,\infty}\left\Vert \delta_{lj}E_{lj}\right\Vert _{\psi_{1}}\lesssim\sigma\big\|\bm{Y}^{t,\left(l\right)}\big\|_{2,\infty},
\]
where $\|\cdot\|_{\psi_{1}}$ denotes the sub-exponential norm~\cite[Section 6]{MR2906869}.
Further, one can calculate 
\begin{align*}
V & :=\left\Vert \EE\left[\sum\nolimits _{j=1}^{n}\left(\delta_{lj}E_{lj}\right)^{2}\bm{Y}_{j,\cdot}^{t,\left(l\right)}\bm{Y}_{j,\cdot}^{t,\left(l\right)\top}\right]\right\Vert \lesssim p\sigma^{2}\left\Vert \EE\left[\sum\nolimits _{j=1}^{n}\bm{Y}_{j,\cdot}^{t,\left(l\right)}\bm{Y}_{j,\cdot}^{t,\left(l\right)\top}\right]\right\Vert =p\sigma^{2}\big\|\bm{Y}^{t,\left(l\right)}\big\|_{\mathrm{F}}^{2}.
\end{align*}
Invoke the matrix Bernstein inequality~\cite[Proposition 2]{MR2906869}
to discover that with probability at least $1-O\left(n^{-100}\right)$,
\begin{align*}
\Big\|\sum\nolimits _{j=1}^{n}\bm{u}_{j}\Big\|_{2} & \lesssim\sqrt{V\log n}+\Big\|\|\bm{u}_{j}\|_2\Big\|_{\psi_{1}}\log^{2}n\\
 & \lesssim\sqrt{p\sigma^{2}\left\Vert \bm{Y}^{t,\left(l\right)}\right\Vert _{\mathrm{F}}^{2}\log n}+\sigma\big\|\bm{Y}^{t,\left(l\right)}\big\|_{2,\infty}\log^{2}n\\
 & \lesssim\sigma\sqrt{np\log n}\big\|\bm{Y}^{t,\left(l\right)}\big\|_{2,\infty}+\sigma\big\|\bm{Y}^{t,\left(l\right)}\big\|_{2,\infty}\log^{2}n\\
 & \lesssim\sigma\sqrt{np\log n}\big\|\bm{Y}^{t,\left(l\right)}\big\|_{2,\infty},
\end{align*}
where the third inequality follows from $\| \bm{Y}^{t,\left(l\right)}\|_{\mathrm{F}}^{2}\leq n\| \bm{Y}^{t,\left(l\right)}\| _{2,\infty}^{2}$,
and the last inequality holds if $np\gg\log^{3}n$. We then complete
the proof by observing that $\|\bm{Y}^{t,\left(l\right)}\|_{2,\infty}\leq2\|\bm{F}^{\star}\|_{2,\infty}$
as long as $\frac{\sigma}{\sigma_{\min}}\sqrt{\frac{n}{p}}\ll\frac{1}{\sqrt{\kappa^{2}\log n}}$;
see Lemma~\ref{lemma:immediate-consequence}. The bound on $\bm{C}_{2}$
follows from similar arguments to that used to bound $\bm{B}_{2}$. \end{proof}

\subsection{Proof of Lemma~\ref{lemma:loo-l-contraction}}

Without loss of generality, we assume $1\leq l\leq n$. One can then
decompose $(\bF^{t+1,(l)}\bH^{t+1,(l)}-\bF^{\star})_{l,\cdot}$ as
\begin{align*}
 & (\bF^{t+1,(l)}\bH^{t+1,(l)}-\bF^{\star})_{l,\cdot}=\bX_{l,\cdot}^{t+1,(l)}\bH^{t+1,(l)}-\bX_{l,\cdot}^{\star}\\
 & =\left\{ \bX_{l,\cdot}^{t,(l)}-\eta\big[(\bX^{t,(l)}\bY^{t,(l)\top}-\bM^{\star})_{l,\cdot}\bY^{t,(l)}+\tfrac{\lambda}{p}\bX_{l,\cdot}^{t,(l)}\big]\right\} \bH^{t+1,(l)}-\bX_{l,\cdot}^{\star}\\
 & =\bX_{l,\cdot}^{t,(l)}\bH^{t+1,(l)}-\bX_{l,\cdot}^{\star}-\eta\big[\big(\bX^{t,(l)}\bY^{t,(l)\top}-\bM^{\star}\big)_{l,\cdot}\bY^{t,(l)}+\tfrac{\lambda}{p}\bX_{l,\cdot}^{t,(l)}\big]\bH^{t+1,(l)}\\
 & =\underbrace{\bX_{l,\cdot}^{t,(l)}\bH^{t,(l)}-\bX_{l,\cdot}^{\star}-\eta\big[\big(\bX^{t,(l)}\bY^{t,(l)\top}-\bM^{\star}\big)_{l,\cdot}\bY^{t,(l)}+\tfrac{\lambda}{p}\bX_{l,\cdot}^{t,(l)}\big]\bH^{t,(l)}}_{:=\bm{a}_{1}}\\
 & \quad+\underbrace{\left\{ \bX_{l,\cdot}^{t,(l)}\bH^{t,(l)}-\eta\big[\big(\bX^{t,(l)}\bY^{t,(l)\top}-\bM^{\star}\big)_{l,\cdot}\bY^{t,(l)}+\tfrac{\lambda}{p}\bX_{l,\cdot}^{t,(l)}\big]\bH^{t,(l)}\right\} \left[\big(\bH^{t,(l)}\big)^{-1}\bH^{t+1,(l)}-\bI_{r}\right]}_{:=\bm{a}_{2}}.
\end{align*}
Note that here $\bm{a}_{1}$ and $\bm{a}_{2}$ are $r$-dimensional
row vectors. In the sequel, let us control $\|\bm{a}_{1}\|_{2}$ and
$\|\bm{a}_{2}\|_{2}$ separately. 
\begin{enumerate}
\item We begin with $\ba_{1}$. For notational convenience, define $\bDelta^{t,(l)}\triangleq\left[\footnotesize\begin{array}{c}
\bm{\Delta}_{\bm{X}}^{t,(l)}\\
\bm{\Delta}_{\bm{Y}}^{t,(l)}
\end{array}\right]$, where $\bDelta_{\bX}^{t,(l)}\triangleq\bX^{t,(l)}\bH^{t,(l)}-\bX^{\star}$
and $\bDelta_{\bY}^{t,(l)}\triangleq\bY^{t,(l)}\bH^{t,(l)}-\bY^{\star}$.
Then $\bm{a}_{1}$ can be rewritten as 
\begin{align*}
\ba_{1} & =(\bDelta_{\bX}^{t,(l)})_{l,\cdot}-\eta\left[(\bm{\Delta}_{\bm{X}}^{t,(l)})_{l,\cdot}(\bm{Y}^{t,(l)}\bm{H}^{t,(l)})^{\top}+\bm{X}_{l,\cdot}^{\star}\bm{\Delta}_{\bm{Y}}^{t,(l)\top}\right]\bm{Y}^{t,(l)}\bm{H}^{t,(l)}-\eta\frac{\lambda}{p}\bX_{l,\cdot}^{t,(l)}\bH^{t,(l)}\\
 & =(\bDelta_{\bX}^{t,(l)})_{l,\cdot}\left[\bm{I}_{r}-\eta(\bm{Y}^{t,(l)}\bm{H}^{t,(l)})^{\top}\bm{Y}^{t,(l)}\bm{H}^{t,(l)}\right]-\eta\bm{X}_{l,\cdot}^{\star}\bm{\Delta}_{\bm{Y}}^{t,(l)\top}\bm{Y}^{t,(l)}\bm{H}^{t,(l)}-\eta\frac{\lambda}{p}\bX_{l,\cdot}^{t,(l)}\bH^{t,(l)},
\end{align*}
which together with the triangle inequality yields 
\begin{align*}
\twonorm{\ba_{1}} & \leq\big\|\bm{I}_{r}-\eta(\bm{Y}^{t,(l)}\bm{H}^{t,(l)})^{\top}\bm{Y}^{t,(l)}\bm{H}^{t,(l)}\big\|\cdot\big\|\big(\bDelta_{\bX}^{t,(l)}\big)_{l,\cdot}\big\|_{2}\\
 & \qquad+\eta\|\bm{X}_{l,\cdot}^{\star}\|_{2}\cdot\big\|\bm{\Delta}_{\bm{Y}}^{t,(l)}\big\|\cdot\big\|\bm{Y}^{t,(l)}\bm{H}^{t,(l)}\big\|+\eta\frac{\lambda}{p}\|\bX^{t,(l)}\bm{H}^{t,(l)}\|_{2,\infty}.
\end{align*}
In view of Lemma~\ref{lemma:immediate-consequence}, we have 
\begin{align}
\sigma_{\min}/2\leq\sigma_{\min}\left[(\bm{Y}^{t,(l)}\bm{H}^{t,(l)})^{\top}\bm{Y}^{t,(l)}\bm{H}^{t,(l)}\right] & \leq\sigma_{\max}\left[(\bm{Y}^{t,(l)}\bm{H}^{t,(l)})^{\top}\bm{Y}^{t,(l)}\bm{H}^{t,(l)}\right]\leq2\sigma_{\max},\nonumber \\
\|\bm{\Delta}_{\bm{Y}}^{t,(l)}\|\leq\|\bDelta^{t,(l)}\| & \leq2C_{\mathrm{op}}\left(\frac{\sigma}{\sigma_{\min}}\sqrt{\frac{n}{p}}+\frac{\lambda}{p\,\sigma_{\min}}\right)\left\Vert \bm{X}^{\star}\right\Vert ,\label{eq:Delta-t-l-op-norm}\\
\|\bm{Y}^{t,(l)}\bm{H}^{t,(l)}\|\leq\|\bm{F}^{t,(l)}\| & \leq2\|\bm{X}^{\star}\|\qquad\mathrm{and}\nonumber \\
\|\bX^{t,(l)}\bm{H}^{t,(l)}\|_{2,\infty}\leq\|\bm{F}^{t,(l)}\|_{2,\infty} & \leq2\|\bm{F}^{\star}\|_{2,\infty},\nonumber 
\end{align}
provided that the sample size obeys $n\gg\kappa\mu$ and that the
noise satisfies $\frac{\sigma}{\sigma_{\min}}\sqrt{\frac{n}{p}}\ll\frac{1}{\sqrt{\kappa^{2}\log n}}$.
These allow us to further upper bound $\|\bm{a}_{1}\|_{2}$ by 
\begin{align*}
\twonorm{\ba_{1}} & \leq\left(1-\frac{\eta\sigma_{\min}}{2}\right)\big\|(\bDelta_{\bX}^{t,(l)})_{l,\cdot}\big\|_{2}+4\eta C_{\mathrm{op}}\left(\frac{\sigma}{\sigma_{\min}}\sqrt{\frac{n}{p}}+\frac{\lambda}{p\,\sigma_{\min}}\right)\sigma_{\max}\|\bm{F}^{\star}\|_{2,\infty}+2\eta\frac{\lambda}{p}\|\bm{F}^{\star}\|_{2,\infty},
\end{align*}
as long as $\eta\leq1/(2\sigma_{\max})$. As an immediate consequence,
\begin{align}
\|\bm{a}_{1}\|_{2} & \leq C_{4}\kappa\left(\frac{\sigma}{\sigma_{\min}}\sqrt{\frac{n\log n}{p}}+\frac{\lambda}{p\,\sigma_{\min}}\right)\left\Vert \bm{F}^{\star}\right\Vert _{2,\infty}+4\eta C_{\mathrm{op}}\left(\frac{\sigma}{\sigma_{\min}}\sqrt{\frac{n}{p}}+\frac{\lambda}{p\,\sigma_{\min}}\right)\sigma_{\max}\|\bm{F}^{\star}\|_{2,\infty}+\frac{2\eta\lambda}{p}\|\bm{F}^{\star}\|_{2,\infty}\nonumber \\
 & \leq\|\bm{F}^{\star}\|_{2,\infty},\label{eq:a1-upper-bound}
\end{align}
where the first inequality follows from the induction hypothesis~(\ref{eq:induction-loo-l})
and the last one holds as long as $\frac{\sigma}{\sigma_{\min}}\sqrt{\frac{n}{p}}\ll\frac{1}{\sqrt{\kappa^{2}\log n}}$
and $\eta\ll1/\sigma_{\max}$. 
\item Next, we turn attention to $\|\ba_{2}\|_{2}$, which satisfies 
\[
\twonorm{\ba_{2}}=\twonorm{\left(\ba_{1}+\bX_{l,\cdot}^{\star}\right)\big[(\bH^{t,(l)})^{-1}\bH^{t+1,(l)}-\bI_{r}\big]}\leq\norm{(\bH^{t,(l)})^{-1}\bH^{t+1,(l)}-\bI_{r}}\twonorm{\ba_{1}+\bX_{l,\cdot}^{\star}}.
\]
From~(\ref{eq:a1-upper-bound}), it is easily seen that 
\[
\twonorm{\ba_{1}+\bX_{l,\cdot}^{\star}}\leq\twonorm{\ba_{1}}+\twotoinftynorm{\bF^{\star}}\leq2\twotoinftynorm{\bF^{\star}}.
\]
Regarding the term $\|(\bH^{t,(l)})^{-1}\bH^{t+1,(l)}-\bm{I}_{r}\|$,
we find the following claim useful. \begin{claim} \label{claim:loo-matrix}
With probability at least $1-O(n^{-100})$, we have 
\[
\norm{(\bH^{t,(l)})^{-1}\bH^{t+1,(l)}-\bm{I}_{r}}\lesssim\eta\kappa C_{\mathrm{op}}^{2}\left(\frac{\sigma}{\sigma_{\min}}\sqrt{\frac{n}{p}}+\frac{\lambda}{p\,\sigma_{\min}}\right)^{2}\sigma_{\max}+\eta^{2}C_{\mathrm{B}}\kappa^{2}\left(\frac{\sigma}{\sigma_{\min}}\sqrt{\frac{n}{p}}+\frac{\lambda}{p\,\sigma_{\min}}\right)\sqrt{r}\sigma_{\max}^{2},
\]
provided that $C_{\mathrm{op}} \gg 1$.
\end{claim}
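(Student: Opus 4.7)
Since $\bH^{t,(l)}$ is orthonormal, $\|(\bH^{t,(l)})^{-1}\bH^{t+1,(l)}-\bI_{r}\| = \|\bH^{t+1,(l)}-\bH^{t,(l)}\|$. Introduce $\bG\triangleq\bF^{t,(l)}\bH^{t,(l)}$; by the definition \eqref{eq:defn-H-t-l} of $\bH^{t,(l)}$, the matrix $\bF^{\star\top}\bG$ is symmetric positive semidefinite, i.e.\ $\bI_{r}$ itself is the best rotation aligning $\bG$ with $\bF^{\star}$. Using the equivariance $\nabla f^{(l)}(\bF\bR)=\nabla f^{(l)}(\bF)\bR$ for orthogonal $\bR$, the update rule gives $\bF^{t+1,(l)}\bH^{t,(l)}=\bG-\eta\nabla f^{(l)}(\bG)$, and so the best rotation aligning this perturbed matrix with $\bF^{\star}$ is precisely $(\bH^{t,(l)})^{-1}\bH^{t+1,(l)}$. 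In short, the task reduces to bounding how the optimal alignment rotation drifts off the identity under a single gradient step near an already-aligned reference point.

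To carry this out, I would apply Lemma~\ref{lemma:rotation-perturbation} with $\bF_{0}=\bF^{\star}$, $\bF_{1}=\bG$ and $\bF_{2}=\bG-\eta\nabla f^{(l)}(\bG)$; the proximity conditions are verified via the operator-norm induction hypothesis \eqref{eq:induction-op} together with its leave-one-out counterpart in Lemma~\ref{lemma:immediate-consequence}, and smallness of the gradient step. Examining the polar decomposition of $(\bG-\eta\nabla f^{(l)}(\bG))^{\top}\bF^{\star}=\bG^{\top}\bF^{\star}-\eta\nabla f^{(l)}(\bG)^{\top}\bF^{\star}$ and matching leading terms (the positive semidefinite piece $\bG^{\top}\bF^{\star}$ already dominates with smallest singular value $\asymp\sigma_{\min}$), one arrives at
\[
\big\|(\bH^{t,(l)})^{-1}\bH^{t+1,(l)}-\bI_{r}\big\|\;\lesssim\;\frac{\eta}{\sigma_{\min}}\,\big\|\mathsf{Asym}\big(\nabla f^{(l)}(\bG)^{\top}\bF^{\star}\big)\big\|,
\]
where $\mathsf{Asym}(\bM)\triangleq(\bM-\bM^{\top})/2$. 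The essential point is that only the antisymmetric part of the gradient perturbation forces a nontrivial rotation.

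The key structural observation---the source of the quadratic-in-error scaling---is that $f^{(l)}_{\mathsf{aug}}$ is invariant under right multiplication by any orthogonal matrix (since $\bX\bY^{\top}$, $\|\bX\|_{\mathrm{F}}^{2}+\|\bY\|_{\mathrm{F}}^{2}$, and $\|\bX^{\top}\bX-\bY^{\top}\bY\|_{\mathrm{F}}^{2}$ are all invariant). Differentiating $f^{(l)}_{\mathsf{aug}}(\bF\bR)$ at $\bR=\bI_{r}$ against antisymmetric directions yields the identity $\bF^{\top}\nabla f^{(l)}_{\mathsf{aug}}(\bF)=\nabla f^{(l)}_{\mathsf{aug}}(\bF)^{\top}\bF$ for every $\bF$; that is, $\nabla f^{(l)}_{\mathsf{aug}}(\bG)^{\top}\bG$ is automatically symmetric. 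Writing $\bF^{\star}=\bG-(\bG-\bF^{\star})$ and splitting $\nabla f^{(l)}=\nabla f^{(l)}_{\mathsf{aug}}+\nabla f^{(l)}_{\mathsf{diff}}$ as in \eqref{subeq:gradient-decomposition} then gives
\[
\mathsf{Asym}\big(\nabla f^{(l)}_{\mathsf{aug}}(\bG)^{\top}\bF^{\star}\big)=-\mathsf{Asym}\big(\nabla f^{(l)}_{\mathsf{aug}}(\bG)^{\top}(\bG-\bF^{\star})\big),
\]
so its norm is at most $\|\nabla f^{(l)}_{\mathsf{aug}}(\bG)\|\cdot\|\bG-\bF^{\star}\|$. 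Using Lemma~\ref{lemma:hessian} to smoothness-bound $\|\nabla f^{(l)}_{\mathsf{aug}}(\bG)\|\lesssim\sigma_{\max}\|\bG-\bF^{\star}\|+(\lambda/p)\sqrt{\sigma_{\max}}$ and invoking \eqref{eq:induction-op} to bound $\|\bG-\bF^{\star}\|\lesssim C_{\mathsf{op}}(\mathsf{err})\sqrt{\sigma_{\max}}$ (with $\mathsf{err}\triangleq\frac{\sigma}{\sigma_{\min}}\sqrt{n/p}+\frac{\lambda}{p\sigma_{\min}}$), the product is $\lesssim C_{\mathsf{op}}^{2}(\mathsf{err})^{2}\sigma_{\max}^{2}$; dividing by $\sigma_{\min}$ and multiplying by $\eta$ produces the first term of the claim. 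For the diff part, direct computation gives $\|\nabla f^{(l)}_{\mathsf{diff}}(\bG)\|\lesssim\|\bG\|\cdot\|\bX^{t,(l)\top}\bX^{t,(l)}-\bY^{t,(l)\top}\bY^{t,(l)}\|_{\mathrm{F}}$; applying the leave-one-out version of Lemma~\ref{lemma:balancing} and multiplying by $\eta\|\bF^{\star}\|/\sigma_{\min}$ yields $\eta^{2}C_{\mathrm{B}}\kappa^{2}(\mathsf{err})\sqrt{r}\sigma_{\max}^{2}$, the second term.

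The main obstacle, as I see it, is exactly this quadratic-in-$(\mathsf{err})$ gain: a naive Lipschitz estimate of $\|\nabla f^{(l)}(\bG)\|$ alone would only yield a bound linear in $\|\bG-\bF^{\star}\|$, which is too weak because the induction used to propagate the $\ell_{2}/\ell_{\infty}$ error budget in \eqref{eq:induction-loo-l} cannot afford the looser rate. The quadratic scaling is unlocked precisely by the rotational-invariance identity above, which kills the would-be leading-order antisymmetric contribution to $\nabla f^{(l)}_{\mathsf{aug}}(\bG)^{\top}\bF^{\star}$ at $\bG=\bF^{\star}$ and leaves only an $\|\bG-\bF^{\star}\|^{2}$-size residual driving the rotation drift.
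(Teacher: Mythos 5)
Your route is genuinely different from the paper's and structurally cleaner: instead of building the linearized auxiliary point $\tilde{\bF}^{t+1,(l)}$ and establishing that it is already aligned with $\bF^{\star}$ (Claim~\ref{claim:identity-rotation}), you keep the reference at $\bG=\bF^{t,(l)}\bH^{t,(l)}$ (aligned by optimality of $\bH^{t,(l)}$, so $\bG^{\top}\bF^{\star}$ is symmetric PSD) and exploit that the rotation-invariance of $f^{(l)}_{\mathsf{aug}}$ forces $\nabla f^{(l)}_{\mathsf{aug}}(\bF)^{\top}\bF$ to be symmetric for every $\bF$. This kills the leading term in $\mathsf{Asym}\big(\nabla f^{(l)}_{\mathsf{aug}}(\bG)^{\top}\bF^{\star}\big)$ and leaves a quantity bounded by $\|\nabla f^{(l)}_{\mathsf{aug}}(\bG)\|\cdot\|\bG-\bF^{\star}\|$, which is what produces the quadratic-in-error scaling. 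The symmetry identity, the decomposition $\bF^{\star}=\bG-(\bG-\bF^{\star})$, and the subsequent arithmetic for both the augmented and balance ($f_{\mathsf{diff}}$) contributions are correct. Both approaches achieve the same quadratic gain; the paper's buys it by engineering $\tilde{\bF}^{t+1,(l)}$ so that the residual $\bF^{t+1,(l)}\bH^{t,(l)}-\tilde{\bF}^{t+1,(l)}$ is already quadratically small in operator norm, whereas yours buys it by observing that only the antisymmetric part of the gradient perturbation can rotate the polar factor.

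The gap is the key step you assert but do not prove: $\|(\bH^{t,(l)})^{-1}\bH^{t+1,(l)}-\bI_{r}\|\lesssim\frac{\eta}{\sigma_{\min}}\big\|\mathsf{Asym}\big(\nabla f^{(l)}(\bG)^{\top}\bF^{\star}\big)\big\|$. Neither cited lemma delivers this. Lemma~\ref{lemma:rotation-perturbation} controls $\|\bF_1\bR_1-\bF_2\bR_2\|$, not the drift of the rotation matrix itself, and Lemma~\ref{lemma:Ma36} — the tool the paper actually uses — bounds $\|\mathsf{sgn}(\bm{S}+\bm{K})-\mathsf{sgn}(\bm{S})\|$ by the \emph{full} $\|\bm{K}\|/\sigma_{\min}(\bm{S})$, not by the antisymmetric part, which would yield only a linear-in-error bound and is too weak for the induction in Lemma~\ref{lemma:loo-l-contraction}. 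You therefore need a refined polar-perturbation lemma of your own: for symmetric PSD $\bm{S}$ and perturbation $\bm{K}$ with $\|\bm{K}\|\ll\sigma_{\min}(\bm{S})$, show that $\|\mathsf{sgn}(\bm{S}+\bm{K})-\bI_{r}\|\lesssim\|\mathsf{Asym}(\bm{K})\|/\sigma_{\min}(\bm{S})+\|\bm{K}\|^{2}/\sigma_{\min}(\bm{S})^{2}$. The first-order part follows from the Sylvester equation $\bm{L}\bm{S}+\bm{S}\bm{L}=\bm{K}-\bm{K}^{\top}$ for the antisymmetric generator $\bm{L}$ (and the integral representation gives $\|\bm{L}\|\leq\|\mathsf{Asym}(\bm{K})\|/\sigma_{\min}(\bm{S})$ without a dimensional loss), but the nonlinear remainder must be controlled and shown to be absorbed under $\eta\ll1/(\kappa^{2}\sqrt{r}\sigma_{\max})$. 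Until that lemma is in place, the step that produces the quadratic gain is an assertion rather than a proof.
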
 
\end{enumerate}
Finally, taking the bounds on $\|\bm{a}_{1}\|_{2}$ and $\|\bm{a}_{2}\|_{2}$
collectively yields that: for some absolute constant $\tilde{C}>0$,
\begin{align*}
 & \big\|\big(\bF^{t+1,(l)}\bH^{t+1,(l)}-\bF^{\star}\big)_{l,\cdot}\big\|_{2}\leq\twonorm{\ba_{1}}+\twonorm{\ba_{2}}\\
 & \leq\left(1-\frac{\eta}{2}\sigma_{\min}\right)C_{4}\kappa\left(\frac{\sigma}{\sigma_{\min}}\sqrt{\frac{n\log n}{p}}+\frac{\lambda}{p\,\sigma_{\min}}\right)\left\Vert \bm{F}^{\star}\right\Vert _{2,\infty}+4C_{\mathrm{op}}\eta\left(\frac{\sigma}{\sigma_{\min}}\sqrt{\frac{n}{p}}+\frac{\lambda}{p\,\sigma_{\min}}\right)\sigma_{\max}\twotoinftynorm{\bF^{\star}}+\frac{2\eta\lambda}{p}\twotoinftynorm{\bF^{\star}}\\
 & \quad+\tilde{C}\eta\kappa C_{\mathrm{op}}^{2}\left(\frac{\sigma}{\sigma_{\min}}\sqrt{\frac{n}{p}}+\frac{\lambda}{p\,\sigma_{\min}}\right)^{2}\sigma_{\max}\twotoinftynorm{\bF^{\star}}+\tilde{C}\eta^{2}C_{\mathrm{B}}\kappa^{2}\left(\frac{\sigma}{\sigma_{\min}}\sqrt{\frac{n}{p}}+\frac{\lambda}{p\,\sigma_{\min}}\right)\sqrt{r}\sigma_{\max}^{2}\twotoinftynorm{\bF^{\star}}\\
 & \leq C_{4}\kappa\left(\frac{\sigma}{\sigma_{\min}}\sqrt{\frac{n\log n}{p}}+\frac{\lambda}{p\,\sigma_{\min}}\right)\left\Vert \bm{F}^{\star}\right\Vert _{2,\infty},
\end{align*}
provided that $C_{4}\gg C_{\mathrm{op}}$, $\frac{\sigma}{\sigma_{\min}}\sqrt{\frac{n}{p}}\ll1/\kappa$
and $\eta\ll1/(\kappa^{2}\sqrt{r}\sigma_{\max})$. This finishes the
proof of the lemma. It remains to establish Claim~\ref{claim:loo-matrix}.

\begin{proof}[Proof of Claim~\ref{claim:loo-matrix}] To facilitate
analysis, we introduce an auxiliary point $\tilde{\bm{F}}^{t+1}\triangleq\left[\begin{array}{c}
\tilde{\bX}^{t+1,(l)}\\
\tilde{\bY}^{t+1,(l)}
\end{array}\right]$ where 
\begin{align*}
\tilde{\bX}^{t+1,(l)} & =\bX^{t,(l)}\bH^{t,(l)}-\eta\left[\tfrac{1}{p}\cP_{\Omega_{-l,\cdot}}\left(\bX^{t,(l)}\bY^{t,(l)\top}-\bM^{\star}-\bE\right)+\cP_{l,\cdot}\left(\bX^{t,(l)}\bY^{t,(l)\top}-\bM^{\star}\right)\right]\bY^{\star}\\
 & \quad-\eta\frac{\lambda}{p}\bX^{\star}-\frac{\eta}{2}\bX^{\star}\bH^{t,(l)\top}\left(\bX^{t,(l)\top}\bX^{t,(l)}-\bY^{t,(l)\top}\bY^{t,(l)}\right)\bH^{t,(l)},\\
\tilde{\bY}^{t+1,(l)} & =\bY^{t,(l)}\bH^{t,(l)}-\eta\left[\tfrac{1}{p}\cP_{\Omega_{-l,\cdot}}\left(\bX^{t,(l)}\bY^{t,(l)\top}-\bM^{\star}-\bE\right)+\cP_{l,\cdot}\left(\bX^{t,(l)}\bY^{t,(l)\top}-\bM^{\star}\right)\right]^{\top}\bX^{\star}\\
 & \quad-\eta\frac{\lambda}{p}\bY^{\star}-\frac{\eta}{2}\bY^{\star}\bH^{t,(l)\top}\left(\bY^{t,(l)\top}\bY^{t,(l)}-\bX^{t,(l)\top}\bX^{t,(l)}\right)\bH^{t,(l)}.
\end{align*}
We first claim that $\bI_{r}$ is the best rotation matrix to align
$\tilde{\bF}^{t+1,(l)}$ and $\bF^{\star}$; its proof is similar
to that of Claim~\ref{claim:already-aligned} and is hence omitted
for brevity. \begin{claim}\label{claim:identity-rotation}One has
\[
\bm{I}_{r}=\arg\min_{\bm{R}\in\mathcal{O}^{r}}\big\|\tilde{\bm{F}}^{t+1,(l)}\bm{R}-\bm{F}^{\star}\big\|_{\mathrm{F}}\qquad\mathrm{and}\qquad\sigma_{\min}\big(\tilde{\bF}^{t+1,(l)\top}\bF^{\star}\big)\geq\sigma_{\min}/2.
\]
\end{claim}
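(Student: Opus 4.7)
The plan is to mimic the argument used to establish Claim~\ref{claim:already-aligned}: by \cite[Lemma~35]{ma2017implicit}, it suffices to verify that the $r\times r$ matrix $\bF^{\star\top}\tilde{\bF}^{t+1,(l)}$ is symmetric and that its smallest singular value is at least $\sigma_{\min}/2$. The second conclusion of the claim then follows immediately from the latter bound, while the first follows because a symmetric positive semidefinite cross-product is aligned with the identity rotation.

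First I would check symmetry. Write $\tilde{\bF}^{t+1,(l)}=\bF^{t,(l)}\bH^{t,(l)}-\eta\,\tilde{\bm{G}}$, where $\tilde{\bm{G}}$ collects the gradient-like terms appearing in the definition of $\tilde{\bm{X}}^{t+1,(l)}$ and $\tilde{\bm{Y}}^{t+1,(l)}$. Since $\bF^{\star\top}\bF^{t,(l)}\bH^{t,(l)}=\bigl(\bX^{\star\top}\bX^{t,(l)}+\bY^{\star\top}\bY^{t,(l)}\bigr)\bH^{t,(l)}$ is symmetric by \cite[Lemma~35]{ma2017implicit}, it remains to check symmetry of $\bF^{\star\top}\tilde{\bm{G}}$. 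Setting $\bm{W}\triangleq\mathcal{P}_{\Omega_{-l,\cdot}}(\bX^{t,(l)}\bY^{t,(l)\top}-\bM^{\star}-\bE)+p\,\mathcal{P}_{l,\cdot}(\bX^{t,(l)}\bY^{t,(l)\top}-\bM^{\star})$, the off-diagonal data-fitting pieces combine into
\[
\tfrac{1}{p}\bigl(\bX^{\star\top}\bm{W}\bY^{\star}+\bY^{\star\top}\bm{W}^{\top}\bX^{\star}\bigr),
\]
which is manifestly symmetric, while the ridge terms contribute $\tfrac{\lambda}{p}(\bX^{\star\top}\bX^{\star}+\bY^{\star\top}\bY^{\star})$, also symmetric. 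Finally, the ``balancing'' pieces contribute
\[
\tfrac{1}{2}\bigl(\bX^{\star\top}\bX^{\star}-\bY^{\star\top}\bY^{\star}\bigr)\,\bH^{t,(l)\top}\bigl(\bX^{t,(l)\top}\bX^{t,(l)}-\bY^{t,(l)\top}\bY^{t,(l)}\bigr)\bH^{t,(l)},
\]
and this term vanishes identically because $\bX^{\star\top}\bX^{\star}=\bY^{\star\top}\bY^{\star}=\bSigma^{\star}$ by the definitions \eqref{eq:defn-Xstar-Ystar}. Hence $\bF^{\star\top}\tilde{\bF}^{t+1,(l)}$ is symmetric.

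Next I would establish the singular value bound. The strategy is to produce a leave-one-out analogue of \eqref{eq:lem_op_1}, namely
\[
\big\|\tilde{\bF}^{t+1,(l)}-\bF^{\star}\big\|\leq\tfrac{1}{\sqrt{2}\kappa}\left\Vert \bm{X}^{\star}\right\Vert ,
\]
and then invoke Weyl's inequality: $\sigma_{\min}(\bF^{\star\top}\tilde{\bF}^{t+1,(l)})\geq\sigma_{\min}(\bF^{\star\top}\bF^{\star})-\|\bF^{\star}\|\cdot\|\tilde{\bF}^{t+1,(l)}-\bF^{\star}\|\geq 2\sigma_{\min}-\sigma_{\min}=\sigma_{\min}\geq\sigma_{\min}/2$. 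To obtain the perturbation bound, I would repeat the decomposition used for the term $\alpha_{2}$ in the proof of Lemma~\ref{lemma:operator-contraction}, but with $(\bX^{t},\bY^{t},\bH^{t})$ replaced by $(\bX^{t,(l)},\bY^{t,(l)},\bH^{t,(l)})$ and with $\cP_{\Omega}$ replaced by $\cP_{\Omega_{-l,\cdot}}+p\,\cP_{l,\cdot}$. The extra row-$l$ summand is easy to control because $\cP_{l,\cdot}(\bX^{t,(l)}\bY^{t,(l)\top}-\bM^{\star})$ is rank-one supported on a single row, and its spectral norm is at most $\sqrt{n}\|\bX^{t,(l)}\bY^{t,(l)\top}-\bM^{\star}\|_{\infty}$, which by the incoherence bound \eqref{eq:Xt-M-inf-MC} and the induction hypotheses \eqref{eq:induction-loo-dist}--\eqref{eq:induction-2-infty} is absorbed into the same order as the other error terms. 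All remaining contributions are controlled exactly as in Lemma~\ref{lemma:operator-contraction}, using the leave-one-out analogues of \eqref{subeq:immediate-consequence-F-t} furnished by Lemma~\ref{lemma:immediate-consequence}.

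The main obstacle is purely bookkeeping: one has to re-run the entire $\alpha_{2}$ analysis of Lemma~\ref{lemma:operator-contraction} with the leave-one-out sampling operator and with $\bm{R}^{t,(l)}$-style rotations in place, while verifying that the replacement $\cP_{\Omega}\mapsto\cP_{\Omega_{-l,\cdot}}+p\,\cP_{l,\cdot}$ does not worsen any of the concentration bounds on $\|\cP_{\Omega}(\bE)\|$ and $\|\cP_{\Omega}(\bm{1}\bm{1}^{\top})-p\bm{1}\bm{1}^{\top}\|$ that were invoked there. The cancellation identified in the symmetry step is what makes the argument go through cleanly; without the exact balance $\bX^{\star\top}\bX^{\star}=\bY^{\star\top}\bY^{\star}$, the ``augmented'' balancing terms would break symmetry and one would need the quantitative approximate balancedness hypothesis \eqref{eq:induction-balancing} to recover only an approximate symmetry, which would not suffice.
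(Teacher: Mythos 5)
Your proposal matches the paper's intent: the paper explicitly omits the proof of Claim~\ref{claim:identity-rotation}, stating only that it is ``similar to that of Claim~\ref{claim:already-aligned}'', and your plan is precisely the leave-one-out adaptation of that argument. Your symmetry calculation is correct, and you correctly identify the key cancellation: the balancing terms contribute $\tfrac{\eta}{2}(\bX^{\star\top}\bX^{\star}-\bY^{\star\top}\bY^{\star})\bH^{t,(l)\top}(\bX^{t,(l)\top}\bX^{t,(l)}-\bY^{t,(l)\top}\bY^{t,(l)})\bH^{t,(l)}$, which vanishes exactly because $\bX^{\star\top}\bX^{\star}=\bY^{\star\top}\bY^{\star}=\bSigma^{\star}$. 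The route via Weyl's inequality to get $\sigma_{\min}(\bF^{\star\top}\tilde{\bF}^{t+1,(l)})\geq 2\sigma_{\min}-\sigma_{\min}=\sigma_{\min}\geq\sigma_{\min}/2$, after establishing a leave-one-out analogue of~\eqref{eq:lem_op_1}, is exactly what the paper would do.

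Two small bookkeeping points you may want to tighten. First, you cite only the coarse bounds~\eqref{subeq:immediate-consequence-F-t} (which give $\|\bF^{t,(l)}\bH^{t,(l)}-\bF^{\star}\|\leq\|\bX^{\star}\|$) to drive the $\alpha_{2}$-style argument, but to reproduce the sharp conclusion $\|\tilde{\bF}^{t+1,(l)}-\bF^{\star}\|\leq(\sqrt{2}\kappa)^{-1}\|\bX^{\star}\|$ you actually need the fine operator-norm control $\|\bF^{t,(l)}\bH^{t,(l)}-\bF^{\star}\|\lesssim C_{\mathrm{op}}\big(\tfrac{\sigma}{\sigma_{\min}}\sqrt{n/p}+\tfrac{\lambda}{p\sigma_{\min}}\big)\|\bX^{\star}\|$; this follows by triangle inequality from~\eqref{eq:induction-op} plus $\|\bF^{t}\bH^{t}-\bF^{t,(l)}\bH^{t,(l)}\|_{\mathrm{F}}\leq 5\kappa\|\bF^{t}\bH^{t}-\bF^{t,(l)}\bR^{t,(l)}\|_{\mathrm{F}}$ (Lemma~\ref{lemma:immediate-consequence}, part~3) and~\eqref{eq:induction-loo-dist}, and is not hard, but it is the finer bound you should invoke rather than~\eqref{subeq:immediate-consequence-F-t}. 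Second, your worry about the row-$l$ summand in the $\beta_{3}$-analogue is a non-issue: after replacing $\tfrac{1}{p}\cP_{\Omega}$ by $\tfrac{1}{p}\cP_{\Omega_{-l,\cdot}}+\cP_{l,\cdot}$, the debiasing operator $\tfrac{1}{p}\cP_{\Omega_{-l,\cdot}}+\cP_{l,\cdot}-\mathrm{id}$ is identically zero on row $l$, so that row cancels and there is nothing extra to control. Neither point is a gap in the argument, just places where the bookkeeping is slightly different from what you sketched.
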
With this claim at hand, we intend to invoke Lemma~\ref{lemma:Ma36}
with 
\[
\bm{S}=\tilde{\bF}^{t+1,(l)\top}\bF^{\star},\quad\bm{K}=\big(\bF^{t+1,(l)}\bH^{t,(l)}-\tilde{\bF}^{t+1,(l)}\big)^{\top}\bF^{\star}
\]
to get 
\begin{align}
\big\|(\bH^{t,(l)})^{-1}\bH^{t+1,(l)}-\bm{I}_{r}\big\| & =\|\mathsf{sgn}(\bm{S}+\bm{K})-\mathsf{sgn}(\bm{S})\|\leq\frac{1}{\sigma_{\min}(\bm{S})}\|\bm{K}\|\nonumber \\
 & =\frac{1}{\sigma_{\min}(\tilde{\bF}^{t+1,(l)\top}\bF^{\star})}\big\|\big(\bF^{t+1,(l)}\bH^{t,(l)}-\tilde{\bF}^{t+1,(l)}\big)^{\top}\bF^{\star}\big\|.\nonumber \\
 & \leq\frac{2}{\sigma_{\min}}\big\|\bF^{t+1,(l)}\bH^{t,(l)}-\tilde{\bF}^{t+1,(l)}\big\|\|\bm{F}^{\star}\|,\label{eq:rotation-diff}
\end{align}
where the last line uses Claim~\ref{claim:identity-rotation}. Here
$\mathsf{sgn}(\bm{A})=\bm{U}\bm{V}^{\top}$ for any matrix $\bm{A}$
with SVD $\bm{U}\bm{\Sigma}\bm{V}^{\top}$. It then boils down to
controlling $\|\bF^{t+1,(l)}\bH^{t,(l)}-\tilde{\bF}^{t+1,(l)}\|$,
for which we have 
\begin{align*}
\bF^{t+1,(l)}\bH^{t,(l)}-\tilde{\bF}^{t+1,(l)} & =\eta\left[\begin{matrix}\bB & \bm{0}\\
\bm{0} & \bB^{\top}
\end{matrix}\right]\left[\begin{matrix}\bDelta_{\bY}^{t,(l)}\\
\bDelta_{\bX}^{t,(l)}
\end{matrix}\right]+\frac{\eta}{2}\left[\begin{array}{c}
\bm{X}^{\star}\\
-\bm{Y}^{\star}
\end{array}\right]\bH^{t,(l)\top}\bm{C}\bH^{t,(l)}-\eta\frac{\lambda}{p}\bDelta^{t,(l)},
\end{align*}
where we denote 
\begin{align*}
\bB & \triangleq-p^{-1}\cP_{\Omega_{-l,\cdot}}\big(\bX^{t,(l)}\bY^{t,(l)\top}-\bM^{\star}-\bE\big)-\cP_{l,\cdot}\big(\bX^{t,(l)}\bY^{t,(l)\top}-\bM^{\star}\big);\\
\bC & \triangleq\bX^{t,(l)\top}\bX^{t,(l)}-\bY^{t,(l)\top}\bY^{t,(l)}.
\end{align*}
This enables us to obtain 
\begin{equation}
\big\|\bF^{t+1,(l)}\bH^{t,(l)}-\tilde{\bF}^{t+1,(l)}\big\|\leq\eta\|\bm{B}\|\big\|\bm{\Delta}^{t,(l)}\big\|+\frac{\eta}{2}\|\bm{F}^{\star}\|\|\bm{C}\|_{\mathrm{F}}+\frac{\eta\lambda}{p}\|\bm{\Delta}^{t,(l)}\|.\label{eq:upper-bound-111}
\end{equation}
In view of Lemma~\ref{lemma:balancing}, one has 
\begin{equation}
\|\bm{C}\|_{\mathrm{F}}\leq C_{\mathrm{B}}\kappa\eta\left(\frac{\sigma}{\sigma_{\min}}\sqrt{\frac{n}{p}}+\frac{\lambda}{p\,\sigma_{\min}}\right)\sqrt{r}\sigma_{\max}^{2}.\label{eq:upper-bound-222}
\end{equation}
We are left with bounding $\|\bm{B}\|$. Decompose $\bB$ into 
\begin{align*}
\bB=\underbrace{-\tfrac{1}{p}\cP_{\Omega}\left(\bX^{t,(l)}\bY^{t,(l)\top}-\bM^{\star}\right)}_{:=\bB_{1}}+\underbrace{\tfrac{1}{p}\cP_{\Omega_{l,\cdot}}\left(\bX^{t,(l)}\bY^{t,(l)\top}-\bM^{\star}\right)-\cP_{l,\cdot}\left(\bX^{t,(l)}\bY^{t,(l)\top}-\bM^{\star}\right)}_{:=\bB_{2}}+\underbrace{\tfrac{1}{p}\cP_{\Omega_{-l,\cdot}}\left(\bE\right)}_{:=\bB_{3}}.
\end{align*}
To control $\bB_{1}$, following the same argument in Lemma~\ref{lemma:P-tilde}, we see that 
\begin{align*}
\norm{\bB_{1}} & \leq\big\| p^{-1}\cP_{\Omega}(\bX^{t,(l)}\bY^{t,(l)\top}-\bM^{\star})-(\bX^{t,(l)}\bY^{t,(l)\top}-\bM^{\star})\big\|+\big\|\bX^{t,(l)}\bY^{t,(l)\top}-\bM^{\star}\big\|\\
 & \lesssim\sqrt{n/p}\,\big\|\bF^{t,(l)}\bR^{t,(l)}-\bF^{\star}\big\|_{2,\infty}\twotoinftynorm{\bF^{\star}}+\big\|\bF^{t,(l)}\bR^{t,(l)}-\bF^{\star}\big\|\norm{\bF^{\star}}.
\end{align*}
We now move on to $\|\bB_{2}\|$, which is equal to $\|\bb\|_{2}/p$
defined in the proof of Claim~\ref{claim:B2}, namely, 
\[
\norm{\bB_{2}}=\twonorm{\bb}/p\lesssim\sqrt{n\log n/p}\,\big\|\bF^{t,(l)}\bR^{t,(l)}-\bF^{\star}\big\|_{2,\infty}\twotoinftynorm{\bF^{\star}}.
\]
The last term $\bB_{3}$ can be easily bound via Lemma~\ref{lemma:noise-bound},
that is, 
\[
\norm{\bB_{3}}\leq p^{-1}\norm{\cP_{\Omega}\left(\bE\right)}\lesssim\sigma\sqrt{n/p}.
\]
Combining the above three bounds with Lemma~\ref{lemma:immediate-consequence},
we arrive at 
\begin{align}
\norm{\bB} & \leq\norm{\bB_{1}}+\norm{\bB_{2}}+\norm{\bB_{3}}\nonumber \\
 & \lesssim\sqrt{\frac{n\log n}{p}}\big\|\bF^{t,(l)}\bR^{t,(l)}-\bF^{\star}\big\|_{2,\infty}\twotoinftynorm{\bF^{\star}}+\big\|\bF^{t,(l)}\bR^{t,(l)}-\bF^{\star}\big\|\norm{\bF^{\star}}+\sigma\sqrt{\frac{n}{p}}\nonumber \\
 & \lesssim\sqrt{\frac{n\log n}{p}}\left(C_{\infty}\kappa+C_{3}\right)\left(\frac{\sigma}{\sigma_{\min}}\sqrt{\frac{n\log n}{p}}+\frac{\lambda}{p\,\sigma_{\min}}\right)\frac{\mu r}{n}\sigma_{\max}+2C_{\mathrm{op}}\left(\frac{\sigma}{\sigma_{\min}}\sqrt{\frac{n}{p}}+\frac{\lambda}{p\,\sigma_{\min}}\right)\sigma_{\max}+\sigma\sqrt{\frac{n}{p}}\nonumber \\
 & \lesssim C_{\mathrm{op}}\left(\frac{\sigma}{\sigma_{\min}}\sqrt{\frac{n}{p}}+\frac{\lambda}{p\,\sigma_{\min}}\right)\sigma_{\max},\label{eq:upper-bound-333}
\end{align}
provided that $n^{2}p\gg\kappa^{2}\mu^{2}r^{2}n\log^2 n$ and $C_{\mathrm{op}}>0$
is large enough. Taking~(\ref{eq:upper-bound-111}),~(\ref{eq:upper-bound-222})
and~(\ref{eq:upper-bound-333}) collectively, we arrive at 
\begin{align*}
 & \big\|\bF^{t+1,(l)}\bH^{t,(l)}-\tilde{\bF}^{t+1,(l)}\big\|\\
 & \quad\lesssim\eta C_{\mathrm{op}}^{2}\left(\frac{\sigma}{\sigma_{\min}}\sqrt{\frac{n}{p}}+\frac{\lambda}{p\,\sigma_{\min}}\right)^{2}\sigma_{\max}\norm{\bX^{\star}}+\eta^{2}C_{\mathrm{B}}\kappa\left(\frac{\sigma}{\sigma_{\min}}\sqrt{\frac{n}{p}}+\frac{\lambda}{p\,\sigma_{\min}}\right)\sqrt{r}\sigma_{\max}^{2}\norm{\bX^{\star}}\\
 & \quad\quad\quad+\eta\frac{\lambda}{p}C_{\mathrm{op}}\left(\frac{\sigma}{\sigma_{\min}}\sqrt{\frac{n}{p}}+\frac{\lambda}{p\,\sigma_{\min}}\right)\left\Vert \bm{X}^{\star}\right\Vert \\
 & \quad\lesssim\eta C_{\mathrm{op}}^{2}\left(\frac{\sigma}{\sigma_{\min}}\sqrt{\frac{n}{p}}+\frac{\lambda}{p\,\sigma_{\min}}\right)^{2}\sigma_{\max}\norm{\bX^{\star}}+\eta^{2}C_{\mathrm{B}}\kappa\left(\frac{\sigma}{\sigma_{\min}}\sqrt{\frac{n}{p}}+\frac{\lambda}{p\,\sigma_{\min}}\right)\sqrt{r}\sigma_{\max}^{2}\norm{\bX^{\star}},
\end{align*}
provided that $C_{\mathrm{op}}$ is large enough. Here the last relation
uses~(\ref{eq:Delta-t-l-op-norm}). Substitution into~(\ref{eq:rotation-diff})
yields 
\begin{align*}
\|(\bH^{t,(l)})^{-1}\bH^{t+1,(l)}-\bm{I}_{r}\| & \lesssim\eta\kappa C_{\mathrm{op}}^{2}\left(\frac{\sigma}{\sigma_{\min}}\sqrt{\frac{n}{p}}+\frac{\lambda}{p\,\sigma_{\min}}\right)^{2}\sigma_{\max}+\eta^{2}C_{\mathrm{B}}\kappa^{2}\left(\frac{\sigma}{\sigma_{\min}}\sqrt{\frac{n}{p}}+\frac{\lambda}{p\,\sigma_{\min}}\right)\sqrt{r}\sigma_{\max}^{2},
\end{align*}
which concludes the proof. \end{proof}

\subsection{Proof of Lemma~\ref{lemma:2-infty-contraction}}

Fix any $1\leq l\leq2n$. Apply the triangle inequality to see that
\begin{align}
 & \big\|\big(\bm{F}^{t+1}\bm{H}^{t+1}-\bm{F}^{\star}\big)_{l,\cdot}\big\|_{2}\leq\big\|\big(\bm{F}^{t+1}\bm{H}^{t+1}-\bm{F}^{t+1,(l)}\bm{H}^{t+1,(l)}\big)_{l,\cdot}\big\|_{2}+\big\|\big(\bm{F}^{t+1,(l)}\bm{H}^{t+1,(l)}-\bm{F}^{\star}\big)_{l,\cdot}\big\|_{2}\nonumber \\
 & \qquad\leq\big\|\bm{F}^{t+1}\bm{H}^{t+1}-\bm{F}^{t+1,(l)}\bm{H}^{t+1,(l)}\big\|_{\mathrm{F}}+C_{4}\kappa\left(\frac{\sigma}{\sigma_{\min}}\sqrt{\frac{n\log n}{p}}+\frac{\lambda}{p\,\sigma_{\min}}\right)\left\Vert \bm{F}^{\star}\right\Vert _{2,\infty},\label{eq:2-infty-triangle}
\end{align}
where the second line follows from Lemma~\ref{lemma:loo-l-contraction}.
Apply Lemma~\ref{lemma:immediate-consequence} to the $(t+1)$th iterates
to see that 
\begin{align}
\big\|\bm{F}^{t+1}\bm{H}^{t+1}-\bm{F}^{t+1,(l)}\bm{H}^{t+1,(l)}\big\|_{\mathrm{F}} & \leq5\kappa\big\|\bm{F}^{t+1}\bm{H}^{t+1}-\bm{F}^{t+1,(l)}\bm{R}^{t+1,(l)}\big\|_{\mathrm{F}}\nonumber \\
 & \leq5\kappa C_{3}\left(\frac{\sigma}{\sigma_{\min}}\sqrt{\frac{n\log n}{p}}+\frac{\lambda}{p\,\sigma_{\min}}\right)\left\Vert \bm{F}^{\star}\right\Vert _{2,\infty}.\label{eq:rotation-upper-bound}
\end{align}
Here the second line follows from Lemma~\ref{lemma:loo-dist-contraction}.
Combine~(\ref{eq:2-infty-triangle}) and~(\ref{eq:rotation-upper-bound})
to reach 
\begin{align*}
\big\|\big(\bm{F}^{t+1}\bm{H}^{t+1}-\bm{F}^{\star}\big)_{l,\cdot}\big\|_{2} & \leq5\kappa C_{3}\left(\frac{\sigma}{\sigma_{\min}}\sqrt{\frac{n\log n}{p}}+\frac{\lambda}{p\,\sigma_{\min}}\right)\left\Vert \bm{F}^{\star}\right\Vert _{2,\infty}+C_{4}\kappa\left(\frac{\sigma}{\sigma_{\min}}\sqrt{\frac{n\log n}{p}}+\frac{\lambda}{p\,\sigma_{\min}}\right)\left\Vert \bm{F}^{\star}\right\Vert _{2,\infty}\\
 & \leq C_{\infty}\kappa\left(\frac{\sigma}{\sigma_{\min}}\sqrt{\frac{n\log n}{p}}+\frac{\lambda}{p\,\sigma_{\min}}\right)\left\Vert \bm{F}^{\star}\right\Vert _{2,\infty}
\end{align*}
as long as $C_{\infty}\geq5C_{3}+C_{4}$. The proof is then complete
since this holds for all $1\leq l\leq2n$.

\subsection{Proof of Lemma~\ref{lemma:balancing}}

To simplify the notation hereafter, we denote 
\[
\bm{A}^{t}\triangleq\bm{X}^{t\top}\bm{X}^{t}-\bm{Y}^{t\top}\bm{Y}^{t}\qquad\text{and}\qquad\bm{A}^{t+1}\triangleq\bm{X}^{t+1\top}\bm{X}^{t+1}-\bm{Y}^{t+1\top}\bm{Y}^{t+1}.
\]
In view of the gradient descent update rules~(\ref{subeq:GD-rules}),
we have 
\begin{align*}
\bm{X}^{t+1\top}\bm{X}^{t+1} & =\bm{X}^{t\top}\bm{X}^{t}-\eta\left[\bX^{t\top}\nabla_{\bX}f(\bX^{t},\bY^{t})+\nabla_{\bX}f(\bX^{t},\bY^{t})^{\top}\bX^{t}\right]+\eta^{2}\nabla_{\bX}f(\bX^{t},\bY^{t})^{\top}\nabla_{\bX}f(\bX^{t},\bY^{t}),\\
\bm{Y}^{t+1\top}\bm{Y}^{t+1} & =\bm{Y}^{t\top}\bm{Y}^{t}-\eta\left[\bY^{t\top}\nabla_{\bY}f(\bX^{t},\bY^{t})+\nabla_{\bY}f(\bX^{t},\bY^{t})^{\top}\bY^{t}\right]+\eta^{2}\nabla_{\bY}f(\bX^{t},\bY^{t})^{\top}\nabla_{\bY}f(\bX^{t},\bY^{t}).
\end{align*}
This gives rise to the following identity 
\begin{equation}
\bm{A}^{t+1}=\bm{A}^{t}-\eta\bB^{t}+\eta^{2}\bC^{t},\label{eq:balancing-recursion}
\end{equation}
where we denote 
\begin{align*}
\bB^{t} & \triangleq\bX^{t\top}\nabla_{\bX}f(\bX^{t},\bY^{t})+\nabla_{\bX}f(\bX^{t},\bY^{t})^{\top}\bX^{t}-\bY^{t\top}\nabla_{\bY}f(\bX^{t},\bY^{t})-\nabla_{\bY}f(\bX^{t},\bY^{t})^{\top}\bY^{t},\\
\bC^{t} & \triangleq\nabla_{\bX}f(\bX^{t},\bY^{t})^{\top}\nabla_{\bX}f(\bX^{t},\bY^{t})-\nabla_{\bY}f(\bX^{t},\bY^{t})^{\top}\nabla_{\bY}f(\bX^{t},\bY^{t}).
\end{align*}
Denoting 
\begin{equation}
\bD^{t}\triangleq p^{-1}\cP_{\Omega}(\bX^{t}\bY^{t\top}-\bM),\label{eq:defn-Dt}
\end{equation}
we have 
\[
\nabla_{\bm{X}}f\left(\bm{X}^{t},\bm{Y}^{t}\right)=\bm{D}^{t}\bm{Y}^{t}+\tfrac{\lambda}{p}\bm{X}^{t}\qquad\text{and}\qquad\nabla_{\bm{Y}}f\left(\bm{X}^{t},\bm{Y}^{t}\right)=\bm{D}^{t\top}\bm{X}^{t}+\tfrac{\lambda}{p}\bm{Y}^{t}.
\]
With these in mind, a little calculation reveals that 
\begin{align*}
\bm{B}^{t} & =\bX^{t\top}\bD^{t}\bY^{t}+\bY^{t\top}\bD^{t\top}\bX^{t}+\tfrac{2\lambda}{p}\bm{X}^{t\top}\bm{X}^{t}-\bY^{t\top}\bD^{t\top}\bX^{t}-\bX^{t\top}\bD^{t}\bY^{t}-\tfrac{2\lambda}{p}\bm{Y}^{t\top}\bm{Y}^{t}=\tfrac{2\lambda}{p}\bA^{t}
\end{align*}
as well as 
\begin{align*}
\bC^{t} & =\left(\bm{D}^{t}\bm{Y}^{t}+\tfrac{\lambda}{p}\bm{X}^{t}\right)^{\top}\left(\bm{D}^{t}\bm{Y}^{t}+\tfrac{\lambda}{p}\bm{X}^{t}\right)-\left(\bm{D}^{t\top}\bm{X}^{t}+\tfrac{\lambda}{p}\bm{Y}^{t}\right)^{\top}\left(\bm{D}^{t\top}\bm{X}^{t}+\tfrac{\lambda}{p}\bm{Y}^{t}\right)\\
 & =\bY^{t\top}\bD^{t\top}\bD^{t}\bY^{t}+\tfrac{\lambda}{p}\bY^{t\top}\bD^{t\top}\bX^{t}+\tfrac{\lambda}{p}\bX^{t\top}\bm{D}^{t}\bY^{t}+\left(\tfrac{\lambda}{p}\right)^{2}\bX^{t\top}\bX^{t}\\
 & \quad-\bX^{t\top}\bm{D}^{t}\bD^{t\top}\bX^{t}-\tfrac{\lambda}{p}\bX^{t\top}\bm{D}^{t}\bY^{t}-\tfrac{\lambda}{p}\bY^{t\top}\bD^{t\top}\bX^{t}-\left(\tfrac{\lambda}{p}\right)^{2}\bY^{t\top}\bY^{t}\\
 & =\left(\bY^{t\top}\bD^{t\top}\bD^{t}\bY^{t}-\bX^{t\top}\bD^{t}\bD^{t\top}\bX^{t}\right)+\left(\tfrac{\lambda}{p}\right)^{2}\bA^{t}.
\end{align*}
Substituting the identities for $\bm{B}^{t}$ and $\bm{C}^{t}$ into
(\ref{eq:balancing-recursion}) yields 
\begin{align*}
\bm{A}^{t+1} & =\bm{A}^{t}-2\eta\tfrac{\lambda}{p}\bA^{t}+\eta^{2}\left(\bY^{t\top}\bD^{t\top}\bD^{t}\bY^{t}-\bX^{t\top}\bD^{t}\bD^{t\top}\bX^{t}\right)+\eta^{2}\left(\tfrac{\lambda}{p}\right)^{2}\bA^{t}\\
 & =\left(1-\lambda\eta/p\right)^{2}\bm{A}^{t}+\eta^{2}\left(\bY^{t\top}\bD^{t\top}\bD^{t}\bY^{t}-\bX^{t\top}\bD^{t}\bD^{t\top}\bX^{t}\right),
\end{align*}
which together with the triangle inequality gives 
\begin{align*}
\Fnorm{\bA^{t+1}} & \leq\left(1-\lambda\eta/p\right)^{2}\Fnorm{\bA^{t}}+\eta^{2}\Fnorm{\bY^{t\top}\bD^{t\top}\bD^{t}\bY^{t}-\bX^{t\top}\bD^{t}\bD^{t\top}\bX^{t}}\\
 & \leq\left(1-\lambda\eta/p\right)\Fnorm{\bA^{t}}+\eta^{2}\Fnorm{\bY^{t\top}\bD^{t\top}\bD^{t}\bY^{t}-\bX^{t\top}\bD^{t}\bD^{t\top}\bX^{t}},
\end{align*}
as long as $\lambda\eta/p<1$ --- a condition that is guaranteed
by our assumptions on $\lambda$ and $\eta$. It then boils down to
controlling $\Vert\bY^{t\top}\bD^{t\top}\bD^{t}\bY^{t}-\bX^{t\top}\bD^{t}\bD^{t\top}\bX^{t}\Vert_{\mathrm{F}}$,
which is supplied in the following claim.

\begin{claim}\label{claim:balancing-residual}Suppose that the sample
complexity satisfies $n^{2}p\gg\kappa^{2}\mu^{2}r^{2}n\log n$ and
that the noise satisfies $\frac{\sigma}{\sigma_{\min}}\sqrt{\frac{n}{p}}\ll\frac{1}{\sqrt{\kappa^{2}\log n}}$,
then one has 
\[
\left\Vert \bY^{t\top}\bD^{t\top}\bD^{t}\bY^{t}-\bX^{t\top}\bD^{t}\bD^{t\top}\bX^{t}\right\Vert _{\mathrm{F}}\lesssim C_{\mathrm{op}}^{2}\left(\frac{\sigma}{\sigma_{\min}}\sqrt{\frac{n}{p}}+\frac{\lambda}{p\,\sigma_{\min}}\right)^{2}\sqrt{r}\sigma_{\max}^{3}.
\]
\end{claim}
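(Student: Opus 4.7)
\textbf{Proof plan for Claim~\ref{claim:balancing-residual}.} The natural strategy is to bound each of the two matrices $\bY^{t\top}\bD^{t\top}\bD^{t}\bY^{t}$ and $\bX^{t\top}\bD^{t}\bD^{t\top}\bX^{t}$ separately by the triangle inequality; both are $r\times r$, so a crude bound of the form $\|\cdot\|_{\mathrm{F}}\le\sqrt{r}\,\|\cdot\|$ already produces the right $\sqrt{r}$ factor. The plan is therefore to show
\[
\big\|\bY^{t\top}\bD^{t\top}\bD^{t}\bY^{t}\big\|_{\mathrm{F}}\le\sqrt{r}\,\|\bY^{t}\|^{2}\,\|\bD^{t}\|^{2}
\quad\text{and}\quad
\big\|\bX^{t\top}\bD^{t}\bD^{t\top}\bX^{t}\big\|_{\mathrm{F}}\le\sqrt{r}\,\|\bX^{t}\|^{2}\,\|\bD^{t}\|^{2},
\]
invoke the upper bounds $\|\bX^{t}\|,\|\bY^{t}\|\le 2\|\bX^{\star}\|=2\sqrt{\sigma_{\max}}$ guaranteed by Lemma~\ref{lemma:immediate-consequence} (applicable under the stated noise condition), and then reduce everything to controlling the single scalar $\|\bD^{t}\|$. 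Combining will yield
\[
\big\|\bY^{t\top}\bD^{t\top}\bD^{t}\bY^{t}-\bX^{t\top}\bD^{t}\bD^{t\top}\bX^{t}\big\|_{\mathrm{F}}\lesssim \sqrt{r}\,\sigma_{\max}\,\|\bD^{t}\|^{2},
\]
so it remains to prove $\|\bD^{t}\|\lesssim C_{\mathrm{op}}\big(\tfrac{\sigma}{\sigma_{\min}}\sqrt{\tfrac{n}{p}}+\tfrac{\lambda}{p\sigma_{\min}}\big)\sigma_{\max}$.

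For this bound on $\|\bD^{t}\|$, recall $\bD^{t}=p^{-1}\mathcal{P}_{\Omega}(\bX^{t}\bY^{t\top}-\bM^{\star}-\bE)$ and split via the triangle inequality into three pieces:
\[
\|\bD^{t}\|\le\underbrace{\tfrac{1}{p}\big\|\mathcal{P}_{\Omega}(\bX^{t}\bY^{t\top}-\bM^{\star})-p(\bX^{t}\bY^{t\top}-\bM^{\star})\big\|}_{:=\,\mathrm{I}}+\underbrace{\big\|\bX^{t}\bY^{t\top}-\bM^{\star}\big\|}_{:=\,\mathrm{II}}+\underbrace{\tfrac{1}{p}\|\mathcal{P}_{\Omega}(\bE)\|}_{:=\,\mathrm{III}}.
\]
Term III is $\lesssim\sigma\sqrt{n/p}$ by Lemma~\ref{lemma:noise-bound}. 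Term I is at most $\lambda/(8p)$ by Lemma~\ref{lemma:P-tilde}, whose hypothesis (\ref{subeq:condition-inf}) is supplied by the induction hypothesis (\ref{eq:induction-2-infty}); here the sample size assumption $n^{2}p\gg\kappa^{2}\mu^{2}r^{2}n\log n$ (strengthened a touch relative to what the lemma literally requires, as in the surrounding text) and the noise assumption are precisely what is needed. For term~II, write $\bX^{t}\bY^{t\top}-\bM^{\star}=(\bX^{t}\bH^{t}-\bX^{\star})(\bY^{t}\bH^{t})^{\top}+\bX^{\star}(\bY^{t}\bH^{t}-\bY^{\star})^{\top}$ and apply the spectral-norm induction hypothesis (\ref{eq:induction-op}) together with $\|\bY^{t}\bH^{t}\|\le 2\|\bX^{\star}\|$ to obtain $\mathrm{II}\lesssim C_{\mathrm{op}}\big(\tfrac{\sigma}{\sigma_{\min}}\sqrt{\tfrac{n}{p}}+\tfrac{\lambda}{p\sigma_{\min}}\big)\sigma_{\max}$. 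Since $\sigma\sqrt{n/p}$ and $\lambda/p$ are both dominated by $\big(\tfrac{\sigma}{\sigma_{\min}}\sqrt{\tfrac{n}{p}}+\tfrac{\lambda}{p\sigma_{\min}}\big)\sigma_{\max}$, the three pieces combine to the desired bound on $\|\bD^{t}\|$.

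Putting everything together produces
\[
\big\|\bY^{t\top}\bD^{t\top}\bD^{t}\bY^{t}-\bX^{t\top}\bD^{t}\bD^{t\top}\bX^{t}\big\|_{\mathrm{F}}\lesssim\sqrt{r}\,\sigma_{\max}\cdot C_{\mathrm{op}}^{2}\Big(\tfrac{\sigma}{\sigma_{\min}}\sqrt{\tfrac{n}{p}}+\tfrac{\lambda}{p\sigma_{\min}}\Big)^{2}\sigma_{\max}^{2},
\]
which is exactly what is claimed. The only step that is not essentially mechanical is the estimate on $\|\bD^{t}\|$: one has to verify that the hypotheses of Lemma~\ref{lemma:P-tilde} are satisfied by the current iterate $(\bX^{t},\bY^{t})$, which is where the induction hypothesis (\ref{eq:induction-2-infty}) together with the incoherence and noise conditions is used. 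I do not foresee a substantive obstacle beyond this bookkeeping, and no cancellation between the two matrices needs to be exploited: a direct triangle-inequality bound already yields the correct scaling because the factor $\|\bD^{t}\|^{2}$ is what carries the two required powers of $\big(\tfrac{\sigma}{\sigma_{\min}}\sqrt{\tfrac{n}{p}}+\tfrac{\lambda}{p\sigma_{\min}}\big)$.
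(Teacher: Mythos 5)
Your overall strategy is the same as the paper's: bound the Frobenius norm of each of the two $r\times r$ matrices by the triangle inequality (no cancellation needed), factor out $\|\bD^{t}\|^{2}$, and control $\|\bD^{t}\|$ by splitting $\bD^{t}=p^{-1}\mathcal{P}_{\Omega}(\bX^{t}\bY^{t\top}-\bM^{\star}-\bE)$ into a debias term, a population term $\|\bX^{t}\bY^{t\top}-\bM^{\star}\|$, and a noise term. Your handling of terms II and III matches the paper, and the $\sqrt{r}$ bookkeeping (your $\|\cdot\|_{\mathrm{F}}\le\sqrt{r}\|\cdot\|$ versus the paper's $\|\bY^{t}\|\,\|\bD^{t}\|^{2}\,\|\bY^{t}\|_{\mathrm{F}}$) leads to the same factor.

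The one place you deviate is term I, and there you have a genuine (if repairable) gap. You cite Lemma~\ref{lemma:P-tilde} to get the bound $\lambda/(8p)$, but Lemma~\ref{lemma:P-tilde} as stated requires $n^{2}p\gg\kappa^{4}\mu^{2}r^{2}n\log^{2}n$, whereas Claim~\ref{claim:balancing-residual} only assumes the strictly weaker $n^{2}p\gg\kappa^{2}\mu^{2}r^{2}n\log n$. Your parenthetical remark has the comparison reversed: the Claim's assumption is weaker than what Lemma~\ref{lemma:P-tilde} literally requires, not stronger. The paper sidesteps this by not asking for $\lambda/8$ at all --- it simply reruns the debias estimate inline,
\[
\frac{1}{p}\big\|\mathcal{P}_{\Omega}^{\mathsf{debias}}\big(\bX^{t}\bY^{t\top}-\bM^{\star}\big)\big\|\lesssim C_{\infty}\kappa\Big(\frac{\sigma}{\sigma_{\min}}\sqrt{\frac{n\log n}{p}}+\frac{\lambda}{p\sigma_{\min}}\Big)\sqrt{\frac{\mu^{2}r^{2}}{np}}\,\sigma_{\max},
\]
and absorbs this into the term-II bound $C_{\mathrm{op}}(\frac{\sigma}{\sigma_{\min}}\sqrt{n/p}+\frac{\lambda}{p\sigma_{\min}})\sigma_{\max}$ using only $np\gg\kappa^{2}\mu^{2}r^{2}\log n$. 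Since the overall Theorem~\ref{thm:main-convex} does assume $n^{2}p\gtrsim\kappa^{4}\mu^{2}r^{2}n\log^{3}n$, your route is fine in the context of the full argument, but it does not prove the Claim under its stated hypothesis; to do so you should replace the appeal to Lemma~\ref{lemma:P-tilde} by the inline computation, observing that you only need the debias contribution to be dominated by term II, not to be smaller than $\lambda/8$.
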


Taking the above bounds together, we arrive at for some constant $\tilde{C}>0$,
\begin{align*}
\Fnorm{\bA^{t+1}} & \leq\left(1-\frac{\lambda}{p}\eta\right)\Fnorm{\bA^{t}}+\eta^{2}\tilde{C}C_{\mathrm{op}}^{2}\left(\frac{\sigma}{\sigma_{\min}}\sqrt{\frac{n}{p}}+\frac{\lambda}{p\,\sigma_{\min}}\right)^{2}\sqrt{r}\sigma_{\max}^{3}\\
 & \leq\left(1-\frac{\lambda}{p}\eta\right)C_{\mathrm{B}}\kappa\eta\left(\frac{\sigma}{\sigma_{\min}}\sqrt{\frac{n}{p}}+\frac{\lambda}{p\,\sigma_{\min}}\right)\sqrt{r}\sigma_{\max}^{2}+\eta^{2}\tilde{C}C_{\mathrm{op}}^{2}\left(\frac{\sigma}{\sigma_{\min}}\sqrt{\frac{n}{p}}+\frac{\lambda}{p\,\sigma_{\min}}\right)^{2}\sqrt{r}\sigma_{\max}^{3}\\
 & \leq C_{\mathrm{B}}\kappa\eta\left(\frac{\sigma}{\sigma_{\min}}\sqrt{\frac{n}{p}}+\frac{\lambda}{p\,\sigma_{\min}}\right)\sqrt{r}\sigma_{\max}^{2},
\end{align*}
as long as $\lambda\geq\sigma\sqrt{np}$ and $C_{\mathrm{B}}\gg C_{\mathrm{op}}^{2}$.

\begin{proof}[Proof of Claim~\ref{claim:balancing-residual}]The
triangle inequality yields 
\begin{align}
\left\Vert \bY^{t\top}\bD^{t\top}\bD^{t}\bY^{t}-\bX^{t\top}\bD^{t}\bD^{t\top}\bX^{t}\right\Vert _{\mathrm{F}} & \leq\left\Vert \bY^{t\top}\bD^{t\top}\bD^{t}\bY^{t}\right\Vert _{\mathrm{F}}+\left\Vert \bX^{t\top}\bD^{t}\bD^{t\top}\bX^{t}\right\Vert _{\mathrm{F}}\nonumber \\
 & \leq\left\Vert \bm{Y}^{t}\right\Vert \left\Vert \bm{D}^{t}\right\Vert ^{2}\left\Vert \bm{Y}^{t}\right\Vert _{\mathrm{F}}+\left\Vert \bm{X}^{t}\right\Vert \left\Vert \bm{D}^{t}\right\Vert ^{2}\left\Vert \bm{X}^{t}\right\Vert _{\mathrm{F}}.\label{eq:upper-bound-D}
\end{align}
It is easy to see from Lemma~\ref{lemma:immediate-consequence} that
\[
\left\Vert \bm{Y}^{t}\right\Vert \leq2\left\Vert \bm{Y}^{\star}\right\Vert ,\quad\left\Vert \bm{Y}^{t}\right\Vert _{\mathrm{F}}\leq2\left\Vert \bm{Y}^{\star}\right\Vert _{\mathrm{F}},\quad\left\Vert \bm{X}^{t}\right\Vert \leq2\left\Vert \bm{X}^{\star}\right\Vert \quad\text{and}\quad\left\Vert \bm{X}^{t}\right\Vert _{\mathrm{F}}\leq2\left\Vert \bm{X}^{\star}\right\Vert _{\mathrm{F}}
\]
provided that $\frac{\sigma}{\sigma_{\min}}\sqrt{\frac{n}{p}}\ll\frac{1}{\sqrt{\kappa^{2}\log n}}$.
These allow us to further upper bound~(\ref{eq:upper-bound-D}) as
\begin{align}
\left\Vert \bY^{t\top}\bD^{t\top}\bD^{t}\bY^{t}-\bX^{t\top}\bD^{t}\bD^{t\top}\bX^{t}\right\Vert _{\mathrm{F}} & \leq4\left\Vert \bm{D}^{t}\right\Vert ^{2}\left\Vert \bm{Y}^{\star}\right\Vert \left\Vert \bm{Y}^{\star}\right\Vert _{\mathrm{F}}+4\left\Vert \bm{D}^{t}\right\Vert ^{2}\left\Vert \bm{X}^{\star}\right\Vert \left\Vert \bm{X}^{\star}\right\Vert _{\mathrm{F}}\nonumber \\
 & \leq8\left\Vert \bm{D}^{t}\right\Vert ^{2}\sqrt{r}\sigma_{\max}.\label{eq:D-residual-upper-bound}
\end{align}
It remains to bound $\|\bm{D}^{t}\|$. To this end, recall from~(\ref{eq:defn-Dt})
that 
\begin{align*}
\norm{\bD^{t}} & \leq p^{-1}\norm{\cP_{\Omega}(\bE)}+p^{-1}\norm{\mathcal{P}_{\Omega}^{\mathsf{debias}}\left(\bX^{t}\bY^{t\top}-\bM^{\star}\right)}+\norm{\bX^{t}\bY^{t\top}-\bM^{\star}}.
\end{align*}
In the sequel we shall bound these three terms sequentially. First,
Lemma~\ref{lemma:noise-bound} tells us that $\frac{1}{p}\norm{\cP_{\Omega}(\bE)}\lesssim\sigma\sqrt{\frac{n}{p}}.$
Next, repeating the arguments in the proof of Lemma~\ref{lemma:P-tilde}
gives 
\begin{align*}
\left\Vert \mathcal{P}_{\Omega}^{\mathsf{debias}}\left(\bm{X}^{t}\bm{Y}^{t\top}-\bm{M}^{\star}\right)\right\Vert  & =\left\Vert \mathcal{P}_{\Omega}^{\mathsf{debias}}\left[\bm{X}^{t}\bm{H}^{t}\left(\bm{Y}^{t}\bm{H}^{t}\right)^{\top}-\bm{M}^{\star}\right]\right\Vert \\
 & \lesssim\sqrt{np}\left(\left\Vert \bm{X}^{t}\bm{H}^{t}-\bm{X}^{\star}\right\Vert _{2,\infty}\left\Vert \bm{Y}^{\star}\right\Vert _{2,\infty}+\left\Vert \bm{Y}^{t}\bm{H}^{t}-\bm{Y}^{\star}\right\Vert _{2,\infty}\left\Vert \bm{X}^{\star}\right\Vert _{2,\infty}\right),
\end{align*}
which together with the induction hypothesis~(\ref{eq:induction-2-infty})
yields 
\begin{align*}
\frac{1}{p}\left\Vert \mathcal{P}_{\Omega}^{\mathsf{debias}}\left(\bm{X}^{t}\bm{Y}^{t\top}-\bm{M}^{\star}\right)\right\Vert  & \lesssim\sqrt{\frac{n}{p}}C_{\infty}\kappa\left(\frac{\sigma}{\sigma_{\min}}\sqrt{\frac{n\log n}{p}}+\frac{\lambda}{p\,\sigma_{\min}}\right)\left\Vert \bm{F}^{\star}\right\Vert _{2,\infty}\left\Vert \bm{F}^{\star}\right\Vert _{2,\infty}\\
 & \lesssim C_{\infty}\kappa\left(\frac{\sigma}{\sigma_{\min}}\sqrt{\frac{n\log n}{p}}+\frac{\lambda}{p\,\sigma_{\min}}\right)\sqrt{\frac{\mu^{2}r^{2}}{np}}\sigma_{\max}.
\end{align*}
Here the last relation uses the incoherence assumption $\|\bm{F}^{\star}\|_{2,\infty}\leq\sqrt{\mu r\sigma_{\max}/n}$
(cf.~(\ref{eq:F-singular-value})). Regarding $\|\bm{X}^{t}\bm{Y}^{t\top}-\bm{M}^{\star}\|$,
the triangle inequality reveals that 
\begin{align*}
\left\Vert \bm{X}^{t}\bm{Y}^{t\top}-\bm{M}^{\star}\right\Vert  & =\left\Vert \bm{X}^{t}\bm{H}^{t}\left(\bm{Y}^{t}\bm{H}^{t}\right)^{\top}-\bm{M}^{\star}\right\Vert \\
 & \leq\left\Vert \bm{X}^{t}\bm{H}^{t}\left(\bm{Y}^{t}\bm{H}^{t}\right)^{\top}-\bm{X}^{t}\bm{H}^{t}\bm{Y}^{\star\top}\right\Vert +\left\Vert \bm{X}^{t}\bm{H}^{t}\bm{Y}^{\star\top}-\bm{X}^{\star}\bm{Y}^{\star\top}\right\Vert \\
 & \leq\left\Vert \bm{X}^{t}\bm{H}^{t}\right\Vert \left\Vert \bm{Y}^{t}\bm{H}^{t}-\bm{Y}^{\star}\right\Vert +\left\Vert \bm{X}^{t}\bm{H}^{t}-\bm{X}^{\star}\right\Vert \left\Vert \bm{Y}^{\star}\right\Vert .
\end{align*}
Combine the induction hypothesis~(\ref{eq:induction-op}) and the
fact that $\|\bm{X}^{t}\bm{H}^{t}\|=\|\bm{X}^{t}\|\leq2\|\bm{X}^{\star}\|$
to reach 
\[
\left\Vert \bm{X}^{t}\bm{Y}^{t\top}-\bm{M}^{\star}\right\Vert \lesssim C_{\mathrm{op}}\left(\frac{\sigma}{\sigma_{\min}}\sqrt{\frac{n}{p}}+\frac{\lambda}{p\,\sigma_{\min}}\right)\sigma_{\max}.
\]
Putting together the previous three bounds, we arrive at 
\begin{align}
\left\Vert \bm{D}^{t}\right\Vert  & \lesssim\sigma\sqrt{\frac{n}{p}}+C_{\infty}\kappa\left(\frac{\sigma}{\sigma_{\min}}\sqrt{\frac{n\log n}{p}}+\frac{\lambda}{p\,\sigma_{\min}}\right)\sqrt{\frac{\mu^{2}r^{2}}{np}}\sigma_{\max}+C_{\mathrm{op}}\left(\frac{\sigma}{\sigma_{\min}}\sqrt{\frac{n}{p}}+\frac{\lambda}{p\,\sigma_{\min}}\right)\sigma_{\max}\nonumber \\
 & \lesssim C_{\mathrm{op}}\left(\frac{\sigma}{\sigma_{\min}}\sqrt{\frac{n}{p}}+\frac{\lambda}{p\,\sigma_{\min}}\right)\sigma_{\max}\label{eq:D-upper-bound}
\end{align}
since $np\gg\kappa^{2}\mu^{2}r^{2}\log n$. Putting~(\ref{eq:D-upper-bound})
back to~(\ref{eq:D-residual-upper-bound}) leads to the claimed upper
bound.

The upper bound on the leave-one-out sequences can be derived similarly.
For brevity, we omit it. \end{proof}

\subsection{Proof of Lemma~\ref{lemma:function-value-decreasing}}

In light of the facts that $f(\bm{F}\bm{R})=f(\bm{F})$ and $\nabla f(\bm{F}\bm{R})=\nabla f(\bm{F})\bm{R}$
for any $\bm{R}\in\mathcal{O}^{r\times r}$, one has 
\begin{align*}
f\left(\bm{F}^{t+1}\right) & =f\left(\bm{F}^{t+1}\bm{H}^{t}\right)=f\left(\left[\bm{F}^{t}-\eta\nabla f\left(\bm{F}^{t}\right)\right]\bm{H}^{t}\right)\\
 & =f\left(\bm{F}^{t}\bm{H}^{t}-\eta\nabla f\left(\bm{F}^{t}\bm{H}^{t}\right)\right)\\
 & =f\left(\bm{F}^{t}\bm{H}^{t}\right)-\eta\left\langle \nabla f\left(\bm{F}^{t}\bm{H}^{t}\right),\nabla f\left(\bm{F}^{t}\bm{H}^{t}\right)\right\rangle +\frac{\eta^{2}}{2}\mathsf{vec}\left(\nabla f\left(\bm{F}^{t}\bm{H}^{t}\right)\right)^{\top}\nabla^{2}f\big(\tilde{\bm{F}}\big)\mathsf{vec}\left(\nabla f\big(\bm{F}^{t}\bm{H}^{t}\big)\right)
\end{align*}
for some $\tilde{\bm{F}}$ which lies between $\bm{F}^{t}\bm{H}^{t}$
and $\bm{F}^{t}\bm{H}^{t}-\eta\nabla f(\bm{F}^{t}\bm{H}^{t})$. Suppose
for the moment that \begin{subequations}\label{subeq:function-value-condition}
\begin{align}
\|\bm{F}^{t}\bm{H}^{t}-\bm{F}^{\star}\|_{2,\infty} & \leq\frac{1}{2000\kappa\sqrt{n}}\|\bm{X}^{\star}\|,\label{eq:function-value-cond-1}\\
\|\bm{F}^{t}\bm{H}^{t}-\eta\nabla f(\bm{F}^{t}\bm{H}^{t})-\bm{F}^{\star}\|_{2,\infty} & \leq\frac{1}{1000\kappa\sqrt{n}}\|\bm{X}^{\star}\|.\label{eq:function-value-cond-2}
\end{align}
\end{subequations}One can invoke Lemma~\ref{lemma:hessian} to obtain
$\|\nabla^{2}f(\tilde{\bm{F}})\|\leq10\sigma_{\max}$ and hence
\begin{align*}
f\left(\bm{F}^{t+1}\right) & \leq f\left(\bm{F}^{t}\bm{H}^{t}\right)-\eta\left\Vert \nabla f\left(\bm{F}^{t}\bm{H}^{t}\right)\right\Vert _{\mathrm{F}}^{2}+5\eta^{2}\sigma_{\max}\left\Vert \nabla f\left(\bm{F}^{t}\bm{H}^{t}\right)\right\Vert _{\mathrm{F}}^{2}\\
 & =f\left(\bm{F}^{t}\right)-\eta\left\Vert \nabla f\left(\bm{F}^{t}\right)\right\Vert _{\mathrm{F}}^{2}+5\eta^{2}\sigma_{\max}\left\Vert \nabla f\left(\bm{F}^{t}\right)\right\Vert _{\mathrm{F}}^{2}\\
 & \leq f\left(\bm{F}^{t}\right)-\tfrac{\eta}{2}\left\Vert \nabla f\left(\bm{F}^{t}\right)\right\Vert _{\mathrm{F}}^{2}.
\end{align*}
Here the equality uses again the facts that $f(\bm{F}\bm{R})=f(\bm{F})$
and $\nabla f(\bm{F}\bm{R})=\nabla f(\bm{F})\bm{R}$ for any $\bm{R}\in\mathcal{O}^{r\times r}$
and the last inequality holds as long as $\eta\leq\frac{1}{10\sigma_{\max}}$.
We are left with proving the aforementioned conditions~(\ref{subeq:function-value-condition}).
The first condition has been established in the proof of Lemma~\ref{lemma:small-gradient-smooth-function}
and hence we concentrate on the second one, namely~(\ref{eq:function-value-cond-2}).
Apply the triangle inequality and the fundamental theorem of calculus
\cite[Chapter XIII, Theorem 4.2]{lang1993real} to obtain 
\begin{align*}
 & \|\bm{F}^{t}\bm{H}^{t}-\eta\nabla f(\bm{F}^{t}\bm{H}^{t})-\bm{F}^{\star}\|_{2,\infty} \leq\|\bm{F}^{t}\bm{H}^{t}-\bm{F}^{\star}\|_{2,\infty}+\eta\left\Vert \nabla f(\bm{F}^{t}\bm{H}^{t})-\nabla f(\bm{F}^{\star})\right\Vert _{\mathrm{F}}+\eta\left\Vert \nabla f(\bm{F}^{\star})\right\Vert _{\mathrm{F}}\\
	& \qquad \leq\|\bm{F}^{t}\bm{H}^{t}-\bm{F}^{\star}\|_{2,\infty}+\eta\left\Vert \int_{0}^{1}\nabla^{2}f\left(\bm{F}\left(\tau\right)\right)\mathrm{d}\tau\,\mathsf{vec}\left(\bm{F}^{t}\bm{H}^{t}-\bm{F}^{\star}\right)\right\Vert _{\mathrm{2}}+\eta\left\Vert \nabla f(\bm{F}^{\star})\right\Vert _{\mathrm{F}},
\end{align*}
where $\bm{F}(\tau)\triangleq\bm{F}^{\star}+\tau(\bm{F}^{t}\bm{H}^{t}-\bm{F}^{\star})$
for $0\leq\tau\leq1$. Following similar arguments in the proof of
Lemma~\ref{lemma:fro-contraction} and the proof of Lemma~\ref{lemma:small-gradient-smooth-function},
one obtains 
\begin{align*}
	& \eta\left\Vert \int_{0}^{1}\nabla^{2}f\left(\bm{F}\left(\tau\right)\right)\mathrm{d}\tau\,\mathsf{vec}\left(\bm{F}^{t}\bm{H}^{t}-\bm{F}^{\star}\right)\right\Vert _{\mathrm{F}}+\eta\left\Vert \nabla f(\bm{F}^{\star})\right\Vert _{\mathrm{F}}\\
 & \quad\leq\eta\cdot 10\sigma_{\max}\left\Vert \bm{F}^{t}\bm{H}^{t}-\bm{F}^{\star}\right\Vert_{\mathrm{F}} +\eta\frac{\lambda}{p}\sqrt{r\sigma_{\max}}\\
 & \quad\lesssim\eta\sigma_{\max}\left(\frac{\sigma}{\sigma_{\min}}\sqrt{\frac{n}{p}}+\frac{\lambda}{p\,\sigma_{\min}}\right)\sqrt{r}\|\bm{X}^{\star}\|+\eta\sigma_{\min}\frac{\lambda}{p\,\sigma_{\min}}\sqrt{r}\|\bm{X}^{\star}\|\leq\frac{1}{2000\kappa\sqrt{n}}\|\bm{X}^{\star}\|.
\end{align*}
Here the middle inequality uses the induction hypothesis~(\ref{eq:induction-op})
and the last relation holds true provided that $\lambda\asymp\sigma\sqrt{np}$,
$\frac{\sigma}{\sigma_{\min}}\sqrt{\frac{n}{p}}\ll1/\sqrt{r}$ and that $\eta\ll1/(\kappa n\sigma_{\max})$.
This proves the second condition and also the whole lemma.

\subsection{Proof of Lemma~\ref{lemma:hessian}}\label{subsec:Proof-of-Lemma-local-geometry}

 We start by defining a new loss function
\[
f_{\mathsf{clean}}\left(\bm{X},\bm{Y}\right)\triangleq\tfrac{1}{2p}\left\Vert \mathcal{P}_{\Omega}\left(\bm{X}\bm{Y}^{\top}-\bm{M}^{\star}\right)\right\Vert _{\mathrm{F}}^{2}+\tfrac{1}{8}\left\Vert \bm{X}^{\top}\bm{X}-\bm{Y}^{\top}\bm{Y}\right\Vert _{\mathrm{F}}^{2};
\]
compared with $f_{\mathsf{aug}}(\cdot, \cdot)$, this new function $f_{\mathsf{clean}}(\cdot, \cdot)$
sets $\lambda=0$ and excludes the noise $\bm{E}$ from consideration. It is straightforward to check that for any $\bm{\Delta}\in\mathbb{R}^{2n\times r}$,
\[
\mathsf{vec}\left(\bm{\Delta}\right)^{\top}\nabla^{2}f_{\mathsf{aug}}\left(\bm{X},\bm{Y}\right)\mathsf{vec}\left(\bm{\Delta}\right)=\mathsf{vec}\left(\bm{\Delta}\right)^{\top}\nabla^{2}f_{\mathsf{clean}}\left(\bm{X},\bm{Y}\right)\mathsf{vec}\left(\bm{\Delta}\right)-\tfrac{2}{p}\left\langle \mathcal{P}_{\Omega}\left(\bm{E}\right),\bm{\Delta}_{\bm{X}}\bm{\Delta}_{\bm{Y}}^{\top}\right\rangle +\tfrac{\lambda}{p}\left\Vert \bm{\Delta}\right\Vert _{\mathrm{F}}^{2}.
\]
It has been proven in~\cite[Lemma 3.2]{chen2019nonconvex} that under
the assumptions stated in Lemma~\ref{lemma:hessian}, one has 
\begin{equation}
\mathsf{vec}\left(\bm{\Delta}\right)^{\top}\nabla^{2}f_{\mathsf{clean}}\left(\bm{X},\bm{Y}\right)\mathsf{vec}\left(\bm{\Delta}\right)\geq\tfrac{1}{5}\sigma_{\min}\left\Vert \bm{\Delta}\right\Vert _{\mathrm{F}}^{2}\qquad\text{and}\qquad\left\Vert \nabla^{2}f_{\mathsf{clean}}\left(\bm{X},\bm{Y}\right)\right\Vert \leq5\sigma_{\max}.\label{eq:xiaodong-hessian}
\end{equation}
It then boils down to controlling $-\frac{2}{p}\left\langle \mathcal{P}_{\Omega}\left(\bm{E}\right),\bm{\Delta}_{\bm{X}}\bm{\Delta}_{\bm{Y}}^{\top}\right\rangle +\frac{\lambda}{p}\left\Vert \bm{\Delta}\right\Vert _{\mathrm{F}}^{2}.$
To this end, one has 
\begin{align}
\left|\tfrac{1}{p}\left\langle \mathcal{P}_{\Omega}\left(\bm{E}\right),\bm{\Delta}_{\bm{X}}\bm{\Delta}_{\bm{Y}}^{\top}\right\rangle \right| & \leq\left\Vert \tfrac{1}{p}\mathcal{P}_{\Omega}\left(\bm{E}\right)\right\Vert \left\Vert \bm{\Delta}_{\bm{X}}\bm{\Delta}_{\bm{Y}}^{\top}\right\Vert _{*}\lesssim\sigma\sqrt{\tfrac{n}{p}}\left\Vert \bm{\Delta}\right\Vert _{\mathrm{F}}^{2},\label{eq:tech-lemma-prev}
\end{align}
where the last relation holds due to Lemma~\ref{lemma:noise-bound}
and the elementary fact about the nuclear norm~(\ref{eq:nuclear-fro-relation}),
i.e.~
\[
2\left\Vert \bm{\Delta}_{\bm{X}}\bm{\Delta}_{\bm{Y}}^{\top}\right\Vert _{*}\leq\left\Vert \bm{\Delta}_{\bm{X}}\right\Vert _{\mathrm{F}}^{2}+\left\Vert \bm{\Delta}_{\bm{Y}}\right\Vert _{\mathrm{F}}^{2}=\left\Vert \bm{\Delta}\right\Vert _{\mathrm{F}}^{2}.
\]
Regarding the term $\lambda\|\bm{\Delta}\|_{\mathrm{F}}^{2}/p$, it
is easy to see from the assumption $\lambda\asymp\sigma\sqrt{np}$
that $\frac{\lambda}{p}\left\Vert \bm{\Delta}\right\Vert _{\mathrm{F}}^{2}\asymp\sigma\sqrt{\frac{n}{p}}\left\Vert \bm{\Delta}\right\Vert _{\mathrm{F}}^{2}$.
Combine the above two bounds and use the triangle inequality to reach
\begin{equation}
\left|-\tfrac{2}{p}\left\langle \mathcal{P}_{\Omega}\left(\bm{E}\right),\bm{\Delta}_{\bm{X}}\bm{\Delta}_{\bm{Y}}^{\top}\right\rangle +\tfrac{\lambda}{p}\left\Vert \bm{\Delta}\right\Vert _{\mathrm{F}}^{2}\right|\lesssim\sigma\sqrt{\tfrac{n}{p}}\left\Vert \bm{\Delta}\right\Vert _{\mathrm{F}}^{2}\leq\tfrac{1}{10}\sigma_{\min}\left\Vert \bm{\Delta}\right\Vert _{\mathrm{F}}^{2},\label{eq:hessian-perturbation}
\end{equation}
with the proviso that $\frac{\sigma}{\sigma_{\min}}\sqrt{\frac{n}{p}}\ll1$.
Taking~(\ref{eq:xiaodong-hessian}) and~(\ref{eq:hessian-perturbation})
together immediately establishes the claims on $\nabla^2 f_{\mathsf{aug}}(\cdot, \cdot)$.

Moving on to $\nabla^{2}f(\bX,\bY)$, one has 
{
\begin{align*}
 & \mathsf{vec}(\bDelta)^\top\nabla^{2}f\left(\bX,\bY\right)\mathsf{vec}(\bDelta)=\tfrac{1}{p}\big\| \cP_{\Omega}\big(\bX\bDelta_{\bY}^{\top}+\bDelta_{\bX}\bY^{\top}\big) \big\|_{\mathrm{F}}^{2}+\tfrac{2}{p}\big\langle \cP_{\Omega}\big(\bX\bY^{\top}-\bM^{\star}-\bE\big),\bDelta_{\bX}\bDelta_{\bY}^{\top}\big\rangle +\tfrac{\lambda}{p}\|\bm{\Delta}\|_{\mathrm{F}}^{2} \\
\quad & =\underbrace{\big\|{\bX\bDelta_{\bY}^{\top}+\bDelta_{\bX}\bY^{\top}\big\|_{\mathrm{F}}}^{2}}_{:=\alpha_{1}}+\underbrace{2\big\langle \bX\bY^{\top}-\bM^{\star},\bDelta_{\bX}\bDelta_{\bY}^{\top}\big\rangle }_{:=\alpha_{2}}+\underbrace{\big(-\tfrac{2}{p}\big\langle \cP_{\Omega}(\bE),\bDelta_{\bX}\bDelta_{\bY}^{\top}\big\rangle+\tfrac{\lambda}{p}\|\bm{\Delta}\|_{\mathrm{F}}^{2} \big)}_{:=\alpha_{3}}\\
 & ~+\underbrace{\tfrac{1}{p}\big\| \cP_{\Omega}\big(\bX\bDelta_{\bY}^{\top}+\bDelta_{\bX}\bY^{\top}\big) \big\|_{\mathrm{F}}^{2}+\tfrac{2}{p}\big\langle \cP_{\Omega}\big(\bX\bY^{\top}-\bM^{\star}\big),\bDelta_{\bX}\bDelta_{\bY}^{\top}\big\rangle -\big\| \bX\bDelta_{\bY}^{\top}+\bDelta_{\bX}\bY^{\top} \big\|_{\mathrm{F}}^{2}-2\big\langle \bX\bY^{\top}-\bM^{\star},\bDelta_{\bX}\bDelta_{\bY}^{\top}\big\rangle }_{:=\alpha_{4}}.
\end{align*}
}
The term $\alpha_{4}$ can be bounded by~\cite[Equation A.4]{chen2019nonconvex}
\begin{align*}
\abs{\alpha_{4}}\leq\tfrac{1}{5}\sigma_{\min}\big(\Fnorm{\bDelta_{\bX}}^{2}+\Fnorm{\bDelta_{\bY}}^{2}\big)+\tfrac{1}{5}\big(\Fnorm{\bDelta_{\bX}\bY^{\star\top}}^{2}+\Fnorm{\bX^{\star}\bDelta_{\bY}^{\top}}^{2}\big)\leq\tfrac{2}{5}\sigma_{\max}\Fnorm{\bDelta}^{2}.
\end{align*}
The term $\alpha_{3}$ has been bounded in~(\ref{eq:hessian-perturbation})
where $|\alpha_{3}|\leq\sigma_{\max}\|\bm{\Delta}\|_{\mathrm{F}}^{2}$
provided that $\frac{\sigma}{\sigma_{\min}}\sqrt{\frac{n}{p}}\ll1$.
The term $\alpha_{2}$ can be written as 
\[
\abs{\alpha_{2}}\leq2\norm{\bX\bY^{\top}-\bM^{\star}}\nuclearnorm{\bDelta_{\bX}\bDelta_{\bY}^{\top}}\leq\left(\norm{\bX-\bX^{\star}}\norm{\bY}+\norm{\bX^{\star}}\norm{\bY-\bY^{\star}}\right)\Fnorm{\bDelta}^{2}.
\]
Since 
\[
\norm{\left[\begin{matrix}\bX-\bX^{\star}\\
\bY-\bY^{\star}
\end{matrix}\right]}\leq\Fnorm{\left[\begin{matrix}\bX-\bX^{\star}\\
\bY-\bY^{\star}
\end{matrix}\right]}\leq\sqrt{2n}\twotoinftynorm{\left[\begin{matrix}\bX-\bX^{\star}\\
\bY-\bY^{\star}
\end{matrix}\right]}\leq\frac{1}{500\kappa}\norm{\bX^{\star}},
\]
we immediately have 
\[
\abs{\alpha_{2}}\leq\frac{3}{500\kappa}\sigma_{\max}\Fnorm{\bDelta}^{2}\leq\frac{1}{2}\sigma_{\max}\Fnorm{\bDelta}^{2}.
\]
The term $\alpha_{1}$ can be bounded by 
\[
\alpha_{1}\leq2\left(\Fnorm{\bX^{\star}\bDelta_{\bY}^{\top}}^{2}+\Fnorm{\bDelta_{\bX}\bY^{\star\top}}^{2}\right)\leq2\left(\norm{\bX^{\star}}^{2}\Fnorm{\bDelta_{\bY}}^{2}+\norm{\bY^{\star}}^{2}\Fnorm{\bDelta_{\bX}}^{2}\right)=2\sigma_{\max}\Fnorm{\bDelta}^{2}.
\]
Combining all these bounds yields 
\[
\mathsf{vec}(\bDelta)^\top\nabla^{2}f\left(\bX,\bY\right)\mathsf{vec}(\bDelta)\leq10\sigma_{\max}\Fnorm{\bDelta}^{2}.
\]

\subsection{Proof of Lemma~\ref{lemma:immediate-consequence}} \label{subsec:Proof-of-Lemma-immediate-consequence}

The first set of consequences~(\ref{subeq:immediate-F-t-l-R-t-l})
follows straightforwardly from the triangle inequality. For instance,
combine the induction hypotheses~(\ref{eq:induction-loo-dist}) and
(\ref{eq:induction-2-infty}) to obtain 
\begin{align*}
\big\Vert \bm{F}^{t,(l)}\bm{R}^{t,(l)}-\bm{F}^{\star}\big\Vert _{2,\infty} & \leq\big\Vert \bm{F}^{t,(l)}\bm{R}^{t,(l)}-\bm{F}^{t}\bm{H}^{t}\big\Vert _{2,\infty}+\big\Vert \bm{F}^{t}\bm{H}^{t}-\bm{F}^{\star}\big\Vert _{2,\infty}\\
 & \leq\left(C_{\infty}\kappa+C_{3}\right)\left(\frac{\sigma}{\sigma_{\min}}\sqrt{\frac{n\log n}{p}}+\frac{\lambda}{p\,\sigma_{\min}}\right)\left\Vert \bm{F}^{\star}\right\Vert _{2,\infty}.
\end{align*}
Similar bounds can be obtained for $\|\bm{F}^{t,(l)}\bm{R}^{t,(l)}-\bm{F}^{\star}\|$
provided that $n\gg\mu r\log n$.

We continue to establish the second set of consequences namely~(\ref{subeq:immediate-consequence-F-t}).
Since $\|\cdot\|$ is unitarily invariant, one can apply the triangle
inequality to get 
\begin{align*}
\left\Vert \bm{F}^{t}\right\Vert  & =\left\Vert \bm{F}^{t}\bm{H}^{t}\right\Vert \leq\left\Vert \bm{F}^{t}\bm{H}^{t}-\bm{F}^{\star}\right\Vert +\left\Vert \bm{F}^{\star}\right\Vert \\
 & \overset{(\text{i})}{\leq}C_{\mathrm{op}}\left(\frac{\sigma}{\sigma_{\min}}\sqrt{\frac{n}{p}}+\frac{\lambda}{p\,\sigma_{\min}}\right)\left\Vert \bm{X}^{\star}\right\Vert +\sqrt{2}\left\Vert \bm{X}^{\star}\right\Vert \overset{(\text{ii})}{\leq}2\left\Vert \bm{X}^{\star}\right\Vert .
\end{align*}
Here (i) uses the induction hypothesis~(\ref{eq:induction-op}) and
the fact that $\|\bm{F}^{\star}\|=\sqrt{2}\|\bm{X}^{\star}\|$, and
(ii) holds as long as $\frac{\sigma}{\sigma_{\min}}\sqrt{\frac{n}{p}}\ll1$.
Similarly one can obtain $\|\bm{F}^{t}\|_{\mathrm{F}}\leq2\|\bm{X}^{\star}\|_{\mathrm{F}}$
provided that $\frac{\sigma}{\sigma_{\min}}\sqrt{\frac{n}{p}}\ll1$
and $\|\bm{F}^{t}\|_{2,\infty}\leq2\|\bm{F}^{\star}\|_{2,\infty}$
as long as $\frac{\sigma}{\sigma_{\min}}\sqrt{\frac{n}{p}}\ll1/(\sqrt{\kappa^{2}\log n})$.
Notice that along the way, we have also proven that 
\[
\left\Vert \bm{F}^{t}\bm{H}^{t}-\bm{F}^{\star}\right\Vert \leq\left\Vert \bm{X}^{\star}\right\Vert ,\quad\left\Vert \bm{F}^{t}\bm{H}^{t}-\bm{F}^{\star}\right\Vert _{\mathrm{F}}\leq\left\Vert \bm{X}^{\star}\right\Vert _{\mathrm{F}}\quad\mathrm{and}\quad\left\Vert \bm{F}^{t}\bm{H}^{t}-\bm{F}^{\star}\right\Vert _{\mathrm{2,\infty}}\leq\left\Vert \bm{F}^{\star}\right\Vert _{2,\infty}.
\]

We now move on to $\|\bm{F}^{t}\bm{H}^{t}-\bm{F}^{t,(l)}\bm{H}^{t,(l)}\|_{\mathrm{F}}$,
for which we intend to apply Lemma~\ref{lemma:rotation-perturbation}
to connect it with $\|\bm{F}^{t}\bm{H}^{t}-\bm{F}^{t,(l)}\bm{R}^{t,(l)}\|_{\mathrm{F}}$.
First, in view of the induction hypothesis~(\ref{eq:induction-op}),
one has 
\begin{align*}
\big\|\bm{F}^{t}\bm{H}^{t}-\bm{F}^{\star}\big\|\left\Vert \bm{F}^{\star}\right\Vert  & \leq C_{\mathrm{op}}\left(\frac{\sigma}{\sigma_{\min}}\sqrt{\frac{n}{p}}+\frac{\lambda}{p\,\sigma_{\min}}\right)\left\Vert \bm{X}^{\star}\right\Vert \left\Vert \bm{F}^{\star}\right\Vert \\
 & =\sqrt{2}C_{\mathrm{op}}\left(\frac{\sigma}{\sigma_{\min}}\sqrt{\frac{n}{p}}+\frac{\lambda}{p\,\sigma_{\min}}\right)\sigma_{\max}\\
 & \leq\sigma_{r}^{2}\left(\bm{F}^{\star}\right)/2,
\end{align*}
where the equality arises since $\|\bm{F}^{\star}\|=\sqrt{2\sigma_{\max}}$
(see~(\ref{eq:F-singular-value})) and $\|\bm{X}^{\star}\|=\sqrt{\sigma_{\max}}$,
and the last line holds as long as $\frac{\sigma}{\sigma_{\min}}\sqrt{\frac{n}{p}}\ll1/\kappa$.
In addition, it follows from the induction hypothesis~(\ref{eq:induction-loo-dist})
that 
\begin{align*}
\big\|\bm{F}^{t}\bm{H}^{t}-\bm{F}^{t,(l)}\bm{R}^{t,(l)}\big\|\left\Vert \bm{F}^{\star}\right\Vert  & \leq\big\|\bm{F}^{t}\bm{H}^{t}-\bm{F}^{t,(l)}\bm{R}^{t,(l)}\big\|_{\mathrm{F}}\left\Vert \bm{F}^{\star}\right\Vert \\
 & \leq C_{3}\left(\frac{\sigma}{\sigma_{\min}}\sqrt{\frac{n\log n}{p}}+\frac{\lambda}{p\,\sigma_{\min}}\right)\left\Vert \bm{F}^{\star}\right\Vert _{2,\infty}\left\Vert \bm{F}^{\star}\right\Vert \\
 & \leq\sqrt{2}C_{3}\left(\frac{\sigma}{\sigma_{\min}}\sqrt{\frac{n\log n}{p}}+\frac{\lambda}{p\,\sigma_{\min}}\right)\sqrt{\frac{\mu r}{n}}\sigma_{\max}\\
 & \leq\sigma_{r}^{2}\left(\bm{F}^{\star}\right)/4,
\end{align*}
where the penultimate inequality arises from the facts that $\|\bm{F}^{\star}\|_{2,\infty}\leq\sqrt{\mu r\sigma_{\max}/n}$
and that $\|\bm{F}^{\star}\|=\sqrt{2\sigma_{\max}}$~(cf.~(\ref{subeq:F-property})),
and the last relation holds as long as $(\frac{\sigma}{\sigma_{\min}}\sqrt{\frac{n\log n}{p}}+\frac{\lambda}{p\,\sigma_{\min}})\sqrt{\frac{\mu r}{n}}\ll1/\kappa$.
Invoke Lemma~\ref{lemma:rotation-perturbation} with 
\[
\bm{F}_{0}=\bm{F}^{\star},\qquad\bm{F}_{1}=\bm{F}^{t}\bm{H}^{t}\qquad\text{and}\qquad\bm{F}_{2}=\bm{F}^{t,(l)}\bm{R}^{t,(l)}
\]
to arrive at 
\begin{align*}
\big\|\bm{F}^{t}\bm{H}^{t}-\bm{F}^{t,(l)}\bm{H}^{t,(l)}\big\|_{\mathrm{F}} & \leq5\frac{\sigma_{1}^{2}\left(\bm{F}^{\star}\right)}{\sigma_{r}^{2}\left(\bm{F}^{\star}\right)}\big\|\bm{F}^{t}\bm{H}^{t}-\bm{F}^{t,(l)}\bm{R}^{t,(l)}\big\|_{\mathrm{F}}
	=5\kappa\big\|\bm{F}^{t}\bm{H}^{t}-\bm{F}^{t,(l)}\bm{R}^{t,(l)}\big\|_{\mathrm{F}}.
\end{align*}

The last set of consequences can be derived following similar arguments
to that for establishing the first set. For brevity, we omit the proof.

\subsection{Proof of the inequalities~(\ref{subeq:XY-quality}) \label{subsec:Proof-of-XY-quality}}

We single out the proof of $\|\bm{X}^{t}\bm{Y}^{t\top}-\bm{M}^{\star}\|_{\infty}$,
whereas the proofs of $\|\bm{X}^{t}\bm{Y}^{t\top}-\bm{M}^{\star}\|_{\mathrm{F}}$
and $\|\bm{X}^{t}\bm{Y}^{t\top}-\bm{M}^{\star}\|$ follow from the
same argument. Recognize the following decomposition

\[
\bm{X}^{t}\bm{Y}^{t\top}-\bm{M}^{\star}=\left(\bm{X}^{t}\bm{H}^{t}-\bm{X}^{\star}\right)\left(\bm{Y}^{t}\bm{H}^{t}\right)^{\top}+\bm{X}^{\star}\left(\bm{Y}^{t}\bm{H}^{t}-\bm{Y}^{\star}\right)^{\top},
\]
which together with the triangle inequality gives 
\begin{align*}
\left\Vert \bm{X}^{t}\bm{Y}^{t\top}-\bm{M}^{\star}\right\Vert _{\infty} & \leq\left\Vert \left(\bm{X}^{t}\bm{H}^{t}-\bm{X}^{\star}\right)\left(\bm{Y}^{t}\bm{H}^{t}\right)^{\top}\right\Vert _{\infty}+\left\Vert \bm{X}^{\star}\left(\bm{Y}^{t}\bm{H}^{t}-\bm{Y}^{\star}\right)^{\top}\right\Vert _{\infty}\\
 & \leq\left\Vert \bm{X}^{t}\bm{H}^{t}-\bm{X}^{\star}\right\Vert _{2,\infty}\left\Vert \bm{Y}^{t}\bm{H}^{t}\right\Vert _{2,\infty}+\left\Vert \bm{X}^{\star}\right\Vert _{2,\infty}\left\Vert \bm{Y}^{t}\bm{H}^{t}-\bm{Y}^{\star}\right\Vert _{2,\infty}.
\end{align*}
In view of Lemma~\ref{lemma:immediate-consequence}, one has $\|\bm{Y}^{t}\bm{H}^{t}\|_{2,\infty}\leq2\|\bm{F}^{\star}\|_{2,\infty}$
as long as the noise obeys $\frac{\sigma}{\sigma_{\min}}\sqrt{\frac{n}{p}}\ll\frac{1}{\sqrt{\kappa^{2}\log n}}$.
This further implies that 
\begin{align*}
\left\Vert \bm{X}^{t}\bm{Y}^{t\top}-\bm{M}^{\star}\right\Vert _{\infty} & \leq3C_{\infty}\kappa\left(\frac{\sigma}{\sigma_{\min}}\sqrt{\frac{n\log n}{p}}+\frac{\lambda}{p\,\sigma_{\min}}\right)\left\Vert \bm{F}^{\star}\right\Vert _{2,\infty}\left\Vert \bm{F}^{\star}\right\Vert _{2,\infty}\\
 & \leq3C_{\infty}\sqrt{\kappa^{3}\mu r}\left(\frac{\sigma}{\sigma_{\min}}\sqrt{\frac{n\log n}{p}}+\frac{\lambda}{p\,\sigma_{\min}}\right)\left\Vert \bm{M}^{\star}\right\Vert _{\infty},
\end{align*}
where the last relation is $\|\bm{F}^{\star}\|_{2,\infty}\|\bm{F}^{\star}\|_{2,\infty}\leq\sqrt{\kappa\mu r}\|\bm{M}^{\star}\|_{\infty}$.
To see this, one has for any $1\leq i\leq n$, 
\[
n\left\Vert \bm{M}^{\star}\right\Vert _{\infty}^{2}\geq\sum_{j=1}^{n}(M_{ij}^{\star})^{2}=\bm{X}_{i,\cdot}^{\star}\bm{Y}^{\star\top}\bm{Y}^{\star}\bm{X}_{i,\cdot}^{\star\top}\geq\big\|\bm{X}_{i,\cdot}^{\star}\big\|_{2}^{2}\lambda_{\min}\left(\bm{Y}^{\star\top}\bm{Y}^{\star}\right)=\sigma_{\min}\left\Vert \bm{X}_{i,\cdot}^{\star}\right\Vert _{2}^{2}.
\]
Here $\lambda_{\min}(\cdot)$ denotes the minimum eigenvalue. Since
the inequality holds for all $1\leq i\leq n$, we arrive at 
\[
\left\Vert \bm{X}^{\star}\right\Vert _{2,\infty}\leq\sqrt{\frac{n}{\sigma_{\min}}}\left\Vert \bm{M}^{\star}\right\Vert _{\infty}.
\]
Similarly one can obtain $\|\bm{Y}^{\star}\|_{2,\infty}\leq\sqrt{n/\sigma_{\min}}\|\bm{M}^{\star}\|_{\infty}$,
which further implies $\|\bm{F}^{\star}\|_{2,\infty}=\max\{\|\bm{X}^{\star}\|_{2,\infty},\|\bm{Y}^{\star}\|_{2,\infty}\}\leq\sqrt{n/\sigma_{\min}}\|\bm{M}^{\star}\|_{\infty}$.
As a result, we arrive at 
\[
\left\Vert \bm{F}^{\star}\right\Vert _{2,\infty}\left\Vert \bm{F}^{\star}\right\Vert _{2,\infty}\leq\sqrt{\frac{n}{\sigma_{\min}}}\left\Vert \bm{M}^{\star}\right\Vert _{\infty}\cdot\sqrt{\frac{\mu r}{n}}\sqrt{\sigma_{\max}}\leq\sqrt{\kappa\mu r}\left\Vert \bm{M}^{\star}\right\Vert _{\infty}.
\]
Here we used the incoherence assumption~(\ref{eq:F-singular-value}). 

\section{Technical lemmas}

%\subsection{Matrix concentration bounds}
 
\begin{lemma}\label{lemma:zheng-and-lafferty}Suppose $n^{2}p\geq Cn\log n$
for some sufficiently large constant $C>0$. Then with probability
exceeding $1-O(n^{-10})$, 
\[
\left|p^{-1}\left\Vert \mathcal{P}_{\Omega}\left(\bm{A}\bm{B}^{\top}\right)\right\Vert _{\mathrm{F}}^{2}-\left\Vert \bm{A}\bm{B}^{\top}\right\Vert _{\mathrm{F}}^{2}\right|\leq3n\min\left\{ \left\Vert \bm{A}\right\Vert _{2,\infty}^{2}\left\Vert \bm{B}\right\Vert _{\mathrm{F}}^{2},\left\Vert \bm{B}\right\Vert _{2,\infty}^{2}\left\Vert \bm{A}\right\Vert _{\mathrm{F}}^{2}\right\} 
\]
holds uniformly for all matrices $\bm{A},\bm{B}\in\mathbb{R}^{n\times r}$.
\end{lemma}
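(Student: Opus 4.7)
My plan is to avoid any $\epsilon$-net or concentration argument on the difference itself. Observe that both $p^{-1}\|\mathcal{P}_\Omega(\bm{A}\bm{B}^\top)\|_\mathrm{F}^2$ and $\|\bm{A}\bm{B}^\top\|_\mathrm{F}^2$ are nonnegative, so the triangle inequality gives
\[
\left|p^{-1}\|\mathcal{P}_\Omega(\bm{A}\bm{B}^\top)\|_\mathrm{F}^2 - \|\bm{A}\bm{B}^\top\|_\mathrm{F}^2\right| \leq p^{-1}\|\mathcal{P}_\Omega(\bm{A}\bm{B}^\top)\|_\mathrm{F}^2 + \|\bm{A}\bm{B}^\top\|_\mathrm{F}^2.
\]
It will therefore suffice to show that each term is at most $\tfrac{3}{2} n \|\bm{A}\|_{2,\infty}^2 \|\bm{B}\|_\mathrm{F}^2$ (and similarly with $\|\bm{B}\|_{2,\infty}^2\|\bm{A}\|_\mathrm{F}^2$). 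The deterministic half is straightforward: since $(\bm{A}\bm{B}^\top)_{ij}^2 \leq \|\bm{A}_{i,\cdot}\|_2^2 \|\bm{B}_{j,\cdot}\|_2^2$, summing gives $\|\bm{A}\bm{B}^\top\|_\mathrm{F}^2 \leq \|\bm{A}\|_\mathrm{F}^2 \|\bm{B}\|_\mathrm{F}^2 \leq n\|\bm{A}\|_{2,\infty}^2 \|\bm{B}\|_\mathrm{F}^2$.

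The random half is where I would use a uniform column-sum control for $\mathcal{P}_\Omega$. Writing $\delta_{ij} = \bm{1}_{(i,j)\in\Omega}$ and $s_j \triangleq \sum_{i=1}^n \delta_{ij} \sim \mathrm{Bin}(n,p)$, a standard Chernoff bound together with a union bound over $j \in [n]$ yields $\max_j s_j \leq 2np$ with probability at least $1 - O(n^{-10})$, provided $np \geq C \log n$ for $C$ sufficiently large. On this event,
\[
\tfrac{1}{p}\|\mathcal{P}_\Omega(\bm{A}\bm{B}^\top)\|_\mathrm{F}^2 = \tfrac{1}{p}\sum_{i,j} \delta_{ij} (\bm{A}\bm{B}^\top)_{ij}^2 \leq \tfrac{1}{p}\|\bm{A}\|_{2,\infty}^2 \sum_j \|\bm{B}_{j,\cdot}\|_2^2\, s_j \leq 2n\|\bm{A}\|_{2,\infty}^2 \|\bm{B}\|_\mathrm{F}^2,
\]
where the first inequality uses $(\bm{A}\bm{B}^\top)_{ij}^2 \leq \|\bm{A}\|_{2,\infty}^2 \|\bm{B}_{j,\cdot}\|_2^2$. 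Adding the two pieces and applying the triangle inequality yields the desired bound $3n\|\bm{A}\|_{2,\infty}^2 \|\bm{B}\|_\mathrm{F}^2$, uniformly over all $\bm{A},\bm{B}$; the symmetric bound $3n\|\bm{B}\|_{2,\infty}^2 \|\bm{A}\|_\mathrm{F}^2$ is obtained verbatim by swapping the roles of rows and columns and using $\max_i \sum_j \delta_{ij} \leq 2np$, which holds on the same $1 - O(n^{-10})$ event after a second application of Chernoff.

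I do not anticipate a serious obstacle. The only subtle point is that a naive approach might try to concentrate the centered quantity $\sum_{ij}(\delta_{ij}-p)(\bm{A}\bm{B}^\top)_{ij}^2$ uniformly, which would force an $\epsilon$-net over pairs $(\bm{A},\bm{B})$ in $\mathbb{R}^{n\times r}$ and require delicate entropy estimates. The triangle-inequality shortcut sidesteps this entirely, trading a sharp constant (where one might hope for something close to $1$ on the right-hand side) for the weaker but sufficient constant $3$, which is exactly what the lemma asserts. Because the bound only needs to dominate the sum of the two nonnegative quantities, the uniformity in $(\bm{A},\bm{B})$ is automatic once the single event $\{\max_j s_j \leq 2np\} \cap \{\max_i \sum_j \delta_{ij} \leq 2np\}$ is secured.
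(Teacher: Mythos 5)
Your proof is correct and follows essentially the same strategy as the paper: bound each of the two nonnegative terms $p^{-1}\|\mathcal{P}_\Omega(\bm{A}\bm{B}^\top)\|_\mathrm{F}^2$ and $\|\bm{A}\bm{B}^\top\|_\mathrm{F}^2$ separately by $2n\min\{\cdots\}$ and $n\min\{\cdots\}$ respectively, then apply the triangle inequality. The only difference is that the paper black-boxes the random half by citing Lemma 9 of Zheng and Lafferty, whereas you reprove that bound from scratch via a Chernoff plus union bound on the row and column sums of $\Omega$ — which is precisely how that cited lemma is established, so your argument is really the same proof made self-contained.
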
\begin{proof}In view of \cite[Lemma 9]{zheng2016convergence},
one has 
\[
p^{-1}\left\Vert \mathcal{P}_{\Omega}\left(\bm{A}\bm{B}^{\top}\right)\right\Vert _{\mathrm{F}}^{2}\leq2n\min\left\{ \left\Vert \bm{A}\right\Vert _{2,\infty}^{2}\left\Vert \bm{B}\right\Vert _{\mathrm{F}}^{2},\left\Vert \bm{B}\right\Vert _{2,\infty}^{2}\left\Vert \bm{A}\right\Vert _{\mathrm{F}}^{2}\right\} 
\]
with high probability. In addition, simple algebra reveals that 
\[
\left\Vert \bm{A}\bm{B}^{\top}\right\Vert _{\mathrm{F}}^{2}\leq\left\Vert \bm{A}\right\Vert _{\mathrm{F}}^{2}\left\Vert \bm{B}\right\Vert _{\mathrm{F}}^{2}\leq n\left\Vert \bm{A}\right\Vert _{2,\infty}^{2}\left\Vert \bm{B}\right\Vert _{\mathrm{F}}^{2}
\]
and, similarly, $\|\bm{A}\bm{B}^{\top}\|_{\mathrm{F}}^{2}\leq n\|\bm{A}\|_{\mathrm{F}}^{2}\|\bm{B}\|_{2,\infty}^{2}$.
Combining the previous bounds with the triangle inequality establishes
the claim. \end{proof}

%\subsection{Matrix perturbation bounds}

%In this section, we collect a few useful lemmas regarding perturbation bounds of matrices.

\begin{lemma} \label{lemma:balance_determine}Let $\bm{U}\bm{\Sigma}\bm{V}^{\top}$
be the SVD of a rank-$r$ matrix $\bm{X}\bm{Y}^{\top}$ with $\bm{X},\bm{Y}\in\bR^{n\times r}$.
Then there exists an invertible matrix $\bm{Q}\in\mathbb{R}^{r\times r}$
such that $\bm{X}=\bm{U}\bm{\Sigma}^{1/2}\bm{Q}$ and $\bm{Y}=\bm{V}\bm{\Sigma}^{1/2}\bm{Q}^{-\top}$.
In addition, one has 
\begin{equation}
\big\|\bSigma_{\bQ}-\bSigma_{\bQ}^{-1}\big\|_{\mathrm{F}}\leq\frac{1}{\sigma_{\min}\left(\bm{\Sigma}\right)}\Fnorm{\bX^{\top}\bX-\bY^{\top}\bY},\label{eq:balance-perturbation}
\end{equation}
where $\bU_{\bQ}\bSigma_{\bQ}\bV_{\bQ}^{\top}$ is the SVD of $\bm{Q}$.
In particular, if $\bm{X}$ and $\bm{Y}$ have balanced scale, i.e.~$\bX^{\top}\bX-\bY^{\top}\bY=\bm{0},$
then $\bm{Q}$ must be a rotation matrix.\end{lemma}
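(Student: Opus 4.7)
The plan proceeds in three steps. First I would establish existence of $\bm{Q}$. Because $\bm{X}\bm{Y}^{\top}$ has rank $r$ while $\bm{X},\bm{Y}\in\mathbb{R}^{n\times r}$, both $\bm{X}$ and $\bm{Y}$ must have full column rank. Moreover their column spaces must coincide with those of $\bm{U}$ and $\bm{V}$ respectively (since $\mathrm{col}(\bm{X}\bm{Y}^{\top}) = \mathrm{col}(\bm{X}) = \mathrm{col}(\bm{U})$, and similarly on the other side). Hence $\bm{X}=\bm{U}\bm{A}$ and $\bm{Y}=\bm{V}\bm{B}$ for some invertible $\bm{A},\bm{B}\in\mathbb{R}^{r\times r}$, and matching $\bm{U}\bm{A}\bm{B}^{\top}\bm{V}^{\top}=\bm{U}\bm{\Sigma}\bm{V}^{\top}$ forces $\bm{A}\bm{B}^{\top}=\bm{\Sigma}$. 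Defining $\bm{Q}\triangleq\bm{\Sigma}^{-1/2}\bm{A}$ then gives $\bm{X}=\bm{U}\bm{\Sigma}^{1/2}\bm{Q}$, and from $\bm{B}^{\top}=\bm{A}^{-1}\bm{\Sigma}=\bm{Q}^{-1}\bm{\Sigma}^{1/2}$ one reads off $\bm{Y}=\bm{V}\bm{\Sigma}^{1/2}\bm{Q}^{-\top}$.

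Next I would exploit this representation to compute
\[
\bm{X}^{\top}\bm{X}-\bm{Y}^{\top}\bm{Y}=\bm{Q}^{\top}\bm{\Sigma}\bm{Q}-\bm{Q}^{-1}\bm{\Sigma}\bm{Q}^{-\top}.
\]
Plugging in the SVD $\bm{Q}=\bm{U}_{\bm{Q}}\bm{\Sigma}_{\bm{Q}}\bm{V}_{\bm{Q}}^{\top}$ and setting $\bm{S}\triangleq\bm{U}_{\bm{Q}}^{\top}\bm{\Sigma}\bm{U}_{\bm{Q}}$, unitary invariance of $\|\cdot\|_{\mathrm{F}}$ reduces the bound to an elementary diagonal computation: writing $q_i$ for the diagonal entries of $\bm{\Sigma}_{\bm{Q}}$,
\[
\left\Vert \bm{X}^{\top}\bm{X}-\bm{Y}^{\top}\bm{Y}\right\Vert _{\mathrm{F}}^{2}=\sum_{i,j}(q_{i}q_{j}-q_{i}^{-1}q_{j}^{-1})^{2}S_{ij}^{2}.
\]
Retaining only the $i=j$ terms and using $S_{ii}=\bm{u}_{\bm{Q},i}^{\top}\bm{\Sigma}\bm{u}_{\bm{Q},i}\geq\sigma_{\min}(\bm{\Sigma})$ (since $\bm{\Sigma}\succ\bm{0}$ and $\|\bm{u}_{\bm{Q},i}\|_{2}=1$), this is lower bounded by $\sigma_{\min}(\bm{\Sigma})^{2}\sum_{i}(q_{i}^{2}-q_{i}^{-2})^{2}$. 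Factoring $q_{i}^{2}-q_{i}^{-2}=(q_{i}-q_{i}^{-1})(q_{i}+q_{i}^{-1})$ and using $q_{i}+q_{i}^{-1}\geq 2$ yields the claim (in fact with a better constant $\tfrac{1}{2\sigma_{\min}(\bm{\Sigma})}$, which we simply weaken to $\tfrac{1}{\sigma_{\min}(\bm{\Sigma})}$).

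Finally, the ``in particular'' assertion is immediate: if $\bm{X}^{\top}\bm{X}=\bm{Y}^{\top}\bm{Y}$, then $\|\bm{\Sigma}_{\bm{Q}}-\bm{\Sigma}_{\bm{Q}}^{-1}\|_{\mathrm{F}}=0$, i.e.~every singular value $q_{i}>0$ satisfies $q_{i}=q_{i}^{-1}$ and hence $q_{i}=1$; therefore $\bm{\Sigma}_{\bm{Q}}=\bm{I}_{r}$ and $\bm{Q}=\bm{U}_{\bm{Q}}\bm{V}_{\bm{Q}}^{\top}$ is orthonormal. I do not anticipate a serious obstacle; the only subtle point is the reduction step above, where one must avoid the tempting shortcut of passing to $\bm{Q}^{\top}\bm{Q}$ (which works cleanly only when $\bm{\Sigma}\propto\bm{I}$) and instead work directly with the matrix $\bm{S}$ and a diagonal-dominance argument.
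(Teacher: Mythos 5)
Your proof is correct and takes a genuinely different route from the paper's. The existence argument is the same in substance (though you go via column-space matching and the paper writes the formula $\bm{Q}=\bm{\Sigma}^{-1/2}\bm{U}^{\top}\bm{X}$ directly and verifies). For the inequality, however, the paper reduces to $\|\bm{C}\bm{C}^{\top}-\bm{D}\bm{D}^{\top}\|_{\mathrm{F}}^{2}$ with $\bm{C}=\bm{\Sigma}_{\bm{Q}}\bm{B}^{1/2}$, $\bm{D}=\bm{\Sigma}_{\bm{Q}}^{-1}\bm{B}^{1/2}$, then performs a careful expansion $\mathrm{Tr}[(\bm{\Delta}^{\top}\bm{\Delta}-\sqrt{2}\bm{C}^{\top}\bm{\Delta})^{2}+\cdots]$ to isolate a single nonnegative term $\mathrm{Tr}(\bm{B}\bm{\Delta}\bm{\Delta}^{\top})$, which is relatively intricate. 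You instead conjugate by $\bm{V}_{\bm{Q}}$ to write the difference as $\bm{\Sigma}_{\bm{Q}}\bm{S}\bm{\Sigma}_{\bm{Q}}-\bm{\Sigma}_{\bm{Q}}^{-1}\bm{S}\bm{\Sigma}_{\bm{Q}}^{-1}$ with $\bm{S}=\bm{U}_{\bm{Q}}^{\top}\bm{\Sigma}\bm{U}_{\bm{Q}}$, observe that the Frobenius norm squared is a nonnegative entrywise sum $\sum_{i,j}(q_{i}q_{j}-q_{i}^{-1}q_{j}^{-1})^{2}S_{ij}^{2}$, keep only the diagonal, and use the elementary facts $S_{ii}\geq\sigma_{\min}(\bm{\Sigma})$ and $q_{i}+q_{i}^{-1}\geq 2$. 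This is more elementary, has no risk of sign errors in a trace expansion, and as you note it even yields the improved constant $\tfrac{1}{2\sigma_{\min}(\bm{\Sigma})}$ (which the paper's expansion also implicitly gives up in the step $4-2\sqrt{2}\geq 1$). Both proofs are valid; yours is arguably cleaner.
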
 \begin{proof}The
existence of $\bm{Q}$ is trivial by setting 
\[
\bm{Q}=\bm{\Sigma}^{-1/2}\bm{U}^{\top}\bm{X}.
\]
To see this, one has 
\[
\bm{U}\bm{\Sigma}^{1/2}\bm{Q}=\bm{U}\bm{\Sigma}^{1/2}\bm{\Sigma}^{-1/2}\bm{U}^{\top}\bm{X}=\bm{U}\bm{U}^{\top}\bm{X}=\bm{X},
\]
where the last equality follows from the fact that the columns of
$\bm{U}$ are the left singular vectors of $\bm{X}$. The relation
$\bm{Y}=\bm{V}\bm{\Sigma}^{1/2}\bm{Q}^{-\top}$ can also be verified
by the identity 
\[
\bm{X}\bm{Y}^{\top}=\bm{U}\bm{\Sigma}^{1/2}\bm{Q}\bm{Y}^{\top}=\bm{U}\bm{\Sigma}\bm{V}^{\top}.
\]

We now move on to proving the perturbation bound (\ref{eq:balance-perturbation}).
In view of the SVD of $\bm{Q}$, i.e. $\bm{Q}=\bU_{\bQ}\bSigma_{\bQ}\bV_{\bQ}^{\top}$,
one can obtain 
\[
\begin{aligned}\bX^{\top}\bX-\bY^{\top}\bY & =\bQ^{\top}\bm{\Sigma}\bQ-\bQ^{-1}\bm{\Sigma}\bQ^{-\top}\\
 & =\bV_{\bQ}\bSigma_{\bQ}\bU_{\bQ}^{\top}\bm{\Sigma}\bU_{\bQ}\bSigma_{\bQ}\bV_{\bQ}^{\top}-\bV_{\bQ}\bSigma_{\bQ}^{-1}\bU_{\bQ}^{\top}\bm{\Sigma}\bU_{\bQ}\bSigma_{\bQ}^{-1}\bV_{\bQ}^{\top}.
\end{aligned}
\]
Denote $\bB:=\bU_{\bQ}^{\top}\bm{\Sigma}\bU_{\bQ}\succ0$. Then we
have 
\[
\Fnorm{\bX^{\top}\bX-\bY^{\top}\bY}^{2}=\Fnorm{\bV_{\bQ}\bSigma_{\bQ}\bB\bSigma_{\bQ}\bV_{\bQ}^{\top}-\bV_{\bQ}\bSigma_{\bQ}^{-1}\bB\bSigma_{\bQ}^{-1}\bV_{\bQ}^{\top}}^{2}=\Fnorm{\bSigma_{\bQ}\bB\bSigma_{\bQ}-\bSigma_{\bQ}^{-1}\bB\bSigma_{\bQ}^{-1}}^{2}.
\]
Let $\bC=\bSigma_{\bQ}\bB^{1/2}$ and $\bD=\bSigma_{\bQ}^{-1}\bB^{1/2}$,
and denote $\bDelta=\bC-\bD$. One then has 
\[
\begin{aligned}\Fnorm{\bX^{\top}\bX-\bY^{\top}\bY}^{2} & =\Fnorm{\bC\bC^{\top}-\bD\bD^{\top}}^{2}=\Fnorm{\bC\bDelta^{\top}+\bDelta\bC^{\top}-\bDelta\bDelta^{\top}}^{2}\\
 & =\trace{2\bC^{\top}\bC\bDelta^{\top}\bDelta+\bDelta\bDelta^{\top}\bDelta\bDelta^{\top}+2\bC^{\top}\bDelta\bC^{\top}\bDelta-4\bC^{\top}\bDelta\bDelta^{\top}\bDelta}\\
 & =\text{Tr}\Big[\big(\bDelta^{\top}\bDelta-\sqrt{2}\bC^{\top}\bDelta\big)^{2}+(4-2\sqrt{2})\bC^{\top}(\bC-\bDelta)\bDelta^{\top}\bDelta+(2\sqrt{2}-1)\bC^{\top}\bC\bDelta^{\top}\bDelta\Big].
\end{aligned}
\]
Note that $\bm{C}^{\top}\bm{D}=\bm{B}$ and that 
$\bC^{\top}\bDelta=\bm{C}^{\top}\bm{C}-\bm{C}^{\top}\bm{D}=\bC^{\top}\bC-\bB$
is symmetric. One can continue the bound as 
\begin{align*}
\Fnorm{\bX^{\top}\bX-\bY^{\top}\bY}^{2} & =\Fnorm{\bDelta^{\top}\bDelta-\sqrt{2}\bC^{\top}\bDelta}^{2}+(4-2\sqrt{2})\trace{\bm{B}\bDelta\bDelta^{\top}}+(2\sqrt{2}-1)\Fnorm{\bC\bDelta^{\top}}^{2}\\
 & \geq\trace{\bm{B}\bDelta\bDelta^{\top}},
\end{align*}
where the inequality follows since $4-2\sqrt{2}\geq1$. Write $\bm{B}=\bm{B}^{1/2}\cdot\bm{B}^{1/2}$
to see 
\begin{align*}
\Fnorm{\bX^{\top}\bX-\bY^{\top}\bY}^{2} & \geq\mathrm{Tr}\big(\bm{B}^{1/2}\bDelta\bDelta^{\top}\bm{B}^{1/2}\big)=\big\|\bm{B}^{1/2}\bm{\Delta}\big\|_{\mathrm{F}}^{2}\\
 & =\big\|\bm{B}^{1/2}\big(\bSigma_{\bQ}-\bSigma_{\bQ}^{-1}\big)\bB^{1/2}\big\|_{\mathrm{F}}^{2}\\
 & \geq\sigma_{\min}^{2}\left(\bm{B}\right)\Fnorm{\bSigma_{\bQ}-\bSigma_{\bQ}^{-1}}^{2}.
\end{align*}
Recognizing that $\sigma_{\min}(\bm{B})=\sigma_{\min}(\bm{\Sigma})$
finishes the proof of (\ref{eq:balance-perturbation}).

Combining $\bm{X}^{\top}\bm{X}=\bm{Y}^{\top}\bm{Y}$ and (\ref{eq:balance-perturbation})
yields $\big\|\bSigma_{\bQ}-\bSigma_{\bQ}^{-1}\big\|_{\mathrm{F}}=0$,
which implies $\bm{\Sigma}_{\bm{Q}}=\bm{I}$. Under this circumstance,
$\bm{Q}=\bm{U}_{\bm{Q}}\bm{\Sigma}_{\bm{Q}}\bm{V}_{\bm{Q}}^{\top}=\bm{U}_{\bm{Q}}\bm{V}_{\bm{Q}}^{\top}$
is a rotation matrix. The proof is then complete.\end{proof}

\begin{lemma}\label{lemma:chen-and-ji}For all $\bm{A},\bm{B},\bm{C},\bm{D}\in\mathbb{R}^{n\times r}$,
one has 
\[
\left|\left\langle \mathcal{P}_{\Omega}\left(\bm{A}\bm{C}^{\top}\right),\mathcal{P}_{\Omega}\left(\bm{B}\bm{D}^{\top}\right)\right\rangle -p\left\langle \bm{A}\bm{C}^{\top},\bm{B}\bm{D}^{\top}\right\rangle \right|\leq\left\Vert \mathcal{P}_{\Omega}\left(\bm{1}\bm{1}^{\top}\right)-p\bm{1}\bm{1}^{\top}\right\Vert \left\Vert \bm{A}\right\Vert _{2,\infty}\left\Vert \bm{B}\right\Vert _{\mathrm{F}}\left\Vert \bm{C}\right\Vert _{2,\infty}\left\Vert \bm{D}\right\Vert _{\mathrm{F}}.
\]
\end{lemma}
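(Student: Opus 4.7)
\medskip

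\noindent\textbf{Proof proposal.} The plan is to reduce the claim to a bound on a bilinear form involving the centered sampling matrix $\bm{S} \triangleq \mathcal{P}_{\Omega}(\bm{1}\bm{1}^{\top}) - p\bm{1}\bm{1}^{\top}$, whose entries are $S_{ij} = \delta_{ij} - p$ with $\delta_{ij} \triangleq \mathbb{1}_{(i,j)\in\Omega}$. Using $\delta_{ij}^{2} = \delta_{ij}$, I would first rewrite
\[
\left\langle \mathcal{P}_{\Omega}(\bm{A}\bm{C}^{\top}),\mathcal{P}_{\Omega}(\bm{B}\bm{D}^{\top})\right\rangle - p\left\langle \bm{A}\bm{C}^{\top},\bm{B}\bm{D}^{\top}\right\rangle
= \sum_{i,j} S_{ij}\, (\bm{A}\bm{C}^{\top})_{ij}\, (\bm{B}\bm{D}^{\top})_{ij}.
\]

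Next, I would expand the product of entries as $(\bm{A}\bm{C}^{\top})_{ij}(\bm{B}\bm{D}^{\top})_{ij} = \sum_{k,l} A_{ik}B_{il}\cdot C_{jk}D_{jl}$ and, for each pair $(k,l)\in\{1,\dots,r\}^2$, introduce the vectors $\bm{a}^{(k,l)},\bm{c}^{(k,l)}\in\mathbb{R}^{n}$ with entries $a^{(k,l)}_i \triangleq A_{ik}B_{il}$ and $c^{(k,l)}_j \triangleq C_{jk}D_{jl}$. Swapping the order of summation yields
\[
\sum_{i,j} S_{ij}\, (\bm{A}\bm{C}^{\top})_{ij}\, (\bm{B}\bm{D}^{\top})_{ij}
= \sum_{k,l} \big(\bm{a}^{(k,l)}\big)^{\top} \bm{S}\, \bm{c}^{(k,l)}.
\]
Bounding each term by $\|\bm{S}\|\,\|\bm{a}^{(k,l)}\|_{2}\,\|\bm{c}^{(k,l)}\|_{2}$ and applying Cauchy--Schwarz over $(k,l)$ reduces the problem to controlling $\sum_{k,l}\|\bm{a}^{(k,l)}\|_{2}^{2}$ and $\sum_{k,l}\|\bm{c}^{(k,l)}\|_{2}^{2}$.

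The final step is the asymmetric pairing that matches the norms in the statement: compute
\[
\sum_{k,l} \big\|\bm{a}^{(k,l)}\big\|_{2}^{2}
= \sum_{i}\sum_{k,l} A_{ik}^{2} B_{il}^{2}
= \sum_{i} \|\bm{A}_{i,\cdot}\|_{2}^{2}\,\|\bm{B}_{i,\cdot}\|_{2}^{2}
\leq \|\bm{A}\|_{2,\infty}^{2}\,\|\bm{B}\|_{\mathrm{F}}^{2},
\]
and symmetrically $\sum_{k,l}\|\bm{c}^{(k,l)}\|_{2}^{2} \leq \|\bm{C}\|_{2,\infty}^{2}\,\|\bm{D}\|_{\mathrm{F}}^{2}$. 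Multiplying these bounds with $\|\bm{S}\| = \|\mathcal{P}_{\Omega}(\bm{1}\bm{1}^{\top}) - p\bm{1}\bm{1}^{\top}\|$ yields the claimed inequality.

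I do not anticipate a substantive obstacle in this argument; it is purely deterministic and combinatorial. The only nontrivial choice is the grouping $(A_{ik}B_{il})(C_{jk}D_{jl})$ inside the sum, which is what permits the row-index $i$ to be paired with the $\ell_{2,\infty}$ norm of $\bm{A}$ (to factor out uniformly) while the column-index $l$ is absorbed into the Frobenius norm of $\bm{B}$. A different grouping would only swap $(2,\infty)$ with Frobenius roles, consistent with the symmetry of the stated bound.
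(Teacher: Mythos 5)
Your argument is correct, and it is essentially the same as the paper's: the paper cites an external result (Lemma 4.4 of Chen--Li, [chen2017memory]) for the intermediate inequality
\[
\Bigl|\bigl\langle \mathcal{P}_{\Omega}(\bm{A}\bm{C}^{\top}),\mathcal{P}_{\Omega}(\bm{B}\bm{D}^{\top})\bigr\rangle -p\bigl\langle \bm{A}\bm{C}^{\top},\bm{B}\bm{D}^{\top}\bigr\rangle\Bigr| \leq \bigl\Vert \mathcal{P}_{\Omega}(\bm{1}\bm{1}^{\top})-p\bm{1}\bm{1}^{\top}\bigr\Vert \Bigl(\textstyle\sum_{i}\|\bm{A}_{i,\cdot}\|_{2}^{2}\|\bm{B}_{i,\cdot}\|_{2}^{2}\Bigr)^{1/2}\Bigl(\textstyle\sum_{j}\|\bm{C}_{j,\cdot}\|_{2}^{2}\|\bm{D}_{j,\cdot}\|_{2}^{2}\Bigr)^{1/2},
\]
and then applies exactly the reduction in your final step. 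Your steps 1--4 re-derive the cited bound from scratch (your $\sum_{k,l}\|\bm{a}^{(k,l)}\|_2^2$ equals $\sum_i\|\bm{A}_{i,\cdot}\|_2^2\|\bm{B}_{i,\cdot}\|_2^2$ exactly), so your proof is a self-contained version of what the paper gets by citation; no new ideas, and no gaps.
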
\begin{proof}This is a simple consequence of \cite[Lemma 4.4]{chen2017memory},
where they have shown 
\begin{align*}
 & \left|\left\langle \mathcal{P}_{\Omega}\left(\bm{A}\bm{C}^{\top}\right),\mathcal{P}_{\Omega}\left(\bm{B}\bm{D}^{\top}\right)\right\rangle -p\left\langle \bm{A}\bm{C}^{\top},\bm{B}\bm{D}^{\top}\right\rangle \right|\\
 & \quad\leq\left\Vert \mathcal{P}_{\Omega}\left(\bm{1}\bm{1}^{\top}\right)-p\bm{1}\bm{1}^{\top}\right\Vert \sqrt{\sum\nolimits _{k=1}^{n}\left\Vert \bm{A}_{k,\cdot}\right\Vert _{2}^{2}\left\Vert \bm{B}_{k,\cdot}\right\Vert _{2}^{2}}\sqrt{\sum\nolimits _{k=1}^{n}\left\Vert \bm{C}_{k,\cdot}\right\Vert _{2}^{2}\left\Vert \bm{D}_{k,\cdot}\right\Vert _{2}^{2}}.
\end{align*}
Recognize that 
\[
\sum\nolimits _{k=1}^{n}\left\Vert \bm{A}_{k,\cdot}\right\Vert _{2}^{2}\left\Vert \bm{B}_{k,\cdot}\right\Vert _{2}^{2}\leq\left\Vert \bm{A}\right\Vert _{2,\infty}^{2}\sum\nolimits _{k=1}^{n}\left\Vert \bm{B}_{k,\cdot}\right\Vert _{2}^{2}=\left\Vert \bm{A}\right\Vert _{2,\infty}^{2}\left\Vert \bm{B}\right\Vert _{\mathrm{F}}^{2}
\]
and, similarly, $\sum_{k}\|\bm{C}_{k,\cdot}\|_{2}^{2}\|\bm{D}_{k,\cdot}\|_{2}^{2}\leq\|\bm{C}\|_{2,\infty}^{2}\|\bm{D}\|_{\mathrm{F}}^{2}$.
Putting these together concludes the proof. \end{proof}

\begin{lemma}\label{lemma:rotation-perturbation} Suppose $\bm{F}_{1},\bm{F}_{2},\bm{F}_{0}\in\mathbb{R}^{2n\times r}$
are three matrices such that 
\[
\left\Vert \bm{F}_{1}-\bm{F}_{0}\right\Vert \left\Vert \bm{F}_{0}\right\Vert \leq\sigma_{r}^{2}\left(\bm{F}_{0}\right)/2\qquad\text{and}\qquad\left\Vert \bm{F}_{1}-\bm{F}_{2}\right\Vert \left\Vert \bm{F}_{0}\right\Vert \leq\sigma_{r}^{2}\left(\bm{F}_{0}\right)/4,
\]
where $\sigma_{i}(\bm{A})$ stands for the $i$th largest singular
value of $\bm{A}$. Denote 
\[
\bm{R}_{1}\triangleq\arg\min_{\bm{R}\in\mathcal{O}^{r\times r}}\left\Vert \bm{F}_{1}\bm{R}-\bm{F}_{0}\right\Vert _{\mathrm{F}}\qquad\text{and}\qquad\bm{R}_{2}\triangleq\arg\min_{\bm{R}\in\mathcal{O}^{r\times r}}\left\Vert \bm{F}_{2}\bm{R}-\bm{F}_{0}\right\Vert _{\mathrm{F}}.
\]
Then the following two inequalities hold true: 
\[
\left\Vert \bm{F}_{1}\bm{R}_{1}-\bm{F}_{2}\bm{R}_{2}\right\Vert \leq5\frac{\sigma_{1}^{2}\left(\bm{F}_{0}\right)}{\sigma_{r}^{2}\left(\bm{F}_{0}\right)}\left\Vert \bm{F}_{1}-\bm{F}_{2}\right\Vert \qquad\text{and}\qquad\left\Vert \bm{F}_{1}\bm{R}_{1}-\bm{F}_{2}\bm{R}_{2}\right\Vert _{\mathrm{F}}\leq5\frac{\sigma_{1}^{2}\left(\bm{F}_{0}\right)}{\sigma_{r}^{2}\left(\bm{F}_{0}\right)}\left\Vert \bm{F}_{1}-\bm{F}_{2}\right\Vert _{\mathrm{F}}.
\]
\end{lemma}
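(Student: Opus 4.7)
The plan is to exploit the closed form for the orthogonal Procrustes minimizer, namely $\bm{R}_i = \mathsf{sgn}(\bm{F}_i^\top \bm{F}_0)$ where $\mathsf{sgn}(\bm{A}) = \bm{U}\bm{V}^\top$ for $\bm{A} = \bm{U}\bm{\Sigma}\bm{V}^\top$, and to apply the polar-factor perturbation bound (Lemma~\ref{lemma:Ma36}, already used in the proof of Claim~\ref{claim:loo-matrix}) to control $\|\bm{R}_1 - \bm{R}_2\|$. The entire argument then reduces to a triangle-inequality splitting that separates the ``rotation mismatch'' from the ``raw factor mismatch''.

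First I would write
\[
\bm{F}_1 \bm{R}_1 - \bm{F}_2 \bm{R}_2 = \bm{F}_1 (\bm{R}_1 - \bm{R}_2) + (\bm{F}_1 - \bm{F}_2)\bm{R}_2,
\]
so that, using $\|\bm{R}_2\|=1$, one obtains $\|\bm{F}_1 \bm{R}_1 - \bm{F}_2 \bm{R}_2\| \le \|\bm{F}_1\|\,\|\bm{R}_1 - \bm{R}_2\| + \|\bm{F}_1 - \bm{F}_2\|$, and analogously with $\|\cdot\|_{\mathrm{F}}$ in place of $\|\cdot\|$ on the second summand (the first summand is already expressed as spectral-times-spectral so it bounds either norm of the right-hand side after replacing $\|\bm{R}_1 - \bm{R}_2\|$ by $\|\bm{R}_1 - \bm{R}_2\|_{\mathrm{F}}$ in the Frobenius case). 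The $\|\bm{F}_1\|$ factor is easy to bound: the hypothesis $\|\bm{F}_1 - \bm{F}_0\|\,\|\bm{F}_0\| \le \sigma_r^2(\bm{F}_0)/2 \le \sigma_1^2(\bm{F}_0)/2$ gives $\|\bm{F}_1 - \bm{F}_0\| \le \|\bm{F}_0\|/2$, hence $\|\bm{F}_1\| \le \tfrac{3}{2}\|\bm{F}_0\| = \tfrac{3}{2}\sigma_1(\bm{F}_0)$.

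Second, to control $\|\bm{R}_1 - \bm{R}_2\|$, I would set $\bm{S} \triangleq \bm{F}_1^\top \bm{F}_0$ and $\bm{K} \triangleq (\bm{F}_2 - \bm{F}_1)^\top \bm{F}_0$, so that $\bm{F}_2^\top \bm{F}_0 = \bm{S} + \bm{K}$ and $\bm{R}_1 = \mathsf{sgn}(\bm{S})$, $\bm{R}_2 = \mathsf{sgn}(\bm{S} + \bm{K})$. Lemma~\ref{lemma:Ma36} yields $\|\bm{R}_1 - \bm{R}_2\| \le \|\bm{K}\|/\sigma_r(\bm{S})$ (and the corresponding Frobenius-norm analogue $\|\bm{R}_1 - \bm{R}_2\|_{\mathrm{F}} \le \|\bm{K}\|_{\mathrm{F}}/\sigma_r(\bm{S})$). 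For the numerator, $\|\bm{K}\| \le \|\bm{F}_1 - \bm{F}_2\|\,\|\bm{F}_0\|$ and $\|\bm{K}\|_{\mathrm{F}} \le \|\bm{F}_1 - \bm{F}_2\|_{\mathrm{F}}\,\|\bm{F}_0\|$. For the denominator, Weyl's inequality applied to $\bm{S} = \bm{F}_0^\top \bm{F}_0 + (\bm{F}_1 - \bm{F}_0)^\top \bm{F}_0$ gives
\[
\sigma_r(\bm{S}) \ge \sigma_r(\bm{F}_0^\top \bm{F}_0) - \|(\bm{F}_1 - \bm{F}_0)^\top \bm{F}_0\| \ge \sigma_r^2(\bm{F}_0) - \|\bm{F}_1 - \bm{F}_0\|\,\|\bm{F}_0\| \ge \tfrac{1}{2}\sigma_r^2(\bm{F}_0),
\]
using the first hypothesis of the lemma.

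Putting the pieces together yields $\|\bm{R}_1 - \bm{R}_2\| \le 2\|\bm{F}_1 - \bm{F}_2\|\,\|\bm{F}_0\|/\sigma_r^2(\bm{F}_0)$, hence
\[
\|\bm{F}_1 \bm{R}_1 - \bm{F}_2 \bm{R}_2\| \le \tfrac{3}{2}\sigma_1(\bm{F}_0) \cdot \frac{2\|\bm{F}_1 - \bm{F}_2\|\,\sigma_1(\bm{F}_0)}{\sigma_r^2(\bm{F}_0)} + \|\bm{F}_1 - \bm{F}_2\| \le 4\,\frac{\sigma_1^2(\bm{F}_0)}{\sigma_r^2(\bm{F}_0)}\,\|\bm{F}_1 - \bm{F}_2\|,
\]
and swallowing the trailing $\|\bm{F}_1 - \bm{F}_2\|$ into the prefactor gives the claimed constant $5$; the Frobenius-norm bound follows identically by replacing $\|\cdot\|$ with $\|\cdot\|_{\mathrm{F}}$ only in the factors that involve $\bm{F}_1 - \bm{F}_2$ (note that $\|\bm{R}_1 - \bm{R}_2\|_{\mathrm{F}}$ appears multiplying $\|\bm{F}_1\|$ which is a spectral-norm quantity). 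The second hypothesis $\|\bm{F}_1 - \bm{F}_2\|\,\|\bm{F}_0\| \le \sigma_r^2(\bm{F}_0)/4$ is not actually needed for the spectral bound but could be used to verify that $\|\bm{R}_1 - \bm{R}_2\| \le 1/2$, which is sometimes convenient for downstream applications. The only subtle point, and hence the main obstacle, is the Frobenius-norm version of the polar-factor perturbation inequality: while the spectral version is standard, one must verify that the proof of Lemma~\ref{lemma:Ma36} (or a citation thereof) extends to the Frobenius norm with the same constant, which it does via essentially the same linear-matrix-equation manipulation, but this is where I would spend the most care.
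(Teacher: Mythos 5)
The paper proves this lemma simply by citing \cite[Lemma~37]{ma2017implicit}, so it provides no argument of its own; your self-contained proof is essentially a reconstruction of the argument behind that reference, and the core of it is correct. The decomposition $\bm{F}_1\bm{R}_1 - \bm{F}_2\bm{R}_2 = \bm{F}_1(\bm{R}_1-\bm{R}_2) + (\bm{F}_1-\bm{F}_2)\bm{R}_2$, the Procrustes closed form $\bm{R}_i = \mathsf{sgn}(\bm{F}_i^\top\bm{F}_0)$, the Weyl bound $\sigma_r(\bm{S}) \ge \sigma_r^2(\bm{F}_0)/2$, and the final arithmetic $3\kappa_0^2 + 1 \le 4\kappa_0^2 \le 5\kappa_0^2$ with $\kappa_0^2 = \sigma_1^2(\bm{F}_0)/\sigma_r^2(\bm{F}_0)$ all check out. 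You are also right to flag the Frobenius-norm version of the polar-factor perturbation bound as the one place that needs care, since the paper's Lemma~\ref{lemma:Ma36} is stated only for the spectral norm; the required unitarily-invariant-norm extension (with denominator $\sigma_r(\bm{S})+\sigma_r(\bm{S}+\bm{K})$ rather than $\sigma_{r-1}(\bm{S})+\sigma_r(\bm{S})$) is classical (Mathias~1993), and even though it is formally weaker than what you asserted, plugging in $\sigma_r(\bm{S}+\bm{K}) \ge \sigma_r(\bm{S}) - \|\bm{K}\| \ge \sigma_r^2(\bm{F}_0)/4$ still lands inside the constant~$5$.

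The one statement in your proposal that is wrong is the parenthetical remark that ``the second hypothesis \ldots is not actually needed for the spectral bound.'' It is needed: Lemma~\ref{lemma:Ma36} requires $\|\bm{K}\| \le \sigma_{\min}(\bm{S})$, and your only control on $\|\bm{K}\| = \|(\bm{F}_2-\bm{F}_1)^\top\bm{F}_0\|$ comes from $\|\bm{F}_1-\bm{F}_2\|\,\|\bm{F}_0\| \le \sigma_r^2(\bm{F}_0)/4 \le \sigma_r(\bm{S})$, which is exactly the second hypothesis combined with your Weyl estimate. Without it, $\bm{F}_2^\top\bm{F}_0$ could be singular, the Procrustes rotation could flip discontinuously, and there is no bound on $\|\bm{R}_1-\bm{R}_2\|$ at all. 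Since this remark is not load-bearing in your argument it does not invalidate the proof, but it should be removed.
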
\begin{proof}This is the same as \cite[Lemma 37]{ma2017implicit}.
\end{proof}

\begin{lemma}\label{lemma:Ma36}Let $\bm{S}\in\mathbb{R}^{r\times r}$
be a nonsingular matrix. Then for any matrix $\bm{K}\in\mathbb{R}^{r\times r}$
with $\|\bm{K}\|\leq\sigma_{\min}(\bm{S})$, one has 
\[
\|\mathsf{sgn}(\bm{S}+\bm{K})-\mathsf{sgn}(\bm{S})\|\leq\frac{2}{\sigma_{r-1}(\bm{S})+\sigma_{r}(\bm{S})}\|\bm{K}\|,
\]
where $\mathsf{sgn}(\cdot)$ denotes the matrix sign function, i.e. $\mathsf{sgn}(\bm{A})=\bm{U}\bm{V}^{\top}$ for a matrix $\bm{A}$ with SVD $\bm{U}\bm{\Sigma}\bm{V}^{\top}$. \end{lemma}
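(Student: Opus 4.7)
The plan is to reduce the bound on the polar factor perturbation $\|\mathsf{sgn}(\bm{S}+\bm{K}) - \mathsf{sgn}(\bm{S})\|$ to a Sylvester-equation estimate. First, I would introduce the polar decompositions $\bm{S} = \bm{P}\bm{H}$ and $\bm{S}+\bm{K} = \bm{Q}\bm{H}_+$, where $\bm{P} := \mathsf{sgn}(\bm{S})$, $\bm{Q} := \mathsf{sgn}(\bm{S}+\bm{K})$ are orthogonal, and $\bm{H} := (\bm{S}^\top \bm{S})^{1/2}$, $\bm{H}_+ := ((\bm{S}+\bm{K})^\top(\bm{S}+\bm{K}))^{1/2}$ are symmetric. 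Under the assumption $\|\bm{K}\| \leq \sigma_{\min}(\bm{S})$, Weyl's inequality gives $\sigma_{\min}(\bm{H}_+) = \sigma_r(\bm{S}+\bm{K}) \geq \sigma_r(\bm{S}) - \|\bm{K}\| \geq 0$, so both $\bm{H}$ and $\bm{H}_+$ are positive semidefinite (and the nonzero cases are what we need).

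Next, from $\bm{K} = \bm{Q}\bm{H}_+ - \bm{P}\bm{H}$, I would multiply on the left by $\bm{P}^\top$ and take the transpose to get $\bm{K}^\top \bm{P} = \bm{H}_+ \bm{W} - \bm{H}$ with $\bm{W} := \bm{Q}^\top \bm{P}$, and multiply on the left by $\bm{Q}^\top$ to get $\bm{Q}^\top \bm{K} = \bm{H}_+ - \bm{W}\bm{H}$. Subtracting these two identities yields the Sylvester-type equation
\[
\bm{H}_+ (\bm{W} - \bm{I}) + (\bm{W} - \bm{I}) \bm{H} = \bm{K}^\top \bm{P} - \bm{Q}^\top \bm{K}.
\]
Since $\|\mathsf{sgn}(\bm{S}+\bm{K}) - \mathsf{sgn}(\bm{S})\| = \|\bm{Q} - \bm{P}\| = \|\bm{Q}(\bm{I} - \bm{W})\| = \|\bm{I} - \bm{W}\|$ by orthogonality, the task reduces to bounding $\|\bm{I} - \bm{W}\|$.

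I would then invoke the standard spectral-norm bound for Sylvester equations $\bm{A}\bm{X} + \bm{X}\bm{B} = \bm{C}$ with Hermitian positive semidefinite $\bm{A}, \bm{B}$, namely $\|\bm{X}\| \leq \|\bm{C}\|/(\lambda_{\min}(\bm{A}) + \lambda_{\min}(\bm{B}))$. Combined with the elementary estimate $\|\bm{K}^\top \bm{P} - \bm{Q}^\top \bm{K}\| \leq 2\|\bm{K}\|$ (using orthogonality of $\bm{P},\bm{Q}$ and the triangle inequality), this gives
\[
\|\bm{I} - \bm{W}\| \;\leq\; \frac{2\|\bm{K}\|}{\sigma_r(\bm{S}+\bm{K}) + \sigma_r(\bm{S})}.
\]

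The main obstacle is then to match the stated denominator $\sigma_{r-1}(\bm{S}) + \sigma_r(\bm{S})$: a plain application of Weyl's inequality only gives $\sigma_r(\bm{S}+\bm{K}) \geq \sigma_r(\bm{S}) - \|\bm{K}\|$, which is insufficient in general. Sharpening this likely requires passing to the eigenbasis of $\bm{H}$ and reading off the Sylvester operator's eigenvalues as pairwise sums of eigenvalues of $\bm{H}$ and $\bm{H}_+$, so that the smallest relevant eigenvalue of the Sylvester operator, rather than the sum of minima, yields the denominator; eigenvalue interlacing between $\bm{H}$ and $\bm{H}_+$ under the perturbation hypothesis then produces the claimed combination $\sigma_{r-1}(\bm{S}) + \sigma_r(\bm{S})$. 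Since this lemma is a standard polar-factor perturbation bound, parallel to the citation of \cite[Lemma 37]{ma2017implicit} used just above, the cleanest route is to invoke the corresponding result \cite[Lemma 36]{ma2017implicit} directly.
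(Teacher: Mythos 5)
The paper does not prove this lemma at all: the proof in the paper is the single sentence ``This is the same as \cite[Lemma 36]{ma2017implicit}.'' Your closing remark --- that the cleanest route is to invoke \cite[Lemma 36]{ma2017implicit} directly --- therefore coincides exactly with what the paper does.

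That said, the derivation you attempt before that fallback has a genuine gap, and you should not leave the impression that the interlacing remark closes it. Your Sylvester-equation setup is correct, and the integral-representation bound does yield
$\|\bm{Q}-\bm{P}\| \le 2\|\bm{K}\|/\bigl(\sigma_r(\bm{S}+\bm{K})+\sigma_r(\bm{S})\bigr)$.
But this is strictly weaker than the stated $\sigma_{r-1}(\bm{S})+\sigma_r(\bm{S})$ in the very regime the hypothesis $\|\bm{K}\|\le\sigma_{\min}(\bm{S})$ permits: if $\|\bm{K}\|=\sigma_r(\bm{S})$ and $\sigma_r(\bm{S}+\bm{K})\approx 0$, your bound degenerates to the trivial $2$, while the lemma still asserts $\|\mathsf{sgn}(\bm{S}+\bm{K})-\mathsf{sgn}(\bm{S})\|\le 2\sigma_r(\bm{S})/(\sigma_{r-1}(\bm{S})+\sigma_r(\bm{S}))\le 1$. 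The ``eigenvalue interlacing'' sketch does not repair this: the Sylvester operator $\bm{X}\mapsto \bm{H}_+\bm{X}+\bm{X}\bm{H}$ has eigenvalues of the form $\tilde\sigma_i+\sigma_j$, one singular value taken from $\bm{S}+\bm{K}$ and one from $\bm{S}$, and no such pair equals a sum of two singular values of $\bm{S}$ alone. Reaching the denominator $\sigma_{r-1}(\bm{S})+\sigma_r(\bm{S})$ requires an additional structural argument that exploits the orthogonality of $\bm{W}=\bm{Q}^{\top}\bm{P}$ (equivalently, the special form of the right-hand side $\bm{K}^{\top}\bm{P}-\bm{Q}^{\top}\bm{K}$), which prevents the perturbation from loading entirely onto the worst Sylvester eigendirection --- this is the substance of the polar-factor perturbation bound that \cite[Lemma 36]{ma2017implicit} packages. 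As written, your chain of inequalities stops short of the stated conclusion, so the citation is not merely the cleanest route but the necessary one.
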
\begin{proof}This
is the same as \cite[Lemma 36]{ma2017implicit}. \end{proof}

\end{document}